\documentclass[11pt,letterpaper]{article}
\usepackage[margin=1in]{geometry}
\usepackage[T1]{fontenc}
\usepackage{times}
\usepackage{microtype}
\usepackage[authoryear,round]{natbib}
\setcitestyle{citesep={;},aysep={,},yysep={;}}
\usepackage{amsmath}
\let\amsmathEqref\eqref
\usepackage{amsmath,amsfonts,bm}

\def\eqref#1{equation~\ref{#1}}
\def\1{\bm{1}}

\DeclareMathAlphabet{\mathsfit}{\encodingdefault}{\sfdefault}{m}{sl}
\SetMathAlphabet{\mathsfit}{bold}{\encodingdefault}{\sfdefault}{bx}{n}

\def\gA{{\mathcal{A}}}

\def\gC{{\mathcal{C}}}
\def\gD{{\mathcal{D}}}

\def\gI{{\mathcal{I}}}

\def\gT{{\mathcal{T}}}
\def\gU{{\mathcal{U}}}

\def\gW{{\mathcal{W}}}
\def\gX{{\mathcal{X}}}
\def\gY{{\mathcal{Y}}}

\newcommand{\R}{\mathbb{R}}

\let\eqref\amsmathEqref

\usepackage{amssymb,amsthm}
\usepackage{booktabs,tabularx,longtable,array}
\usepackage{graphicx}
\usepackage{tikz,pgfplots}
\usepgfplotslibrary{groupplots}
\pgfplotsset{compat=1.18}
\usepackage{algorithm,algpseudocode}
\usepackage[hidelinks]{hyperref}
\hypersetup{pdftitle={Hierarchical Utility Calibration for Structured Multiclass Decisions},pdfauthor={Futoshi Futami, Jerry Huang, Ichiro Takeuchi}}
\usepackage{url}

\newtheorem{definition}{Definition}[section]
\newtheorem{theorem}[definition]{Theorem}
\newtheorem{proposition}[definition]{Proposition}
\newtheorem{lemma}[definition]{Lemma}
\newtheorem{corollary}[definition]{Corollary}
\newtheorem{assumption}[definition]{Assumption}
\theoremstyle{remark}
\newtheorem{remark}[definition]{Remark}
\newtheorem{example}[definition]{Example}

\DeclareMathOperator{\UC}{UC}
\DeclareMathOperator{\HUC}{HUC}

\definecolor{MainBlue}{HTML}{244A73}
\definecolor{Purple}{HTML}{6A4C93}

\title{Hierarchical Utility Calibration\\for Structured Multiclass Decisions}
\author{%
Futoshi Futami\textsuperscript{1,4,5}\thanks{Email: \href{mailto:futami.futoshi.es@osaka-u.ac.jp}{\texttt{futami.futoshi.es@osaka-u.ac.jp}}.}\quad
Jerry Huang\textsuperscript{2,5,6}\quad
Ichiro Takeuchi\textsuperscript{3,5}\\[0.6em]
\small\textsuperscript{1}The University of Osaka\\
\small\textsuperscript{2}Mila -- Quebec AI Institute\\
\small\textsuperscript{3}Nagoya University\\
\small\textsuperscript{4}The University of Tokyo\\
\small\textsuperscript{5}RIKEN Center for Advanced Intelligence Project (AIP)\\
\small\textsuperscript{6}Universit\'e de Montr\'eal%
}
\date{\today}

\begin{document}
\maketitle

\begin{abstract}
In multiclass probabilistic prediction, Utility Calibration (UC), which focuses auditing on specified utilities, has recently received attention as a way to guarantee downstream decisions while controlling computational and sample requirements. At the same time, some multiclass problems have meaningful label hierarchies that play important roles in medicine and image classification, yet how UC evaluates utility within a hierarchy remains insufficiently understood. We show that the difference between realized utility and predicted mean utility admits an exact decomposition into a sum of contributions from the internal nodes of the label tree. This decomposition shows that positive and negative contributions from different nodes can cancel, and that even when UC is small, the utility errors remaining in parts of the hierarchy need not be small. To address this problem, we propose Hierarchical Utility Calibration (HUC), which evaluates each node contribution before summation while retaining the same target utility, subgroup, and predicted-utility interval. We further provide finite-sample evaluation over all predicted-utility intervals and propose HUC-Boost, which updates only violated internal nodes, with theoretical guarantees for both.
\end{abstract}

\section{Introduction}
\label{sec:introduction}

Calibration of probabilistic predictions is a basic requirement for using
predicted probabilities in decision-making, risk assessment, and communication
with human users. For multiclass prediction, prior work has studied marginal
and class-wise calibration, which evaluate individual components, as well as
multiclass recalibration methods that transform the entire probability
vector~\citep{Guo2017,Vaicenavicius2019,Kull2019,Fujisawa2025}. The resulting
guarantees and the difficulty of evaluation depend on what is conditioned on
and which directions of prediction error are examined. Canonical
calibration, a strong calibration condition, requires the true label
distribution given the complete prediction vector to agree with that vector
and therefore allows the vector to be reused post hoc for a broad range of
decisions. Yet \citet{Gopalan2024} showed that testing it without assumptions
requires exponentially many samples. 

When the goal is to guarantee only prespecified decisions rather than every possible future use, it is unnecessary to evaluate every direction of the complete prediction vector. Utility Calibration (UC) has recently received attention as a way to focus the audit on specified utilities~\citep{Hegazy2025,Rossellini2025}. UC evaluates, over intervals of predicted mean utility, the agreement between the utility realized in the data and the mean utility that the model computes under the assumption that its predictive distribution is correct. It thereby focuses the audit on decisions that will actually be used, rather than on the entire high-dimensional prediction vector. To guarantee several decisions simultaneously, one includes the corresponding utilities in a utility class.

Multiclass labels, however, may themselves have a semantic hierarchy. In medicine, disease groups, clinical states, and severity levels may be distinguished along a tree~\citep{Yu2025}; in image classification, prediction may proceed from coarse to fine-grained categories~\citep{Deng2009ImageNet,VanHorn2018}. Such predictions need not always traverse the tree all the way to a leaf and return a single final class~\citep{Goren2024,Mortier2022}. A coarse decision may use only an upper-level branch, whereas another decision may further inspect only a particular subtree~\citep{Goren2024,Plaud2025,Mortier2022}. In such hierarchical problems, it has been unclear how UC aggregates errors on the tree into a single utility residual.

We show for the first time that the difference between realized utility and predicted mean utility can be decomposed exactly into a sum of contributions from the internal nodes of the label tree. Because UC evaluates the residual after adding these contributions, positive and negative node contributions can cancel. Consequently, even when UC is small, nonzero utility error may remain in some part of the hierarchy, and UC may fail to identify the branch that should be inspected or corrected. Section~\ref{sec:single-utility-cancellation} demonstrates this phenomenon on a four-leaf tree.

This issue is not resolved by evaluating each leaf separately or by uniformly auditing the branch probabilities of all internal nodes. Calibration of individual leaves does not, in general, determine branch probabilities conditional on having reached the parent subtree. Conversely, auditing every branch either uses a different target population at each node or includes branches that do not affect the target utility. What is needed is to distinguish the node contributions that constitute the original residual without changing the target utility, subgroup, or predicted-utility interval.

We therefore propose Hierarchical Utility Calibration (HUC), which evaluates each node contribution before aggregation. HUC uses the same target utility, subgroup, and predicted-utility interval at every node, allowing node contributions to the utility residual to be compared within the same target population. Moreover, the local utility contrast is zero at nodes that do not affect the target utility, so such nodes are excluded automatically. We also provide a finite-sample estimator of HUC with a theoretical guarantee uniform over all predicted-utility intervals. We then propose HUC-Boost, a post-processing recalibration algorithm that updates only violated internal nodes, and establish preservation of the probabilistic structure, decreasing log loss, and finite termination at the population level. Our main contributions are as follows.
\begin{enumerate}
\item \textbf{Tree decomposition and HUC.} We show that the utility residual admits an exact decomposition into a sum of internal-node contributions (Section~\ref{sec:single-utility-cancellation} and Theorem~\ref{thm:tree-decomp-note}) and propose HUC based on this decomposition (Definition~\ref{def:huc-main}).
\item \textbf{HUC estimation and correction.} We give a finite-sample estimator of HUC over all predicted-utility intervals (Section~\ref{sec:huc-definition}) and propose HUC-Boost, a post-hoc recalibration algorithm (Algorithm~\ref{alg:huc-boost}) with a theoretical guarantee (Theorem~\ref{thm:finite-termination-explicit}).
\item \textbf{Systematic empirical evaluation.} We evaluate HUC-Boost across datasets with different modalities and hierarchical structures, and systematically compare direct UC-Boost and HUC-Boost, parametric recalibration methods, and two-stage combinations of the parametric methods with UC-Boost or HUC-Boost (Section~\ref{sec:numerical-experiments} and Appendices~\ref{app:toy-data} and~\ref{app:experimental-details}).
\end{enumerate}

\section{Preliminaries and a Limitation of Utility Calibration}
\label{sec:preliminary-settings}

\subsection{Multiclass Calibration and Utility Calibration}
\label{sec:calibration}

Calibration asks whether predicted probabilities agree with realized frequencies among observations that receive the same prediction. Let the input be $X\in\gX$, the label be $Y\in\gY:=\{1,\ldots,K\}$, and the probability simplex be $\Delta^{K-1}:=\{p\in[0,1]^K:\sum_{y\in\gY}p_y=1\}$. Let the predictor be $f:\gX\to\Delta^{K-1}$ and write its output as $P:=f(X)$. Unless otherwise specified, $\mathbb{P}$ and $\mathbb{E}$ denote probability and expectation under the data-generating distribution, respectively. For label $y$, let $e_y\in\R^K$ denote the corresponding standard basis vector. The fundamental residual for multiclass probabilistic prediction is $e_Y-P$. The strength of the calibration guarantee and the difficulty of auditing depend on what is conditioned on and which directions of this vector residual are inspected.

\begin{definition}[Canonical calibration]
A predictor $f$ is canonically calibrated if
$\mathbb{E}[e_Y\mid P]=P$ holds almost surely~\citep{Gopalan2024}.
\end{definition}
%Whenever a conditional quantity given $P=p$ appears below, we fix a version of the regular conditional distribution of $Y$ given $P$; all resulting identities are understood to hold for almost every $p$ with respect to the law of $P$.
We adopt an audit-moment formulation of calibration, following the weighted-calibration viewpoint of \citet{Gopalan2024}. For later use, we also include input-dependent subgroup weights, in the spirit of multicalibration~\citep{HebertJohnson2018}. Let $\gC\subset\{c:\gX\to[-1,1]\}$ be a class of measurable subgroup weights and $\gW\subset\{w:\Delta^{K-1}\to[-1,1]^K\}$ a class of measurable prediction-dependent audit directions. For $c\in\gC$ and $w\in\gW$, define
\begin{equation}
M_{c,w}(f)
:=\mathbb{E}\!\left[c(X)\left\langle w(P),e_Y-P\right\rangle\right]
\label{eq:multiclass-audit-moment}
\end{equation}
For $c\equiv1$, vanishing moments for all measurable $w:\Delta^{K-1}\to[-1,1]^K$ characterize canonical calibration; nonconstant $c$ extends the audit to input-defined subgroups.
We call an expected residual of this form an \emph{audit moment}, and a function that multiplies the residual an \emph{audit function}. Given prespecified classes $\gC$ and $\gW$ of audit functions, our goal is to make these audit moments simultaneously small. This naturally leads us to consider the worst-case audit moment $\sup_{c\in\gC,w\in\gW} |M_{c,w}(f)|$.

%The function $c$ specifies which subgroups receive weight, whereas $w$ specifies which direction of the residual is inspected. Multicalibration simultaneously controls these moments for prespecified function classes $\gC,\gW$. Below, $\gC$ denotes the class of subgroup weights used in the audit.

Approaching canonical calibration requires a sufficiently rich family of audit directions that depend on the complete prediction vector. However, \citet{Gopalan2024} showed that testing canonical calibration without additional assumptions requires a number of samples exponential in $K$.  %even when the alternative is separated from canonical calibration by a fixed positive constant. 
For a few known downstream decisions, it is more direct to restrict
auditing to the quantities they require.

Utility Calibration formalizes this idea. Following \citet{Hegazy2025}, we define a \emph{utility} as a bounded measurable function
\begin{equation}
u:\Delta^{K-1}\times\gY\to[-1,1].
\label{eq:general-utility-definition}
\end{equation}
It represents the final utility obtained when outcome $y$ is realized after prediction $p$ has been used, and may already incorporate an action-selection or output rule. For example, let the finite action set be $\gA$, let the realized utility of action $a\in\gA$ be $U(a,y)\in[-1,1]$, and let the prediction-based action rule be $\psi:\Delta^{K-1}\to\gA$. Then $u_\psi(p,y):=U(\psi(p),y)$ is a utility of the form in Eq.~\eqref{eq:general-utility-definition}.
We write $\gU$ for a collection of utilities to be guaranteed simultaneously and call it a utility class.  %Below, a utility included in $\gU$ as a target of the guarantee is called a \emph{target utility}. 
We specify this class before observing the audit sample, using the target actions or action comparisons.

For $p\in\Delta^{K-1}$, let $\widetilde Y\sim p$ be a virtual label drawn from $p$. Define the \emph{predicted mean utility} by%, which the model computes under the assumption that $p$ is correct, by
\begin{equation}
\mu_u(p)
:=\mathbb{E}_{\widetilde Y\sim p}[u(p,\widetilde Y)]
=\sum_{y\in\gY}p_yu(p,y)
\label{eq:pred-utility-note}
\end{equation}
Because $u$ takes values in $[-1,1]$, we have $\mu_u(p)\in[-1,1]$. Under canonical calibration, the conditional mean of the realized utility agrees with $\mu_u(P)$. Unlike canonical calibration, UC focuses auditing on the specified utility class by ranging over intervals of predicted mean utility rather than over all directions of the complete prediction vector. Let $\gI:=\{I\subseteq[-1,1]: I\text{ is an interval}\}\cup\{\varnothing\}$.%, where interval endpoints may be included or excluded.

\begin{definition}[Utility Calibration]
\label{def:uc-general}
The Utility Calibration error with respect to a subgroup class $\gC$ and utility class $\gU$ is
\begin{equation}
\UC(f;\gC,\gU)
:=
\sup_{\substack{c\in\gC,\,u\in\gU,\,I\in\gI}}
\left|
\mathbb{E}\!\left[
 c(X)\mathbf{1}\{\mu_u(P)\in I\}
 \{u(P,Y)-\mu_u(P)\}
\right]
\right|
\label{eq:uc-note}
\end{equation}
%We also call Eq.~\eqref{eq:uc-note}, which includes a general weight $c$, UC. If the set of audit candidates is empty, its supremum is defined as $0$.
\end{definition}
Here, $\mathbf{1}\{\mu_u(P)\in I\}$ equals $1$ if
$\mu_u(P)\in I$ and $0$ otherwise. Small UC means that predicted mean utility agrees with realized utility in each predicted-utility interval. \citet{Hegazy2025} showed that the risk improvement achievable by monotone post-processing preserving the ordering of the one-dimensional score $\mu_u(P)$ is at most $2\UC(f;\{1\},\{u\})$.
%Eq.~\eqref{eq:uc-note} measures the weighted difference between predicted mean utility and realized utility for the population specified by the subgroup weight $c(X)$ and the predicted-mean-utility interval $I$.

\subsection{Cancellation in UC and Requirements for Hierarchical Auditing}
\label{sec:single-utility-cancellation}

In applications with label trees, detecting only the final utility discrepancy is insufficient. For example, suppose the benefit of an intervention is overestimated among patients with the same predicted utility. Whether the error lies in an upper-level disease-group branch or in a subsequent severity branch determines which part of the model should be inspected and which probability should be corrected. We use a four-leaf tree to show that UC can lose this location information and explain why neither leaf-wise auditing nor direct auditing of all branches is sufficient to recover it.

\paragraph{Cancellation on a four-leaf tree.}
Consider the four-leaf labels in Figure~\ref{fig:four-leaf-hierarchy}, $\gY=\{y_1,y_2,y_3,y_4\}$, arranged in a binary tree. Condition on observations for which the predictor outputs the leaf-probability vector $P=p=(p_1,p_2,p_3,p_4)$, where $p_i>0$. The root $v_0$ separates $\gY_L:=\{y_1,y_2\}$ from $\gY_R:=\{y_3,y_4\}$, and $v_L$ and $v_R$ separate the two leaves in their respective subtrees. For $t\in\{0,L,R\}$, denote the left and right children of $v_t$ by $v_{t,L}$ and $v_{t,R}$. Thus $v_{0,L}=v_L$, $v_{0,R}=v_R$, and $(v_{L,L},v_{L,R},v_{R,L},v_{R,R})=(y_1,y_2,y_3,y_4)$.

For $j\in\{L,R\}$, let $q_{t,j}$ and $q_{t,j}^\star$ be the predicted and true probabilities of selecting $v_{t,j}$ conditional on reaching $v_t$, the latter within the group $P=p$. The left-branch probabilities satisfy $q_{t,L}=1-q_{t,R}$ and $q_{t,L}^\star=1-q_{t,R}^\star$. Assume $0<q_{0,R}^\star<1$, so both true lower-level branch probabilities exist.

\begin{figure}[htbp]
\centering
\begin{minipage}[c]{0.23\linewidth}
\centering
\begin{tikzpicture}[
  level distance=10mm,
  level 1/.style={sibling distance=18mm},
  level 2/.style={sibling distance=8mm},
  every node/.style={font=\small},
  edge from parent/.style={draw}
]
\node {$v_0$}
  child {node {$v_L$}
    child {node {$y_1$}}
    child {node {$y_2$}}}
  child {node {$v_R$}
    child {node {$y_3$}}
    child {node {$y_4$}}};
\end{tikzpicture}
\end{minipage}\hfill
\begin{minipage}[c]{0.74\linewidth}
\small
\textbf{Right-branch probabilities}\hfill $q$: predicted, $q^\star$: true
\[
\begin{aligned}
 v_0:\quad q_{0,R}&=p_3+p_4,
 &q_{0,R}^\star&=\mathbb{P}(Y\in\gY_R\mid P=p),\\
 v_L:\quad q_{L,R}&=\frac{p_2}{p_1+p_2},
 &q_{L,R}^\star&=\mathbb{P}(Y=y_2\mid Y\in\gY_L,P=p),\\
 v_R:\quad q_{R,R}&=\frac{p_4}{p_3+p_4},
 &q_{R,R}^\star&=\mathbb{P}(Y=y_4\mid Y\in\gY_R,P=p).
\end{aligned}
\]
\end{minipage}
\caption{A four-leaf tree with predicted and true conditional branch probabilities at each node.}
\label{fig:four-leaf-hierarchy}
\end{figure}

Let $\gY_{v_{t,j}}$ be the leaves below child $v_{t,j}$ and assign it the mean utility
\[
\textstyle\bar u_{t,j}:=\sum_{y\in\gY_{v_{t,j}}}p_yu(p,y)/\sum_{y\in\gY_{v_{t,j}}}p_y,
\qquad t\in\{0,L,R\},\quad j\in\{L,R\}.
\]
This marginalizes the target utility within the child subtree under the predictive conditional distribution; for a leaf child $y_i$, it equals $u_i:=u(p,y_i)$. We suppress dependence on the fixed $u,p$. The predicted mean utility at the root is $\mu_u(p)=\sum_{j=L,R}q_{0,j}\bar u_{0,j}$.

The UC residual $u(p,y)-\mu_u(p)$ is the difference between the realized utility at the leaf and the utility assessed at the root, before the label's location is refined. Its conditional mean $D^{\mathrm{full}}(p):=\mathbb{E}[u(p,Y)\mid P=p]-\mu_u(p)$ decomposes as $D^{\mathrm{full}}(p)
=\sum_{t\in\{\mathrm{top},L,R\}}D_t(p)$,
\begin{equation}
\begin{aligned}
D_{\mathrm{top}}(p)
:=\sum_{j=L,R}q_{0,j}^\star\bar u_{0,j}
\!-\!\sum_{j=L,R}q_{0,j}\bar u_{0,j},\quad D_t(p)
:=q_{0,t}^\star\Bigl[
\sum_{j=L,R}q_{t,j}^\star\bar u_{t,j}
\!-\!\sum_{j=L,R}q_{t,j}\bar u_{t,j}\Bigr],
%\qquad t\in\{L,R\}.
\end{aligned}
\label{eq:d-full-sec32}
\end{equation}
where $t\in\{L,R\}$. Each term compares the same marginalized child utility values $\bar u_{t,j}$ averaged under the true and predicted branch probabilities, weighted by the true probability of reaching the parent: $1$ at the root and $q_{0,L}^\star,q_{0,R}^\star$ at the lower nodes. Equivalently, if identifying child $v_{t,j}$ reveals utility $\bar u_{t,j}$, each term measures whether the parent-level utility assessment agrees on average with the assessment after this refinement. Thus, the root-to-leaf mean discrepancy splits into branching-step mean discrepancies; see Appendix~\ref{subsec:four-leaf-mean-matching} for the proof.

For a general predictor, let $D^{\mathrm{full}}(P):=\mathbb{E}[u(P,Y)-\mu_u(P)\mid P]$. Because the interval indicator is determined by $P$ alone, the tower property and Eq.~\eqref{eq:d-full-sec32} imply
\[
\UC(f;\{1\},\{u\})
\!=\!\sup_{I\in\gI}\big|\mathbb{E}[\mathbf{1}\{\mu_u(P)\in I\}D^{\mathrm{full}}(P)]\big|
\!=\!\sup_{I\in\gI}\big|\mathbb{E}[\mathbf{1}\{\mu_u(P)\in I\}\sum_{t\in\{\mathrm{top},L,R\}}D_t(P)]\big|.
\]
Thus, UC evaluates the node contributions only after adding them within the same predicted-utility interval. If positive and negative contributions cancel (e.g., $D_{\mathrm{top}}=0.1,\,D_L=-0.1,\,D_R=0$ gives $D^{\mathrm{full}}=0$), errors may remain at individual hierarchical levels even when UC is small. Moreover, even when UC is large, its value alone does not identify which internal decision should be inspected or corrected within the same target population.

\paragraph{From standard UC to branch auditing.}\footnote{Detailed discussions of these points are provided in Appendices~\ref{subsec:subtree-utility-counterexample}--\ref{subsec:node-specific-branch-cells}.}
In medicine, the final utility guides decisions, while intermediate hierarchical stages help interpret decision paths and locate errors~\citep{Yu2025}; in hierarchical selective classification, stage-wise confidence determines prediction granularity under uncertainty~\citep{Goren2024}. This motivates auditing the local contributions underlying utility assessment, in addition to final-utility consistency.

%Class-wise calibration of individual leaves does not directly guarantee the tree's conditional structure. Likewise, UC calibration of indicator utilities for nested subtrees generally does not determine branch probabilities conditional on reaching the parent. As another remedy, one could add utilities corresponding to higher-level conditional branch probabilities to the utility class and use standard UC to calibrate each internal-node branch probability. However, what matters is not the branch-probability discrepancy itself but its effect on the target utility. If the target utility is identical across a node's child subtrees, an incorrect branch probability does not change it. Hence the largest branch-probability discrepancy need not identify the largest utility contribution or the node to correct.
Class-wise calibration of individual leaves does not directly guarantee the tree's conditional structure. Likewise, calibrating indicator utilities for nested subtrees with UC generally does not determine the branch probability conditional on reaching a parent node. One could instead add utilities designed to represent such conditional branch probabilities and calibrate each internal node using standard UC. However, what matters is not the branch-probability discrepancy itself, but its effect on the target utility. If the mean utilities assigned to a node's child subtrees are equal, an incorrect branch probability has no effect on their average. Thus, the node with the largest branch-probability discrepancy need not have the largest utility contribution or be the node that should be corrected.

Moreover, branch-specific predicted values define different audit populations, whereas we compare error sources within the common cell $\{\mu_u(P)\in I\}$. Auditing all internal nodes includes utility-irrelevant nodes and their finite-sample fluctuations, giving approximately $|\gU||V^\circ|$ candidates, where $V^\circ$ is the internal-node set. Thus, we localize the utility-relevant contributions in Eq.~\eqref{eq:d-full-sec32} before cancellation, keeping the target utility and predicted-utility cell fixed.

%Moreover, using each branch's predicted value to form its target population audits a different population at every node, whereas we seek to compare error sources within the same group $\{\mu_u(P)\in I\}$ having the same predicted utility. Auditing all branches also includes utility-irrelevant nodes and their finite-sample fluctuations, yielding approximately $|\gU||V^\circ|$ candidates across multiple utilities. We do not claim that HUC is always computationally cheaper than standard UC; rather, obtaining local information about the same target utility requires restricting the audit to nodes that affect that utility. Accordingly, HUC retains the original target utility and predicted-utility cell, assigns each contribution in Eq.~\eqref{eq:d-full-sec32} to its hierarchical location before cancellation, and retains only the nodes that affect the target utility. Concrete counterexamples illustrating these distinctions are collected in Appendix~\ref{subsec:subtree-utility-counterexample}--\ref{subsec:node-specific-branch-cells}.

\section{Hierarchical Utility Calibration}
\label{sec:hierarchical-utility-calibration-main}
Section~\ref{sec:single-utility-cancellation} showed that UC can obscure
which internal nodes generate a target-utility error. We now extend the
four-leaf decomposition to a general tree and define HUC from node
contributions evaluated with the same subgroup, target utility, and
predicted-utility interval.

\subsection{Utility-Error Decomposition and HUC on General Trees}
\label{sec:huc-definition}
\begingroup

Let $\gT=(V,E_{\gT})$ be a fixed finite rooted tree with leaf set
$\gY$, root $\rho$, and internal-node set $V^\circ$.
For $a\in V$, let $\gY_a$ be the leaves in the subtree rooted at $a$,
with $\gY_y=\{y\}$ for a leaf $y$.
Each internal node $v$ has children $v_1,\ldots,v_{d_v}$, whose
leaf sets form a disjoint partition of $\gY_v$.

Assume $P=f(X)$ satisfies $P_y>0$ almost surely for every $y\in\gY$.
We first define local quantities at a fixed $p\in\Delta^{K-1}$ with strictly positive coordinates,
then substitute $p=P$ to audit the predictor.
For $a\in V$, $v\in V^\circ$, and $j=1,\ldots,d_v$, set
\begin{equation}
\pi_a(p):=\sum_{y\in\gY_a}p_y,\qquad
\mu_{u,a}(p):=\frac{\sum_{y\in\gY_a}p_yu(p,y)}{\pi_a(p)},\qquad
q_{v,j}(p):=\frac{\pi_{v_j}(p)}{\pi_v(p)}.
\label{eq:tree-local-quantities-compact}
\end{equation}
These are the predictive reach probability, subtree mean utility, and branch probability given the parent, respectively.
Write $\boldsymbol q_v(p):=(q_{v,1}(p),\ldots,q_{v,d_v}(p))^\top$.
Then $\sum_jq_{v,j}(p)=1$, $\mu_{u,\rho}(p)=\mu_u(p)$, and $\mu_{u,y}(p)=u(p,y)$ at each leaf $y$.

The child utility values in
Section~\ref{sec:single-utility-cancellation} are
$\bar u_{t,j}=\mu_{u,v_{t,j}}(p)$.
For each $a\in V$, let $J_a(y):=\mathbf{1}\{y\in\gY_a\}$
indicate whether $y$ lies in the subtree rooted at $a$.
For an internal node $v$ with child $v_j$, write
$J_{v,j}(y):=J_{v_j}(y)=\mathbf{1}\{y\in\gY_{v_j}\}$,
and define the branch residual and utility contrast by
\begin{equation}
\boldsymbol R_v(p,y)
:=\bigl(J_{v,j}(y)-J_v(y)q_{v,j}(p)\bigr)_{j=1}^{d_v},\qquad
\boldsymbol\Delta_{u,v}(p)
:=\bigl(\mu_{u,v_j}(p)-\mu_{u,v}(p)\bigr)_{j=1}^{d_v}.
\label{eq:node-utility-inner-product-main}
\end{equation}
For $a\in V$, let $\pi_a^\star(p):=\mathbb{P}(Y\in\gY_a\mid P=p)$ be the true reach probability.
If $\pi_v^\star(p)>0$, define $q_{v,j}^\star(p):=\pi_{v_j}^\star(p)/\pi_v^\star(p)$.
The conditional mean of the node contribution satisfies
\begin{equation}
\mathbb{E}\!\left[
\left\langle\boldsymbol\Delta_{u,v}(p),\boldsymbol R_v(p,Y)\right\rangle
\mid P=p\right]
=\pi_v^\star(p)\Bigl[
\sum_{j=1}^{d_v}q_{v,j}^\star(p)\mu_{u,v_j}(p)
-\sum_{j=1}^{d_v}q_{v,j}(p)\mu_{u,v_j}(p)\Bigr].
\label{eq:mean-node-contribution-main}
\end{equation}
If $\pi_v^\star(p)=0$, the conditional mean is zero without defining $q_{v,j}^\star(p)$.
See Lemma~\ref{lem:conditional-residual} in Appendix~\ref{subsec:general-tree-decomposition} for the proof. Each child utility value $\mu_{u,v_j}(p)$ marginalizes $u(p,y)$ within $\gY_{v_j}$ using the predictive conditional probabilities $p_y/\pi_{v_j}(p)$. The two sums in brackets of Eq.~\eqref{eq:mean-node-contribution-main} average these same values under the true and predicted branch probabilities. Their difference measures mean-utility disagreement at this branching step, weighted by the true reach probability $\pi_v^\star(p)$. These conditional means generalize $D_{\mathrm{top}}(p)$, $D_L(p)$, and $D_R(p)$ in Eq.~\eqref{eq:d-full-sec32}. The same inner products decompose the utility residual pointwise.
\begin{theorem}[Utility-error decomposition on a general tree]
\label{thm:tree-decomp-note}
For every target utility $u$, every $p\in\Delta^{K-1}$ with
strictly positive coordinates, and every $y\in\gY$,
\begin{equation}
u(p,y)-\mu_u(p)
=\sum_{v\in V^\circ}
\left\langle
\boldsymbol\Delta_{u,v}(p),\boldsymbol R_v(p,y)
\right\rangle.
\label{eq:tree-decomp-note}
\end{equation}
\end{theorem}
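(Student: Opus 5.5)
Fix $u$, a strictly positive $p$, and a leaf $y$. The plan is a pointwise telescoping argument along the unique root-to-leaf path of $y$. First compute a single node term. If $y\notin\gY_v$, then $J_v(y)=0$ and $J_{v,j}(y)=0$ for all $j$, so $\boldsymbol R_v(p,y)=0$ and the term vanishes. Hence only the internal nodes on the path $\rho=a_0,a_1,\ldots,a_m=y$ contribute, where $a_{k+1}$ is the child of $a_k$ containing $y$.

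Now take an internal node $v=a_k$ on the path, and let $v_{j^\ast}=a_{k+1}$ be the child containing $y$. Then $J_v(y)=1$ and $J_{v,j}(y)=\mathbf 1\{j=j^\ast\}$. This gives
\[
\bigl\langle\boldsymbol\Delta_{u,v}(p),\boldsymbol R_v(p,y)\bigr\rangle
=\bigl(\mu_{u,v_{j^\ast}}(p)-\mu_{u,v}(p)\bigr)-\sum_{j=1}^{d_v}q_{v,j}(p)\bigl(\mu_{u,v_j}(p)-\mu_{u,v}(p)\bigr).
\]
The key auxiliary identity is that the last sum is zero. It holds because $\sum_j q_{v,j}(p)\mu_{u,v_j}(p)=\mu_{u,v}(p)$ and $\sum_j q_{v,j}(p)=1$. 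The first of these follows from the definitions in Eq.~\eqref{eq:tree-local-quantities-compact}: since the children's leaf sets partition $\gY_v$,
\[
\sum_j\frac{\pi_{v_j}}{\pi_v}\cdot\frac{\sum_{y'\in\gY_{v_j}}p_{y'}u(p,y')}{\pi_{v_j}}=\frac{\sum_{y'\in\gY_v}p_{y'}u(p,y')}{\pi_v}.
\]
Strict positivity of $p$ makes every $\pi_a(p)>0$, so all these ratios are well defined. Consequently each on-path term equals $\mu_{u,a_{k+1}}(p)-\mu_{u,a_k}(p)$.

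Summing over $k=0,\ldots,m-1$ telescopes to $\mu_{u,y}(p)-\mu_{u,\rho}(p)=u(p,y)-\mu_u(p)$, using the facts $\mu_{u,y}(p)=u(p,y)$ and $\mu_{u,\rho}(p)=\mu_u(p)$ noted after Eq.~\eqref{eq:tree-local-quantities-compact}. This proves Eq.~\eqref{eq:tree-decomp-note}.

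There is no real obstacle; the only step requiring care is the martingale-type identity $\sum_j q_{v,j}\mu_{u,v_j}=\mu_{u,v}$, which kills the centering term. Besides that, one must handle the bookkeeping of path versus off-path nodes, including when $y$ is itself a child of $\rho$ ($m=1$).
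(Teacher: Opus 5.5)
Your proposal is correct and takes essentially the same approach as the paper. The paper likewise uses the subtree-mean recursion to get $\langle\boldsymbol q_v(p),\boldsymbol\Delta_{u,v}(p)\rangle=0$ (Lemma~\ref{lem:subtree-recursions}), deduces that each on-path node contributes $\mu_{u,v_k}(p)-\mu_{u,v}(p)$ while off-path nodes contribute zero (Lemma~\ref{lem:one-step-increment}), and then telescopes along the root-to-leaf path.
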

Each contribution equals the realized child's mean utility minus the parent's, or zero outside the parent subtree.
Along the root-to-leaf path, intermediate means cancel, leaving the realized leaf utility minus the predicted root mean.
This recovers the original residual and locates its branching-step contributions without changing the target utility.
See Theorem~\ref{thm:tree-decomposition} in Appendix~\ref{subsec:tree-decomposition-proof}; Appendix~\ref{subsec:binary-special-case} gives the binary-tree correspondence.

For fixed $\gT$ and $p$, the node contributions are mutually orthogonal under the predictive distribution $\widetilde Y\sim p$.
They form the unique decomposition whose components vanish outside their respective subtrees, are constant on each child subtree, and have zero $p$-mean; see Appendix~\ref{subsec:orthogonal-node-decomposition}.
Thus, the tree, predictive distribution, and target utility determine the node-wise attribution.
Under the true distribution, however, nonzero expected node contributions can still cancel.

We now substitute $(p,y)=(P,Y)$ into
Eq.~\eqref{eq:tree-decomp-note} to audit the input-dependent prediction
$P=f(X)$.
Using the same audit conditions $(c,u,I)$ as UC, define the node-wise
audit moment for $v\in V^\circ$ by
\begin{equation}
\Gamma_{c,u,I,v}(f)
:=\mathbb{E}\!\left[
c(X)\mathbf{1}\{\mu_u(P)\in I\}
\left\langle
\boldsymbol\Delta_{u,v}(P),\boldsymbol R_v(P,Y)
\right\rangle
\right].
\label{eq:gamma-note}
\end{equation}
Multiplying the pointwise decomposition by
$c(X)\mathbf{1}\{\mu_u(P)\in I\}$ and taking expectations gives
\begin{equation}
\mathbb{E}\!\left[
c(X)\mathbf{1}\{\mu_u(P)\in I\}\{u(P,Y)-\mu_u(P)\}
\right]
=\sum_{v\in V^\circ}\Gamma_{c,u,I,v}(f).
\label{eq:uc-huc-main-comparison}
\end{equation}
UC takes the absolute value after summing these moments.
In contrast, the HUC error defined below audits branching-step
changes in utility assessment separately under the same $(c,u,I)$,
preventing cancellation across nodes.
The sum in Eq.~\eqref{eq:uc-huc-main-comparison} can be restricted
to the utility-relevant nodes
\begin{equation}
\mathcal V_{\gT}(u)
:=\{v\in V^\circ:
\boldsymbol\Delta_{u,v}\not\equiv\boldsymbol 0\}.
\label{eq:relevant-node-set-local-closure}
\end{equation}
Here, $\boldsymbol\Delta_{u,v}\not\equiv\boldsymbol 0$ means that
the child means $\mu_{u,v_1}(p),\ldots,\mu_{u,v_{d_v}}(p)$
are not all equal for some strictly positive prediction $p$.
Thus, $\mathcal V_{\gT}(u)$ contains precisely the nodes where
identifying the child containing the label can change the
target-utility assessment; all other nodes contribute zero.

\begin{definition}[Hierarchical Utility Calibration]
\label{def:huc-main}
The HUC error is
\begin{equation}
\HUC(f;\gC,\gU)
:=\sup_{c\in\gC,\,u\in\gU,\,I\in\gI,\,v\in\mathcal V_{\gT}(u)}
|\Gamma_{c,u,I,v}(f)|.
\label{eq:huc-note}
\end{equation}
%If the audit-candidate set is empty, its supremum is defined to be zero.
\end{definition}

Intuitively, the utility assessed at the parent should agree on average,
within the audited population, with the utility assessed after learning
which child contains the realized label.
HUC evaluates this mean consistency node by node under the same $(c,u,I)$.
Partial sums control calibration errors for subtree-mean utilities at fixed
boundaries of varying branch depths, with bounds proportional to the included
utility-relevant node count; see Appendix~\ref{app:postprocessing-huc}.

The following proposition establishes the relationships among UC, HUC, and canonical calibration.

\begin{proposition}
\label{prop:uc-huc-full-main}
For every subgroup class $\gC$ and target utility $u$,
\begin{equation}
\UC(f;\gC,\{u\})
\le |\mathcal V_{\gT}(u)|\,\HUC(f;\gC,\{u\}).
\label{eq:uc-huc-main-bound}
\end{equation}

Moreover, if $f$ is canonically calibrated, then
$\HUC(f;\{1\},\gU)=0$.
\end{proposition}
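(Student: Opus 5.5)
The first claim follows from the aggregated identity Eq.~\eqref{eq:uc-huc-main-comparison} and the triangle inequality. Fix $c\in\gC$ and $I\in\gI$. Multiplying Theorem~\ref{thm:tree-decomp-note} by $c(X)\mathbf{1}\{\mu_u(P)\in I\}$ and taking expectations expresses the UC moment as $\sum_{v\in V^\circ}\Gamma_{c,u,I,v}(f)$. The substitution $(p,y)=(P,Y)$ is legitimate because $P_y>0$ almost surely, and every quantity is bounded: $|u|\le1$, so $|\boldsymbol\Delta_{u,v}|\le 2$ coordinatewise, and $\boldsymbol R_v$ has bounded entries. Hence all expectations exist and linearity applies.

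Next I will drop the irrelevant nodes. For $v\notin\mathcal V_{\gT}(u)$ we have $\boldsymbol\Delta_{u,v}(p)=\boldsymbol 0$ for every strictly positive $p$, so the inner product vanishes almost surely and $\Gamma_{c,u,I,v}(f)=0$. Therefore
\[
\Bigl|\mathbb{E}\bigl[c(X)\mathbf{1}\{\mu_u(P)\in I\}\{u(P,Y)-\mu_u(P)\}\bigr]\Bigr|
\le\sum_{v\in\mathcal V_{\gT}(u)}|\Gamma_{c,u,I,v}(f)|
\le|\mathcal V_{\gT}(u)|\,\HUC(f;\gC,\{u\}).
\]
Taking the supremum over $c$ and $I$ gives Eq.~\eqref{eq:uc-huc-main-bound}. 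If $\mathcal V_{\gT}(u)$ is empty, both sides are $0$, so this case is consistent with the bound.

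For the second claim, fix $u\in\gU$, $I$, and $v\in\mathcal V_{\gT}(u)$, and take $c\equiv1$. Since $\mathbf{1}\{\mu_u(P)\in I\}$ and $\boldsymbol\Delta_{u,v}(P)$ are functions of $P$, the tower property gives
\[
\Gamma_{1,u,I,v}(f)=\mathbb{E}\bigl[\mathbf{1}\{\mu_u(P)\in I\}\,g_v(P)\bigr],
\qquad
g_v(p):=\sum_j\Delta_{u,v,j}(p)\bigl(\pi^\star_{v_j}(p)-\pi^\star_v(p)q_{v,j}(p)\bigr),
\]
where $g_v(p)$ is the conditional mean in Eq.~\eqref{eq:mean-node-contribution-main}. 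Canonical calibration gives $\mathbb{P}(Y=y\mid P)=P_y$ almost surely. Summing over leaves, $\pi^\star_a(P)=\pi_a(P)$ for every node $a$ almost surely. Then $\pi^\star_{v_j}(P)-\pi^\star_v(P)q_{v,j}(P)=\pi_{v_j}(P)-\pi_{v_j}(P)=0$, so $g_v(P)=0$ almost surely and every $\Gamma_{1,u,I,v}(f)$ vanishes. The supremum defining $\HUC(f;\{1\},\gU)$ is therefore $0$.

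The only delicate step is this measure-theoretic one: the conditional-mean formula holds for almost every $p$ for a fixed version of the conditional law. I would apply canonical calibration to $\mathbb{E}[J_a(Y)\mid P]=\sum_{y\in\gY_a}\mathbb{E}[\mathbf{1}\{Y=y\}\mid P]$ directly, which avoids choosing versions of $q^\star_{v,j}$ at all.
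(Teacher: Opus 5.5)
Your proposal is correct and follows essentially the same route as the paper. For the bound, you use the cell-wise node decomposition, drop the nodes with $\boldsymbol\Delta_{u,v}\equiv\boldsymbol 0$, and apply the triangle inequality; for the second claim, you use the tower property together with $\mathbb{E}[J_a(Y)\mid P]=\pi_a(P)$ under canonical calibration, which gives $\mathbb{E}[\boldsymbol R_v(P,Y)\mid P]=\boldsymbol 0$ and hence $\Gamma_{1,u,I,v}(f)=0$.
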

See Appendix~\ref{subsec:full-huc-uc-proof} for the proof.
Appendix~\ref{app:postprocessing-huc} further relates HUC to
monotone post-processing guarantees when the target utility is
used only down to an internal-subtree boundary.

\endgroup

To estimate HUC, let $(X_i,Y_i)_{i=1}^n$ be independent and
identically distributed (i.i.d.) copies of $(X,Y)$, and write
$P_i:=f(X_i)$.
Define
\[
\begin{aligned}
\widehat{\HUC}(f;\gC,\gU)
&:=\sup_{\substack{
c\in\gC,\,u\in\gU,\,I\in\gI, v\in\mathcal V_{\gT}(u)}}
\left|\frac1n\sum_{i=1}^n
c(X_i)\mathbf{1}\{\mu_u(P_i)\in I\}
\left\langle
\boldsymbol\Delta_{u,v}(P_i),
\boldsymbol R_v(P_i,Y_i)
\right\rangle\right|.
\end{aligned}
\]

\begin{theorem}[Finite-sample evaluation of HUC]
\label{thm:finite-sample-main}
Assume that $f$, $\gT$, and the finite classes $\gC,\gU$ are fixed
independently of the audit sample.
For any $\delta\in(0,1)$, with probability at least $1-\delta$,
\begin{equation}
\left|\widehat{\HUC}(f;\gC,\gU)-\HUC(f;\gC,\gU)\right|
\le \frac{16}{\sqrt n}
+4\sqrt{\frac{\log(\max(1,|\gC|\sum_{u\in\gU}|\mathcal V_{\gT}(u)|)/\delta)}{2n}}.
\label{eq:finite-sample-main}
\end{equation}
\end{theorem}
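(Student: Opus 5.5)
Since $\gC$ and $\gU$ are finite, the only infinite index is the interval $I\in\gI$. I will therefore bound the uniform deviation $\sup|\widehat\Gamma-\Gamma|$ over all $(c,u,I,v)$, where $\widehat\Gamma_{c,u,I,v}$ is the empirical average. Because $|\sup|a|-\sup|b||\le\sup|a-b|$, this bound transfers directly to $|\widehat{\HUC}-\HUC|$.

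The plan is to reduce to one pair $(c,u,v)$ and a class indexed only by $I$, then take a union bound over the $N:=|\gC|\sum_{u}|\mathcal V_{\gT}(u)|$ pairs. If $N=0$, both quantities are zero, which explains the $\max(1,\cdot)$.

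\textbf{Step 1: boundedness.} Fix $(c,u,v)$ and set $Z_i:=c(X_i)\langle\boldsymbol\Delta_{u,v}(P_i),\boldsymbol R_v(P_i,Y_i)\rangle$. By the remark after Theorem~\ref{thm:tree-decomp-note}, the inner product equals $\mu_{u,v_j}-\mu_{u,v}$ for the realized child $v_j$ when $Y\in\gY_v$, and $0$ otherwise. Both means lie in $[-1,1]$, so $|Z_i|\le 2$. Each summand $g_I(X_i,Y_i):=Z_i\mathbf 1\{\mu_u(P_i)\in I\}$ therefore lies in $[-2,2]$.

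\textbf{Step 2: concentration around the mean.} The statistic $\Phi:=\sup_{I}|\frac1n\sum_i g_I-\mathbb E g_I|$ has bounded differences $4/n$. McDiarmid's inequality then gives
\[
\Phi\le\mathbb E\Phi+4\sqrt{\log(1/\delta')/(2n)}
\]
with probability $1-\delta'$. Taking $\delta'=\delta/N$ and a union bound over all pairs yields the second term of the stated bound.

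\textbf{Step 3: bounding $\mathbb E\Phi$.} Symmetrization gives $\mathbb E\Phi\le 2\,\mathbb E\,\mathrm{Rad}_n$. To handle the Rademacher complexity, I would write each interval indicator as a difference of threshold indicators on the scalar $s_i=\mu_u(P_i)$, namely $\mathbf 1\{s\le b\}-\mathbf 1\{s<a\}$ (with variants for open and closed endpoints). For thresholds, the sets $\{i:s_i\le b\}$ form a nested chain. The process $b\mapsto\sum_{i:s_i\le b}\sigma_iZ_i$ is then, after sorting by $s_i$, a martingale partial-sum sequence. By Doob's $L^2$ maximal inequality,
\[
\mathbb E\max_k\Bigl|\sum_{i\le k}\sigma_iZ_i\Bigr|\le 2\sqrt{\textstyle\sum_i Z_i^2}\le 4\sqrt n .
\]
The two threshold families each contribute this, so $\mathbb E\,\mathrm{Rad}_n\le 2\cdot 4/\sqrt n$ and $\mathbb E\Phi\le 16/\sqrt n$, matching the first term. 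Ties in $s_i$ are harmless because thresholds only select prefixes of blocks, so the maximum is still over a subset of prefix sums.

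The main obstacle is Step 3: landing exactly on the constant $16$ while handling open and closed endpoints and the empty set uniformly. If the Doob route gives a different constant, I would fall back on a VC/Dudley bound, using that intervals on $\mathbb R$ have VC dimension $2$. That route, however, yields a non-explicit constant, so I expect the martingale maximal argument to be the intended one. Steps 1 and 2 and the final union bound are routine.
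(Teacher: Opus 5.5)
Your proposal is correct and follows essentially the same route as the paper's proof: the envelope bound $|A|\le 2$, writing each interval indicator as a difference of two left half-line indicators, Doob's $L^2$ maximal inequality on score-ordered Rademacher prefix sums (giving $8/\sqrt n$), symmetrization to $16/\sqrt n$, McDiarmid with bounded differences $4/n$, a union bound over the $N$ triples, and finally the difference-of-suprema inequality. Your arithmetic already yields the constant $16$, so the VC/Dudley fallback is unnecessary.
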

The proof and a comparison of the computational costs of estimating UC and HUC are given in Appendices~\ref{subsec:finite-sample-proof} and~\ref{subsec:uc-huc-computational-cost}, respectively.

\subsection{HUC-Boost: Branch-Local Post-Hoc Correction}
\label{sec:postprocess}
Given a base predictor, we propose HUC-Boost, a boosting-style
post-hoc correction algorithm for reducing its HUC error;
Algorithm~\ref{alg:huc-boost} gives the population version.
As in the UC post-processing algorithm of \citet{Hegazy2025}
(hereafter UC-Boost), each iteration selects an audit candidate
whose moment exceeds a prespecified tolerance in magnitude and
updates the predictor in the corresponding direction.
Unlike UC-Boost, HUC-Boost changes branch probabilities
only at the internal node with the selected violation.

To implement this node-local update, we parameterize the predictor
by node-wise logits.
At iteration $t$, let
$\boldsymbol z_{t,v}(x)\in\mathbb R^{d_v}$ be the logits at node $v$,
with corresponding branch probabilities
$q_{t,v,j}(x):=e^{z_{t,v,j}(x)}/
\sum_{k=1}^{d_v}e^{z_{t,v,k}(x)}$ for $j=1,\ldots,d_v$.
For each leaf $y$, the prediction $f_t(x)_y$ is the product of these branch probabilities along the path
from the root to $y$.
Appendix~\ref{subsec:softmax-leaf-distribution} gives this
construction and verifies its consistency with the branch probabilities.

For $a=(c,u,I,v)$, the node-$v$ logit direction used in Step~4 of
Algorithm~\ref{alg:huc-boost} is
\[
\boldsymbol h_a^t(x)
:=c(x)\mathbf{1}\{\mu_u(f_t(x))\in I\}
\boldsymbol\Delta_{u,v}(f_t(x)).
\]

%In Algorithm~\ref{alg:huc-boost}, $J_v(Y)=\mathbf{1}\{Y\in\gY_v\}$ indicates whether the observed label lies in the subtree rooted at $v$.
\begin{algorithm}[htbp]
\caption{HUC-Boost: logit update (population version)}
\label{alg:huc-boost}
\begin{algorithmic}[1]
\Statex \textbf{Input:} Positive initial predictor $f_0$, candidate classes $\gC,\gU,\gI$, tolerance $\varepsilon>0$
\State $z_{0,v,j}(x)\gets\log q_{v,j}(f_0(x))$ for all $v,j$; $t\gets0$
\While{there exists a candidate with $|\Gamma_{c,u,I,v}(f_t)|>\varepsilon$}
  \State Select any such $a_t=(c_t,u_t,I_t,v_t)$
  \State $\boldsymbol h_t\gets\boldsymbol h_{a_t}^t$
  \State $\displaystyle \Gamma_t:=\Gamma_{c_t,u_t,I_t,v_t}(f_t),\;\Lambda_t:=\tfrac14\mathbb E\!\left[J_{v_t}(Y)\{\max_jh_{t,j}(X)-\min_jh_{t,j}(X)\}^2\right],\;\alpha_t:=\Gamma_t/\Lambda_t$
  \State $\boldsymbol z_{t+1,v_t}(x)\gets\boldsymbol z_{t,v_t}(x)+\alpha_t\boldsymbol h_t(x)$ for all $x$; keep all other nodes fixed
  \State Reconstruct $f_{t+1}$ from the softmax branch probabilities by path products; $t\gets t+1$
\EndWhile
\State \Return $f_t$
\end{algorithmic}
\end{algorithm}

For the candidate $a_t$ selected in Step~3, write $v:=v_t$.
The logit update in Step~6 and reconstruction in Step~7 give,
for $y\in\gY_{v_j}$,
\[
q_{t+1,v,j}(x)
=\frac{q_{t,v,j}(x)e^{\alpha_t h_{t,j}(x)}}
{\sum_kq_{t,v,k}(x)e^{\alpha_t h_{t,k}(x)}},
\qquad
f_{t+1}(x)_y
=f_t(x)_y\frac{q_{t+1,v,j}(x)}{q_{t,v,j}(x)}.
\]
The update redistributes the predicted mass reaching $v$ among its
child subtrees. It rescales all leaf probabilities within each child
subtree by the same factor, preserving their conditional distribution,
the reach probability of $v$, and all leaf probabilities outside $\gY_v$.

The step size in Algorithm~\ref{alg:huc-boost} is justified by
the global leaf log loss: $\Gamma_t$ is its negative directional
derivative along the fixed direction $\boldsymbol h_t$, and
$\Lambda_t>0$ bounds the curvature whenever an update is made.
This yields the following guarantee.
\begin{theorem}[Loss decrease and finite termination]
\label{thm:finite-termination-explicit}
Let $\mathcal{L}_{\log}(f):=\mathbb E[-\log f(X)_Y]$.
Assume $\mathcal{L}_{\log}(f_0)<\infty$.
Algorithm~\ref{alg:huc-boost} decreases population log loss monotonically.
It terminates within
$T\le\left\lceil 2\mathcal{L}_{\log}(f_0)/\varepsilon^2\right\rceil$
updates and returns $f_T$ satisfying $\HUC(f_T;\gC,\gU)\le\varepsilon$.
\end{theorem}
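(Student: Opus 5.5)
The plan is a standard boosting/descent argument. I will show that each update lowers the population log loss by at least $\varepsilon^2/2$. Since $\mathcal L_{\log}\ge 0$, at most $\lceil 2\mathcal L_{\log}(f_0)/\varepsilon^2\rceil$ updates can occur. When the loop stops, no candidate has $|\Gamma_{c,u,I,v}(f_t)|>\varepsilon$, so $\HUC(f_T;\gC,\gU)\le\varepsilon$. Candidates with $v\notin\mathcal V_{\gT}(u)$ have $\boldsymbol\Delta_{u,v}\equiv\boldsymbol 0$, so their moments vanish and the loop condition agrees with the definition of HUC. Positivity of $f_t$ is preserved because softmax branch probabilities are strictly positive. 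Hence every $\boldsymbol\Delta_{u,v}(f_t(x))$ and every $\boldsymbol h_a^t$ is well defined at each iteration.

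\textbf{Step 1: reduce the loss to one node.} Fix iteration $t$, write $v=v_t$ and $\boldsymbol h=\boldsymbol h_t$, and set $f^{(\alpha)}$ to be the predictor after the update $\boldsymbol z_{t,v}+\alpha\boldsymbol h$. By the path-product identity displayed after Algorithm~\ref{alg:huc-boost}, $f^{(\alpha)}(x)_y=f_t(x)_y$ for $y\notin\gY_v$. For $y\in\gY_{v_j}$ the leaf probability changes only through the factor $q^{(\alpha)}_{v,j}/q_{t,v,j}$. Writing $j(Y)$ for the child containing $Y$, this gives
\[
L(\alpha):=\mathcal L_{\log}(f^{(\alpha)})=\mathcal L_{\log}(f_t)+\mathbb E\Bigl[J_v(Y)\Bigl\{-\alpha h_{j(Y)}(X)+\log\textstyle\sum_k q_{t,v,k}(X)e^{\alpha h_k(X)}\Bigr\}\Bigr].
\]
Each entry of $\boldsymbol\Delta_{u,v}$ lies in $[-2,2]$ and $|c|\le1$, so $\boldsymbol h$ is bounded. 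Differentiating under the expectation is therefore justified.

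\textbf{Step 2: first and second derivatives.} At $\alpha=0$ the first derivative is $-\mathbb E[J_v(Y)\{h_{j(Y)}-\sum_k q_{t,v,k}h_k\}]$. On $\{Y\in\gY_v\}$ we have $\langle\boldsymbol h,\boldsymbol R_v(f_t(X),Y)\rangle=h_{j(Y)}-\sum_kq_{t,v,k}h_k$, and the inner product is zero off $\gY_v$. Hence $L'(0)=-\Gamma_t$, which is exactly Eq.~\eqref{eq:gamma-note}. The second derivative at any $\alpha$ is $\mathbb E[J_v(Y)\operatorname{Var}_{k\sim q^{(\alpha)}_{v}(X)}(h_k(X))]$. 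By Popoviciu's inequality this is at most $\frac14\mathbb E[J_v(Y)(\max_jh_j-\min_jh_j)^2]=\Lambda_t$, uniformly in $\alpha$. Taylor's theorem then gives $L(\alpha)\le L(0)-\alpha\Gamma_t+\tfrac12\alpha^2\Lambda_t$. The choice $\alpha_t=\Gamma_t/\Lambda_t$ yields a decrease of at least $\Gamma_t^2/(2\Lambda_t)$.

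\textbf{Step 3: bound $\Lambda_t$ on both sides.} Each child mean $\mu_{u,v_j}$ lies in $[-1,1]$, so the spread of $\boldsymbol\Delta_{u,v}$, and hence of $\boldsymbol h$, is at most $2$. This gives $\Lambda_t\le\mathbb P(Y\in\gY_v)\le1$, and the decrease is at least $\Gamma_t^2/2>\varepsilon^2/2$. For positivity, suppose $\Lambda_t=0$. Then on $\{Y\in\gY_v\}$ the vector $\boldsymbol h$ is a.s. constant across coordinates, so $\langle\boldsymbol h,\boldsymbol R_v\rangle=0$ a.s. (the $\boldsymbol R_v$ components sum to zero there). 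That would force $\Gamma_t=0$, contradicting $|\Gamma_t|>\varepsilon$. So the step size is well defined. Summing the per-step decreases against $\mathcal L_{\log}\ge0$ gives both monotone decrease and the bound on $T$.

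The main obstacle is Step 2: verifying carefully that the derivative of the global leaf log loss equals exactly $-\Gamma_t$. This uses the fact that leaf probabilities inside each child subtree rescale by a common factor, so the within-subtree terms cancel. The uniform curvature bound then follows from the variance-of-softmax identity.
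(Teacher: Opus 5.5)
Your proposal is correct and follows essentially the same route as the paper: you reduce the global log loss to the single selected node via common rescaling of leaf probabilities within each child subtree, compute $L'(0)=-\Gamma_t$, bound the curvature by $L''\le\Lambda_t\le1$ using the variance bound, get $\Lambda_t>0$ from $\sum_j R_{v,j}=0$, and telescope against nonnegativity of the log loss. The one step you use implicitly is that the branch probabilities $q_{v,k}(f_t(X))$ recovered from the path-product leaf distribution coincide with the softmax probabilities $q_{t,v,k}(X)$; the paper records this as a separate consistency result (Lemma~\ref{lem:softmax-branch-subtree-consistency}).
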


Appendix~\ref{subsec:huc-boost-theorem-proof} gives the derivations
and proof of Theorem~\ref{thm:finite-termination-explicit}.
The auditing implementation and theoretical analysis of finite-sample
HUC-Boost are given in Appendices~\ref{subsec:exact-interval-scan}
and~\ref{app:randomized-initialization}, respectively.
Appendix~\ref{subsec:leaf-brier-nonlocality} discusses comparisons
with UC-Boost and Brier-based alternatives.

\section{Related Work}
\label{sec:related-work}
Prior work covers confidence and class-wise
calibration~\citep{Guo2017,Vaicenavicius2019},
multicalibration within prespecified
subgroups~\citep{HebertJohnson2018}, and kernel-based
calibration tests~\citep{Widmann2019}.
Canonical calibration conditions on the entire predictive vector,
but its distribution-free testing can require exponentially many
samples in the number of classes~\citep{Gopalan2024}.
Utility Calibration (UC) audits prespecified utilities over
predicted-utility intervals~\citep{Hegazy2025}, generalizing
binary Cutoff Calibration Error~\citep{Rossellini2025}.
Other decision-oriented approaches include decision calibration
for specified losses~\citep{zhao_calibrating_2021},
task-specific kernel calibration
metrics~\citep{marx_calibration_2023}, and decision calibration
for nonlinear losses~\citep{tang_dimension-free_2026}.
These works study which decisions and losses predictions can
reliably support and which calibration conditions are needed.
Our focus is complementary: we decompose the residual of the
same specified target utility along a semantic label hierarchy
and evaluate its node contributions before cancellation,
under the same subgroup and predicted-utility interval.
Additional comparisons with related work appear in
Appendix~\ref{app:additional-related-work}.

Hierarchical classification has a long history, including methods that place classifiers at the nodes of a hierarchy, share statistical information between parents and children, or learn a structured classifier over the hierarchy~\citep{KollerSahami1997,McCallum1998,CesaBianchi2006,Rousu2006,SillaFreitas2011}.
Further work has studied classification consistent with a hierarchical distance~\citep{Ramaswamy2015}, methods that return an ancestor when the prediction is uncertain in order to trade specificity for accuracy~\citep{Deng2012,Goren2024}, set-valued predictions represented by internal nodes or multiple nodes~\citep{Mortier2022}, and decoding rules adapted to hierarchical evaluation measures~\citep{Plaud2025}.
Most of these methods use the hierarchy in classifier training, loss design, or output selection, rather than addressing probabilistic calibration.
In particular, the calibrated surrogates of \citet{Ramaswamy2015}. refer to consistency with respect to a hierarchical classification loss and are different from the probabilistic calibration studied here.
Work that directly addresses predictive-probability calibration on a hierarchy includes methods that calibrate the internal binary classifiers of a nested dichotomy or the probability distribution produced by the complete tree~\citep{Leathart2019}.
In contrast, HUC localizes utility errors to relevant nodes without requiring calibration of every internal branch probability.

\section{Experiments}
\label{sec:numerical-experiments}
\label{sec:real-data-support2-v20}
\label{sec:real-data-inaturalist-v16}

\subsection{Toy data}
\paragraph{Setup.}
In a controlled toy data setting, we test whether the cancellation between internal-node contributions discussed in Sections~\ref{sec:single-utility-cancellation} and~\ref{sec:huc-definition} allows UC-Boost to reduce UC while leaving nonzero HUC. We use the medically motivated synthetic eight-leaf tree in Figure~\ref{fig:toy8-sample-size-main}(a), with seven internal nodes, three decision utilities, and whole-population audits. Each utility evaluates a binary action choice based on outcomes distinguished by the tree. For each sample size $n$, UC- and HUC-Boost share the initial prediction and the $n$ calibration labels. Figure~\ref{fig:toy8-sample-size-main}(b) reports population UC (top) and HUC (bottom) under the known label distribution. Curves show means over 200 repetitions; error bars at selected sizes show one sample standard deviation. Appendix~\ref{subsec:toy-main-experiment} gives the full setup.

\begin{figure}[t]
\centering
\begin{minipage}[t]{0.40\textwidth}
\centering
{\small\bfseries (a) Controlled eight-leaf tree}\par
\resizebox{\linewidth}{!}{\begin{tikzpicture}[x=1.08cm,y=0.94cm,
  every node/.style={font=\footnotesize,inner sep=0.8pt,fill=white},
  treeedge/.style={draw=black!72,line width=0.60pt,shorten >=1.6pt,shorten <=1.6pt}]
\path[use as bounding box,draw=none] (-2.55,0.30) rectangle (2.55,-3.480);
\node (r) at (0,0) {$v_{\mathrm{cause}}$};
\node (ni) at (-1.529,-0.800) {$v_{\mathrm{noninf}}$};
\node (sp) at (0.963,-0.800) {$v_{\mathrm{spread}}$};
\node (y1) at (-2.058,-1.600) {$y_1$};
\node (y2) at (-1.001,-1.600) {$y_2$};
\node (si) at (0.142,-1.600) {$v_{\mathrm{site}}$};
\node (sh) at (1.794,-1.600) {$v_{\mathrm{shock}}$};
\node (up) at (-0.453,-2.400) {$v_{\mathrm{upper}}$};
\node (lo) at (0.727,-2.400) {$v_{\mathrm{lower}}$};
\node (y7) at (1.463,-2.400) {$y_7$};
\node (y8) at (2.124,-2.400) {$y_8$};
\node (y3) at (-0.774,-3.200) {$y_3$};
\node (y4) at (-0.132,-3.200) {$y_4$};
\node (y5) at (0.453,-3.200) {$y_5$};
\node (y6) at (1.020,-3.200) {$y_6$};
\foreach \a/\b in {r/ni,r/sp,ni/y1,ni/y2,sp/si,sp/sh,si/up,si/lo,sh/y7,sh/y8,up/y3,up/y4,lo/y5,lo/y6}{\draw[treeedge] (\a)--(\b);}
\end{tikzpicture}}
\end{minipage}\hfill
\begin{minipage}[t]{0.585\textwidth}
\centering
{\small\bfseries (b) UC-Boost and HUC-Boost}\par
\begin{tikzpicture}
\begin{groupplot}[
  group style={group size=1 by 2,vertical sep=1.8mm},
  width=0.965\linewidth,height=2.75cm,
  xmode=log,log basis x=10,xmin=200,xmax=1250000,
  xtick={250,1000,5000,20000,100000,1000000},
  xticklabels={250,1k,5k,20k,100k,1M},
  scaled ticks=false,
  tick label style={font=\scriptsize},label style={font=\scriptsize},
  grid=major,grid style={gray!18},
  axis line style={black!70},tick style={black!70},
  every axis plot/.append style={line width=1.00pt,mark size=1.70pt},clip=false
]
\nextgroupplot[
  ymode=log,log basis y=10,ymin=0.0002,ymax=0.055,
  ytick={0.0005,0.002,0.01,0.05},
  yticklabels={$5\!\times\!10^{-4}$,$2\!\times\!10^{-3}$,$10^{-2}$,$5\!\times\!10^{-2}$},
  ylabel={UC},xticklabels=\empty,
  legend columns=2,
  legend style={font=\scriptsize,draw=none,fill=white,fill opacity=0.94,text opacity=1,
    at={(0.5,1.06)},anchor=south,/tikz/every even column/.append style={column sep=5pt}}
]
\addplot[MainBlue,only marks,mark=none,forget plot,
  error bars/.cd,y dir=both,y explicit,
  error bar style={line width=0.40pt,opacity=0.52},
  error mark options={rotate=90,mark size=1.2pt,line width=0.40pt,opacity=0.52}]
  coordinates {(250,0.034932) +- (0,0.016083) (5000,0.007774) +- (0,0.004054) (1000000,0.000562) +- (0,0.000293)};
\addplot[Purple,only marks,mark=none,forget plot,
  error bars/.cd,y dir=both,y explicit,
  error bar style={line width=0.40pt,opacity=0.52},
  error mark options={rotate=90,mark size=1.2pt,line width=0.40pt,opacity=0.52}]
  coordinates {(500,0.025513) +- (0,0.012754) (20000,0.004056) +- (0,0.002125) (200000,0.001262) +- (0,0.000662)};
\addplot[MainBlue,solid,line width=1.10pt,mark=o,mark options={fill=white,line width=0.70pt}]
 coordinates {(250,0.034932) (500,0.025513) (1000,0.017419) (2000,0.012611) (5000,0.007774) (10000,0.005458) (20000,0.004056) (50000,0.002492) (100000,0.001786) (200000,0.001262) (500000,0.000796) (1000000,0.000562)};
\addlegendentry{UC-Boost}
\addplot[Purple,densely dashed,line width=1.10pt,mark=x,mark options={line width=0.90pt}]
 coordinates {(250,0.034932) (500,0.025513) (1000,0.017419) (2000,0.012611) (5000,0.007774) (10000,0.005458) (20000,0.004056) (50000,0.002492) (100000,0.001786) (200000,0.001262) (500000,0.000796) (1000000,0.000562)};
\addlegendentry{HUC-Boost}

\nextgroupplot[
  ymode=log,log basis y=10,ymin=0.0002,ymax=0.34,
  ytick={0.0005,0.005,0.05,0.2},
  yticklabels={$5\!\times\!10^{-4}$,$5\!\times\!10^{-3}$,$5\!\times\!10^{-2}$,$0.2$},
  ylabel={HUC},xlabel={Calibration sample size $n$}
]
\addplot[MainBlue,only marks,mark=none,forget plot,
  error bars/.cd,y dir=both,y explicit,
  error bar style={line width=0.40pt,opacity=0.50},
  error mark options={rotate=90,mark size=1.2pt,line width=0.40pt,opacity=0.50}]
  coordinates {(250,0.181382) +- (0,0.063502) (5000,0.179586) +- (0,0.062132) (1000000,0.179682) +- (0,0.062164)};
\addplot[Purple,only marks,mark=none,forget plot,
  error bars/.cd,y dir=both,y explicit,
  error bar style={line width=0.40pt,opacity=0.50},
  error mark options={rotate=90,mark size=1.2pt,line width=0.40pt,opacity=0.50}]
  coordinates {(250,0.028765) +- (0,0.009749) (5000,0.006358) +- (0,0.002105) (1000000,0.000471) +- (0,0.000159)};
\addplot[MainBlue,solid,line width=1.10pt,mark=o,mark options={fill=white,line width=0.70pt}]
 coordinates {(250,0.181382) (500,0.179954) (1000,0.180376) (2000,0.180391) (5000,0.179586) (10000,0.179673) (20000,0.179551) (50000,0.179551) (100000,0.179607) (200000,0.179653) (500000,0.179680) (1000000,0.179682)};
\addplot[Purple,densely dashed,line width=1.10pt,mark=diamond*,mark options={fill=Purple}]
 coordinates {(250,0.028765) (500,0.020818) (1000,0.014450) (2000,0.009947) (5000,0.006358) (10000,0.004644) (20000,0.003271) (50000,0.002075) (100000,0.001461) (200000,0.001038) (500000,0.000660) (1000000,0.000471)};
\end{groupplot}
\end{tikzpicture}
\end{minipage}
\caption{Toy tree and population UC/HUC after UC- and HUC-Boost with $n$ calibration labels.}
\label{fig:toy8-sample-size-main}
\end{figure}

\paragraph{Results.}
The UC errors after UC- and HUC-Boost are similar and decrease as $n$ grows. Their HUC errors differ: after UC-Boost, HUC remains, whereas after HUC-Boost it decreases with sample size. HUC-Boost reduces both errors by addressing node contributions before cancellation. This is consistent with Proposition~\ref{prop:uc-huc-full-main}, which bounds UC by a multiple of HUC. In contrast, UC-Boost reduces the sum without necessarily reducing each contribution, illustrating the decomposition in Eq.~\eqref{eq:uc-huc-main-comparison} and the cancellation discussed in Section~\ref{sec:single-utility-cancellation}.

\subsection{Real data}
\paragraph{Setup.}
We compare HUC-Boost with other post-hoc methods on real datasets. SUPPORT2~\citep{Knaus1995SUPPORT} contains tabular data from seriously ill hospitalized adults. Five outcomes are arranged in a medically motivated binary tree, and nine utilities are obtained from three medical action comparisons evaluated at three parameter values. On iNaturalist~\citep{VanHorn2018}, images are classified into 30 species using a nonuniform biological taxonomy of depth seven, and top-1 correctness at the class, order, family, genus, and species levels is used as the utility. Appendix~\ref{app:experimental-details} provides detailed experimental settings and experiments on additional real datasets.

Both datasets use six base predictors---logistic regression, Gaussian naive Bayes, decision tree, random forest, XGBoost, and multilayer perceptron---using five splits into training, calibration, validation, and test sets. For iNaturalist, each image is converted to a 512-dimensional feature vector using an ImageNet-pretrained ResNet-18, and the base predictors are trained on these features. Base predictors are fitted on the training data; UC-Boost, HUC-Boost, and other recalibration methods are fitted on the calibration data. Validation data select regularization and the checkpoint returned along each boosting path, and test data are used only for final evaluation.

We compare a base predictor (Baseline), UC- and HUC-Boost, temperature and vector scaling~\citep{Guo2017}, and Dirichlet calibration~\citep{Kull2019} (see Appendix~\ref{app:experimental-details} for details). We also apply HUC-Boost after each parametric method, using the same calibration data for both stages; see Appendix~\ref{app:randomized-initialization} for the algorithm and its theoretical properties. The first-stage parametric recalibrator adjusts the predictive distribution globally, and HUC-Boost then targets the remaining node-wise utility errors locally. This global--local combination is expected to improve recalibration efficiency.

\begin{table}[t]
\caption{Independent-test results: mean $\pm$ sample standard deviation over six base predictors and five splits. UC-Boost and HUC-Boost are abbreviated as UC-B and HUC-B, respectively.}
\label{tab:real-data-compact-main}
\centering
\begin{minipage}{\linewidth}
\centering
\begingroup
\scriptsize
\setlength{\tabcolsep}{1.5pt}
\resizebox{\linewidth}{!}{%
\begin{tabular}{@{}l*{5}{c}@{\hspace{5pt}}*{5}{c}@{}}
\toprule
& \multicolumn{5}{c}{\textbf{iNaturalist}} & \multicolumn{5}{c}{\textbf{SUPPORT2}}\\
\cmidrule(lr){2-6}\cmidrule(lr){7-11}
Method & Acc. $\uparrow$ & AUC $\uparrow$ & UC $\downarrow$ & HUC $\downarrow$ & Updates
       & Acc. $\uparrow$ & AUC $\uparrow$ & UC $\downarrow$ & HUC $\downarrow$ & Updates\\
\midrule
Baseline & .489$\pm$.090 & .900$\pm$.108 & .272$\pm$.161 & .125$\pm$.059 & 0.0$\pm$0.0 & .499$\pm$.199 & .676$\pm$.083 & .167$\pm$.178 & .148$\pm$.142 & 0.0$\pm$0.0 \\
UC-B & .462$\pm$.088 & .914$\pm$.066 & \textbf{.045$\pm$.018} & .036$\pm$.011 & 25.4$\pm$24.1 & .575$\pm$.051 & .681$\pm$.079 & .039$\pm$.014 & .050$\pm$.031 & 33.9$\pm$39.4 \\
HUC-B & .463$\pm$.090 & .922$\pm$.060 & .075$\pm$.032 & .034$\pm$.009 & 145.9$\pm$180.5 & .575$\pm$.045 & .698$\pm$.060 & .036$\pm$.010 & .035$\pm$.009 & 45.2$\pm$46.9 \\
\midrule
Temp. & .489$\pm$.090 & .890$\pm$.110 & .130$\pm$.114 & .092$\pm$.076 & 0.0$\pm$0.0 & .499$\pm$.199 & .686$\pm$.072 & .077$\pm$.069 & .076$\pm$.069 & 0.0$\pm$0.0 \\
Temp.$\to$HUC & .470$\pm$.088 & \textbf{.930$\pm$.050} & .047$\pm$.014 & \textbf{.029$\pm$.008} & 43.1$\pm$25.9 & .587$\pm$.032 & .703$\pm$.057 & .032$\pm$.006 & .032$\pm$.006 & 20.4$\pm$22.6 \\
\midrule
Vec. & \textbf{.489$\pm$.095} & .899$\pm$.100 & .114$\pm$.092 & .083$\pm$.073 & 0.0$\pm$0.0 & .575$\pm$.053 & .691$\pm$.065 & .038$\pm$.015 & .038$\pm$.015 & 0.0$\pm$0.0 \\
Vec.$\to$HUC & .478$\pm$.095 & .929$\pm$.053 & .050$\pm$.018 & .031$\pm$.008 & 28.2$\pm$22.4 & .589$\pm$.032 & .701$\pm$.051 & \textbf{.031$\pm$.008} & \textbf{.031$\pm$.008} & 14.3$\pm$12.5 \\
\midrule
Dir. & .436$\pm$.112 & .912$\pm$.063 & .105$\pm$.053 & .055$\pm$.023 & 0.0$\pm$0.0 & .579$\pm$.046 & .712$\pm$.066 & .035$\pm$.013 & .035$\pm$.013 & 0.0$\pm$0.0 \\
Dir.$\to$HUC & .435$\pm$.112 & .912$\pm$.062 & .109$\pm$.052 & .056$\pm$.023 & 4.3$\pm$10.5 & \textbf{.589$\pm$.032} & \textbf{.718$\pm$.050} & .032$\pm$.008 & .032$\pm$.007 & 13.3$\pm$14.1 \\
\bottomrule
\end{tabular}%
}
\endgroup
\end{minipage}
\end{table}

\paragraph{Results.}
Table~\ref{tab:real-data-compact-main} reports independent-test metrics and boosting update counts. Direct HUC-Boost substantially reduces HUC relative to Baseline in both datasets. It uses more updates than UC-Boost, consistent with their update scopes: UC-Boost adjusts all leaf logits globally, whereas HUC-Boost changes branch logits at one internal node per update.

All two-stage variants use substantially fewer HUC-Boost updates than direct HUC-Boost. HUC-Boost after temperature or vector scaling yields lower mean HUC and higher mean AUC in both datasets. The Dirichlet-based variant improves these metrics only on SUPPORT2.

\section{Conclusion and Limitations}
\label{sec:conclusion-limitations}
In this paper, we have shown that, in multiclass probabilistic prediction with hierarchical labels, the utility residual decomposes into internal-node contributions. Based on this decomposition, we proposed HUC to evaluate these contributions before cancellation and HUC-Boost to correct violated branches. We further presented a two-stage recalibration algorithm that combines parametric recalibration methods with HUC-Boost. Numerical experiments demonstrated the effectiveness of the proposed approach in reducing hierarchical utility error.

Our analysis assumes a semantically fixed finite rooted label tree and a utility class specified before analysis. It therefore does not cover decisions outside \(\gU\), relationships absent from the tree, or settings in which the hierarchical structure changes dynamically. Moreover, as with Utility Calibration, our method concerns the agreement between predicted and realized utilities in expectation and does not by itself guarantee the optimality of the resulting decisions. Addressing these limitations will be an important direction for future work toward more reliable decision-making.

\subsection*{Acknowledgments}
This work was supported by JST ASPIRE Grant Number JPMJAP25B1 and JSPS KAKENHI Grant Number JP26K02974, Japan.

\subsection*{AI use statement}
Generative AI tools assisted with theoretical model and conceptual framework development, proof exposition, experimental design and implementation, synthetic-data generation, and data preparation. They also supported literature review and information retrieval, translation and editing, and preparation of code, figures, other artifacts, and references. The authors reviewed all AI-assisted work, checked all mathematical arguments, verified each reference against its original source, and tested the experimental implementation. The authors take full responsibility for this work.

\bibliographystyle{plainnat}
\bibliography{references_verified}

\clearpage
\appendix

% Keep appendix floats close to the discussion with compact, normal spacing.
\setlength{\parskip}{2pt plus 1pt minus 1pt}
\setlength{\abovedisplayskip}{8pt plus 2pt minus 2pt}
\setlength{\belowdisplayskip}{8pt plus 2pt minus 2pt}
\setcounter{topnumber}{3}
\setcounter{bottomnumber}{2}
\setcounter{totalnumber}{5}
\setlength{\floatsep}{10pt plus 2pt minus 2pt}
\setlength{\textfloatsep}{10pt plus 2pt minus 2pt}
\setlength{\intextsep}{8pt plus 2pt minus 2pt}
\renewcommand{\floatpagefraction}{0.75}
\makeatletter
\setlength{\@fptop}{0pt}
\setlength{\@fpsep}{12pt plus 2pt minus 2pt}
\setlength{\@fpbot}{0pt plus 1fil}
\renewcommand{\paragraph}{\@startsection{paragraph}{4}{\z@}%
  {1.75ex plus 0.5ex minus 0.2ex}{-1em}{\normalfont\normalsize\bfseries}}
\makeatother

\section*{Guide to the Appendices}
\phantomsection

\label{app:appendix-guide}

\begingroup
\renewcommand{\theHtable}{appendixguide.\arabic{table}}
\setlength{\tabcolsep}{5pt}
\renewcommand{\arraystretch}{1.04}
\begin{longtable}{@{}>{\raggedright\arraybackslash}p{0.30\textwidth}>{\raggedright\arraybackslash}p{0.65\textwidth}@{}}
\toprule
Appendix & Content and proof locations \\
\midrule

\hyperref[app:formal-setup]
{\textbf{\ref*{app:formal-setup}: Formal definitions and tree properties}}
& Defines the probability space, label-tree quantities, subtree mean utilities, and the path-product representation. \\

\hyperref[app:binary-first]
{\textbf{\ref*{app:binary-first}: Binary classification and trees}}
& Develops the binary-tree calculations and path decomposition. Appendix~\ref{subsec:four-leaf-mean-matching} directly proves Eq.~\eqref{eq:d-full-sec32} and the UC identity in Section~\ref{sec:single-utility-cancellation}. \\

\hyperref[app:general-tree]
{\textbf{\ref*{app:general-tree}: General multiway trees}}
& Appendix~\ref{subsec:general-local-algebra} proves Eq.~\eqref{eq:node-utility-inner-product-main}; Appendix~\ref{subsec:general-tree-decomposition} proves Theorem~\ref{thm:tree-decomp-note}, Eqs.~\eqref{eq:tree-decomp-note} and~\eqref{eq:mean-node-contribution-main}, and the mean-consistency decomposition; Appendix~\ref{subsec:orthogonal-node-decomposition} proves orthogonality, uniqueness, and the Parseval identity. \\

\hyperref[app:uc-huc-full]
{\textbf{\ref*{app:uc-huc-full}: Basic properties of HUC}}
& Appendix~\ref{subsec:uc-huc-full} proves Proposition~\ref{prop:uc-huc-full-main}, Eq.~\eqref{eq:uc-huc-main-bound}, and the implication from canonical calibration. Appendices~\ref{subsec:subtree-counterexample}--\ref{subsec:node-specific-branch-cells} give counterexamples concerning conditional branches, utility impact, and node-specific audit populations. \\

\hyperref[app:huc-boost]
{\textbf{\ref*{app:huc-boost}: Proof of HUC-Boost}}
& Appendices~\ref{subsec:softmax-leaf-distribution} and~\ref{subsec:huc-boost-theorem-proof} analyze Algorithm~\ref{alg:huc-boost} and prove Theorem~\ref{thm:finite-termination-explicit}; Appendix~\ref{subsec:leaf-brier-nonlocality} gives the ordinary-leaf-Brier comparison; Appendix~\ref{subsec:softmax-auxiliary-results} proves the path-product results. \\

\hyperref[app:finite-sample]
{\textbf{\ref*{app:finite-sample}: Finite-sample evaluation}}
& Appendix~\ref{subsec:finite-sample-proof} proves Theorem~\ref{thm:finite-sample-main}; Appendix~\ref{subsec:exact-interval-scan} derives the exact tie-preserving scan; Appendix~\ref{subsec:uc-huc-computational-cost} derives the exact UC--HUC operation counts. \\

\hyperref[app:postprocessing-huc]
{\textbf{\ref*{app:postprocessing-huc}: Post-processing and HUC}}
& Uses the decomposition in Appendix~\ref{subsec:general-tree-decomposition} to prove calibration and one-step refinement guarantees for subtree-mean utilities, including Proposition~\ref{prop:subtree-mean-calibration-refinement}, and relates them to monotone post-processing. \\

\hyperref[app:randomized-initialization]
{\textbf{\ref*{app:randomized-initialization}: Empirical HUC-Boost and fixed-step sample complexity}}
& Algorithm~\ref{alg:empirical-huc-boost} gives the adaptive- and fixed-step implementations. Appendices~\ref{rtf:sec:main-statements}--\ref{rtf:sec:learned-proof} prove fixed-step sample-complexity results for fixed and learned initializers, including Algorithm~\ref{alg:randomized-boost}. \\

\hyperref[app:additional-related-work]
{\textbf{\ref*{app:additional-related-work}: Additional related work}}
& Compares HUC with U-calibration, loss outcome indistinguishability and omniprediction, and utility-directed conformal prediction. \\

\hyperref[app:toy-data]
{\textbf{\ref*{app:toy-data}: Toy data}}
& Specifies the main eight-leaf correction experiment and the decision-utility construction. \\

\hyperref[app:experimental-details]
{\textbf{\ref*{app:experimental-details}: Real data}}
& Gives the common protocol and the data, hierarchies, utilities, subgroups, and results for iNaturalist, SUPPORT2, MASSIVE, NSL-KDD, and N-BaIoT. \\

\bottomrule
\end{longtable}
\addtocounter{table}{-1}% The uncaptioned appendix guide is not a numbered table.
\endgroup

\section{Formal Definitions and Basic Properties of the Tree}
\label{app:formal-setup}

For a node $v$, let $\gY_v$ denote its descendant-leaf set. We use the subtree-membership indicators
$J_v(y):=\mathbf{1}\{y\in\gY_v\}$, $J_{v,j}(y):=J_{v_j}(y)=\mathbf{1}\{y\in\gY_{v_j}\}$,
where $v_j$ is a child of $v$. Each indicator is one when $y$ belongs to the corresponding subtree and zero otherwise.

This notation is used throughout the remaining appendices unless stated otherwise.

\subsection{Utility Functions and Decision-Making}
\label{subsec:tie-breaking-convention}
\paragraph{Utility and its forecast.}
In UC, a utility evaluates the consequence of using $p$ when $y$ is observed,
possibly incorporating a decision \citep[Sections~3 and~3.3]{Hegazy2025}:
\[
u:\Delta^{K-1}\times\gY\to[-1,1],\qquad
\mu_u(p)=\mathbb{E}_{\widetilde Y\sim p}[u(p,\widetilde Y)]
        =\sum\nolimits_y p_yu(p,y).
\]
Its utility classes are built from outcome values, ranks, or action payoffs and selection rules.

\paragraph{Linear utilities.}
A fixed payoff vector gives
$u_b(p,y)=b_y$, $\mu_{u_b}(p)=\langle p,b\rangle$, $b\in[-1,1]^K$.
Varying $b$ gives $\gU_{\mathrm{lin}}$; class-wise indicators use $b=e_j$.
For a tree node $v$, let $\gY_v$ be its descendant-leaf set and write
$J_v(y):=\mathbf{1}\{y\in\gY_v\}$ and $\pi_v(p):=\sum_{z\in\gY_v}p_z$.
Thus $J_v$ is a 0--1 indicator of subtree membership, with $\gY_y=\{y\}$ at a leaf.
Subtree indicators $b_y=J_v(y)$ are instances.

\paragraph{Rank-based utilities.}
Let $r_p(y)$ be the rank of $y$ under decreasing predicted probabilities. Then
$u_\theta(p,y)=\theta_{r_p(y)}$, $\theta\in[-1,1]^K$.
Varying $\theta$ gives $\gU_{\mathrm{rank}}$.
The choice $\theta_r=\mathbf{1}\{r\le k\}$ gives top-$k$ correctness
($1\le k\le K$), with top-class correctness at $k=1$.

\paragraph{Decision-induced utilities.}
For a finite action set $\gA$, a payoff $U:\gA\times\gY\to[-1,1]$ evaluates actions,
whereas a measurable rule $\psi:\Delta^{K-1}\to\gA$ selects them. Set
\begin{equation}
u_\psi(p,y)=U(\psi(p),y),\qquad
\psi_U(p)=\operatorname*{arg\,max}_{\prec;\,a\in\gA}
\sum\nolimits_z p_zU(a,z).
\label{eq:ordered-argmax-convention}
\end{equation}
Varying $U$ with $m=|\gA|$ and $\psi=\psi_U$ gives UC's $\gU_{\mathrm{dec},m}$
(equivalently, $U(a,y)=-\ell(y,a)$). General $u_\psi$ also permits other rules.
\emph{Output correctness} for a reported node $v$ uses
$U(v,y)=J_v(y)=\mathbf{1}\{y\in\gY_v\}$, hence
\[
u_\psi(p,y)=J_{\psi(p)}(y)=\mathbf{1}\{y\in\gY_{\psi(p)}\}, \qquad \mu_{u_\psi}(p)=\sum_{z\in\gY_{\psi(p)}}p_z=\pi_{\psi(p)}(p).
\]
The utility is one when the reported subtree contains the true label and zero otherwise; its predicted mean is the probability assigned to that subtree.
This covers iNaturalist's taxonomic outputs (Appendix~\ref{app:inat-details-v16})
and the status, family, and leaf outputs of NSL-KDD and N-BaIoT
(Appendices~\ref{subsec:nsl-kdd-data} and~\ref{subsec:nbaiot-data}), using their respective reporting rules.
\emph{Binary action comparisons} in Toy and SUPPORT2
(Appendices~\ref{subsec:toy-main-experiment} and~\ref{app:support2-details-v20}) use $\psi_U$ with synthetic payoffs
\[
U(a,y)=(2a-1)\{\mathbf{1}_S(y)-\tau\mathbf{1}_D(y)\},\quad
 a\in\{0,1\},\quad S\subseteq D\subseteq\gY,\quad \tau\in[0,1].
\]
\emph{Abstention and specificity} in MASSIVE (Appendix~\ref{subsec:massive-data})
use the prescribed confidence-threshold rule in Eq.~\eqref{eq:massive-action}
and graded payoffs in Eq.~\eqref{eq:massive-complete-utility}, rather than requiring $\psi=\psi_U$.

UC audits the accuracy of $\mu_u$ as a utility forecast, not decision optimality.
Our experiments specify finite target classes $\gU$. Fix these and total orders
$\prec$ independently of the audit sample; use the first maximizer and ordered
ranks at ties. Measurable scores then give measurable rules.

\subsection{Probability Space, Label Tree, and Predictor}
\label{subsec:probability-tree}
Let $(\Omega,\mathcal F,\mathbb{P})$ be a probability space, with input $X:\Omega\to\gX$ and label $Y:\Omega\to\gY$. Let $\gY=\{1,\ldots,K\}$. Write the predictor and its output as
$f:\gX\to\Delta^{K-1}$, $P:=f(X)$.
Throughout the theory, assume that $P_y>0$ almost surely for every $y\in\gY$. Whenever a deterministic prediction vector $p$ is used, likewise assume that $p_y>0$ for every $y$.

Fix a version of the regular conditional distribution of $Y$ given $P$, and write
\begin{equation}
\eta_y(p):=\mathbb{P}(Y=y\mid P=p),\qquad y\in\gY
\label{eq:regular-conditional-label}
\end{equation}
All subsequent identities conditioned on $P=p$ are understood to hold for almost every $p$ under the law of $P$.
\begin{definition}[Canonical calibration]
\label{def:full-conditional-calibration-app}
A predictor $f$ is canonically calibrated if, for every $y\in\gY$,
\begin{equation}
\mathbb{P}(Y=y\mid P)=P_y\qquad\mathbb{P}\text{-a.s.}
\label{eq:full-conditional-calibration-app}
\end{equation}
In the notation of~Eq.~\eqref{eq:regular-conditional-label}, this is equivalent to $\eta_y(P)=P_y$ almost surely.
\end{definition}
Let $\gT=(V,E_{\gT})$ be a finite rooted tree with root $\rho$, leaf set $\gY$, and internal-node set $V^\circ$. Every non-root node has a unique parent. Write the children of an internal node $v$ as
\begin{equation}
\operatorname{ch}(v)=\{v_1,\ldots,v_{d_v}\},\qquad d_v\ge1
\label{eq:child-set}
\end{equation}
Here $J_a(y)=\mathbf{1}\{y\in\gY_a\}$ indicates membership in the descendant-leaf set $\gY_a$ of node $a$, and $J_{v,j}=J_{v_j}$ for a child $v_j$.
The number of children $d_v$ may vary across nodes, and one-child nodes are allowed. If $d_v=1$, then $q_{v,1}=1$, $J_{v,1}=J_v$, and $\mu_{u,v_1}=\mu_{u,v}$, so $\boldsymbol R_v=\boldsymbol\Delta_{u,v}=\boldsymbol 0$. Such a node contributes to neither the decomposition nor HUC, is never updated, and may be retained or contracted.

For each node $a\in V$, define its descendant leaf set by
\begin{equation}
\gY_a:=\{y\in\gY:\text{the path from $\rho$ to $y$ passes through $a$}\}
\label{eq:descendant-leaf-set}
\end{equation}
Thus $\gY_\rho=\gY$, and $\gY_y=\{y\}$ for a leaf $y$. Since the child subtrees of $v$ have disjoint leaf sets,
\begin{equation}
\gY_v=\dot\bigcup_{j=1}^{d_v}\gY_{v_j}.
\label{eq:child-partition}
\end{equation}

Fix a leaf $y\in\gY$. Write the \emph{ordered sequence} of nodes on the unique path from the root $\rho$ to $y$ as
\begin{equation}
\bigl(w_0(y),w_1(y),\ldots,w_{L_y}(y)\bigr)
\label{eq:root-leaf-path}
\end{equation}
where
\begin{equation}
w_0(y)=\rho,\qquad w_{L_y}(y)=y,\qquad w_{r+1}(y)\in\operatorname{ch}\bigl(w_r(y)\bigr)\quad(r=0,\ldots,L_y-1)
\label{eq:root-leaf-path-endpoints}
\end{equation}
For each $r=0,\ldots,L_y-1$, let
\begin{equation}
w_{r+1}(y)=\bigl(w_r(y)\bigr)_{j_r(y)}
\label{eq:path-child-index}
\end{equation}
define the unique index $j_r(y)\in\{1,\ldots,d_{w_r(y)}\}$ of the child selected by the path at $w_r(y)$. Write the set of internal nodes on the path as
\begin{equation}
\operatorname{Path}(y):=\{w_0(y),\ldots,w_{L_y-1}(y)\}
\label{eq:path-node-set}
\end{equation}
If $v=w_r(y)\in\operatorname{Path}(y)$, also write $j_v(y):=j_r(y)$. When computing for a fixed $y$, we sometimes abbreviate $w_r:=w_r(y)$ and $j_r:=j_r(y)$ for readability.
\subsection{Tree Probabilities and Subtree Mean Utilities}
\label{subsec:subtree-quantities}
For a prediction vector $p\in\Delta^{K-1}$ and a node $a\in V$, define the \emph{predicted reach probability} of $a$ by
\begin{equation}
\pi_a(p):=\sum_{y\in\gY_a}p_y
\label{eq:subtree-probability}
\end{equation}
All leaf probabilities are positive, so $\pi_a(p)>0$ for every node $a$. Moreover, $\gY_\rho=\gY$, $\gY_y=\{y\}$, and $p\in\Delta^{K-1}$ imply
\begin{equation}
\pi_\rho(p)=\sum_{z\in\gY_\rho}p_z=\sum_{z\in\gY}p_z=1, \qquad \pi_y(p)=\sum_{z\in\gY_y}p_z=\sum_{z\in\{y\}}p_z=p_y\qquad(y\in\gY).
\label{eq:root-leaf-probability}
\end{equation}
Define the predicted conditional probability of proceeding to child $v_j$, conditional on reaching internal node $v$ under the prediction, by
\begin{equation}
q_{v,j}(p):=\frac{\pi_{v_j}(p)}{\pi_v(p)},\qquad j=1,\ldots,d_v
\label{eq:branch-conditional}
\end{equation}
and write $\boldsymbol q_v(p):=(q_{v,1}(p),\ldots,q_{v,d_v}(p))^\top$. By~Eq.~\eqref{eq:child-partition},
\[
\sum_{j=1}^{d_v}q_{v,j}(p)
=\frac{\sum_{j=1}^{d_v}\pi_{v_j}(p)}{\pi_v(p)}
=\frac{\sum_{y\in\gY_v}p_y}{\pi_v(p)}
=1.
\]
Therefore $\boldsymbol q_v(p)\in\Delta^{d_v-1}$.

Rearranging~Eq.~\eqref{eq:branch-conditional} gives, for every leaf $y$ and each $r=0,\ldots,L_y-1$,
\begin{equation}
\pi_{w_{r+1}(y)}(p)=\pi_{w_r(y)}(p)q_{w_r(y),j_r(y)}(p)
\label{eq:path-reach-recursion}
\end{equation}
Thus the next node's predicted reach probability equals the current node's predicted reach probability times the selected branch probability. Taking the product in Eq.~\eqref{eq:branch-conditional} along the path gives
\begin{align}
\prod_{r=0}^{L_y-1}q_{w_r(y),j_r(y)}(p)
&=\prod_{r=0}^{L_y-1}\frac{\pi_{w_{r+1}(y)}(p)}{\pi_{w_r(y)}(p)}\notag\\
&=\frac{\pi_{w_1(y)}(p)}{\pi_{w_0(y)}(p)}\frac{\pi_{w_2(y)}(p)}{\pi_{w_1(y)}(p)}\cdots
  \frac{\pi_{w_{L_y}(y)}(p)}{\pi_{w_{L_y-1}(y)}(p)}\notag\\
&=\frac{\pi_{w_{L_y}(y)}(p)}{\pi_{w_0(y)}(p)}
 =\frac{\pi_y(p)}{\pi_\rho(p)}=p_y.
\label{eq:leaf-factorization}
\end{align}
The intermediate reach probabilities $\pi_{w_1(y)}(p),\ldots,\pi_{w_{L_y-1}(y)}(p)$ each appear once in a numerator and once in a denominator and hence cancel. We then use $w_0(y)=\rho$ and $w_{L_y}(y)=y$ from~Eq.~\eqref{eq:root-leaf-path-endpoints}. Finally, substituting $\pi_\rho(p)=1$ and $\pi_y(p)=p_y$ from~Eq.~\eqref{eq:root-leaf-probability} gives the last equality. Thus each predicted leaf probability is the product of the branch probabilities along its path.

Let $u:\Delta^{K-1}\times\gY\to[-1,1]$ be a utility. For a node $a$, define the numerator of its subtree mean utility and the subtree mean itself by
\begin{align}
N_{u,a}(p)&:=\sum_{y\in\gY_a}p_yu(p,y),\label{eq:subtree-utility-sum}\\
\mu_{u,a}(p)&:=\frac{N_{u,a}(p)}{\pi_a(p)}\label{eq:subtree-utility-mean}
\end{align}
At the root and at a leaf,
\begin{equation}
\mu_{u,\rho}(p)=\sum_{y\in\gY}p_yu(p,y)=:\mu_u(p),\qquad\mu_{u,y}(p)=u(p,y).
\label{eq:root-leaf-utility-mean}
\end{equation}

Define the true conditional reach probability directly from the conditional distribution $\eta(p)$ by
\begin{equation}
\pi_a^\star(p):=\sum_{y\in\gY_a}\eta_y(p)=\mathbb{P}(Y\in\gY_a\mid P=p)
\label{eq:true-subtree-probability}
\end{equation}
This is always defined for every node $a$. Only when $\pi_v^\star(p)>0$ do we define the true conditional probability of selecting child $v_j$, conditional on truly reaching $v$, by
\begin{equation}
q_{v,j}^\star(p):=\frac{\pi_{v_j}^\star(p)}{\pi_v^\star(p)}
\label{eq:true-branch-conditional}
\end{equation}
If $\pi_v^\star(p)=0$, the conditioning event $\{Y\in\gY_v\}$ has conditional probability zero, so we do not define $q_{v,j}^\star(p)$. Setting all components to zero would give a vector whose components sum to zero, not a probability vector, so it would also be inappropriate to call that vector the true conditional branch probabilities. Even in this case, $\pi_{v_j}^\star(p)=0$ for every child $v_j$. Subsequent proofs use $\pi_{v_j}^\star(p)-\pi_v^\star(p)q_{v,j}(p)$, which remains defined without conditioning on a null event.

With the subtree indicators introduced above, define the local branch residual and local utility contrast componentwise by
\begin{align}
R_{v,j}(p,y)&:=J_{v,j}(y)-J_v(y)q_{v,j}(p),\label{eq:local-residual-component}\\
\Delta_{u,v,j}(p)&:=\mu_{u,v_j}(p)-\mu_{u,v}(p)\label{eq:local-contrast-component}
\end{align}
and write
$\boldsymbol R_v(p,y):=(R_{v,1}(p,y),\ldots,R_{v,d_v}(p,y))^\top$ and
$\boldsymbol\Delta_{u,v}(p):=(\Delta_{u,v,1}(p),\ldots,\Delta_{u,v,d_v}(p))^\top$, respectively.

Let $\gI:=\{I\subseteq[-1,1]:I\text{ is an interval}\}\cup\{\varnothing\}$. Let $\gC$ be a family of measurable functions $c:\gX\to[-1,1]$, and let $\gU$ be a family of measurable utilities $u:\Delta^{K-1}\times\gY\to[-1,1]$. For a fixed $(c,u,I,v)$, define the node-wise audit moment by
\begin{equation}
\Gamma_{c,u,I,v}(f):=\mathbb{E}\!\left[c(X)\mathbf{1}\{\mu_u(P)\in I\}\left\langle \boldsymbol\Delta_{u,v}(P),\boldsymbol R_v(P,Y)\right\rangle\right]
\label{eq:gamma-definition}
\end{equation}
Also set
\begin{equation}
\mathcal V_{\gT}(u):=\{v\in V^\circ:\boldsymbol\Delta_{u,v}\not\equiv\boldsymbol 0\}
\label{eq:relevant-node-set}
\end{equation}
Here $\boldsymbol\Delta_{u,v}\not\equiv\boldsymbol 0$ means that, for at least one prediction vector $p$ with positive components, the subtree mean utilities $\mu_{u,v_1}(p),\ldots,\mu_{u,v_{d_v}}(p)$ are not all equal. Define
\begin{equation}
\HUC(f;\gC,\gU):=\sup_{\substack{c\in\gC,\,u\in\gU,\,I\in\gI\\v\in\mathcal V_{\gT}(u)}}|\Gamma_{c,u,I,v}(f)|
\label{eq:huc-definition}
\end{equation}
If the candidate set is empty, define the supremum to be zero. For comparison, set
\begin{equation}
\UC(f;\gC,\gU):=\sup_{\substack{c\in\gC,\,u\in\gU,\,I\in\gI}}\left|\mathbb{E}\!\left[c(X)\mathbf{1}\{\mu_u(P)\in I\}\{u(P,Y)-\mu_u(P)\}\right]\right|
\label{eq:uc-definition}
\end{equation}
as the corresponding UC error. Whenever $\gC$ and $\gU$ are finite, write the number of audited triples $(c,u,v)$ as
\begin{equation}
N_{\mathrm{cand}}
:=|\gC|\sum_{u\in\gU}|\mathcal V_{\gT}(u)|.
\label{eq:audit-candidate-count}
\end{equation}

\section{Binary Classification and Binary Trees}
\label{app:binary-first}
Before proceeding to general multiclass classification, we first build intuition through a concrete example of binary classification with a binary tree.
\subsection{Binary Classification: A Single Branching Node}
\label{subsec:binary-two-class}
Let $\gY=\{L,R\}$, let $p_L=1-p_R$, and write $q_L(p):=p_L$ and $q_R(p):=p_R$. For utility values $u_L:=u(p,L)$ and $u_R:=u(p,R)$, the predicted mean utility is
\begin{equation}
\mu_u(p)=q_L(p)u_L+q_R(p)u_R
\label{eq:binary-two-class-mean}
\end{equation}
Define the local utility contrast and binary residual by
\begin{equation}
\Delta_u(p):=u_R-u_L,\qquad R(p,y):=\mathbf{1}\{y=R\}-q_R(p)
\label{eq:binary-two-class-residual}
\end{equation}
respectively.
\begin{lemma}[Pointwise identity for binary classification]
\label{lem:binary-two-class}
For every $y\in\{L,R\}$,
\begin{equation}
u(p,y)-\mu_u(p)=\Delta_u(p)R(p,y).
\label{eq:binary-two-class-identity}
\end{equation}

\end{lemma}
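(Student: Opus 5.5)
The plan is a direct two-case verification, since $y$ takes only the values $L$ and $R$. Before splitting cases, I will rewrite the left-hand side using Eq.~\eqref{eq:binary-two-class-mean} together with $q_L(p)=1-q_R(p)$:
\[
\mu_u(p)=(1-q_R(p))u_L+q_R(p)u_R=u_L+q_R(p)\,(u_R-u_L)=u_L+q_R(p)\Delta_u(p).
\]
This is the only algebraic step needed. It expresses the predicted mean as the utility of the left outcome plus a correction proportional to the contrast $\Delta_u(p)$.

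\emph{Case $y=R$.} Here $R(p,R)=1-q_R(p)$. The left-hand side is
\[
u_R-u_L-q_R(p)\Delta_u(p)=\Delta_u(p)\,(1-q_R(p)),
\]
which equals $\Delta_u(p)R(p,R)$.

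\emph{Case $y=L$.} Here $R(p,L)=-q_R(p)$. The left-hand side is
\[
u_L-u_L-q_R(p)\Delta_u(p)=-q_R(p)\Delta_u(p),
\]
which again equals $\Delta_u(p)R(p,L)$.

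The two cases together prove Eq.~\eqref{eq:binary-two-class-identity} for every $y\in\{L,R\}$. The argument is purely pointwise: it needs no positivity of $p$ and no assumption about the data distribution. The only point to watch is the sign convention. The contrast $\Delta_u=u_R-u_L$ must be paired with the residual of the right indicator, $\mathbf{1}\{y=R\}-q_R$, and not with the left one. There is no real obstacle beyond this.
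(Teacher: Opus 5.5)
Your proof is correct and follows the paper's argument: both verify the identity by checking the cases $y=R$ and $y=L$ separately, using $R(p,R)=1-q_R(p)$ and $R(p,L)=-q_R(p)$. The only cosmetic difference is that you first rewrite $\mu_u(p)=u_L+q_R(p)\Delta_u(p)$, whereas the paper expands $\mu_u(p)=q_L(p)u_L+q_R(p)u_R$ inside each case.
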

\begin{proof}
If $y=R$, then $R(p,R)=1-q_R(p)=q_L(p)$. Hence
\[
\Delta_u(p)R(p,R)
=q_L(p)(u_R-u_L)
=u_R-\{q_L(p)u_L+q_R(p)u_R\}
=u(p,R)-\mu_u(p).
\]
The third equality uses $u_R=\{q_L(p)+q_R(p)\}u_R$.

If $y=L$, then $R(p,L)=-q_R(p)$. Hence
\[
\Delta_u(p)R(p,L)
=-q_R(p)(u_R-u_L)
=u_L-\{q_L(p)u_L+q_R(p)u_R\}
=u(p,L)-\mu_u(p).
\]
The third equality uses $u_L=\{q_L(p)+q_R(p)\}u_L$.
\end{proof}
Writing the true right-label probability as $q_R^\star(p):=\mathbb{P}(Y=R\mid P=p)$ gives
\begin{equation}
\mathbb{E}[R(p,Y)\mid P=p]
=\mathbb{P}(Y=R\mid P=p)-q_R(p)
=q_R^\star(p)-q_R(p).
\label{eq:binary-two-class-conditional-residual}
\end{equation}
Consequently,
\begin{equation}
\mathbb{E}[u(p,Y)-\mu_u(p)\mid P=p]=\{q_R^\star(p)-q_R(p)\}\Delta_u(p).
\label{eq:binary-two-class-conditional-error}
\end{equation}

\subsection{A Single Internal Node of a Binary Tree}
\label{subsec:binary-one-node}
Write the left and right children of an internal node $v$ in a binary tree as $v_L,v_R$. Expanding the definition of subtree mean utility gives
\begin{equation}
\mu_{u,v}(p) =\frac{\pi_{v_L}(p)\mu_{u,v_L}(p)+\pi_{v_R}(p)\mu_{u,v_R}(p)}{\pi_v(p)} =q_{v,L}(p)\mu_{u,v_L}(p)+q_{v,R}(p)\mu_{u,v_R}(p).
\label{eq:binary-parent-mean}
\end{equation}
Define the local utility contrast and binary branch residual by
\begin{equation}
\Delta_{u,v}(p):=\mu_{u,v_R}(p)-\mu_{u,v_L}(p),\qquad R_v(p,y):=J_{v,R}(y)-q_{v,R}(p)J_v(y)
\label{eq:binary-node-residual}
\end{equation}
respectively.
\begin{lemma}[Local identity at a binary branching node]
\label{lem:binary-node}
For every leaf $y$,
\begin{equation}
\Delta_{u,v}(p)R_v(p,y)=\begin{cases}\mu_{u,v_R}(p)-\mu_{u,v}(p),&y\in\gY_{v_R},\\\mu_{u,v_L}(p)-\mu_{u,v}(p),&y\in\gY_{v_L},\\0,&y\notin\gY_v\end{cases}.
\label{eq:binary-node-identity}
\end{equation}

\end{lemma}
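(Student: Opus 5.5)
The plan is a direct case analysis on the location of the leaf $y$ relative to the node $v$, using the parent-mean identity in Eq.~\eqref{eq:binary-parent-mean} and the complementarity $q_{v,L}(p)+q_{v,R}(p)=1$. Because the children's leaf sets partition $\gY_v$ by Eq.~\eqref{eq:child-partition}, exactly one of three cases holds: $y\in\gY_{v_R}$, $y\in\gY_{v_L}$, or $y\notin\gY_v$. In each case I will evaluate the indicators $J_{v,R}(y)$ and $J_v(y)$, simplify $R_v(p,y)$, and multiply by $\Delta_{u,v}(p)=\mu_{u,v_R}(p)-\mu_{u,v_L}(p)$. The argument mirrors Lemma~\ref{lem:binary-two-class}, with leaf utilities replaced by subtree means.

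\emph{Case $y\in\gY_{v_R}$.} Here $J_{v,R}(y)=J_v(y)=1$, so $R_v(p,y)=1-q_{v,R}(p)=q_{v,L}(p)$. This gives
\[
\Delta_{u,v}(p)R_v(p,y)=q_{v,L}(p)\bigl(\mu_{u,v_R}(p)-\mu_{u,v_L}(p)\bigr).
\]
Writing $\mu_{u,v_R}(p)=\{q_{v,L}(p)+q_{v,R}(p)\}\mu_{u,v_R}(p)$, the right-hand side becomes $\mu_{u,v_R}(p)-\{q_{v,L}(p)\mu_{u,v_L}(p)+q_{v,R}(p)\mu_{u,v_R}(p)\}$. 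By Eq.~\eqref{eq:binary-parent-mean} this equals $\mu_{u,v_R}(p)-\mu_{u,v}(p)$.

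\emph{Case $y\in\gY_{v_L}$.} Here $J_{v,R}(y)=0$ and $J_v(y)=1$, so $R_v(p,y)=-q_{v,R}(p)$. The product is $-q_{v,R}(p)\bigl(\mu_{u,v_R}(p)-\mu_{u,v_L}(p)\bigr)$. Writing $\mu_{u,v_L}(p)=\{q_{v,L}(p)+q_{v,R}(p)\}\mu_{u,v_L}(p)$ and applying Eq.~\eqref{eq:binary-parent-mean} again gives $\mu_{u,v_L}(p)-\mu_{u,v}(p)$.

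\emph{Case $y\notin\gY_v$.} Both indicators vanish, since $\gY_{v_R}\subseteq\gY_v$. Hence $R_v(p,y)=0$ and the product is $0$.

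There is no real obstacle. The only point needing care is checking that Eq.~\eqref{eq:binary-parent-mean} is valid, which requires $\pi_v(p)>0$. This follows from the standing assumption that $p$ has strictly positive coordinates, so all $\pi_a(p)$, $q_{v,j}(p)$ and $\mu_{u,a}(p)$ are well defined.
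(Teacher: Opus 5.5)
Your proposal is correct and matches the paper's proof essentially line by line: the same three-way case split on $y\in\gY_{v_R}$, $y\in\gY_{v_L}$, $y\notin\gY_v$, the same simplifications $R_v(p,y)=q_{v,L}(p)$, $-q_{v,R}(p)$, $0$, and the same appeal to Eq.~\eqref{eq:binary-parent-mean} to identify the parent mean. Your added remark that positivity of $p$ guarantees $\pi_v(p)>0$ is a harmless bit of extra care that the paper leaves to its standing assumption.
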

\begin{proof}
If $y\in\gY_{v_R}$, then $J_v(y)=J_{v,R}(y)=1$, so $R_v(p,y)=1-q_{v,R}(p)=q_{v,L}(p)$. Hence
\[
\Delta_{u,v}(p)R_v(p,y)
=q_{v,L}(p)\{\mu_{u,v_R}(p)-\mu_{u,v_L}(p)\}
=\mu_{u,v_R}(p)-\mu_{u,v}(p).
\]
The last equality uses~Eq.~\eqref{eq:binary-parent-mean}.

If $y\in\gY_{v_L}$, then $J_v(y)=1$ and $J_{v,R}(y)=0$, so $R_v(p,y)=-q_{v,R}(p)$. Hence
\[
\Delta_{u,v}(p)R_v(p,y)
=-q_{v,R}(p)\{\mu_{u,v_R}(p)-\mu_{u,v_L}(p)\}
=\mu_{u,v_L}(p)-\mu_{u,v}(p).
\]
If $y\notin\gY_v$, then $J_v(y)=J_{v,R}(y)=0$ and therefore $R_v(p,y)=0$.
\end{proof}
\subsection{Path Decomposition and Conditional Means on a Binary Tree}
\label{subsec:binary-tree-decomposition}\label{subsec:binary-special-case}
\begin{proposition}[Utility-error decomposition on a binary tree]
\label{prop:binary-tree-decomposition}
Suppose every internal node has two children. For every $p$ and leaf $y$,
\begin{equation}
u(p,y)-\mu_u(p)=\sum_{v\in V^\circ}\Delta_{u,v}(p)R_v(p,y).
\label{eq:binary-tree-decomposition}
\end{equation}

\end{proposition}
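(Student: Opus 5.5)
Fix a leaf $y$ and a prediction $p$ with positive coordinates. The plan is to reduce the sum over all internal nodes to a telescoping sum along the root-to-leaf path of $y$.

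\textbf{Step 1: only path nodes contribute.} By Lemma~\ref{lem:binary-node}, $\Delta_{u,v}(p)R_v(p,y)=0$ whenever $y\notin\gY_v$, that is, whenever $v\notin\operatorname{Path}(y)$. So the right-hand side of Eq.~\eqref{eq:binary-tree-decomposition} reduces to
\[
\sum_{r=0}^{L_y-1}\Delta_{u,w_r}(p)R_{w_r}(p,y),
\]
where $w_r=w_r(y)$ as in Eq.~\eqref{eq:root-leaf-path}.

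\textbf{Step 2: identify each path term.} For $v=w_r$, the leaf $y$ lies in the child subtree $\gY_{w_{r+1}}$, whether $w_{r+1}$ is the left or the right child of $w_r$. By the first two cases of Lemma~\ref{lem:binary-node}, the term equals
\[
\mu_{u,w_{r+1}}(p)-\mu_{u,w_r}(p).
\]

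\textbf{Step 3: telescope.} Summing over $r=0,\ldots,L_y-1$, the intermediate means cancel, leaving
\[
\mu_{u,w_{L_y}}(p)-\mu_{u,w_0}(p)=\mu_{u,y}(p)-\mu_{u,\rho}(p).
\]
By Eq.~\eqref{eq:root-leaf-utility-mean}, this equals $u(p,y)-\mu_u(p)$, which proves the claim.

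\textbf{Main obstacle.} The only point needing care is the bookkeeping in Step 2. We must check that the case split in Lemma~\ref{lem:binary-node} matches the path child $w_{r+1}$, so that the sign of $R_v$ is handled consistently. Positivity of $p$ guarantees that all $\pi_a(p)>0$, so every subtree mean is well defined. The endpoint identities at the root ($\gY_\rho=\gY$) and at the leaf ($\gY_y=\{y\}$) then close the telescoping argument.
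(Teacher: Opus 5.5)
Your proposal is correct and matches the paper's proof essentially step for step: Lemma~\ref{lem:binary-node} gives the increment $\mu_{u,w_{r+1}}(p)-\mu_{u,w_r}(p)$ at each path node and zero at off-path nodes, and the path sum telescopes to $\mu_{u,y}(p)-\mu_{u,\rho}(p)=u(p,y)-\mu_u(p)$. The only difference is ordering: you discard the off-path nodes first, whereas the paper telescopes along the path and then extends the sum to all of $V^\circ$.
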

\begin{proof}
Write the path to leaf $y$ as $\rho=w_0,\ldots,w_{L_y}=y$. For every $r$, $w_{r+1}$ is either the left or the right child of $w_r$. By Lemma~\ref{lem:binary-node},
$\Delta_{u,w_r}(p)R_{w_r}(p,y)=\mu_{u,w_{r+1}}(p)-\mu_{u,w_r}(p)$.
Summing along the path gives
\begin{align*}
\sum_{r=0}^{L_y-1}\Delta_{u,w_r}(p)R_{w_r}(p,y)
&=\sum_{r=0}^{L_y-1}\{\mu_{u,w_{r+1}}(p)-\mu_{u,w_r}(p)\}\\[-1mm]
&=\mu_{u,y}(p)-\mu_{u,\rho}(p)=u(p,y)-\mu_u(p).
\end{align*}
Each intermediate subtree mean utility occurs once with a positive sign and once with a negative sign and hence cancels. For a node $v$ outside the path, $y\notin\gY_v$, so $R_v(p,y)=0$. The sum along the path can therefore be extended to all internal nodes.
\end{proof}
For an internal node $v$ of a binary tree, the true conditional reach probability of its right subtree is $\pi_{v_R}^\star(p)=\mathbb{P}(Y\in\gY_{v_R}\mid P=p)$. Thus
\begin{equation}
\begin{aligned}
\mathbb{E}[R_v(p,Y)\mid P=p]
&=\mathbb{P}(Y\in\gY_{v_R}\mid P=p)-q_{v,R}(p)\mathbb{P}(Y\in\gY_v\mid P=p)\\
&=\pi_{v_R}^\star(p)-q_{v,R}(p)\pi_v^\star(p).
\end{aligned}
\label{eq:binary-conditional-node-residual}
\end{equation}
Since $\Delta_{u,v}(p)$ is constant conditional on $P=p$,
\begin{equation}
\mathbb{E}[\Delta_{u,v}(p)R_v(p,Y)\mid P=p]
=\{\pi_{v_R}^\star(p)-q_{v,R}(p)\pi_v^\star(p)\}\Delta_{u,v}(p).
\label{eq:binary-mean-node-contribution}
\end{equation}
When $\pi_v^\star(p)>0$, we can define $q_v^\star(p):=\pi_{v_R}^\star(p)/\pi_v^\star(p)$ and rewrite the right-hand side as
\[
\pi_v^\star(p)\{q_v^\star(p)-q_{v,R}(p)\}\Delta_{u,v}(p).
\]
When $\pi_v^\star(p)=0$, we also have $\pi_{v_R}^\star(p)=0$. The right-hand side of~Eq.~\eqref{eq:binary-mean-node-contribution} is then already zero, without defining $q_v^\star(p)$.
\subsection{Four-Leaf Mean-Utility Decomposition in Section~\ref{sec:single-utility-cancellation}}
\label{subsec:four-leaf-mean-matching}

This subsection verifies the child-utility notation and proves Eq.~\eqref{eq:d-full-sec32} directly, without using the general-tree decomposition. Fix the four-leaf tree, a positive prediction $p$, and the target utility $u$ of Section~\ref{sec:single-utility-cancellation}. Use the conditional distribution $\eta_y(p)$ from Appendix~\ref{subsec:probability-tree}; conditional identities hold for almost every $p$ under the law of $P$. Assume $0<q_{0,R}^\star<1$ as in that section.

\paragraph{Child means and their predicted averages.}
For $t\in\{0,L,R\}$ and $j\in\{L,R\}$, the child mean is
\begin{equation}
\bar u_{t,j}
:=\sum_{y\in\gY_{v_{t,j}}}p_yu(p,y)\,/\!\sum_{y\in\gY_{v_{t,j}}}p_y
=\mu_{u,v_{t,j}}(p).
\label{eq:four-leaf-child-means}
\end{equation}
The denominator is positive, and the normalized weights are nonnegative and sum to one. Thus $\bar u_{t,j}\in[-1,1]$ and, for $\widetilde Y\sim p$,
$\bar u_{t,j}=\mathbb E_{\widetilde Y\sim p} [u(p,\widetilde Y)\mid\widetilde Y\in\gY_{v_{t,j}}]$.
These means are measurable functions of $p$, since they are ratios of finite sums of measurable functions with positive denominators. As in Section~\ref{sec:single-utility-cancellation}, dependence on the fixed $u,p$ is suppressed. The lower children are leaves: $(v_{L,L},v_{L,R},v_{R,L},v_{R,R})=(y_1,y_2,y_3,y_4)$. Hence $\bar u_{t,j}=u(p,v_{t,j})$ for $t\in\{L,R\}$.

For $t,j\in\{L,R\}$, at the root $q_{0,t}=\sum_{y\in\gY_{v_t}}p_y$, and at a lower node $q_{t,j}=p_{v_{t,j}}/q_{0,t}$. Therefore
\begin{equation}
\begin{aligned}
\sum_{j=L,R}q_{t,j}\bar u_{t,j}
&=\sum_{j=L,R}(p_{v_{t,j}}/q_{0,t})u(p,v_{t,j})
=\bar u_{0,t},\\
\sum_{t=L,R}q_{0,t}\bar u_{0,t}
&=\sum_{t=L,R}\sum_{y\in\gY_{v_t}}p_yu(p,y)=\mu_u(p).
\end{aligned}
\label{eq:four-leaf-mean-recursions}
\end{equation}
The first identity holds for $t\in\{L,R\}$. It expresses the predicted mean at a parent as the predicted-branch average of its child means.

\begin{proposition}[Four-leaf mean-utility decomposition]
\label{prop:four-leaf-mean-matching}
With the notation of Section~\ref{sec:single-utility-cancellation},
$D^{\mathrm{full}}(p)=D_{\mathrm{top}}(p)+D_L(p)+D_R(p)$ as in Eq.~\eqref{eq:d-full-sec32}. Each term is the conditional mean of the utility increment at its corresponding node.
\end{proposition}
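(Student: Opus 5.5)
The plan is a direct computation at a fixed positive $p$, using the recursions in Eq.~\eqref{eq:four-leaf-mean-recursions} and a telescoping argument. The identity will be written entirely in terms of the true leaf probabilities $\eta_y(p)$. The assumption $0<q_{0,R}^\star<1$ guarantees $\pi^\star_{v_L}(p),\pi^\star_{v_R}(p)>0$, so the lower true branch probabilities $q_{t,j}^\star=\eta_{v_{t,j}}(p)/q_{0,t}^\star$ are well defined.

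\textbf{Step 1: expand $\mathbb{E}[u(p,Y)\mid P=p]$.} Condition on the root branch and then on the lower branch:
\[
\mathbb{E}[u(p,Y)\mid P=p]=\sum_{y}\eta_y(p)u(p,y)=\sum_{t=L,R}q_{0,t}^\star\sum_{j=L,R}q_{t,j}^\star\bar u_{t,j}.
\]
This uses $\bar u_{t,j}=u(p,v_{t,j})$ for $t\in\{L,R\}$ and $\eta_{v_{t,j}}(p)=q_{0,t}^\star q_{t,j}^\star$.

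\textbf{Step 2: add and subtract the intermediate term $\sum_t q_{0,t}^\star\bar u_{0,t}$.} This splits $D^{\mathrm{full}}(p)$ into two pieces:
\[
\sum_{t}q_{0,t}^\star\Bigl[\sum_j q_{t,j}^\star\bar u_{t,j}-\bar u_{0,t}\Bigr]+\Bigl[\sum_t q_{0,t}^\star\bar u_{0,t}-\mu_u(p)\Bigr].
\]
By Eq.~\eqref{eq:four-leaf-mean-recursions}, $\bar u_{0,t}=\sum_j q_{t,j}\bar u_{t,j}$, so the first bracket is exactly $D_L(p)+D_R(p)$. Also by Eq.~\eqref{eq:four-leaf-mean-recursions}, $\mu_u(p)=\sum_t q_{0,t}\bar u_{0,t}$, so the second bracket is $D_{\mathrm{top}}(p)$. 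This completes the identity.

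\textbf{Step 3: identify each term as a conditional mean of the utility increment.} At the root, the increment is $\bar u_{0,J}-\mu_u(p)$, where $J$ is the root branch taken by $Y$; its conditional mean is $\sum_t q^\star_{0,t}\bar u_{0,t}-\mu_u(p)=D_{\mathrm{top}}(p)$. At node $v_t$, the increment is $\mathbf 1\{Y\in\gY_{v_t}\}(u(p,Y)-\bar u_{0,t})$, and its conditional mean is $q^\star_{0,t}\bigl[\sum_j q^\star_{t,j}\bar u_{t,j}-\bar u_{0,t}\bigr]=D_t(p)$.

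Conditional identities hold for almost every $p$ under the fixed regular conditional distribution. None of these steps is a real obstacle; the only care needed is the well-definedness of $q^\star_{t,j}$, which the assumption $0<q_{0,R}^\star<1$ secures.
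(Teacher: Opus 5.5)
Your proposal is correct and follows essentially the same route as the paper. Both arguments expand the true conditional mean via the multiplication rule, add and subtract $\sum_{t}q_{0,t}^\star\bar u_{0,t}$, apply the predicted-branch recursions, and then read off $D_{\mathrm{top}}$ and $D_t$ as conditional means of the root and lower-node increments. Your lower-node increment $\mathbf 1\{Y\in\gY_{v_t}\}(u(p,Y)-\bar u_{0,t})$ coincides with the paper's $\sum_{j}J_{v_{t,j}}(Y)\{\bar u_{t,j}-\bar u_{0,t}\}$, because the children of $v_t$ are leaves.
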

\begin{proof}
For $t,j\in\{L,R\}$, the multiplication rule for conditional probabilities gives
$\mathbb P(Y=v_{t,j}\mid P=p)=q_{0,t}^\star q_{t,j}^\star$.
Because these children are leaves, the true conditional mean is
\begin{equation}
\mathbb E[u(p,Y)\mid P=p]
=\sum_{t=L,R}q_{0,t}^\star\sum_{j=L,R}q_{t,j}^\star\bar u_{t,j}.
\label{eq:four-leaf-true-mean}
\end{equation}
Subtract the predicted root mean in Eq.~\eqref{eq:four-leaf-mean-recursions} and add and subtract $\sum_{t=L,R}q_{0,t}^\star\bar u_{0,t}$:
\begin{align*}
D^{\mathrm{full}}(p)
&=\sum_{t=L,R}q_{0,t}^\star\sum_{j=L,R}q_{t,j}^\star\bar u_{t,j}
-\sum_{t=L,R}q_{0,t}\bar u_{0,t}\\
&=[\sum_{t=L,R}q_{0,t}^\star\bar u_{0,t}
-\sum_{t=L,R}q_{0,t}\bar u_{0,t}]
+\sum_{t=L,R}q_{0,t}^\star
[\sum_{j=L,R}q_{t,j}^\star\bar u_{t,j}-\bar u_{0,t}]\\
&=D_{\mathrm{top}}(p)
+\sum_{t=L,R}q_{0,t}^\star
[\sum_{j=L,R}q_{t,j}^\star\bar u_{t,j}
-\sum_{j=L,R}q_{t,j}\bar u_{t,j}]\\
&=D_{\mathrm{top}}(p)+D_L(p)+D_R(p).
\end{align*}
The third equality uses the first identity in Eq.~\eqref{eq:four-leaf-mean-recursions}.

For the increment interpretation, consider an observed leaf $y=v_{t,j}$. Along its two-step path,
\[
u(p,y)-\mu_u(p)
=\{\bar u_{0,t}-\mu_u(p)\}
+\{\bar u_{t,j}-\bar u_{0,t}\}.
\]
Equivalently, using the subtree indicators of Appendix~\ref{app:binary-first},
\begin{equation}
u(p,y)-\mu_u(p) ={}\sum_{t=L,R}J_{v_t}(y)\{\bar u_{0,t}-\mu_u(p)\} +\sum_{t=L,R}\sum_{j=L,R}J_{v_{t,j}}(y) \{\bar u_{t,j}-\bar u_{0,t}\}.
\label{eq:four-leaf-pointwise-increments}
\end{equation}
The conditional mean of the root term is
\[
\sum_{t=L,R}q_{0,t}^\star\{\bar u_{0,t}-\mu_u(p)\}
=\sum_{t=L,R}q_{0,t}^\star\bar u_{0,t}
-\sum_{t=L,R}q_{0,t}\bar u_{0,t}
=D_{\mathrm{top}}(p).
\]
For a lower node $v_t$, its conditional mean is
\[
\sum_{j=L,R}q_{0,t}^\star q_{t,j}^\star
\{\bar u_{t,j}-\bar u_{0,t}\}
=q_{0,t}^\star[\sum_{j=L,R}q_{t,j}^\star\bar u_{t,j}
-\sum_{j=L,R}q_{t,j}\bar u_{t,j}]
=D_t(p).
\]
Here $\sum_jq_{t,j}^\star=1$ and Eq.~\eqref{eq:four-leaf-mean-recursions} give the equality. Thus the lower-node contributions retain their true reach probabilities.
\end{proof}

\paragraph{Correspondence with the binary-node calculation.}
Since $q_{t,L}^\star-q_{t,L}=-(q_{t,R}^\star-q_{t,R})$, the same terms can also be written as
\begin{equation}
\begin{aligned}
D_{\mathrm{top}}(p)&=(q_{0,R}^\star-q_{0,R})(\bar u_{0,R}-\bar u_{0,L}),\\
D_t(p)&=q_{0,t}^\star(q_{t,R}^\star-q_{t,R})(\bar u_{t,R}-\bar u_{t,L}),
\qquad t\in\{L,R\}.
\end{aligned}
\label{eq:four-leaf-binary-factorization}
\end{equation}
These are precisely the conditional means in Appendix~\ref{subsec:binary-tree-decomposition}: the binary contrast is the right-child mean minus the left-child mean. By Lemma~\ref{lem:one-step-increment}, the corresponding summands in Eq.~\eqref{eq:four-leaf-pointwise-increments} also equal the vector inner products used in Section~\ref{sec:huc-definition}. This verifies the correspondence between the three quantities in Eq.~\eqref{eq:d-full-sec32} and the general-node contributions.

\paragraph{The UC identity in Section~\ref{sec:single-utility-cancellation}.}
For a predictor satisfying the four-leaf assumptions almost surely, write $D^{\mathrm{full}}(P)=\mathbb E[u(P,Y)-\mu_u(P)\mid P]$ and restore the dependence of the $D_t$ on $P$. For any $I\in\gI$, the interval indicator is measurable with respect to $P$, so
\begin{equation}
\begin{aligned}
&\mathbb E[\mathbf1\{\mu_u(P)\in I\}\{u(P,Y)-\mu_u(P)\}] =\mathbb E[\mathbf1\{\mu_u(P)\in I\}D^{\mathrm{full}}(P)]\\
&=\sum_{t\in\{\mathrm{top},L,R\}}
\mathbb E[\mathbf1\{\mu_u(P)\in I\}D_t(P)].
\end{aligned}
\label{eq:four-leaf-uc-moment}
\end{equation}
Taking the absolute value and then the supremum over $I$ gives the displayed UC identity in Section~\ref{sec:single-utility-cancellation}. Thus the sum is taken before the absolute value, allowing opposite node contributions to cancel within the same predicted-utility interval.

\section{Extension to General Trees}
\label{app:general-tree}

For binary trees, each internal node was associated with one scalar residual and one scalar utility contrast. At a general internal node $v$ with $d_v$ children, both the local branch residual and the local utility contrast are $d_v$-dimensional vectors. We use the predicted reach probability $\pi_a(p)$, predicted conditional branch probability $q_{v,j}(p)$, and path-product representation already defined and derived in~Eq.~\eqref{eq:subtree-probability}, Eq.~\eqref{eq:branch-conditional}, and~Eq.~\eqref{eq:leaf-factorization}. No additional auxiliary stochastic process is introduced; the argument uses only these quantities determined by the leaf distribution $p$.
\subsection{Local Algebra on a General Tree}
\label{subsec:general-local-algebra}
\begin{lemma}[Recursions for subtree probabilities and subtree mean utilities]
\label{lem:subtree-recursions}
For every internal node $v$,
\begin{align}
\pi_v(p)&=\sum_{j=1}^{d_v}\pi_{v_j}(p),\label{eq:subtree-probability-recursion}\\
N_{u,v}(p)&=\sum_{j=1}^{d_v}N_{u,v_j}(p),\label{eq:subtree-utility-sum-recursion}\\
\mu_{u,v}(p)&=\sum_{j=1}^{d_v}q_{v,j}(p)\mu_{u,v_j}(p).\label{eq:subtree-mean-recursion}
\end{align}
Moreover,
\begin{equation}
\left\langle \boldsymbol q_v(p),\boldsymbol\Delta_{u,v}(p)\right\rangle=0.
\label{eq:q-delta-orthogonality}
\end{equation}
\end{lemma}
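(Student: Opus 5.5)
The plan is to derive all four identities from the child partition in Eq.~\eqref{eq:child-partition} and the definitions in Eqs.~\eqref{eq:subtree-probability}, \eqref{eq:subtree-utility-sum}, \eqref{eq:subtree-utility-mean}, and \eqref{eq:branch-conditional}, in the order stated.

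For Eq.~\eqref{eq:subtree-probability-recursion}, I will write $\pi_v(p)=\sum_{y\in\gY_v}p_y$. Because $\gY_v$ is the disjoint union of the $\gY_{v_j}$, the sum splits into $\sum_{j}\sum_{y\in\gY_{v_j}}p_y=\sum_j\pi_{v_j}(p)$. For Eq.~\eqref{eq:subtree-utility-sum-recursion}, the same splitting applies to the summand $p_yu(p,y)$ and gives $\sum_jN_{u,v_j}(p)$.

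For Eq.~\eqref{eq:subtree-mean-recursion}, I will divide the previous identity by $\pi_v(p)$, which is strictly positive because all leaf probabilities are positive. I then write $N_{u,v_j}(p)=\pi_{v_j}(p)\mu_{u,v_j}(p)$, which is legitimate since $\pi_{v_j}(p)>0$. This gives
\[
\mu_{u,v}(p)=\sum_j\frac{\pi_{v_j}(p)}{\pi_v(p)}\mu_{u,v_j}(p)=\sum_jq_{v,j}(p)\mu_{u,v_j}(p).
\]

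For Eq.~\eqref{eq:q-delta-orthogonality}, I will expand the inner product as
\[
\langle\boldsymbol q_v,\boldsymbol\Delta_{u,v}\rangle=\sum_jq_{v,j}\mu_{u,v_j}-\mu_{u,v}\sum_jq_{v,j}.
\]
The first term equals $\mu_{u,v}$ by Eq.~\eqref{eq:subtree-mean-recursion}, and $\sum_jq_{v,j}=1$ was shown right after Eq.~\eqref{eq:branch-conditional}. The difference is therefore zero.

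No step poses a real obstacle. The only points to check are that every denominator is positive, which follows from the standing assumption $p_y>0$, and that the children's leaf sets are genuinely disjoint and exhaust $\gY_v$, which is Eq.~\eqref{eq:child-partition}. A one-child node ($d_v=1$) is covered by the same argument without change.
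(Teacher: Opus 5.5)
Your proposal is correct and follows the paper's proof essentially line for line. Both split the sums over the child partition, rewrite $N_{u,v_j}(p)=\pi_{v_j}(p)\mu_{u,v_j}(p)$ and divide by $\pi_v(p)$, then expand $\langle\boldsymbol q_v(p),\boldsymbol\Delta_{u,v}(p)\rangle$ using the mean recursion and $\sum_j q_{v,j}(p)=1$.
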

\begin{proof}
By~Eq.~\eqref{eq:child-partition},
\begin{align*}
\pi_v(p)&=\sum_{y\in\gY_v}p_y=\sum_{j=1}^{d_v}\sum_{y\in\gY_{v_j}}p_y=\sum_{j=1}^{d_v}\pi_{v_j}(p),\\
N_{u,v}(p)&=\sum_{y\in\gY_v}p_yu(p,y)=\sum_{j=1}^{d_v}\sum_{y\in\gY_{v_j}}p_yu(p,y)=\sum_{j=1}^{d_v}N_{u,v_j}(p).
\end{align*}
Since $N_{u,v_j}(p)=\pi_{v_j}(p)\mu_{u,v_j}(p)$,
\[
\mu_{u,v}(p)
=\frac{\sum_{j=1}^{d_v}\pi_{v_j}(p)\mu_{u,v_j}(p)}{\pi_v(p)}
=\sum_{j=1}^{d_v}q_{v,j}(p)\mu_{u,v_j}(p).
\]
Finally,
\[
\left\langle \boldsymbol q_v(p),\boldsymbol\Delta_{u,v}(p)\right\rangle
=\sum_{j=1}^{d_v}q_{v,j}(p)\mu_{u,v_j}(p)-\mu_{u,v}(p)\sum_{j=1}^{d_v}q_{v,j}(p)
=0.
\]
\end{proof}
\begin{lemma}[Local contribution of one node]
\label{lem:one-step-increment}
If $y\in\gY_{v_k}$, then
\begin{equation}
\left\langle \boldsymbol\Delta_{u,v}(p),\boldsymbol R_v(p,y)\right\rangle=\mu_{u,v_k}(p)-\mu_{u,v}(p).
\label{eq:one-step-increment}
\end{equation}
If $y\notin\gY_v$, the left-hand side is zero. Moreover, for every $p,y$,
\begin{equation}
\sum_{j=1}^{d_v}R_{v,j}(p,y)=0.
\label{eq:residual-sum-zero}
\end{equation}
\end{lemma}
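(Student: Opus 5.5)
The proof is a direct computation from the definitions in Eq.~\eqref{eq:local-residual-component} and Eq.~\eqref{eq:local-contrast-component}, split according to where the leaf $y$ sits relative to the subtree of $v$. Because the child leaf sets partition $\gY_v$ (Eq.~\eqref{eq:child-partition}), exactly one of two cases holds: either $y\notin\gY_v$, or $y\in\gY_{v_k}$ for a unique index $k$.

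\emph{Case $y\notin\gY_v$.} Here $J_v(y)=0$, and also $J_{v,j}(y)=0$ for every $j$, since $\gY_{v_j}\subseteq\gY_v$. Hence every component $R_{v,j}(p,y)=0$, so both the inner product and $\sum_jR_{v,j}(p,y)$ vanish.

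\emph{Case $y\in\gY_{v_k}$.} Here $J_v(y)=1$ and $J_{v,j}(y)=\mathbf 1\{j=k\}$ by disjointness of the child leaf sets, so $R_{v,j}(p,y)=\mathbf 1\{j=k\}-q_{v,j}(p)$. Expanding the inner product gives
\[
\left\langle\boldsymbol\Delta_{u,v}(p),\boldsymbol R_v(p,y)\right\rangle=\Delta_{u,v,k}(p)-\left\langle\boldsymbol q_v(p),\boldsymbol\Delta_{u,v}(p)\right\rangle .
\]
The second term is zero by the orthogonality Eq.~\eqref{eq:q-delta-orthogonality} of Lemma~\ref{lem:subtree-recursions}. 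What remains is $\Delta_{u,v,k}(p)=\mu_{u,v_k}(p)-\mu_{u,v}(p)$, which is Eq.~\eqref{eq:one-step-increment}. For the residual sum in this case, $\sum_jR_{v,j}(p,y)=1-\sum_jq_{v,j}(p)=0$, using $\boldsymbol q_v(p)\in\Delta^{d_v-1}$.

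No step is a real obstacle. The only points needing care are that the partition really makes $J_{v,j}(y)$ a one-hot vector, and that the orthogonality identity absorbs the $-q_{v,j}$ terms. Both follow immediately from results already established.
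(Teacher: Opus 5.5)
Your proposal is correct and matches the paper's proof essentially line for line. It rests on the same three facts: the one-hot structure of $J_{v,j}(y)$ for $y\in\gY_{v_k}$, the orthogonality $\langle\boldsymbol q_v(p),\boldsymbol\Delta_{u,v}(p)\rangle=0$ from Lemma~\ref{lem:subtree-recursions}, and the vanishing of $\boldsymbol R_v(p,y)$ for $y\notin\gY_v$. The only cosmetic difference is that the paper proves $\sum_jR_{v,j}(p,y)=0$ in one line, using $\sum_jJ_{v,j}(y)=J_v(y)$ and $\sum_jq_{v,j}(p)=1$, rather than splitting into cases.
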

\begin{proof}
Suppose $y\in\gY_{v_k}$. Then $J_v(y)=1$, $J_{v,k}(y)=1$, and $J_{v,j}(y)=0$ for $j\ne k$. Thus
\begin{equation}
R_{v,j}(p,y)=\mathbf{1}\{j=k\}-q_{v,j}(p).
\label{eq:residual-on-path}
\end{equation}
Consequently,
\[
\left\langle \boldsymbol\Delta_{u,v}(p),\boldsymbol R_v(p,y)\right\rangle
=\Delta_{u,v,k}(p)-\sum_{j=1}^{d_v}q_{v,j}(p)\Delta_{u,v,j}(p)
=\mu_{u,v_k}(p)-\mu_{u,v}(p).
\]
The last equality uses~Eq.~\eqref{eq:q-delta-orthogonality}.

If $y\notin\gY_v$, then $J_v(y)=J_{v,j}(y)=0$ for every $j$, so $\boldsymbol R_v(p,y)=\boldsymbol 0$. Furthermore,
\[
\sum_{j=1}^{d_v}R_{v,j}(p,y)
=\sum_{j=1}^{d_v}J_{v,j}(y)-J_v(y)\sum_{j=1}^{d_v}q_{v,j}(p)
=0.
\]
\end{proof}
\subsection{Utility-Error Decomposition and Conditional Means on a General Tree}
\label{subsec:general-tree-decomposition}\label{subsec:tree-decomposition-proof}
\begin{theorem}[Utility-error decomposition on a general tree]
\label{thm:tree-decomposition}
For every positive prediction vector $p$ and every leaf $y$,
\begin{equation}
u(p,y)-\mu_u(p)=\sum_{v\in V^\circ}\left\langle \boldsymbol\Delta_{u,v}(p),\boldsymbol R_v(p,y)\right\rangle.
\label{eq:tree-decomposition}
\end{equation}
\end{theorem}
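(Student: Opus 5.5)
The plan is to mirror the binary-tree argument of Proposition~\ref{prop:binary-tree-decomposition}, replacing Lemma~\ref{lem:binary-node} by Lemma~\ref{lem:one-step-increment}.

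Fix a positive $p$ and a leaf $y$, and take the root-to-leaf path $\rho=w_0(y),\ldots,w_{L_y}(y)=y$ from Eq.~\eqref{eq:root-leaf-path}. The first step is to split the sum over $V^\circ$ into internal nodes on $\operatorname{Path}(y)$ and internal nodes off the path.

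\emph{Off-path nodes.} If $v\notin\operatorname{Path}(y)$, then $y\notin\gY_v$. This follows from the definition Eq.~\eqref{eq:descendant-leaf-set}: $\gY_v$ consists of leaves whose root path passes through $v$. Hence $\boldsymbol R_v(p,y)=\boldsymbol 0$ by the second part of Lemma~\ref{lem:one-step-increment}, and these terms vanish.

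\emph{On-path nodes.} For $v=w_r(y)$ we have $y\in\gY_{v_{k}}$ with $k=j_r(y)$, by Eq.~\eqref{eq:path-child-index}. Lemma~\ref{lem:one-step-increment} then gives
\[
\langle\boldsymbol\Delta_{u,w_r},\boldsymbol R_{w_r}(p,y)\rangle=\mu_{u,w_{r+1}}(p)-\mu_{u,w_r}(p).
\]

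Summing over $r=0,\ldots,L_y-1$ telescopes to $\mu_{u,y}(p)-\mu_{u,\rho}(p)$. By Eq.~\eqref{eq:root-leaf-utility-mean}, this equals $u(p,y)-\mu_u(p)$.

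The only delicate point is the bookkeeping that each on-path internal node appears exactly once and that every off-path internal node satisfies $y\notin\gY_v$. Uniqueness of paths in a rooted tree gives both. Nodes with $d_v=1$ need no special treatment: the identity still holds there, with both sides of the local increment equal to zero. Positivity of $p$ ensures that all $\mu_{u,a}$ are well defined.
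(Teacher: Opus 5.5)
Your proposal is correct and is essentially the paper's own proof. As there, Lemma~\ref{lem:one-step-increment} turns each on-path term into $\mu_{u,w_{r+1}}(p)-\mu_{u,w_r}(p)$, these telescope to $u(p,y)-\mu_u(p)$, and every off-path internal node contributes zero because $y\notin\gY_v$.
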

\begin{proof}
For a leaf $y$, use the path sequence defined in~Eq.~\eqref{eq:root-leaf-path}, abbreviating $w_r:=w_r(y)$ in this proof. By Lemma~\ref{lem:one-step-increment}, each internal node $w_r$ along the path satisfies
$\left\langle \boldsymbol\Delta_{u,w_r}(p),\boldsymbol R_{w_r}(p,y)\right\rangle=\mu_{u,w_{r+1}}(p)-\mu_{u,w_r}(p)$.
Therefore,
\begin{align*}
\sum_{r=0}^{L_y-1}\left\langle \boldsymbol\Delta_{u,w_r}(p),\boldsymbol R_{w_r}(p,y)\right\rangle
&=\sum_{r=0}^{L_y-1}\{\mu_{u,w_{r+1}}(p)-\mu_{u,w_r}(p)\}\\[-1mm]
&=\mu_{u,y}(p)-\mu_{u,\rho}(p)=u(p,y)-\mu_u(p).
\end{align*}
At an internal node $v$ outside the path, $y\notin\gY_v$, so its local contribution is zero by~Lemma~\ref{lem:one-step-increment}. Hence the sum along the path can be extended to all of $V^\circ$.
\end{proof}
\begin{lemma}[Conditional mean of a branch residual and mean-utility matching]
\label{lem:conditional-residual}
For every internal node $v$ and child $v_j$,
\begin{equation}
\mathbb{E}[R_{v,j}(p,Y)\mid P=p]=\pi_{v_j}^\star(p)-\pi_v^\star(p)q_{v,j}(p).
\label{eq:conditional-residual-component}
\end{equation}
Consequently, the conditional mean of the local contribution of $v$ is
\begin{equation}
\begin{aligned}
&\mathbb E[\langle\boldsymbol\Delta_{u,v}(p),\boldsymbol R_v(p,Y)\rangle\mid P=p] =\sum_{j=1}^{d_v}\Delta_{u,v,j}(p)
\{\pi_{v_j}^\star(p)-\pi_v^\star(p)q_{v,j}(p)\}\\
&=\sum_{j=1}^{d_v}\pi_{v_j}^\star(p)\mu_{u,v_j}(p)
-\pi_v^\star(p)\mu_{u,v}(p).
\end{aligned}
\label{eq:mean-node-contribution}
\end{equation}
If $\pi_v^\star(p)>0$, this is equivalently
\begin{equation}
\mathbb E[\langle\boldsymbol\Delta_{u,v}(p),\boldsymbol R_v(p,Y)\rangle\mid P=p] =\pi_v^\star(p)[\sum_{j=1}^{d_v}q_{v,j}^\star(p)\mu_{u,v_j}(p)-\sum_{j=1}^{d_v}q_{v,j}(p)\mu_{u,v_j}(p)].
\label{eq:node-mean-matching}
\end{equation}
If $\pi_v^\star(p)=0$, the quantities in Eq.~\eqref{eq:mean-node-contribution} are zero without defining $q_{v,j}^\star(p)$.
\end{lemma}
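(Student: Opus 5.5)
The plan is a direct computation from the definitions, using linearity of conditional expectation and the algebra of Lemma~\ref{lem:subtree-recursions}.

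\emph{Component residuals.} Fix a positive $p$ and an internal node $v$ with child $v_j$. Since $q_{v,j}(p)$ is deterministic given $P=p$, linearity gives
\[
\mathbb{E}[R_{v,j}(p,Y)\mid P=p]=\mathbb{P}(Y\in\gY_{v_j}\mid P=p)-q_{v,j}(p)\,\mathbb{P}(Y\in\gY_v\mid P=p).
\]
By Eq.~\eqref{eq:true-subtree-probability}, the right-hand side is $\pi_{v_j}^\star(p)-\pi_v^\star(p)q_{v,j}(p)$. All of this is understood for almost every $p$ under the fixed regular conditional distribution $\eta(p)$, writing $\mathbb{P}(Y\in\gY_a\mid P=p)=\sum_{y\in\gY_a}\eta_y(p)$.

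\emph{First line of Eq.~\eqref{eq:mean-node-contribution}.} The vector $\boldsymbol\Delta_{u,v}(p)$ is constant given $P=p$. Expanding the inner product as a finite sum and applying the component identity term by term gives the first line.

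\emph{Second line.} Expand $\Delta_{u,v,j}=\mu_{u,v_j}-\mu_{u,v}$. The $\mu_{u,v_j}$ parts give
\[
\sum_j\pi_{v_j}^\star\mu_{u,v_j}-\pi_v^\star\sum_jq_{v,j}\mu_{u,v_j}.
\]
The $\mu_{u,v}$ parts give
\[
-\mu_{u,v}\Bigl(\sum_j\pi_{v_j}^\star-\pi_v^\star\sum_jq_{v,j}\Bigr).
\]
This last expression vanishes: $\sum_j\pi^\star_{v_j}=\pi^\star_v$ by the partition Eq.~\eqref{eq:child-partition}, and $\sum_jq_{v,j}=1$. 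In the remaining expression, Eq.~\eqref{eq:subtree-mean-recursion} turns $\sum_jq_{v,j}\mu_{u,v_j}$ into $\mu_{u,v}$, which yields the second line. (Alternatively, one can use Eq.~\eqref{eq:q-delta-orthogonality} directly.)

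\emph{Eq.~\eqref{eq:node-mean-matching}.} If $\pi_v^\star(p)>0$, substitute $\pi_{v_j}^\star=\pi_v^\star q_{v,j}^\star$ from Eq.~\eqref{eq:true-branch-conditional}. Rewrite $\mu_{u,v}=\sum_jq_{v,j}\mu_{u,v_j}$ by Eq.~\eqref{eq:subtree-mean-recursion}, then factor out $\pi_v^\star$.

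\emph{Null case.} If $\pi_v^\star(p)=0$, each $\pi_{v_j}^\star(p)\le\pi_v^\star(p)=0$ because the sums are of nonnegative $\eta_y$ over subsets. Hence every term in both lines of Eq.~\eqref{eq:mean-node-contribution} is zero, and $q^\star$ is never needed.

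The only delicate point is measure-theoretic bookkeeping. The identities should be stated for a fixed version of $\eta$ and hold for almost every $p$. Everything else is finite-sum algebra, so I expect no real obstacle.
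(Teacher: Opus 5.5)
Your proposal is correct and follows essentially the same route as the paper. The paper likewise uses linearity of conditional expectation for the component residual, cancels the parent-mean terms via $\sum_j\pi^\star_{v_j}(p)=\pi^\star_v(p)$ and $\sum_j q_{v,j}(p)=1$, applies Eq.~\eqref{eq:subtree-mean-recursion}, substitutes $\pi^\star_{v_j}=\pi^\star_v q^\star_{v,j}$ when $\pi^\star_v(p)>0$, and uses $0\le\pi^\star_{v_j}(p)\le\pi^\star_v(p)=0$ in the null case. The only cosmetic difference is that the paper makes the $q^\star$ substitution in the penultimate line instead of re-expanding $\mu_{u,v}$ in the final one.
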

\begin{proof}
Taking conditional expectations in Eq.~\eqref{eq:local-residual-component} gives
\begin{align*}
\mathbb E[R_{v,j}(p,Y)\mid P=p]
&=\mathbb E[J_{v,j}(Y)\mid P=p]
-q_{v,j}(p)\mathbb E[J_v(Y)\mid P=p]\\
&=\pi_{v_j}^\star(p)-\pi_v^\star(p)q_{v,j}(p).
\end{align*}
This proves Eq.~\eqref{eq:conditional-residual-component}. The utility contrast is fixed conditional on $P=p$; expanding the inner product and interchanging the finite sum with conditional expectation gives the first equality in Eq.~\eqref{eq:mean-node-contribution}.

The true child subtrees partition the parent subtree, so
$\sum_j\pi_{v_j}^\star(p)=\pi_v^\star(p)$, and $\sum_jq_{v,j}(p)=1$. Substituting
$\Delta_{u,v,j}(p)=\mu_{u,v_j}(p)-\mu_{u,v}(p)$ and using these identities, the parent-mean terms cancel:
\begin{align*}
\sum_{j=1}^{d_v}\Delta_{u,v,j}(p)
 \{\pi_{v_j}^\star(p)-\pi_v^\star(p)q_{v,j}(p)\}
&=\sum_{j=1}^{d_v}\mu_{u,v_j}(p)
 \{\pi_{v_j}^\star(p)-\pi_v^\star(p)q_{v,j}(p)\}\\
&=\sum_{j=1}^{d_v}\pi_{v_j}^\star(p)\mu_{u,v_j}(p)
 -\pi_v^\star(p)\sum_{j=1}^{d_v}q_{v,j}(p)\mu_{u,v_j}(p)\\
&=\sum_{j=1}^{d_v}\pi_{v_j}^\star(p)\mu_{u,v_j}(p)
 -\pi_v^\star(p)\mu_{u,v}(p).
\end{align*}
The last line uses Eq.~\eqref{eq:subtree-mean-recursion}. This proves the second equality in Eq.~\eqref{eq:mean-node-contribution} for every true reach probability.

If $\pi_v^\star(p)>0$, substitute $\pi_{v_j}^\star(p)=\pi_v^\star(p)q_{v,j}^\star(p)$ in the penultimate line and factor out $\pi_v^\star(p)$. This gives Eq.~\eqref{eq:node-mean-matching} and Eq.~\eqref{eq:mean-node-contribution-main}. If $\pi_v^\star(p)=0$, then $0\le\pi_{v_j}^\star(p)\le\pi_v^\star(p)=0$ for every child, so all expressions in Eq.~\eqref{eq:mean-node-contribution} vanish without conditioning on a null event.
\end{proof}

\paragraph{Meaning of the two averages.}
For $\widetilde Y\sim p$, each value $\mu_{u,v_j}(p)$ is the conditional average of $u(p,\widetilde Y)$ given $\widetilde Y\in\gY_{v_j}$. Thus the finer outcomes within a child are marginalized under the predictive distribution. When $\pi_v^\star(p)>0$, the actual child index $j_v(Y)$, conditional on $P=p$ and $Y\in\gY_v$, has probabilities $q_{v,j}^\star(p)$. Consequently,
\begin{equation}
\mathbb E[\mu_{u,v_{j_v(Y)}}(p)-\mu_{u,v}(p) \mid P=p,\ Y\in\gY_v] =\sum_{j=1}^{d_v}q_{v,j}^\star(p)\mu_{u,v_j}(p) -\sum_{j=1}^{d_v}q_{v,j}(p)\mu_{u,v_j}(p).
\label{eq:conditional-parent-child-mean-difference}
\end{equation}
Only the outer branch probabilities differ between the two sums; both use the same predictively marginalized child values. Equation~\eqref{eq:node-mean-matching} multiplies this difference by the actual probability of reaching the parent. Under the predictive distribution instead, the child average equals $\mu_{u,v}(p)$ by Eq.~\eqref{eq:subtree-mean-recursion}.

\begin{corollary}[Node decomposition of the cell-wise UC residual]
\label{cor:uc-cell-decomposition}
For every $c,u,I$,
\begin{equation}
\mathbb{E}\!\left[c(X)\mathbf{1}\{\mu_u(P)\in I\}\{u(P,Y)-\mu_u(P)\}\right]=\sum_{v\in V^\circ}\Gamma_{c,u,I,v}(f).
\label{eq:uc-cell-node-sum}
\end{equation}
\end{corollary}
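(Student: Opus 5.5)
The plan is to derive Corollary~\ref{cor:uc-cell-decomposition} from the pointwise identity of Theorem~\ref{thm:tree-decomposition} by multiplying it by the common audit weight and using linearity of expectation over the finitely many internal nodes. No conditioning on $P$ is needed, because the identity holds for every leaf and every positive prediction, not only in conditional mean.

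First, I will fix $c\in\gC$, $u\in\gU$, and $I\in\gI$. On the almost-sure event that $P$ has strictly positive coordinates, I substitute $(p,y)=(P(\omega),Y(\omega))$ into Eq.~\eqref{eq:tree-decomposition}. This gives
\[
u(P,Y)-\mu_u(P)=\sum_{v\in V^\circ}\langle\boldsymbol\Delta_{u,v}(P),\boldsymbol R_v(P,Y)\rangle \quad\text{a.s.}
\]
Then I multiply both sides by the scalar $c(X)\mathbf 1\{\mu_u(P)\in I\}$. This weight does not depend on $v$, so it distributes across the finite sum.

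Second, I will check integrability so that expectations can be taken and exchanged with the finite sum. The random variables are measurable because $\pi_a,\mu_{u,a},q_{v,j}$ are ratios of finite sums of measurable functions with positive denominators. For boundedness:
\begin{itemize}
\item $|c|\le1$ and the indicator is at most $1$.
\item $|u-\mu_u|\le2$.
\item For each $v$, each $\mu_{u,v_j}\in[-1,1]$, so every component of $\boldsymbol\Delta_{u,v}$ lies in $[-2,2]$.
\item Every component of $\boldsymbol R_v$ lies in $[-1,1]$, since $J\in\{0,1\}$ and $q\in[0,1]$.
\end{itemize}
Hence each summand is bounded by $2d_v$, and every term is integrable.

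Taking expectations and using linearity over the finite set $V^\circ$ then yields $\sum_{v}\Gamma_{c,u,I,v}(f)$ by definition~\eqref{eq:gamma-definition}. No step is a real obstacle. The only point that needs care is the null set where some $P_y=0$, and it is excluded by the standing positivity assumption.
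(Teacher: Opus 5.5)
Your proposal is correct and follows the paper's proof: substitute $(p,y)=(P,Y)$ into Theorem~\ref{thm:tree-decomposition}, multiply by $c(X)\mathbf{1}\{\mu_u(P)\in I\}$, and interchange the expectation with the finite sum over $V^\circ$. Your boundedness check (each summand at most $2d_v$ in absolute value) and your handling of the null set where some $P_y=0$ spell out integrability and positivity details that the paper leaves implicit.
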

\begin{proof}
Substitute $p=P$ and $y=Y$ into Theorem~\ref{thm:tree-decomposition}, multiply both sides by $c(X)\mathbf{1}\{\mu_u(P)\in I\}$, and take expectations. Since the tree is finite, the expectation and the finite sum can be interchanged:
\begin{align*}
&\mathbb{E}\!\left[c(X)\mathbf{1}\{\mu_u(P)\in I\}\{u(P,Y)-\mu_u(P)\}\right]\\
&=\sum_{v\in V^\circ}\mathbb{E}\!\left[c(X)\mathbf{1}\{\mu_u(P)\in I\}\left\langle \boldsymbol\Delta_{u,v}(P),\boldsymbol R_v(P,Y)\right\rangle\right]\\
&=\sum_{v\in V^\circ}\Gamma_{c,u,I,v}(f).\qedhere
\end{align*}
\end{proof}

\paragraph{Mean consistency under common audit conditions.}
Lemma~\ref{lem:one-step-increment} and $\sum_jJ_{v,j}(y)=J_v(y)$ give the pointwise identity
\[
\langle\boldsymbol\Delta_{u,v}(p),\boldsymbol R_v(p,y)\rangle
=\sum_{j=1}^{d_v}J_{v,j}(y)\mu_{u,v_j}(p)-J_v(y)\mu_{u,v}(p).
\]
Multiplying by the common audit weight and taking expectation under the actual joint law yields
\begin{equation}
\Gamma_{c,u,I,v}(f)
=\mathbb E\!\left[c(X)\mathbf1\{\mu_u(P)\in I\}
\{\sum_{j=1}^{d_v}J_{v,j}(Y)\mu_{u,v_j}(P)
-J_v(Y)\mu_{u,v}(P)\}\right].
\label{eq:node-audit-mean-consistency}
\end{equation}
The expression in braces is the child's utility assessment minus the parent's on reaching $v$, and zero otherwise. Hence $|\Gamma_{c,u,I,v}(f)|\le\HUC(f;\gC,\gU)$ for every $c\in\gC$, $u\in\gU$, $I\in\gI$, and relevant $v$; irrelevant nodes have zero contribution. This is the reach-weighted, audit-moment sense in which HUC evaluates mean consistency under hierarchical refinement uniformly across nodes. The identity retains $c(X)$ inside the expectation, so it applies to the full subgroup class in the definition of HUC.

\subsection{Orthogonality and Uniqueness of the Node Decomposition}
\label{subsec:orthogonal-node-decomposition}

The telescoping argument proves that the node contributions sum to the utility residual. We now show that, for a fixed tree and predictive distribution, the same decomposition is an orthogonal expansion and is therefore unique and nonredundant.

Fix a prediction vector $p$ with strictly positive coordinates. For functions $h,g:\gY\to\R$, define
\begin{equation}
\langle h,g\rangle_p:=\sum_{y\in\gY}p_yh(y)g(y),
\qquad
\|h\|_p^2:=\langle h,h\rangle_p.
\label{eq:weighted-leaf-inner-product}
\end{equation}
For each internal node $v$, choose vectors
$a_{v,1},\ldots,a_{v,d_v-1}\in\R^{d_v}$ satisfying
\begin{equation}
\sum_{j=1}^{d_v}q_{v,j}(p)a_{v,\ell,j}=0,
\qquad
\sum_{j=1}^{d_v}q_{v,j}(p)a_{v,\ell,j}a_{v,m,j}
=\mathbf{1}\{\ell=m\}.
\label{eq:local-detail-basis}
\end{equation}
Thus these vectors form an orthonormal basis, under the $\boldsymbol q_v(p)$-weighted inner product, for the $(d_v-1)$-dimensional centered child space at $v$. Lift each vector to the leaves by
\begin{equation}
\phi_{v,\ell,p}(y)
:=
\begin{cases}
\frac{a_{v,\ell,j}}{\sqrt{\pi_v(p)}},
& y\in\gY_{v_j},\\
0,&y\notin\gY_v,
\end{cases}
\qquad \ell=1,\ldots,d_v-1.
\label{eq:multiway-tree-haar-function}
\end{equation}

\begin{proposition}[Probability-weighted orthogonal representation]
\label{prop:orthogonal-utility-decomposition}
The collection
$\{\phi_{v,\ell,p}:v\in V^\circ,\ \ell=1,\ldots,d_v-1\}$
is an orthonormal basis, under the inner product in Eq.~\eqref{eq:weighted-leaf-inner-product}, for the leaf functions having zero $p$-mean. Its size is
\begin{equation}
\sum_{v\in V^\circ}(d_v-1)=K-1.
\label{eq:tree-haar-basis-count}
\end{equation}
For a target utility $u$, define
\begin{equation}
\theta_{u,v,\ell}(p)
:=
\sqrt{\pi_v(p)}
\sum_{j=1}^{d_v}q_{v,j}(p)
\Delta_{u,v,j}(p)a_{v,\ell,j}.
\label{eq:tree-haar-utility-coefficient}
\end{equation}
Then the contribution of node $v$ and the complete centered utility satisfy
\begin{align}
\left\langle \boldsymbol\Delta_{u,v}(p),\boldsymbol R_v(p,y)\right\rangle
&=
\sum_{\ell=1}^{d_v-1}
\theta_{u,v,\ell}(p)\phi_{v,\ell,p}(y),
\label{eq:node-contribution-haar-expansion}\\
u(p,y)-\mu_u(p)
&=
\sum_{v\in V^\circ}\sum_{\ell=1}^{d_v-1}
\theta_{u,v,\ell}(p)\phi_{v,\ell,p}(y).
\label{eq:utility-tree-haar-expansion}
\end{align}
This expansion is unique, and the contribution assigned to each node is the unique least-squares component that is zero outside $\gY_v$, constant on each child subtree, and centered under the predictive distribution within $\gY_v$. In particular,
\begin{equation}
\operatorname{Var}_{\widetilde Y\sim p}(u(p,\widetilde Y)) = \sum_{v\in V^\circ}\sum_{\ell=1}^{d_v-1} \theta_{u,v,\ell}(p)^2 = \sum_{v\in V^\circ}\pi_v(p) \sum_{j=1}^{d_v}q_{v,j}(p) \{\mu_{u,v_j}(p)-\mu_{u,v}(p)\}^2.
\label{eq:tree-parseval-utility}
\end{equation}
\end{proposition}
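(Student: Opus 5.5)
The plan is to establish the orthonormal-basis claim first, then express each node contribution in that basis, and finally obtain the global expansion, uniqueness and Parseval identity as consequences.

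\textbf{Orthonormality and zero mean.} I would begin by checking that each $\phi_{v,\ell,p}$ has zero $p$-mean. Since $\pi_{v_j}(p)=\pi_v(p)q_{v,j}(p)$, we get
\[
\sum_y p_y\phi_{v,\ell,p}(y)=\sqrt{\pi_v(p)}\sum_j q_{v,j}(p)a_{v,\ell,j}=0
\]
by Eq.~\eqref{eq:local-detail-basis}. Orthonormality is checked in three cases.
\begin{itemize}
\item For the same node $v$, the same computation gives
\[
\langle\phi_{v,\ell,p},\phi_{v,m,p}\rangle_p=\sum_j q_{v,j}(p)a_{v,\ell,j}a_{v,m,j}=\mathbf 1\{\ell=m\}.
\]
\item For distinct nodes with disjoint leaf sets $\gY_v\cap\gY_w=\varnothing$, the product vanishes identically.
\item For nested nodes, say $w$ a strict descendant of $v$, $\gY_w$ lies inside a single child subtree $\gY_{v_k}$. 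There $\phi_{v,\ell,p}$ is the constant $a_{v,\ell,k}/\sqrt{\pi_v(p)}$, so the inner product is that constant times the $p$-mass of $\phi_{w,m,p}$ over $\gY_w$, which is zero.
\end{itemize}

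\textbf{Counting and spanning.} The count $\sum_{v\in V^\circ}(d_v-1)=K-1$ follows from the edge count. Each non-root node has exactly one parent, so $\sum_{v\in V^\circ} d_v=|V|-1$, and $|V|=|V^\circ|+K$. The zero-mean subspace of $\R^{\gY}$ under $\langle\cdot,\cdot\rangle_p$ has dimension $K-1$, because all $p_y>0$ make the inner product nondegenerate. Hence $K-1$ orthonormal zero-mean vectors form a basis of it.

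\textbf{Node expansion.} Fix $v$. The function $y\mapsto\langle\boldsymbol\Delta_{u,v}(p),\boldsymbol R_v(p,y)\rangle$ equals $\Delta_{u,v,k}(p)$ on $\gY_{v_k}$ and $0$ off $\gY_v$, by Lemma~\ref{lem:one-step-increment}. The vector $\boldsymbol\Delta_{u,v}(p)$ is $\boldsymbol q_v$-centered by Eq.~\eqref{eq:q-delta-orthogonality}. Because $\{a_{v,\ell}\}$ is an orthonormal basis of the $\boldsymbol q_v$-centered child space, we can write $\boldsymbol\Delta_{u,v}=\sum_\ell c_\ell a_{v,\ell}$ with $c_\ell=\sum_j q_{v,j}\Delta_{u,v,j}a_{v,\ell,j}$. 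Lifting this to the leaves, $a_{v,\ell}$ (constant on child subtrees, zero off $\gY_v$) becomes $\sqrt{\pi_v}\,\phi_{v,\ell,p}$. This yields Eq.~\eqref{eq:node-contribution-haar-expansion} with $\theta_{u,v,\ell}=\sqrt{\pi_v}\,c_\ell$, matching Eq.~\eqref{eq:tree-haar-utility-coefficient}. Summing over $v$ and invoking Theorem~\ref{thm:tree-decomposition} gives Eq.~\eqref{eq:utility-tree-haar-expansion}.

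\textbf{Uniqueness and least squares.} Uniqueness of the coefficients follows from the basis property. For the characterization, let $S_v$ be the space of functions that vanish off $\gY_v$, are constant on each child subtree, and are $p$-centered within $\gY_v$. This space is exactly $\operatorname{span}\{\phi_{v,\ell,p}\}_\ell$, and the spaces $S_v$ for different $v$ are mutually orthogonal. Suppose $u-\mu_u=\sum_v g_v$ with $g_v\in S_v$. Projecting onto $S_v$ forces $g_v$ to be the orthogonal projection, which is the least-squares component, and this equals the node contribution.

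\textbf{Parseval.} This is orthonormal expansion applied to $u-\mu_u$, whose squared norm is $\operatorname{Var}_p$. Then
\[
\sum_\ell\theta_{u,v,\ell}^2=\pi_v\sum_\ell c_\ell^2=\pi_v\sum_j q_{v,j}\Delta_{u,v,j}^2,
\]
where the last step is Parseval in the $\boldsymbol q_v$-weighted child space.

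I expect the main obstacle to be the nested-node orthogonality and the accompanying bookkeeping that identifies $S_v$ with the span of the lifted basis. The remaining steps are routine once that is in place.
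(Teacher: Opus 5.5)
Your proposal is correct and follows the paper's proof essentially step for step. The shared steps are: zero mean and orthonormality via the same-node, disjoint, and nested cases; the edge count $\sum_{v\in V^\circ}d_v=|V|-1$ for the dimension; expansion of the $\boldsymbol q_v$-centered contrast $\boldsymbol\Delta_{u,v}(p)$ in the local basis, then lifting and summing with Theorem~\ref{thm:tree-decomposition}; and uniqueness, least squares, and Parseval from orthogonality. The only differences are cosmetic: you phrase the least-squares property as orthogonal projection onto $S_v$ (the paper writes the Pythagorean identity directly), and you obtain the per-node Parseval term from coefficient Parseval in the $\boldsymbol q_v$-weighted child space rather than by computing $\|r^p_{u,v}\|_p^2$ directly.
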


\begin{proof}
Eq.~\eqref{eq:local-detail-basis} and the definition of $\phi_{v,\ell,p}$ imply that every $\phi_{v,\ell,p}$ has zero $p$-mean and unit norm, and that functions attached to the same node are mutually orthogonal. For distinct nodes, their descendant-leaf sets are either disjoint or one is contained in a child subtree of the other. In the first case their supports are disjoint. In the second case the function attached to the ancestor is constant on the descendant subtree, whereas the function attached to the descendant has zero $p$-weighted sum there. Hence the functions are again orthogonal.

Their number is given by Eq.~\eqref{eq:tree-haar-basis-count}, because
$\sum_{v\in V^\circ}d_v=|V|-1=|V^\circ|+K-1$.
The zero-$p$-mean functions on $K$ leaves form a $(K-1)$-dimensional space, so this orthonormal collection is a basis.

By~Eq.~\eqref{eq:q-delta-orthogonality}, the child-value vector
$\boldsymbol\Delta_{u,v}(p)$ has zero $\boldsymbol q_v(p)$-weighted mean. Expansion in the basis in Eq.~\eqref{eq:local-detail-basis} and lifting via Eq.~\eqref{eq:multiway-tree-haar-function} give Eq.~\eqref{eq:node-contribution-haar-expansion}. Summing and using Theorem~\ref{thm:tree-decomposition} give Eq.~\eqref{eq:utility-tree-haar-expansion}.

Orthogonality makes this representation unique. More explicitly, let
$g_u^p(y):=u(p,y)-\mu_u(p)$ and
$r_{u,v}^p(y):=\left\langle \boldsymbol\Delta_{u,v}(p),\boldsymbol R_v(p,y)\right\rangle$.
For any other branch-local centered function $h$ associated with node $v$, all orthogonal components of $g_u^p-r_{u,v}^p$ belong to other nodes and are orthogonal to $h-r_{u,v}^p$. Therefore
\[
\|g_u^p-h\|_p^2
=
\|g_u^p-r_{u,v}^p\|_p^2
+
\|r_{u,v}^p-h\|_p^2,
\]
which proves the stated least-squares property and its uniqueness. Finally, Parseval's identity gives the first equality in~Eq.~\eqref{eq:tree-parseval-utility}; the second follows from
\[
\|r_{u,v}^p\|_p^2
=
\pi_v(p)\sum_{j=1}^{d_v}q_{v,j}(p)
\{\mu_{u,v_j}(p)-\mu_{u,v}(p)\}^2.
\]
\end{proof}

For a binary tree, $d_v-1=1$, so each internal node contributes one detail coordinate. With the scalar binary residual $R_v$ of Appendix~\ref{subsec:binary-one-node}, choosing the standard normalization gives
\begin{equation}
\phi_{v,p}(y)
=\frac{R_v(p,y)}{\sqrt{\pi_v(p)q_{v,L}(p)q_{v,R}(p)}}.
\label{eq:binary-tree-haar-coordinate}
\end{equation}
Thus a binary tree has one probability-weighted orthogonal coordinate per internal node, and~Eq.~\eqref{eq:utility-tree-haar-expansion} is the corresponding expansion of the centered utility.

\begin{remark}[Reference measure and scope of the orthogonality]
\label{rem:tree-haar-reference-measure}
The inner product in~Eq.~\eqref{eq:weighted-leaf-inner-product} uses the model distribution $\widetilde Y\sim p$.  Orthogonality, uniqueness, and the Parseval identity are therefore statements about how the model decomposes a centered leaf utility.  HUC moments, in contrast, average the same node coordinates under the actual joint law of $(X,Y)$.  Consequently, this orthogonality does not imply that population HUC moments vanish or cannot cancel after expectations are taken under the true distribution.  It gives a nonredundant node decomposition of this pointwise residual.
\end{remark}

This characterization is conditional on the chosen label tree and predictive reference distribution $p$. It establishes uniqueness and least-squares optimality of the node contributions for that tree and $p$; it does not assert that the tree itself, or the outer $\ell_\infty$ aggregation used by HUC, is optimal.

\section{Relationships among UC, HUC, and Canonical Calibration}
\label{app:uc-huc-full}

\subsection{UC--HUC Comparison and the Implication of Canonical Calibration}
\label{subsec:uc-huc-full}\label{subsec:full-huc-uc-proof}
\begin{proposition}[UC--HUC comparison]
\label{prop:uc-huc-full}
For every subgroup class $\gC$ and utility $u$,
\begin{equation}
\UC(f;\gC,\{u\})\le|\mathcal V_{\gT}(u)|\HUC(f;\gC,\{u\}).
\label{eq:uc-huc-bound}
\end{equation}
Moreover, if $f$ is canonically calibrated, then
\begin{equation}
\HUC(f;\{1\},\gU)=0.
\label{eq:full-implies-huc}
\end{equation}
\end{proposition}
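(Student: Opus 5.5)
The plan is to derive both parts from Corollary~\ref{cor:uc-cell-decomposition} and Lemma~\ref{lem:conditional-residual}.

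\emph{The bound.} Fix $c\in\gC$ and $I\in\gI$. Corollary~\ref{cor:uc-cell-decomposition} gives
\[
\mathbb{E}\bigl[c(X)\mathbf{1}\{\mu_u(P)\in I\}\{u(P,Y)-\mu_u(P)\}\bigr]=\sum_{v\in V^\circ}\Gamma_{c,u,I,v}(f).
\]
Next I restrict the sum to $\mathcal V_{\gT}(u)$. If $v\notin\mathcal V_{\gT}(u)$, then $\boldsymbol\Delta_{u,v}(p)=\boldsymbol 0$ for every strictly positive $p$. Since $P$ is strictly positive almost surely, the integrand of $\Gamma_{c,u,I,v}(f)$ vanishes almost surely, so $\Gamma_{c,u,I,v}(f)=0$. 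The triangle inequality then gives
\[
\bigl|\mathbb{E}[\cdots]\bigr|\le\sum_{v\in\mathcal V_{\gT}(u)}|\Gamma_{c,u,I,v}(f)|\le|\mathcal V_{\gT}(u)|\,\HUC(f;\gC,\{u\}).
\]
Taking the supremum over $c$ and $I$ yields Eq.~\eqref{eq:uc-huc-bound}. If $\mathcal V_{\gT}(u)$ is empty, the same argument shows the UC moment is zero, so both sides are $0$ and the empty-supremum convention is consistent.

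\emph{Canonical calibration.} Take $c\equiv1$, any $u\in\gU$, $I\in\gI$ and $v\in\mathcal V_{\gT}(u)$. The indicator $\mathbf{1}\{\mu_u(P)\in I\}$ and the contrast $\boldsymbol\Delta_{u,v}(P)$ are both $\sigma(P)$-measurable. The tower property therefore gives
\[
\Gamma_{1,u,I,v}(f)=\mathbb{E}\Bigl[\mathbf{1}\{\mu_u(P)\in I\}\,\mathbb{E}\bigl[\langle\boldsymbol\Delta_{u,v}(P),\boldsymbol R_v(P,Y)\rangle\mid P\bigr]\Bigr].
\]
By Lemma~\ref{lem:conditional-residual}, the inner conditional mean equals $\sum_j\Delta_{u,v,j}(P)\{\pi^\star_{v_j}(P)-\pi^\star_v(P)q_{v,j}(P)\}$. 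Canonical calibration means $\eta_y(P)=P_y$ almost surely, so $\pi^\star_a(P)=\pi_a(P)$ for every node $a$. Hence $\pi^\star_{v_j}-\pi^\star_v q_{v,j}=\pi_{v_j}-\pi_v\,\pi_{v_j}/\pi_v=0$. Every moment is therefore zero, and taking the supremum gives $\HUC(f;\{1\},\gU)=0$.

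\emph{Main obstacle.} The only delicate point is measure-theoretic. The conditional identity holds only for almost every $p$ under the law of $P$, so I must evaluate it at $p=P$ using the fixed regular conditional distribution. I also need to check that $\Delta_{u,v}(P)$ is measurable and bounded, which holds because $u\in[-1,1]$, so that the tower property applies. A nonconstant $c$ would break the second part, because $c(X)$ is not $P$-measurable; this is why the claim is stated only for $\gC=\{1\}$.
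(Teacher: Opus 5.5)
Your proposal is correct and follows essentially the same route as the paper. The bound comes from Corollary~\ref{cor:uc-cell-decomposition}, followed by discarding the nodes with $\boldsymbol\Delta_{u,v}\equiv\boldsymbol 0$ and applying the triangle inequality. The canonical-calibration part applies the tower property and then uses the vanishing of $\pi^\star_{v_j}(P)-\pi^\star_v(P)q_{v,j}(P)$. The paper checks $\mathbb{E}[\boldsymbol R_v(P,Y)\mid P]=\boldsymbol 0$ directly from $\mathbb{E}[J_a(Y)\mid P]=\pi_a(P)$, while you reach the same cancellation through Lemma~\ref{lem:conditional-residual}; this is only a cosmetic difference.
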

\begin{proof}
Fix $c\in\gC$ and $I\in\gI$. By~Eq.~\eqref{eq:uc-cell-node-sum} and the triangle inequality,
\begin{align*}
&\left|\mathbb{E}\!\left[c(X)\mathbf{1}\{\mu_u(P)\in I\}\{u(P,Y)-\mu_u(P)\}\right]\right|\\
&=\left|\sum_{v\in V^\circ}\Gamma_{c,u,I,v}(f)\right|=\left|\sum_{v\in\mathcal V_{\gT}(u)}\Gamma_{c,u,I,v}(f)\right|
\le\sum_{v\in\mathcal V_{\gT}(u)}|\Gamma_{c,u,I,v}(f)|\le|\mathcal V_{\gT}(u)|\HUC(f;\gC,\{u\}).
\end{align*}
The restriction uses $\boldsymbol\Delta_{u,v}\equiv\boldsymbol 0$ for $v\notin\mathcal V_{\gT}(u)$; taking the supremum over $c,I$ proves Eq.~\eqref{eq:uc-huc-bound}.

Recall that $J_a(y)=\mathbf{1}\{y\in\gY_a\}$ indicates membership in the descendant-leaf set of node $a$, and $J_{v,j}(y)=\mathbf{1}\{y\in\gY_{v_j}\}$ for a child $v_j$.
Next assume canonical calibration: $\mathbb{P}(Y=y\mid P)=P_y$ for every $y$. At each node $a$,
\begin{align*}
\mathbb{E}[J_a(Y)\mid P]&=\mathbb{P}(Y\in\gY_a\mid P)=\sum_{y\in\gY_a}\mathbb{P}(Y=y\mid P)=\sum_{y\in\gY_a}P_y=\pi_a(P).
\end{align*}
Thus, for every internal node $v$ and child $v_j$,
\[
\begin{aligned}
&\mathbb{E}[R_{v,j}(P,Y)\mid P]=\mathbb{E}[J_{v,j}(Y)\mid P]-q_{v,j}(P)\mathbb{E}[J_v(Y)\mid P]\\
&=\pi_{v_j}(P)-q_{v,j}(P)\pi_v(P) =\pi_{v_j}(P)-\frac{\pi_{v_j}(P)}{\pi_v(P)}\pi_v(P)=0.
\end{aligned}
\]
Hence $\mathbb{E}[\boldsymbol R_v(P,Y)\mid P]=\boldsymbol 0$. The tower property gives
\begin{align*}
\Gamma_{1,u,I,v}(f)&=\mathbb{E}\!\left[\mathbf{1}\{\mu_u(P)\in I\}\left\langle \boldsymbol\Delta_{u,v}(P),\boldsymbol R_v(P,Y)\right\rangle\right]\\
&=\mathbb{E}\!\left[\mathbf{1}\{\mu_u(P)\in I\}\left\langle \boldsymbol\Delta_{u,v}(P),\mathbb{E}[\boldsymbol R_v(P,Y)\mid P]\right\rangle\right]=0.
\end{align*}
Since this holds for every $u,I,v$, it proves~Eq.~\eqref{eq:full-implies-huc}.
\end{proof}
\begin{example}[Exact cancellation with zero UC and nonzero HUC]
\label{ex:exact-uc-zero-huc-positive}
On the four-leaf tree of Section~\ref{sec:single-utility-cancellation}, consider the constant predictor $P\equiv p=(1/4,1/4,1/4,1/4)$. Let $(u(p,y_1),u(p,y_2),u(p,y_3),u(p,y_4))=(-1,1,1,1)$ and let the true conditional leaf distribution be $\eta(p)=(1/4,0,3/8,3/8)$. Extend the utility away from this prediction vector to any bounded measurable function. The predicted branch probabilities are $q_{0,R}=q_{L,R}=q_{R,R}=1/2$, whereas the true branch probabilities are $q_{0,R}^\star=3/4,q_{L,R}^\star=0,q_{R,R}^\star=1/2$, and $\bar u_{0,L}=0,\bar u_{0,R}=1$. Consequently,
$D_{\mathrm{top}}(p)=1/4$, $D_L(p)=-1/4$, $D_R(p)=0$.
The predicted and actual mean utilities are both $1/2$, so the total residual vanishes. An interval either contains the sole score $1/2$ or does not. Thus $\UC(f;\{1\},\{u\})=0$ and $\HUC(f;\{1\},\{u\})=1/4$. This realizes exact cancellation under a valid joint distribution.
\end{example}
\begin{lemma}[Characterization of canonical calibration by branch residuals]
\label{lem:full-branch-equivalence}
Assume that every component of the prediction vector is positive. The following are equivalent.
\begin{enumerate}
\item[(i)] $f$ is canonically calibrated.
\item[(ii)] $\mathbb{E}[\boldsymbol R_v(P,Y)\mid P]=\boldsymbol 0$ for every internal node $v$.
\end{enumerate}
\end{lemma}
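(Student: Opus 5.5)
The direction (i)$\Rightarrow$(ii) is already contained in the proof of Proposition~\ref{prop:uc-huc-full}. Canonical calibration gives $\mathbb{E}[J_a(Y)\mid P]=\pi_a(P)$ for every node $a$. Hence
\[
\mathbb{E}[R_{v,j}(P,Y)\mid P]=\pi_{v_j}(P)-q_{v,j}(P)\pi_v(P)=0
\]
by the definition in Eq.~\eqref{eq:branch-conditional}. I would simply cite that computation.

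For (ii)$\Rightarrow$(i), I would work with the regular conditional distribution $\eta(P)$ and true reach probabilities $\pi^\star_a(P)=\sum_{y\in\gY_a}\eta_y(P)$. By Lemma~\ref{lem:conditional-residual}, condition (ii) reads
\[
\pi^\star_{v_j}(P)=q_{v,j}(P)\,\pi^\star_v(P)
\]
almost surely, for every internal $v$ and child $v_j$. The plan is to show by induction on depth that $\pi^\star_a(P)=\pi_a(P)$ for every node $a$.
\begin{itemize}
\item \textbf{Base case.} At the root, $\pi^\star_\rho(P)=1=\pi_\rho(P)$, using Eq.~\eqref{eq:root-leaf-probability}.
\item \textbf{Inductive step.} If $\pi^\star_v(P)=\pi_v(P)$, then for each child
\[
\pi^\star_{v_j}(P)=q_{v,j}(P)\,\pi_v(P)=\pi_{v_j}(P),
\]
by Eq.~\eqref{eq:branch-conditional}.
\end{itemize}
Since the tree is finite, the induction reaches every leaf. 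At a leaf, $\gY_y=\{y\}$, so $\eta_y(P)=\pi^\star_y(P)=\pi_y(P)=P_y$ a.s., which is canonical calibration in the form of Definition~\ref{def:full-conditional-calibration-app}. Equivalently, one could multiply (ii) along the root-to-leaf path, mirroring the path product Eq.~\eqref{eq:leaf-factorization}, to get $\eta_y(P)=\prod_r q_{w_r,j_r}(P)=P_y$.

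The only delicate point is measure-theoretic. The identities hold only almost surely, and the induction takes countably many (indeed finitely many) such statements, one per $(v,j)$. Intersecting these finitely many full-measure events gives a single full-measure event on which all the identities hold simultaneously. The induction is then pointwise on that event.

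Note also that no division by $\pi^\star_v$ is needed, so null-reach nodes cause no trouble. Positivity of $P$ is used only so that $q_{v,j}(P)$ is well defined. I expect no real obstacle beyond stating this a.s. bookkeeping cleanly.
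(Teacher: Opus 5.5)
Your proposal is correct and follows essentially the same route as the paper: the paper also cites the earlier computation for (i)$\Rightarrow$(ii), and for the converse it inducts from the root downward on $\zeta_a(P):=\mathbb{E}[J_a(Y)\mid P]$ (your $\pi^\star_a(P)$), using $\zeta_{v_j}(P)=q_{v,j}(P)\zeta_v(P)$ and the base case $\zeta_\rho(P)=1=\pi_\rho(P)$. Your remark about intersecting the finitely many almost-sure events adds a bit of precision that the paper leaves implicit.
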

\begin{proof}
The implication from (i) to (ii) was shown in the preceding proof. We now prove the converse. For every node $a$, write $\zeta_a(P):=\mathbb{E}[J_a(Y)\mid P]$. At the root, $J_\rho(Y)=1$, so $\zeta_\rho(P)=1=\pi_\rho(P)$.

Suppose $\zeta_v(P)=\pi_v(P)$ holds at an internal node $v$. The $j$th component of (ii) gives
\begin{align*}
0&=\mathbb{E}[R_{v,j}(P,Y)\mid P]=\zeta_{v_j}(P)-q_{v,j}(P)\zeta_v(P),\\
\zeta_{v_j}(P)&=q_{v,j}(P)\zeta_v(P)=q_{v,j}(P)\pi_v(P)
=\frac{\pi_{v_j}(P)}{\pi_v(P)}\pi_v(P)=\pi_{v_j}(P).
\end{align*}
Induction from the root to the leaves gives $\zeta_y(P)=\pi_y(P)$ for every leaf $y$. Since $\zeta_y(P)=\mathbb{P}(Y=y\mid P)$ and $\pi_y(P)=P_y$ at a leaf, canonical calibration follows.
\end{proof}

\subsection{Marginal Calibration and Conditional Branches}
\label{subsec:subtree-counterexample}\label{subsec:subtree-utility-counterexample}
This example supplements the distinction in Section~\ref{sec:single-utility-cancellation} between calibration of individual leaf or subtree probabilities and agreement of conditional branch probabilities. We first verify that the former calibration errors are zero, and then calculate the conditional branch-probability errors that remain.

Consider a tree with leaves $\{e,f,o_1,o_2\}$, in which $D:=\{e,f\}$ branches into $E:=\{e\}$ and $F:=\{f\}$. Let the two input groups $X\in\{a,b\}$ each have probability $1/2$. In the coordinate order $(e,f,o_1,o_2)$, let the predicted and true conditional distributions be, respectively,
\begin{align}
p^{(a)}&=(1/4,1/4,2/5,1/10),\\
p^{(b)}&=(1/4,1/4,1/10,2/5),\label{eq:counterexample-predictions}\\
\eta^{(a)}&=(3/8,1/8,2/5,1/10),\\
\eta^{(b)}&=(1/8,3/8,1/10,2/5).\label{eq:counterexample-truth}
\end{align}
Thus $P=p^{(g)}$ when $X=g$.

\paragraph{Zero UC for the subtree indicator utilities.}
For a leaf subset $A$, define $u_A(p,y):=\mathbf{1}\{y\in A\}$, $p(A):=\sum_{y\in A}p_y$, and $\mu_{u_A}(p)=p(A)$.
Both groups have $P(D)=1/2$ and $P(E)=1/4$. Conditioning on either scalar score therefore retains both groups with equal probability, and
\begin{align*}
\mathbb{P}(Y\in D\mid P(D)=1/2)
&=\frac12(\frac38+\frac18)
 +\frac12(\frac18+\frac38)=\frac12,\\
\mathbb{P}(Y\in E\mid P(E)=1/4)
&=\frac12\cdot\frac38+\frac12\cdot\frac18=\frac14.
\end{align*}
Hence $\mathbb{E}[u_A(P,Y)\mid P(A)]=P(A)$ for $A=D,E$. Substituting this identity into the UC definition gives, for every interval $I$,
\begin{align*}
&\mathbb{E}\!\left[\mathbf{1}\{P(A)\in I\}
\bigl(u_A(P,Y)-P(A)\bigr)\right]\\
&\qquad=\mathbb{E}\!\left[\mathbf{1}\{P(A)\in I\}
\bigl(\mathbb{E}[u_A(P,Y)\mid P(A)]-P(A)\bigr)\right]=0.
\end{align*}
Therefore $\UC(f;\{1\},\{u_D,u_E\})=0$.

\paragraph{Zero class-wise calibration error.}
Class-wise calibration requires $\mathbb{P}(Y=k\mid P_k)=P_k$ for each leaf $k$. For $e$ and $f$, the scalar predictions are constant at $1/4$, and
\begin{align*}
\mathbb{P}(Y=e\mid P_e=1/4)
&=\frac12\cdot\frac38+\frac12\cdot\frac18=\frac14,\\
\mathbb{P}(Y=f\mid P_f=1/4)
&=\frac12\cdot\frac18+\frac12\cdot\frac38=\frac14.
\end{align*}
For $o_1$ and $o_2$, each scalar prediction identifies one group, whose true probability agrees with that prediction:
\[
\begin{aligned}
&\mathbb{P}(Y=o_1\mid P_{o_1}=2/5)=\eta^{(a)}_{o_1}=\frac25, \qquad  \mathbb{P}(Y=o_1\mid P_{o_1}=1/10)=\eta^{(b)}_{o_1}=\frac1{10},\\
&\mathbb{P}(Y=o_2\mid P_{o_2}=1/10)=\eta^{(a)}_{o_2}=\frac1{10}, \qquad  \mathbb{P}(Y=o_2\mid P_{o_2}=2/5)=\eta^{(b)}_{o_2}=\frac25.
\end{aligned}
\]
These calculations cover every possible scalar prediction for all four leaves, so class-wise calibration holds exactly.

\paragraph{Nonzero conditional branch-probability errors.}
The conditional branch agreement discussed in Section~\ref{sec:single-utility-cancellation} would require $\mathbb{P}(Y\in E\mid Y\in D,P)=P(E)/P(D)$.
The right-hand side is the predicted probability of taking the $D\to E$ branch. In both groups it equals
$\frac{p^{(a)}(E)}{p^{(a)}(D)} =\frac{p^{(b)}(E)}{p^{(b)}(D)} =\frac{1/4}{1/2}=\frac12$.
However, $p^{(a)}\ne p^{(b)}$, so the full prediction vector identifies the group. The true conditional branch probabilities are
\begin{align*}
\mathbb{P}(Y\in E\mid Y\in D,P=p^{(a)})
&=\frac{\eta^{(a)}(E)}{\eta^{(a)}(D)}
=\frac{3/8}{3/8+1/8}=\frac34,\\
\mathbb{P}(Y\in E\mid Y\in D,P=p^{(b)})
&=\frac{\eta^{(b)}(E)}{\eta^{(b)}(D)}
=\frac{1/8}{1/8+3/8}=\frac14.
\end{align*}
Thus the errors are
\begin{align*}
\mathbb{P}(Y\in E\mid Y\in D,P=p^{(a)})
-\frac{p^{(a)}(E)}{p^{(a)}(D)}
&=\frac34-\frac12=\frac14,\\
\mathbb{P}(Y\in E\mid Y\in D,P=p^{(b)})
-\frac{p^{(b)}(E)}{p^{(b)}(D)}
&=\frac14-\frac12=-\frac14.
\end{align*}
The $D\to E$ probability is underestimated by $1/4$ in group $a$ and overestimated by $1/4$ in group $b$.

The reason these errors do not appear in the indicator-utility UC is also explicit. The conditional mean utility residuals are
\begin{align*}
\mathbb{E}[u_E(P,Y)\mid P=p^{(a)}]-\mu_{u_E}(p^{(a)})
&=\frac38-\frac14=\frac18,\\
\mathbb{E}[u_E(P,Y)\mid P=p^{(b)}]-\mu_{u_E}(p^{(b)})
&=\frac18-\frac14=-\frac18.
\end{align*}
Both groups have predicted mean utility $1/4$, so a predicted-utility interval cannot separate them, and their residuals cancel:
$\frac12\cdot\frac18+\frac12(-\frac18)=0$.
The same cancellation occurs in class-wise calibration of $e$, since both groups have $P_e=1/4$. Consequently, zero indicator-utility UC and zero class-wise calibration error do not guarantee agreement of branch probabilities conditional on the full prediction vector. In this example the conditional branch-probability error has absolute value $1/4$ in each group.

\subsection{Branch Errors versus Utility Impact}
\label{subsec:branch-discrepancy-utility-ranking}
Section~\ref{sec:single-utility-cancellation} distinguishes a branch-probability error from its effect on the specified utility. We verify this distinction by substituting into the three contributions $D_{\mathrm{top}},D_L,D_R$ defined in Eq.~\eqref{eq:d-full-sec32}.

Use the four-leaf tree of that section, with $y_1,y_2$ below $v_L$ and $y_3,y_4$ below $v_R$. Let the predictor be constant at $p=(1/4,1/4,1/4,1/4)$, let the true conditional distribution be $\eta=(1/20,9/20,1/5,3/10)$, and take the target utility to be
$u(p,y):=\mathbf{1}\{y=y_4\}$, $(u_1,u_2,u_3,u_4)=(0,0,0,1)$.

\paragraph{Branch-probability errors.}
The predicted right-branch probabilities are
$q_{0,R}=p_3+p_4=\frac12$, $q_{L,R}=\frac{p_2}{p_1+p_2}=\frac12$, $q_{R,R}=\frac{p_4}{p_3+p_4}=\frac12$.
The corresponding true probabilities are
\[
\begin{aligned}
&q_{0,R}^\star=\eta_3+\eta_4=\frac15+\frac3{10}=\frac12, \qquad  q_{L,R}^\star=\frac{\eta_2}{\eta_1+\eta_2}
=\frac{9/20}{1/20+9/20}=\frac9{10},\\
&q_{R,R}^\star=\frac{\eta_4}{\eta_3+\eta_4}
=\frac{3/10}{1/5+3/10}=\frac35.
\end{aligned}
\]
Hence
$|q_{0,R}^\star-q_{0,R}|=0$, $|q_{L,R}^\star-q_{L,R}|=\frac25$, $|q_{R,R}^\star-q_{R,R}|=\frac1{10}$.
The left node has the largest branch-probability error.

\paragraph{Contributions to the target-utility error.}
The predicted subtree mean utilities from Section~\ref{sec:single-utility-cancellation} are $\bar u_{0,L}=q_{L,L}u_1+q_{L,R}u_2=0$ and $\bar u_{0,R}=q_{R,L}u_3+q_{R,R}u_4=1/2$.
Direct substitution into Eq.~\eqref{eq:d-full-sec32} gives
\begin{align*}
D_{\mathrm{top}}(p)
&=(q_{0,R}^\star-q_{0,R})(\bar u_{0,R}-\bar u_{0,L})
=(\frac12-\frac12)(\frac12-0)=0,\\
D_L(p)
&=q_{0,L}^\star(q_{L,R}^\star-q_{L,R})(u_2-u_1)
=(1-\frac12)(\frac9{10}-\frac12)(0-0)=0,\\
D_R(p)
&=q_{0,R}^\star(q_{R,R}^\star-q_{R,R})(u_4-u_3)
=\frac12(\frac35-\frac12)(1-0)=\frac1{20}.
\end{align*}
At the left node, the branch-probability error is $2/5$, but the utility difference $u_2-u_1$ multiplying it is zero. At the right node, the smaller branch-probability error $1/10$ is multiplied by the nonzero utility difference $u_4-u_3=1$, giving contribution $1/20$. Indeed,
\[
D^{\mathrm{full}}(p) =\sum_{i=1}^4\eta_i u_i-\sum_{i=1}^4p_i u_i =\frac3{10}-\frac14=\frac1{20} =D_{\mathrm{top}}(p)+D_L(p)+D_R(p).
\]
Thus the target-utility error in this example comes from the right node, whereas ranking nodes only by branch-probability error places the left node first, despite its zero utility contribution. Branch-probability error alone therefore does not identify the source of the specified target-utility error.

\subsection{Node-Specific Branch Cells Can Audit Different Populations}
\label{subsec:node-specific-branch-cells}
Section~\ref{sec:single-utility-cancellation} seeks to compare the sources of utility error within the same population selected by predicted mean utility. This example shows that selecting a population by each node's own branch score need not preserve that common population.

Use the same four-leaf tree and two input groups $a,b$, each with probability $1/2$. Let the predicted distributions be
$p^{(a)}=(1/2,1/10,3/20,1/4)$, $p^{(b)}=(1/5,3/10,1/4,1/4)$,
and the true conditional distributions be
$\eta^{(a)}=(5/12,1/12,3/16,5/16)$, $\eta^{(b)}=(1/5,3/10,3/10,1/5)$.
Again take $u(p,y):=\mathbf{1}\{y=y_4\}$.

\paragraph{The same predicted-utility cell.}
By the definition of predicted mean utility,
$\mu_u(p^{(a)})=p_4^{(a)}=\frac14$, $\mu_u(p^{(b)})=p_4^{(b)}=\frac14$.
Both groups therefore belong to the same cell $\{\mu_u(P)=1/4\}$.

\paragraph{Different sources of utility error in the two groups.}
The root and right-node predicted branch probabilities are
\begin{align*}
q_{0,R}(p^{(a)})&=\frac3{20}+\frac14=\frac25,&
q_{R,R}(p^{(a)})&=\frac{1/4}{3/20+1/4}=\frac58,\\
q_{0,R}(p^{(b)})&=\frac14+\frac14=\frac12,&
q_{R,R}(p^{(b)})&=\frac{1/4}{1/4+1/4}=\frac12.
\end{align*}
The corresponding true probabilities are
\begin{align*}
q_{0,R}^\star(p^{(a)})&=\frac3{16}+\frac5{16}=\frac12,&
q_{R,R}^\star(p^{(a)})&=\frac{5/16}{3/16+5/16}=\frac58,\\
q_{0,R}^\star(p^{(b)})&=\frac3{10}+\frac15=\frac12,&
q_{R,R}^\star(p^{(b)})&=\frac{1/5}{3/10+1/5}=\frac25.
\end{align*}
For this utility, $\bar u_{0,L}=0$, $\bar u_{0,R}=q_{R,R}$, $u_2-u_1=0$, and $u_4-u_3=1$. Substituting into Eq.~\eqref{eq:d-full-sec32} yields
\[
\begin{aligned}
&D_{\mathrm{top}}(p^{(a)})
=(\frac12-\frac25)(\frac58-0)=\frac1{16}, \qquad  D_R(p^{(a)})
=\frac12(\frac58-\frac58)(1-0)=0,\\
&D_{\mathrm{top}}(p^{(b)})
=(\frac12-\frac12)(\frac12-0)=0, \qquad  D_R(p^{(b)})
=\frac12(\frac25-\frac12)(1-0)=-\frac1{20}.
\end{aligned}
\]
The left-node contribution is zero in both groups because $u_2-u_1=0$. Thus, among observations with the same predicted mean utility $1/4$, group $a$ has a root contribution $1/16$, whereas group $b$ has a right-node contribution $-1/20$.

\paragraph{Branch-score cells do not preserve the common population.}
For the root score, take $I_0=[3/8,7/16]$. Then
$q_{0,R}(p^{(a)})=\frac25\in I_0$, $q_{0,R}(p^{(b)})=\frac12\notin I_0$,
so this cell selects only group $a$. For the right-node score, take $I_R=[7/16,9/16]$. Then
$q_{R,R}(p^{(a)})=\frac58\notin I_R$, $q_{R,R}(p^{(b)})=\frac12\in I_R$,
so this cell selects only group $b$. Equivalently,
$\mathbf{1}\{q_{0,R}(P)\in I_0\}=\mathbf{1}\{X=a\}$, $\mathbf{1}\{q_{R,R}(P)\in I_R\}=\mathbf{1}\{X=b\}$.
The two cells select disjoint input populations. Their results therefore do not compare the root and right-node contributions within the same population.

By contrast, retaining the common predicted-utility interval $I=[1/4,1/4]$ from Section~\ref{sec:single-utility-cancellation} gives
\begin{align*}
\mathbb{E}\!\left[\mathbf{1}\{\mu_u(P)\in I\}D_{\mathrm{top}}(P)\right]
&=\frac12\cdot\frac1{16}+\frac12\cdot0=\frac1{32},\\
\mathbb{E}\!\left[\mathbf{1}\{\mu_u(P)\in I\}D_L(P)\right]&=0,\\
\mathbb{E}\!\left[\mathbf{1}\{\mu_u(P)\in I\}D_R(P)\right]
&=\frac12\cdot0+\frac12(-\frac1{20})=-\frac1{40}.
\end{align*}
All three expectations concern the same population containing both groups. Comparing the hierarchical sources of the target-utility error requires retaining this common predicted-utility cell rather than changing the selected population from node to node.

\paragraph{Auditing every branch also includes utility-irrelevant fluctuations.}
At the left node, the predicted and true branch probabilities are
\begin{align*}
q_{L,R}(p^{(a)})&=\frac{1/10}{1/2+1/10}=\frac16,&
q_{L,R}^\star(p^{(a)})&=\frac{1/12}{5/12+1/12}=\frac16,\\
q_{L,R}(p^{(b)})&=\frac{3/10}{1/5+3/10}=\frac35,&
q_{L,R}^\star(p^{(b)})&=\frac{3/10}{1/5+3/10}=\frac35.
\end{align*}
Thus the population branch-probability error is zero in each group. Nevertheless, estimating these probabilities from a finite sample introduces fluctuations in the observed leaf frequencies. For example, conditional on group $a$ and on reaching the left subtree, the difference from the predicted probability $1/6$ is
\[
\mathbf{1}\{Y=y_2\}-\frac16
=\begin{cases}
-\frac16,&Y=y_1,\\
\frac56,&Y=y_2.
\end{cases}
\]
Its population mean is
$\frac56(-\frac16)+\frac16\cdot\frac56=0$,
but its finite-sample mean need not be zero. A full branch audit therefore includes fluctuating empirical discrepancies in the cells at scores $1/6$ and $3/5$.

For the specified target utility, however, the left-node term in Eq.~\eqref{eq:d-full-sec32} is multiplied by $u_2-u_1=0$, so $D_L(p)=q_{0,L}^\star(q_{L,R}^\star-q_{L,R})\underbrace{(u_2-u_1)}_{=0}=0$.
This remains zero regardless of the branch-probability error or its empirical estimate. The fluctuating allocation between $y_1$ and $y_2$ does not affect this utility. Auditing every branch would include these utility-irrelevant fluctuations. To locate the target-utility error in this example, it suffices to compare the root and right-node contributions, which can be nonzero, on the same predicted-utility cell.

\section{Proofs for HUC-Boost}
\label{app:huc-boost}

This appendix proves the loss decrease and finite termination stated in Theorem~\ref{thm:finite-termination-explicit}. Appendix~\ref{subsec:softmax-leaf-distribution} prepares the softmax branch probabilities, the induced leaf distribution, and log loss. Appendix~\ref{subsec:huc-boost-theorem-proof} then proves the theorem in four steps. We then compare branch-local updates with ordinary leaf Brier loss and prove the auxiliary path-product results.

\subsection{Softmax Branches, Leaf Distribution, and Log Loss}
\label{subsec:softmax-leaf-distribution}
Before the main proof, we verify that node-wise softmax branch probabilities define a positive leaf distribution, that the original conditional branch probabilities are recovered from this leaf distribution, and that the population log loss used as the progress potential is well defined.

For a predictor $g:\gX\to\Delta^{K-1}$, define the population log loss by
\begin{equation}
\mathcal L_{\log}(g):=\mathbb{E}[-\log g(X)_Y].
\label{eq:population-log-loss}
\end{equation}
Assign logits $\boldsymbol z_v(x)=(z_{v,1}(x),\ldots,z_{v,d_v}(x))^\top$ to each internal node $v$, and set
\begin{equation}
q_{v,j}^{\boldsymbol z}(x):=\frac{\exp(z_{v,j}(x))}{\sum_{k=1}^{d_v}\exp(z_{v,k}(x))},\qquad j=1,\ldots,d_v.
\label{eq:softmax-branch}
\end{equation}
For every leaf $y$, use the path sequence in~Eq.~\eqref{eq:root-leaf-path} to define
\begin{equation}
f_{\boldsymbol z}(x)_y:=\prod_{r=0}^{L_y-1}q_{w_r(y),j_r(y)}^{\boldsymbol z}(x).
\label{eq:softmax-leaf-product}
\end{equation}
Lemma~\ref{lem:softmax-leaf-distribution} shows that this path product defines a positive probability distribution over the leaves for every input. Lemma~\ref{lem:softmax-branch-subtree-consistency} further gives, for every internal node $v$ and child $v_j$,
\begin{equation}
\pi_{v_j}(f_{\boldsymbol z}(x))
=
\pi_v(f_{\boldsymbol z}(x))q_{v,j}^{\boldsymbol z}(x).
\label{eq:softmax-path-basic-properties}
\end{equation}
Hence the conditional branch probability recovered from the leaf distribution by~Eq.~\eqref{eq:branch-conditional} equals the original softmax branch probability. Appendix~\ref{subsec:softmax-auxiliary-results} proves both lemmas.

\subsection{Proof of Theorem~\ref{thm:finite-termination-explicit}}
\label{subsec:huc-boost-theorem-proof}
The proof proceeds in four steps. Step 1 derives how a one-node logit update changes branch and leaf probabilities and identifies the probability structure it preserves. Step 2 evaluates the resulting change in ordinary leaf log loss. Step 3 computes the first and second derivatives needed for the Taylor expansion in Step 4. Step 4 establishes the one-step loss decrease and finite termination.

\paragraph{Step 1: probability changes under a one-node logit update.}
At iteration $t$, let $(c_t,u_t,I_t,v_t)$ be the selected candidate, and write $v:=v_t$ and $d:=d_v$. Define the update direction at the selected node by
\begin{equation}
\boldsymbol h_t(x)
:=
c_t(x)\mathbf{1}\{\mu_{u_t}(f_t(x))\in I_t\}
\boldsymbol\Delta_{u_t,v}(f_t(x)).
\label{eq:huc-boost-frozen-direction}
\end{equation}
For any $\alpha\in\R$, set
\begin{equation}
\boldsymbol z_{t,v}^{(\alpha)}(x)
:=\boldsymbol z_{t,v}(x)+\alpha\boldsymbol h_t(x),
\qquad
\boldsymbol z_{t,w}^{(\alpha)}(x)
:=\boldsymbol z_{t,w}(x)
\quad(w\ne v),
\label{eq:one-node-trial-logit-update}
\end{equation}
and denote the corresponding branch probabilities and leaf distribution by $\boldsymbol q_{t,w}^{(\alpha)}$ and $f_t^{(\alpha)}$.

\begin{proposition}[Branch and leaf probabilities after a one-node update]
\label{prop:boost-leaf-rescaling}
Let
\begin{equation}
Z_t^{(\alpha)}(x)
:=
\sum_{k=1}^{d}q_{t,v,k}(x)e^{\alpha h_{t,k}(x)}.
\label{eq:one-node-logit-normalizer}
\end{equation}
At the selected node,
\begin{equation}
q_{t,v,j}^{(\alpha)}(x)
=
\frac{q_{t,v,j}(x)e^{\alpha h_{t,j}(x)}}{Z_t^{(\alpha)}(x)},
\qquad j=1,\ldots,d,
\label{eq:one-node-multiplicative-branch-update}
\end{equation}
whereas the branch probabilities at every $w\ne v$ remain unchanged. Moreover, for every leaf $y$,
\begin{equation}
f_t^{(\alpha)}(x)_y
=
\begin{cases}
f_t(x)_y,&y\notin\gY_v,\\
f_t(x)_y\frac{e^{\alpha h_{t,j}(x)}}{Z_t^{(\alpha)}(x)},&y\in\gY_{v_j}.
\end{cases}
\label{eq:boost-leaf-rescale}
\end{equation}
Consequently, the update preserves every leaf probability outside the selected subtree, the reach probability of the selected node, and the conditional leaf distribution within each child subtree.
\end{proposition}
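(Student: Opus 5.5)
The plan is to read everything off the definitions in Eq.~\eqref{eq:softmax-branch} and Eq.~\eqref{eq:softmax-leaf-product}, together with Eq.~\eqref{eq:softmax-path-basic-properties}.

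\textbf{Branch formula at $v$.} Start from the softmax with logits $\boldsymbol z_{t,v}+\alpha\boldsymbol h_t$. The numerator is $e^{z_{t,v,j}}e^{\alpha h_{t,j}}$. Divide numerator and denominator by $\sum_k e^{z_{t,v,k}}$. The numerator becomes $q_{t,v,j}e^{\alpha h_{t,j}}$ and the denominator becomes $Z_t^{(\alpha)}$, which gives Eq.~\eqref{eq:one-node-multiplicative-branch-update}. At every $w\ne v$ the logits are unchanged by Eq.~\eqref{eq:one-node-trial-logit-update}, so the branch probabilities there are unchanged. Here $q_{t,v,j}$ denotes the softmax branch probability at iteration $t$. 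By Eq.~\eqref{eq:softmax-path-basic-properties}, it also coincides with the conditional branch probability recovered from $f_t$.

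\textbf{Leaf formula.} Fix a leaf $y$ and split into two cases according to whether its path passes through $v$.
\begin{itemize}
\item If $y\notin\gY_v$, then $v\notin\operatorname{Path}(y)$. Every factor in the path product Eq.~\eqref{eq:softmax-leaf-product} is unchanged, so $f^{(\alpha)}_t(x)_y=f_t(x)_y$.
\item If $y\in\gY_{v_j}$, then $v$ appears exactly once on the path, with selected index $j_v(y)=j$. Only that factor changes, and it is multiplied by $e^{\alpha h_{t,j}}/Z_t^{(\alpha)}$. This gives Eq.~\eqref{eq:boost-leaf-rescale}.
\end{itemize}

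\textbf{Preservation properties.}
\begin{itemize}
\item Leaf probabilities outside $\gY_v$ are preserved; this is the first case above.
\item The reach probability of $v$ is preserved. Sum Eq.~\eqref{eq:boost-leaf-rescale} over $y\in\gY_v$, grouping by child. The child sums are $\pi_{v_j}(f_t)=\pi_v(f_t)q_{t,v,j}$ by Eq.~\eqref{eq:softmax-path-basic-properties}. Hence
\[
\pi_v(f^{(\alpha)}_t)=\pi_v(f_t)\sum_j q_{t,v,j}e^{\alpha h_{t,j}}/Z_t^{(\alpha)}=\pi_v(f_t).
\]
A shorter route is that $\pi_v$ equals the product of branch probabilities along the path to $v$, and none of these lie at $v$.
\item The conditional leaf distribution within each child subtree is preserved. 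Within $\gY_{v_j}$, every leaf is multiplied by the same factor, so the normalized ratios $f(x)_y/\pi_{v_j}(f(x))$ are unchanged.
\end{itemize}

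The only delicate point is bookkeeping: making sure each path meets $v$ at most once and through a unique child index. This follows from the tree structure and the child partition in Eq.~\eqref{eq:child-partition}, so no real obstacle arises.
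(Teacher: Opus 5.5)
Your proposal is correct and follows essentially the same route as the paper. Both arguments get the branch formula by normalizing the softmax. Both get the leaf formula from the single changed factor in the path product, obtain preservation of the reach probability by summing over the child subtrees (your use of $\pi_{v_j}(f_t)=\pi_v(f_t)q_{t,v,j}$ is implicit in the paper's version), and obtain preservation of the conditional distributions from the common rescaling factor within each $\gY_{v_j}$.
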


\begin{proof}
By the softmax definition,
\[
q_{t,v,j}^{(\alpha)}(x)
=
\frac{e^{z_{t,v,j}(x)+\alpha h_{t,j}(x)}}{\sum_{k=1}^{d}e^{z_{t,v,k}(x)+\alpha h_{t,k}(x)}}
=
\frac{q_{t,v,j}(x)e^{\alpha h_{t,j}(x)}}{\sum_{k=1}^{d}q_{t,v,k}(x)e^{\alpha h_{t,k}(x)}}.
\]
This proves~Eq.~\eqref{eq:one-node-multiplicative-branch-update}. If the path to $y$ does not pass through $v$, none of its path-product factors changes. If $y\in\gY_{v_j}$, the only changed factor is $q_{t,v,j}$, yielding~Eq.~\eqref{eq:boost-leaf-rescale}.

Summing~Eq.~\eqref{eq:boost-leaf-rescale} over $\gY_{v_j}$ gives
\[
\pi_{v_j}(f_t^{(\alpha)}(x))=\pi_v(f_t(x))q_{t,v,j}^{(\alpha)}(x).
\]
Summing further over $j$ yields $\pi_v(f_t^{(\alpha)}(x))=\pi_v(f_t(x))$. Hence, for $y\in\gY_{v_j}$,
\[
f_t^{(\alpha)}(x)_y/\pi_{v_j}(f_t^{(\alpha)}(x))
=f_t(x)_y/\pi_{v_j}(f_t(x)),
\]
which proves preservation of the conditional leaf distribution within every child subtree.
\end{proof}

\paragraph{Step 2: change in the potential under a one-node update.}
Recall the subtree-membership indicators $J_v(y)=\mathbf{1}\{y\in\gY_v\}$ and $J_{v,j}(y)=\mathbf{1}\{y\in\gY_{v_j}\}$, where $\gY_v$ and $\gY_{v_j}$ are the descendant-leaf sets of $v$ and its child $v_j$.
We now evaluate how the one-node logit update changes the population log loss defined in Appendix~\ref{subsec:softmax-leaf-distribution}.

\begin{lemma}[Exact change in log loss under a one-node update]
\label{lem:nodewise-logloss}
For every $(x,y)$,
\begin{equation}
-\log f_t^{(\alpha)}(x)_y+\log f_t(x)_y
=
J_v(y)\log Z_t^{(\alpha)}(x)
-
\alpha\sum_{j=1}^{d}J_{v,j}(y)h_{t,j}(x).
\label{eq:one-node-pointwise-logloss-change}
\end{equation}
Consequently,
\begin{equation}
\mathcal L_{\log}(f_t^{(\alpha)})
-
\mathcal L_{\log}(f_t)
=
\mathbb{E}\left[
J_v(Y)\log Z_t^{(\alpha)}(X)
-
\alpha\sum_{j=1}^{d}J_{v,j}(Y)h_{t,j}(X)
\right].
\label{eq:one-node-global-logloss-change}
\end{equation}
\end{lemma}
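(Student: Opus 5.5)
The plan is to read off the pointwise identity directly from the leaf rescaling formula in Proposition~\ref{prop:boost-leaf-rescaling}, and then take expectations. Fix $(x,y)$ and split into two cases according to whether $y\in\gY_v$.

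\emph{Case $y\notin\gY_v$.} Eq.~\eqref{eq:boost-leaf-rescale} gives $f_t^{(\alpha)}(x)_y=f_t(x)_y$, so the left-hand side of Eq.~\eqref{eq:one-node-pointwise-logloss-change} is zero. On the right-hand side, $J_v(y)=0$ and $J_{v,j}(y)=0$ for every $j$, so it vanishes as well.

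\emph{Case $y\in\gY_v$.} Here $y\in\gY_{v_k}$ for a unique child index $k$, by the partition in Eq.~\eqref{eq:child-partition}. Taking $-\log$ of Eq.~\eqref{eq:boost-leaf-rescale} gives
\[
-\log f_t^{(\alpha)}(x)_y+\log f_t(x)_y=\log Z_t^{(\alpha)}(x)-\alpha h_{t,k}(x).
\]
Since $J_v(y)=1$ and $J_{v,j}(y)=\mathbf 1\{j=k\}$, the sum $\sum_j J_{v,j}(y)h_{t,j}(x)$ equals $h_{t,k}(x)$, which matches the right-hand side. The logarithms are well defined because leaf probabilities are positive by Lemma~\ref{lem:softmax-leaf-distribution}, and $Z_t^{(\alpha)}(x)>0$ as a positive combination of exponentials.

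For Eq.~\eqref{eq:one-node-global-logloss-change}, substitute $(X,Y)$ and take expectations, using linearity. The one point needing care is integrability, so that $\mathcal L_{\log}(f_t^{(\alpha)})-\mathcal L_{\log}(f_t)$ is not of the form $\infty-\infty$. Since $|c|\le1$ and $\mu_{u,a}\in[-1,1]$, each component satisfies $|h_{t,j}|\le 2$. Hence $|\log Z_t^{(\alpha)}|\le 2|\alpha|$, and the whole pointwise difference is bounded by $4|\alpha|$. Assuming $\mathcal L_{\log}(f_t)<\infty$ (which follows inductively from $\mathcal L_{\log}(f_0)<\infty$ and this bound), the loss $\mathcal L_{\log}(f_t^{(\alpha)})$ is finite and the difference of expectations equals the expectation of the difference. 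I expect this integrability bookkeeping to be the only mildly delicate step; the rest is direct substitution.
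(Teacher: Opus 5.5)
Your proposal is correct and matches the paper's proof: the paper likewise reads the pointwise identity off the leaf rescaling in Proposition~\ref{prop:boost-leaf-rescaling}, packaging your two cases into the single formula $f_t^{(\alpha)}(x)_y/f_t(x)_y=\exp\bigl(\alpha\sum_j J_{v,j}(y)h_{t,j}(x)-J_v(y)\log Z_t^{(\alpha)}(x)\bigr)$, and then takes negative logarithms and expectations. Your integrability remark is a sensible addition, since without finiteness of $\mathcal L_{\log}(f_t)$ the identity would be $\infty-\infty$; the paper makes this explicit only in the proof of Lemma~\ref{lem:boost-curvature}, using the slightly sharper pointwise bound $2|\alpha|$ (because $\max_j h_{t,j}-\min_j h_{t,j}\le 2$).
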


\begin{proof}
Proposition~\ref{prop:boost-leaf-rescaling} gives, for every leaf $y$,
\[
\frac{f_t^{(\alpha)}(x)_y}{f_t(x)_y}
=
\exp\left(
\alpha\sum_{j=1}^{d}J_{v,j}(y)h_{t,j}(x)
-
J_v(y)\log Z_t^{(\alpha)}(x)
\right).
\]
If $y\notin\gY_v$, all indicators on the right are zero. If $y\in\gY_{v_j}$, the right-hand side equals $e^{\alpha h_{t,j}(x)}/Z_t^{(\alpha)}(x)$. Negative logarithms give Eq.~\eqref{eq:one-node-pointwise-logloss-change}, and expectations give Eq.~\eqref{eq:one-node-global-logloss-change}.
\end{proof}

\paragraph{Step 3: first and second derivatives needed for the one-step decrease.}
For the Taylor expansion in Step 4, define
$\phi_t(\alpha):=\mathcal L_{\log}(f_t^{(\alpha)})$
and evaluate its first derivative at $\alpha=0$ and its second derivative at arbitrary $\alpha$. Set
\begin{align}
\Gamma_t
&:=
\Gamma_{c_t,u_t,I_t,v_t}(f_t)
=
\mathbb{E}\left[
\left\langle
\boldsymbol h_t(X),
\boldsymbol R_v(f_t(X),Y)
\right\rangle
\right],
\label{eq:general-direction-gamma}\\
\Lambda_t
&:=
\frac14
\mathbb{E}\left[
J_v(Y)
\left\{
\max_j h_{t,j}(X)-\min_j h_{t,j}(X)
\right\}^2
\right].
\label{eq:general-direction-lambda}
\end{align}

\begin{lemma}[Directional derivative and curvature along a one-node update]
\label{lem:boost-curvature}
Assume $\mathcal L_{\log}(f_t)<\infty$. Then $\phi_t(\alpha)<\infty$ for every finite $\alpha\in\R$, and
\begin{align}
\phi_t'(0)&=-\Gamma_t,
\label{eq:boost-first-derivative}\\
\phi_t''(\alpha)
&=
\mathbb{E}\left[
J_v(Y)
\operatorname{Var}_{J\sim\boldsymbol q_{t,v}^{(\alpha)}(X)}
(h_{t,J}(X))
\right]
\le\Lambda_t\le1.
\label{eq:boost-second-derivative}
\end{align}
\end{lemma}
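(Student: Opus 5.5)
The argument rests on the exact pointwise formula of Lemma~\ref{lem:nodewise-logloss}:
\[
\ell_\alpha(x,y):=J_v(y)\log Z_t^{(\alpha)}(x)-\alpha\sum_{j}J_{v,j}(y)h_{t,j}(x).
\]
Finiteness comes first. Since $|c_t|\le1$ and each component of $\boldsymbol\Delta_{u_t,v}$ is a difference of two subtree means in $[-1,1]$, we have $|h_{t,j}(x)|\le2$. Hence $Z_t^{(\alpha)}(x)\in[e^{-2|\alpha|},e^{2|\alpha|}]$, and so $|\ell_\alpha|\le4|\alpha|$. Then $\phi_t(\alpha)=\mathcal L_{\log}(f_t)+\mathbb E[\ell_\alpha]$ is finite.

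For the first derivative, I differentiate $\ell_\alpha$ in $\alpha$ pointwise:
\[
\partial_\alpha\ell_\alpha=J_v(y)\sum_k q^{(\alpha)}_{t,v,k}(x)h_{t,k}(x)-\sum_jJ_{v,j}(y)h_{t,j}(x).
\]
This derivative is bounded by $4$, so dominated convergence (via the mean value theorem) lets me exchange derivative and expectation. At $\alpha=0$ it equals $-\langle\boldsymbol h_t(x),\boldsymbol R_v(f_t(x),y)\rangle$, because $R_{v,j}=J_{v,j}-J_vq_{t,v,j}$. Here I use Eq.~\eqref{eq:softmax-path-basic-properties}: the branch probabilities recovered from $f_t$ agree with the softmax $q_{t,v,j}$, so the residual is the paper's $\boldsymbol R_v(f_t(X),Y)$. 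Taking expectations and using that $\boldsymbol h_t$ carries the factor $c_t\mathbf 1\{\cdot\}$ gives $\phi_t'(0)=-\Gamma_t$.

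For the second derivative, the linear term drops out, and the second derivative of a log-partition function is a variance:
\[
\partial_\alpha^2\ell_\alpha=J_v(y)\operatorname{Var}_{J\sim\boldsymbol q^{(\alpha)}_{t,v}(x)}(h_{t,J}(x)).
\]
This is bounded by $4$, so differentiating under the expectation again is justified. This yields the stated formula for $\phi_t''(\alpha)$.

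The bound $\le\Lambda_t$ is Popoviciu's inequality: any random variable supported in $[m,M]$ has variance at most $(M-m)^2/4$. Applying it with $m=\min_jh_{t,j}(x)$ and $M=\max_jh_{t,j}(x)$ gives the bound, and it holds uniformly in $\alpha$.

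The main obstacle is the final bound $\Lambda_t\le1$. The crude range bound $|h_{t,j}|\le2$ only gives $\Lambda_t\le4$, so I need $\max_j\Delta_{u,v,j}-\min_j\Delta_{u,v,j}\le2$. This holds because that range equals $\max_j\mu_{u,v_j}-\min_j\mu_{u,v_j}$: the common $\mu_{u,v}$ cancels in the difference. Each $\mu_{u,v_j}$ is an average of values of $u$ in $[-1,1]$, so the range is at most $2$. Multiplying by $|c_t|\mathbf 1\{\cdot\}\le1$ keeps the range of $\boldsymbol h_t$ within $2$. Together with $J_v\le1$, this gives $\Lambda_t\le\frac14\cdot4=1$.
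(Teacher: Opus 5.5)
Your proposal is correct and follows the paper's proof essentially step for step. Both arguments differentiate the exact pointwise log-loss change from Lemma~\ref{lem:nodewise-logloss} and identify the first and second derivatives of $\log Z_t^{(\alpha)}$ as a mean and a variance under $\boldsymbol q_{t,v}^{(\alpha)}$. Both then apply the range bound on variance pointwise and justify differentiating under the expectation by bounded derivatives. One step the paper only asserts is that the range of $\boldsymbol h_t$ is at most $2$; your explicit argument for it, that $\mu_{u,v}$ cancels from $\max_j\Delta_{u,v,j}-\min_j\Delta_{u,v,j}$, usefully fills that in.
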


\begin{proof}
From~Eq.~\eqref{eq:one-node-global-logloss-change},
\[
\frac{d}{d\alpha}\log Z_t^{(\alpha)}(x)
=
\sum_{j=1}^{d}q_{t,v,j}^{(\alpha)}(x)h_{t,j}(x),
\]
and therefore
\[
\phi_t'(\alpha)
=
-\mathbb{E}\left[
\sum_{j=1}^{d}h_{t,j}(X)
\left\{
J_{v,j}(Y)-J_v(Y)q_{t,v,j}^{(\alpha)}(X)
\right\}
\right].
\]
Setting $\alpha=0$ gives $\phi_t'(0)=-\Gamma_t$. Moreover,
\[
\frac{d^2}{d\alpha^2}\log Z_t^{(\alpha)}(x)
=
\operatorname{Var}_{J\sim\boldsymbol q_{t,v}^{(\alpha)}(x)}(h_{t,J}(x)),
\]
which proves the equality in~Eq.~\eqref{eq:boost-second-derivative}.

A random variable supported on $[a,b]$ has variance at most $(b-a)^2/4$. Applying this pointwise in $x$ gives $\phi_t''(\alpha)\le\Lambda_t$. Since $\max_jh_{t,j}(x)-\min_jh_{t,j}(x)\le2$, we also have $\Lambda_t\le1$. The absolute value of~Eq.~\eqref{eq:one-node-pointwise-logloss-change} is at most $2|\alpha|$, and the first and second derivatives are uniformly bounded; hence differentiation and expectation may be interchanged.
\end{proof}

\paragraph{Step 4: one-step loss decrease and finite termination.}
\begin{lemma}[Positive curvature for a nonzero audit moment]
\label{lem:positive-curvature}
If $|\Gamma_t|>0$, then $\Lambda_t>0$.
\end{lemma}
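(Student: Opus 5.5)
We prove the contrapositive: if $\Lambda_t=0$, then $\Gamma_t=0$. The idea is that $\Gamma_t$ pairs the direction $\boldsymbol h_t(X)$ with the branch residual $\boldsymbol R_v(f_t(X),Y)$. That residual vanishes outside $\{Y\in\gY_v\}$, and it is insensitive to adding a constant vector to $\boldsymbol h_t$. So $\Gamma_t$ should be controlled by the spread $\max_jh_{t,j}-\min_jh_{t,j}$ on the event $\{Y\in\gY_v\}$, which is exactly what $\Lambda_t$ measures.

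First we write $\Gamma_t$ pointwise. By Lemma~\ref{lem:one-step-increment}, specifically Eq.~\eqref{eq:residual-sum-zero}, we have $\sum_jR_{v,j}(f_t(x),y)=0$ for every $(x,y)$. Hence for any scalar $m(x)$,
\[
\langle\boldsymbol h_t(x),\boldsymbol R_v(f_t(x),y)\rangle=\langle\boldsymbol h_t(x)-m(x)\boldsymbol 1,\boldsymbol R_v(f_t(x),y)\rangle .
\]
We take $m(x):=\min_jh_{t,j}(x)$, which is measurable as a finite minimum of measurable functions. Next we use $\boldsymbol R_v(p,y)=\boldsymbol 0$ when $y\notin\gY_v$ (again Lemma~\ref{lem:one-step-increment}). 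For $y\in\gY_{v_k}$, Eq.~\eqref{eq:residual-on-path} gives $R_{v,j}=\mathbf 1\{j=k\}-q_{v,j}$, so $\sum_j|R_{v,j}|\le2$. Writing $s(x):=\max_jh_{t,j}(x)-\min_jh_{t,j}(x)$, every component of $\boldsymbol h_t(x)-m(x)\boldsymbol 1$ lies in $[0,s(x)]$. Therefore
\[
\bigl|\langle\boldsymbol h_t(X),\boldsymbol R_v(f_t(X),Y)\rangle\bigr|\le 2J_v(Y)s(X).
\]

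Taking expectations and applying Cauchy--Schwarz (or Jensen) with $J_v^2=J_v$ gives
\[
|\Gamma_t|\le 2\,\mathbb E[J_v(Y)s(X)]\le 2\sqrt{\mathbb E[J_v(Y)s(X)^2]}=4\sqrt{\Lambda_t}.
\]
So $\Lambda_t=0$ forces $\Gamma_t=0$, and $|\Gamma_t|>0$ implies $\Lambda_t\ge\Gamma_t^2/16>0$.

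The only delicate point is measurability and integrability. Every quantity involved is bounded by $2$ and measurable, since $\boldsymbol h_t$ is a product of measurable bounded factors (as set up in Step~1), so the expectations are finite. The quantitative bound $\Lambda_t\ge\Gamma_t^2/16$ may also be useful in the later finite-termination step.
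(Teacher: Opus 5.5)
Your proof is correct and rests on the same two facts as the paper's proof, with the same contrapositive framing. Since $\sum_j R_{v,j}=0$, $\boldsymbol h_t$ can be shifted by a constant vector without changing the pairing, and $\boldsymbol R_v=\boldsymbol 0$ off $\{Y\in\gY_v\}$. The paper argues qualitatively: $\Lambda_t=0$ forces all components of $\boldsymbol h_t(X)$ to coincide (almost surely) on $\{Y\in\gY_v\}$, so the integrand of $\Gamma_t$ vanishes. You instead integrate the pointwise bound and apply Cauchy--Schwarz to get $|\Gamma_t|\le 4\sqrt{\Lambda_t}$. This also settles the null-set caveat behind the paper's phrase ``on every point''. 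The resulting lower bound on $\Lambda_t$ is not needed later, since the termination proof uses only $\Lambda_t>0$ (to define the step) and $\Lambda_t\le 1$.
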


\begin{proof}
We prove the contrapositive. If $\Lambda_t=0$, then on every point with $J_v(Y)=1$, all components of $\boldsymbol h_t(X)$ are equal. Since $\sum_jR_{v,j}(f_t(X),Y)=0$,
$\left\langle \boldsymbol h_t(X), \boldsymbol R_v(f_t(X),Y) \right\rangle=0$.
When $J_v(Y)=0$, $\boldsymbol R_v(f_t(X),Y)=\boldsymbol 0$. Therefore $\Gamma_t=0$.
\end{proof}

\begin{proof}[Proof of Theorem~\ref{thm:finite-termination-explicit}]
Lemma~\ref{lem:positive-curvature} gives $\Lambda_t>0$ whenever an update is made. Lemma~\ref{lem:boost-curvature} and the integral form of Taylor's theorem imply, for every $\alpha\in\R$,
\[
\phi_t(\alpha)
\le
\phi_t(0)-\alpha\Gamma_t+\frac{\alpha^2}{2}\Lambda_t.
\]
Substituting $\alpha=\alpha_t=\Gamma_t/\Lambda_t$ gives the one-step loss decrease used in Theorem~\ref{thm:finite-termination-explicit}. Proposition~\ref{prop:boost-leaf-rescaling} ensures that all branch and leaf probabilities remain positive after every finite number of updates.

Before termination, $|\Gamma_t|>\varepsilon$ and $\Lambda_t\le1$, so each update decreases log loss by more than $\varepsilon^2/2$. If $m$ updates are made, nonnegativity of log loss gives
\[
\mathcal L_{\log}(f_0)
\ge
\sum_{t=0}^{m-1}
\left\{\mathcal L_{\log}(f_t)-\mathcal L_{\log}(f_{t+1})\right\}
>
m\frac{\varepsilon^2}{2}.
\]
Thus the update bound in Theorem~\ref{thm:finite-termination-explicit} follows. At termination, $|\Gamma_{c,u,I,v}(f_T)|\le\varepsilon$ for every candidate, and hence $\HUC(f_T;\gC,\gU)\le\varepsilon$.
\end{proof}

\paragraph{Fixed-step empirical update.}
On a fixed sample $S=((X_i,Y_i))_{i=1}^n$, define
\begin{equation}
\widehat{\mathcal L}_{\log,S}(g)
:=-\frac1n\sum_{i=1}^n\log g(X_i)_{Y_i}.
\label{eq:empirical-log-loss}
\end{equation}
Given the current predictor $f$ and a candidate $(c,u,I,v)$, set $\boldsymbol h(x):=c(x)\mathbf 1\{\mu_u(f(x))\in I\}\boldsymbol\Delta_{u,v}(f(x))$, and denote the corresponding empirical audit moment by $\widehat{\Gamma}$.

\begin{corollary}[Empirical loss decrease under a signed fixed-step update]
\label{cor:empirical-fixed-step-descent}
For any $\alpha>0$ and $\sigma\in\{-1,1\}$, let $f^+$ be the predictor obtained by updating the logits at the selected node as $\boldsymbol z_v^+=\boldsymbol z_v+\sigma\alpha\boldsymbol h$. Then
\begin{equation}
\widehat{\mathcal L}_{\log,S}(f^+)
-
\widehat{\mathcal L}_{\log,S}(f)
\le
-\alpha\sigma\widehat{\Gamma}
+
\frac{\alpha^2}{2}.
\label{eq:empirical-fixed-step-descent}
\end{equation}
In particular, if $\sigma=\operatorname{sgn}(\widehat{\Gamma})$, $|\widehat{\Gamma}|>3\varepsilon/4$, and $\alpha=\varepsilon/2$, then the empirical log loss decreases by more than $\varepsilon^2/4$.
\end{corollary}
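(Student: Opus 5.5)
The plan is to reuse the machinery already set up for the population version, now applied to the empirical measure. The empirical log loss $\widehat{\mathcal L}_{\log,S}$ is the expectation of $-\log g(X)_Y$ under the uniform distribution on the sample $S$. So Proposition~\ref{prop:boost-leaf-rescaling} and Lemma~\ref{lem:nodewise-logloss} apply verbatim, since both are pointwise in $(x,y)$. Set $\beta:=\sigma\alpha$ and define $\widehat\phi(\beta)$ as the empirical log loss after the update $\boldsymbol z_v+\beta\boldsymbol h$. By Eq.~\eqref{eq:one-node-pointwise-logloss-change},
\[
\widehat\phi(\beta)-\widehat\phi(0)=\frac1n\sum_{i=1}^n\Bigl\{J_v(Y_i)\log Z^{(\beta)}(X_i)-\beta\sum_{j}J_{v,j}(Y_i)h_j(X_i)\Bigr\},
\]
where $Z^{(\beta)}(x)=\sum_k q_{v,k}(x)e^{\beta h_k(x)}$. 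This is a finite sum of smooth functions of $\beta$, so no integrability issues arise, unlike in Lemma~\ref{lem:boost-curvature}.

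Next I will compute the derivatives exactly as in Lemma~\ref{lem:boost-curvature}. The first derivative at zero is
\[
\widehat\phi'(0)=-\frac1n\sum_i\langle\boldsymbol h(X_i),\boldsymbol R_v(f(X_i),Y_i)\rangle=-\widehat\Gamma.
\]
The second derivative is
\[
\widehat\phi''(\beta)=\frac1n\sum_iJ_v(Y_i)\operatorname{Var}_{J\sim\boldsymbol q_v^{(\beta)}(X_i)}(h_J(X_i)).
\]
Every component of $\boldsymbol h$ lies in $[-1,1]$: indeed $|c|\le1$, the indicator is at most $1$, and $\Delta_{u,v,j}$ is a difference of two subtree means in $[-1,1]$. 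So the range of $\boldsymbol h$ has width at most $2$, each variance is at most $1$, and $\widehat\phi''\le1$. Taylor's theorem with integral remainder then gives
\[
\widehat\phi(\sigma\alpha)\le\widehat\phi(0)-\sigma\alpha\widehat\Gamma+\alpha^2/2,
\]
which is Eq.~\eqref{eq:empirical-fixed-step-descent}.

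For the particular choice, $\sigma\widehat\Gamma=|\widehat\Gamma|>3\varepsilon/4$ and $\alpha=\varepsilon/2$. The change in loss is therefore strictly less than
\[
-\frac{\varepsilon}{2}\cdot\frac{3\varepsilon}{4}+\frac{\varepsilon^2}{8}=-\frac{\varepsilon^2}{4}.
\]

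The only delicate point is the bound $\max_j h_j-\min_j h_j\le2$. It requires checking that the contrast components, and not just the utilities, lie in $[-1,1]$. This follows because every $\mu_{u,a}$ is a convex combination of values of $u$, and $\Delta_{u,v,j}$ is a difference of two such values, so the component range is at most $2$.
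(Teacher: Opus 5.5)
Your route is the paper's own: its proof applies the Taylor bound of Lemma~\ref{lem:boost-curvature} to the empirical average with curvature bound $1$. Your derivative computations, the integral-remainder step with $\beta=\sigma\alpha$, and the final arithmetic $-\tfrac{\varepsilon}{2}\cdot\tfrac{3\varepsilon}{4}+\tfrac{\varepsilon^2}{8}=-\tfrac{\varepsilon^2}{4}$ are all correct.

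However, the step you flag as delicate is justified incorrectly. The components of $\boldsymbol h$ need not lie in $[-1,1]$. Take a binary node whose children are leaves with utility values $1$ and $-1$, with $q_{v,1}=1/10$, $c\equiv1$, and the indicator equal to $1$. Then $\mu_{u,v}=-4/5$, so $h_1=\Delta_{u,v,1}=9/5$. Your stated reason, that $\Delta_{u,v,j}$ is a difference of two values in $[-1,1]$, only gives $\Delta_{u,v,j}\in[-2,2]$. That allows a component range of up to $4$, hence curvature up to $4$ and a remainder $2\alpha^2$ instead of $\alpha^2/2$, which would destroy the $\varepsilon^2/4$ conclusion. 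So the bound $\widehat\phi''\le1$ does not follow from your argument as written.

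The fix is one line. All components share the same subtracted parent mean and the same scalar factor $s(x):=c(x)\mathbf 1\{\mu_u(f(x))\in I\}\in[-1,1]$. Hence $h_j(x)-h_k(x)=s(x)\{\mu_{u,v_j}(f(x))-\mu_{u,v_k}(f(x))\}$, and $\max_jh_j(x)-\min_jh_j(x)\le2$ because the child means lie in $[-1,1]$. In the example above the range is exactly $9/5+1/5=2$. Equivalently, variance is shift invariant, so $\operatorname{Var}_{J\sim\boldsymbol q}(h_J(x))=s(x)^2\operatorname{Var}_{J\sim\boldsymbol q}(\mu_{u,v_J}(f(x)))\le1$ for every distribution $\boldsymbol q$ on the children, in particular the updated $\boldsymbol q_v^{(\beta)}(x)$. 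With this replacement your proof is complete.
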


\begin{proof}
Apply the Taylor bound of Lemma~\ref{lem:boost-curvature} to the empirical average and use the curvature bound 1.
\end{proof}

\subsection{Ordinary Leaf Brier Loss Is Not Branch-Local}
\label{subsec:leaf-brier-nonlocality}
Define the ordinary multiclass Brier loss by
\begin{equation}
\mathcal B_{\mathrm{leaf}}(p,y):=\|e_y-p\|_2^2
\label{eq:ordinary-leaf-brier}
\end{equation}
For UC-Boost, which updates the full leaf-probability vector, the first-order term of this loss aligns with the leaf residual $e_y-p$. A branch-local HUC-Boost update preserves the parent reach probability and the conditional leaf distributions within child subtrees, so its direction in leaf space differs.

Fix a positive prediction vector $p$, an internal node $v$, and its children $v_1,\ldots,v_{d_v}$. Define the conditional leaf distribution within each child subtree, embedded in the full leaf space, and its squared norm by
\begin{equation}
[\boldsymbol r_{v,j}(p)]_z:=\frac{p_z}{\pi_{v_j}(p)}\mathbf{1}\{z\in\gY_{v_j}\},\qquad C_{v,j}(p):=\|\boldsymbol r_{v,j}(p)\|_2^2
\label{eq:child-conditional-leaf-vector}
\end{equation}
Hold all branches other than $v$ fixed, and let $\boldsymbol q'_v=\boldsymbol q_v(p)+\boldsymbol\delta$, where $\sum_j\delta_j=0$ and every component of $\boldsymbol q'_v$ is positive. The same path-product calculation as in Step 1 gives
\begin{equation}
p'-p=\pi_v(p)\sum_{j=1}^{d_v}\delta_j\boldsymbol r_{v,j}(p).
\label{eq:branch-local-leaf-change}
\end{equation}
The supports of the $\boldsymbol r_{v,j}$ are disjoint, and $\left\langle \boldsymbol r_{v,j}(p),p\right\rangle=\pi_v(p)q_{v,j}(p)C_{v,j}(p)$. Hence
\begin{align}
&\mathcal B_{\mathrm{leaf}}(p',y)-\mathcal B_{\mathrm{leaf}}(p,y)\notag\\
&\quad=-2\left\langle e_y-p,p'-p\right\rangle+\|p'-p\|_2^2\notag\\
&\quad=-2\pi_v(p)\sum_{j=1}^{d_v}\delta_j\left\{[\boldsymbol r_{v,j}(p)]_y-\pi_v(p)q_{v,j}(p)C_{v,j}(p)\right\}\notag\\
&\qquad\quad+\pi_v(p)^2\sum_{j=1}^{d_v}\delta_j^2C_{v,j}(p).
\label{eq:ordinary-brier-branch-change}
\end{align}

The first-order term in~Eq.~\eqref{eq:ordinary-brier-branch-change} does not generally align with the local HUC residual. To see this, consider an observed label outside the selected subtree, $y\notin\gY_v$. Then $J_v(y)=J_{v,j}(y)=0$ and $\boldsymbol R_v(p,y)=\boldsymbol 0$. Since $[\boldsymbol r_{v,j}(p)]_y=0$ as well,~Eq.~\eqref{eq:ordinary-brier-branch-change} becomes
\[
\mathcal B_{\mathrm{leaf}}(p',y)-\mathcal B_{\mathrm{leaf}}(p,y)
=2\pi_v(p)^2\sum_j\delta_jq_{v,j}(p)C_{v,j}(p)
+\pi_v(p)^2\sum_j\delta_j^2C_{v,j}(p).
\]
Although $\sum_j\delta_j=0$, the coefficients $q_{v,j}(p)C_{v,j}(p)$ generally differ across children, so the linear term need not vanish. For example, setting $\delta_j=t$, $\delta_k=-t$, and all other components to zero gives the linear term
\[
2\pi_v(p)^2t\{q_{v,j}(p)C_{v,j}(p)-q_{v,k}(p)C_{v,k}(p)\}.
\]
If the expression in braces is positive, the leaf Brier loss increases for sufficiently small $t>0$. Thus the first-order change in leaf Brier loss need not vanish even when the local HUC residual is zero, and a decrease guarantee cannot be obtained by directly identifying the HUC moment with this first-order term.

\paragraph{Lifting to leaf space versus branch locality.}
Lift the HUC node contribution to leaf space by setting
\begin{equation}
g_{u,v}(p)_z:=\left\langle \boldsymbol\Delta_{u,v}(p),\boldsymbol R_v(p,z)\right\rangle
\label{eq:huc-leaf-lift}
\end{equation}
Since $\sum_zp_z\boldsymbol R_v(p,z)=\boldsymbol 0$,
\[\left\langle \boldsymbol g_{u,v}(p),e_Y-p\right\rangle=\left\langle \boldsymbol\Delta_{u,v}(p),\boldsymbol R_v(p,Y)\right\rangle.\]
Thus a projected update on the full leaf simplex can use the HUC moment as the first-order Brier term. However, for $z\in\gY_{v_j}$, $g_{u,v}(p)_z=\mu_{u,v_j}(p)-\mu_{u,v}(p)$, so the same amount is added to all leaves in a child subtree. Unlike~Eq.~\eqref{eq:branch-local-leaf-change}, this change is not proportional to conditional leaf probabilities and generally changes relative probabilities within a child subtree. Simplex projection can also alter leaves outside the selected subtree, so the update is not branch-local.

The logit update in Appendix~\ref{subsec:huc-boost-theorem-proof} is paired with ordinary global leaf log loss.
\subsection{Auxiliary Results for Softmax Path Products}
\label{subsec:softmax-auxiliary-results}
This subsection proves the normalization of the path product and its consistency with the branch probabilities used in Appendix~\ref{subsec:softmax-leaf-distribution}.

\begin{lemma}[Path products define a probability distribution on the leaves]
\label{lem:softmax-leaf-distribution}
For every input $x$,
\begin{equation}
f_{\boldsymbol z}(x)_y>0\quad(y\in\gY),\qquad\sum_{y\in\gY}f_{\boldsymbol z}(x)_y=1.
\label{eq:softmax-leaf-normalization}
\end{equation}
Therefore $f_{\boldsymbol z}:\gX\to\Delta^{K-1}$.
\end{lemma}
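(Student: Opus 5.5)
Positivity is immediate from the definition: every softmax branch probability $q_{v,j}^{\boldsymbol z}(x)$ in Eq.~\eqref{eq:softmax-branch} is a ratio of strictly positive exponentials, so it lies in $(0,1)$. The path product in Eq.~\eqref{eq:softmax-leaf-product} is then a finite product of positive numbers and is positive. The content of the lemma is normalization, which I will prove by a bottom-up induction over the finite tree.

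For each node $a\in V$ define the subtree mass
\[
S_a(x):=\sum_{y\in\gY_a}\prod_{r:\,w_r(y)\text{ strictly below or at }a,\ w_r(y)\ne y}q^{\boldsymbol z}_{w_r(y),j_r(y)}(x).
\]
This is the sum over leaves in $\gY_a$ of the product of branch probabilities along the path from $a$ down to the leaf. For a leaf the product is empty, so $S_y(x)=1$.

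The induction claim is $S_a(x)=1$ for every node $a$, proved by induction on the height of $a$. Take an internal node $v$ with children $v_1,\ldots,v_{d_v}$. By the partition in Eq.~\eqref{eq:child-partition}, each leaf $y\in\gY_v$ lies in exactly one $\gY_{v_j}$. Its path from $v$ starts with the factor $q^{\boldsymbol z}_{v,j}(x)$ and then continues with the path from $v_j$. Hence
\[
S_v(x)=\sum_{j=1}^{d_v}q^{\boldsymbol z}_{v,j}(x)\,S_{v_j}(x)=\sum_{j=1}^{d_v}q^{\boldsymbol z}_{v,j}(x)=1,
\]
using the induction hypothesis and the fact that softmax probabilities sum to one.

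Applying the claim at the root gives $\sum_{y\in\gY}f_{\boldsymbol z}(x)_y=S_\rho(x)=1$. One-child nodes cause no difficulty, since there $q^{\boldsymbol z}_{v,1}=1$.

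The only step that needs care is the bookkeeping in the factorization. The path from the root to $y$ must split as the root-to-$v$ prefix times the $v$-to-$y$ suffix, and the suffix from $v$ must split as the branch factor at $v$ times the path from $v_j$. This follows from the uniqueness of root-to-leaf paths in Eq.~\eqref{eq:root-leaf-path}–Eq.~\eqref{eq:path-child-index}. For the same reason I expect the same induction to give Eq.~\eqref{eq:softmax-path-basic-properties} as a byproduct: $\pi_a(f_{\boldsymbol z}(x))$ equals the root-to-$a$ prefix product times $S_a(x)$, which is just the prefix product.

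Finally, measurability of $f_{\boldsymbol z}$ in $x$ holds because it is a finite product of continuous functions of the measurable logits. Together with positivity and normalization, this gives $f_{\boldsymbol z}:\gX\to\Delta^{K-1}$.
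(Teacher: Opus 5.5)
Your proof is correct, but it argues differently from the paper. The paper gives a probabilistic argument. It runs a top-down sampling procedure that, at each internal node $v$ reached, moves to child $v_j$ with probability $q^{\boldsymbol z}_{v,j}(x)$. Finiteness of the tree makes the walk stop at exactly one leaf. The multiplication rule identifies the probability of stopping at $y$ with the path product, and normalization follows because the stopping events are disjoint and exhaustive.

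You instead prove normalization by a bottom-up induction on the subtree masses $S_a(x)$. Your recursion $S_v=\sum_j q^{\boldsymbol z}_{v,j}(x)\,S_{v_j}(x)$ does the work that the law of total probability does implicitly in the paper. Your route is purely algebraic and self-contained: it needs no probability space for the random walk and no appeal to the multiplication rule. As you note, the same factorization gives $\pi_a(f_{\boldsymbol z}(x))=(\text{root-to-}a\text{ prefix product})\cdot S_a(x)$, and hence Lemma~\ref{lem:softmax-branch-subtree-consistency} as an immediate corollary. The paper's route instead buys an interpretation: the leaf distribution is the law of the terminal node of sequential branch sampling. The paper then reuses that interpretation to prove the consistency lemma by conditioning on reaching $v$.

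One small slip: at a one-child node the softmax value is exactly $1$, so $q^{\boldsymbol z}_{v,j}(x)\in(0,1]$ rather than $(0,1)$. Positivity of the path product is unaffected.
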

\begin{proof}
Fix $x$. Each softmax vector in~Eq.~\eqref{eq:softmax-branch} has positive coordinates summing to one. Starting at the root, whenever an internal node $v$ is reached, choose its child $v_j$ with probability $q_{v,j}^{\boldsymbol z}(x)$ and continue until a leaf is reached. This specifies a valid probability distribution for the next branch at every step. Since the tree is finite and each move goes to a child, the procedure terminates at exactly one leaf.

For each leaf $y$, the multiplication rule for conditional probabilities along its unique root-to-leaf path gives its probability of being reached as
$\prod_{r=0}^{L_y-1}q_{w_r(y),j_r(y)}^{\boldsymbol z}(x) =f_{\boldsymbol z}(x)_y>0$.
The events of terminating at the respective leaves are disjoint and exhaustive, so $\sum_{y\in\gY}f_{\boldsymbol z}(x)_y=1$.
\end{proof}
\begin{lemma}[Consistency of the generated leaf distribution and the node branch probabilities]
\label{lem:softmax-branch-subtree-consistency}
Fix an input $x$ and set $p:=f_{\boldsymbol z}(x)$. For every internal node $v$ and child $v_j$,
\begin{equation}
\pi_{v_j}(p)=\pi_v(p)q_{v,j}^{\boldsymbol z}(x).
\label{eq:softmax-branch-subtree-consistency}
\end{equation}
Thus the predicted conditional branch probabilities computed from $p$ by~Eq.~\eqref{eq:branch-conditional} coincide with the original softmax probabilities $q_{v,j}^{\boldsymbol z}(x)$.
\end{lemma}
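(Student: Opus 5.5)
The plan is to use the probabilistic description of the path product from the proof of Lemma~\ref{lem:softmax-leaf-distribution}. Fix $x$. Consider the random walk that starts at $\rho$ and, at each internal node $w$ it reaches, moves to child $w_k$ with probability $q_{w,k}^{\boldsymbol z}(x)$. Let $\widetilde Y$ be the leaf where it terminates. By that lemma, $\mathbb P(\widetilde Y=y)=f_{\boldsymbol z}(x)_y=p_y$, so $\widetilde Y\sim p$.

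\textbf{Reach probabilities as path events.} For any node $a$, I will identify $\pi_a(p)=\sum_{y\in\gY_a}p_y=\mathbb P(\widetilde Y\in\gY_a)$. Because the walk moves only from parents to children, $\widetilde Y\in\gY_a$ happens exactly when the walk visits $a$; this uses Eq.~\eqref{eq:descendant-leaf-set}, which says $y\in\gY_a$ iff the root-to-$y$ path passes through $a$. Hence $\pi_a(p)=\mathbb P(\text{walk visits }a)$.

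\textbf{Multiplication rule.} The walk visits $v_j$ exactly when it visits $v$ and then chooses index $j$ at $v$. The choice at $v$ is made with probability $q_{v,j}^{\boldsymbol z}(x)$, independently of the path that led to $v$ (it is a Markov transition). So $\mathbb P(\text{visit }v_j)=\mathbb P(\text{visit }v)\,q_{v,j}^{\boldsymbol z}(x)$, which is Eq.~\eqref{eq:softmax-branch-subtree-consistency}.

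\textbf{Algebraic alternative.} To avoid the probabilistic language, I can argue directly. For $y\in\gY_{v_j}$, the path product in Eq.~\eqref{eq:softmax-leaf-product} factors into three parts: the product along the root-to-$v$ path, which I call $A_v$ and which is common to all $y\in\gY_v$; the factor $q_{v,j}^{\boldsymbol z}(x)$; and the product along the $v_j$-to-$y$ path. Summing over $y\in\gY_{v_j}$, the last factor sums to $1$. This follows from Lemma~\ref{lem:softmax-leaf-distribution} applied to the subtree rooted at $v_j$ with the same softmax vectors, or from a short induction on subtree height. Hence $\pi_{v_j}(p)=A_v\,q_{v,j}^{\boldsymbol z}(x)$. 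Summing over $j$ using Eq.~\eqref{eq:child-partition} and $\sum_j q_{v,j}^{\boldsymbol z}=1$ gives $\pi_v(p)=A_v$, and the identity follows.

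\textbf{Conclusion.} Since $\pi_v(p)>0$ (all leaves have positive mass), dividing by it gives $q_{v,j}(p)=q_{v,j}^{\boldsymbol z}(x)$ via Eq.~\eqref{eq:branch-conditional}. The only point that needs care is the subtree normalization, that is, that the descendant products below $v_j$ sum to one. I will handle it by induction on height: a leaf gives the empty product $1$, and at an internal node the sum equals $\sum_k q_k\cdot 1=1$.
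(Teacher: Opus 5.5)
Your proposal is correct, and your main argument is essentially the paper's proof. The paper also uses the root-to-leaf procedure from Lemma~\ref{lem:softmax-leaf-distribution}, identifies $\pi_v(p)$ with the probability of reaching $v$, and reads off $\pi_{v_j}(p)/\pi_v(p)=q_{v,j}^{\boldsymbol z}(x)$ from the transition at $v$. Your algebraic alternative is also valid: it writes $\pi_{v_j}(p)$ as $A_v\,q_{v,j}^{\boldsymbol z}(x)$ times a subtree product that sums to one by induction on height, which makes explicit the normalization the probabilistic argument takes for granted.
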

\begin{proof}
Use the root-to-leaf procedure in the proof of Lemma~\ref{lem:softmax-leaf-distribution}, whose terminal-leaf distribution is $p$. Terminating in $\gY_v$ is equivalent to reaching $v$, so $\pi_v(p)>0$ is the probability of reaching $v$. Conditional on reaching $v$, the next step selects $v_j$ with probability $q_{v,j}^{\boldsymbol z}(x)$. The terminal leaf lies in $\gY_{v_j}$ if and only if this child is selected. Since $\gY_{v_j}\subseteq\gY_v$, the definition of conditional probability gives
$\frac{\pi_{v_j}(p)}{\pi_v(p)}=q_{v,j}^{\boldsymbol z}(x)$.
Multiplying by $\pi_v(p)$ proves~Eq.~\eqref{eq:softmax-branch-subtree-consistency}, and the ratio is exactly the recovered branch probability in~Eq.~\eqref{eq:branch-conditional}.
\end{proof}

\section{Finite-Sample Evaluation and Exact Auditing}
\label{app:finite-sample}

Section~\ref{sec:huc-definition} ends with the finite-sample result. This appendix proves Theorem~\ref{thm:finite-sample-main}, tracks the numerical constants in the uniform deviation bound, and explains exact empirical maximization over all intervals on a finite sample. The proof follows the same outline as the finite-sample analysis of UC. For a fixed triple $(c,u,v)$, we reduce intervals to half-lines, evaluate score-ordered Rademacher partial sums, and then apply symmetrization, concentration, and a union bound over the $N_{\mathrm{cand}}$ triples. The predictor is fixed independently of the audit sample; sample-adaptive prediction is treated in Appendix~\ref{app:randomized-initialization}.

\subsection{Proof of Theorem~\ref{thm:finite-sample-main}}
\label{subsec:finite-sample-proof}\label{subsec:finite-sample-vc-proof}

\begin{proof}
We divide the proof into four steps. The first three fix one triple $(c,u,v)$ and vary only the interval $I$. The final step treats all triples simultaneously.

\paragraph{Step 1: bounding the contribution of one observation.}
Fix $c\in\gC$, $u\in\gU$, and $v\in\mathcal V_{\gT}(u)$, and set
\begin{equation}
A(X,Y):=c(X)\left\langle \boldsymbol\Delta_{u,v}(P),\boldsymbol R_v(P,Y)\right\rangle,\qquad s(X):=\mu_u(P).
\label{eq:finite-sample-weight-score}
\end{equation}
If $Y\notin\gY_v$, Lemma~\ref{lem:one-step-increment} gives $A(X,Y)=0$. If $Y\in\gY_{v_j}$, the same lemma gives
\begin{equation}
A(X,Y)=c(X)\{\mu_{u,v_j}(P)-\mu_{u,v}(P)\}.
\label{eq:finite-sample-local-value}
\end{equation}
Subtree mean utilities are convex combinations of values in $[-1,1]$, so both $\mu_{u,v_j}(P)$ and $\mu_{u,v}(P)$ belong to $[-1,1]$. Hence
$|A(X,Y)|\le |c(X)|\,|\mu_{u,v_j}(P)-\mu_{u,v}(P)|\le2$.
Thus, in all cases,
\begin{equation}
|A(X,Y)|\le2.
\label{eq:finite-sample-envelope}
\end{equation}
For $I\in\gI$, define
\begin{equation}
h_I(x,y):=A(x,y)\mathbf{1}\{s(x)\in I\}.
\label{eq:interval-indexed-local-function}
\end{equation}
The population and empirical audit moments are
\begin{equation}
\Gamma_{c,u,I,v}(f)=\mathbb{E}[h_I(X,Y)],\qquad \widehat{\Gamma}_{c,u,I,v}(f):=\frac1n\sum_{i=1}^n h_I(X_i,Y_i).
\label{eq:gamma-as-empirical-process}
\end{equation}

\paragraph{Step 2: reducing intervals to half-lines.}
Let
$\mathcal H:=\{\varnothing,\R\}\cup\{(-\infty,t),(-\infty,t]:t\in\R\}$
be the class of left half-lines. Every real interval $I$ can be represented, for suitable $H_1,H_0\in\mathcal H$, as
\begin{equation}
\mathbf{1}\{s\in I\}=\mathbf{1}\{s\in H_1\}-\mathbf{1}\{s\in H_0\}.
\label{eq:interval-as-two-halflines}
\end{equation}
For example, if $I=(a,b]$, take $H_1=(-\infty,b]$ and $H_0=(-\infty,a]$. Other endpoint conventions are handled by choosing $<$ or $\le$, and unbounded intervals and the empty set are covered using $\R$ or $\varnothing$.

Fix the sample and write $A_i:=A(X_i,Y_i)$ and $s_i:=s(X_i)$. Introduce independent Rademacher signs $\xi_1,\ldots,\xi_n$. By~Eq.~\eqref{eq:interval-as-two-halflines} and the triangle inequality,
\begin{align}
&\sup_{I\in\gI}\left|\frac1n\sum_{i=1}^n\xi_iA_i\mathbf{1}\{s_i\in I\}\right|\notag\\
&\quad\le2\sup_{H\in\mathcal H}\left|\frac1n\sum_{i=1}^n\xi_iA_i\mathbf{1}\{s_i\in H\}\right|.
\label{eq:interval-rademacher-to-halfline}
\end{align}
Fix a permutation ordering the scores nondecreasingly, and write the reordered values as $s_{(1)}\le\cdots\le s_{(n)}$, with corresponding $A_{(i)}$ and $\xi_{(i)}$. A half-line never splits a tied score group, and the selected indices form a prefix $\{1,\ldots,k\}$. Therefore
\begin{equation}
\sup_{H\in\mathcal H}\left|\sum_{i=1}^n\xi_iA_i\mathbf{1}\{s_i\in H\}\right|\le\max_{0\le k\le n}\left|\sum_{i=1}^k\xi_{(i)}A_{(i)}\right|.
\label{eq:halfline-prefix-bound}
\end{equation}
Let $Z_0^\xi:=0$, $Z_k^\xi:=\sum_{i=1}^k\xi_{(i)}A_{(i)}$, and $\mathcal F_k^\xi:=\sigma(\xi_{(1)},\ldots,\xi_{(k)})$. Conditional on the sample, $A_{(1)},\ldots,A_{(n)}$ are constants, while $\xi_{(k+1)}$ is independent of $\mathcal F_k^\xi$ and has mean zero. Hence
\[
\mathbb{E}_\xi[Z_{k+1}^\xi\mid\mathcal F_k^\xi]
=Z_k^\xi+A_{(k+1)}\mathbb{E}_\xi[\xi_{(k+1)}\mid\mathcal F_k^\xi]
=Z_k^\xi.
\]
Moreover, $|A_{(i)}|\le2$ makes every $Z_k^\xi$ square-integrable. Thus $(Z_k^\xi)_{k=0}^n$ is a square-integrable martingale with respect to $(\mathcal F_k^\xi)_{k=0}^n$. Doob's $L^2$ maximal inequality gives
\begin{equation}
\mathbb{E}_\xi\left[\max_{0\le k\le n}|Z_k^\xi|^2\right]\le4\mathbb{E}_\xi[(Z_n^\xi)^2].
\label{eq:doob-maximal-inequality}
\end{equation}
Expanding the right-hand side and using independence, $\mathbb{E}_\xi[\xi_{(i)}]=0$, and $\mathbb{E}_\xi[\xi_{(i)}^2]=1$ yields
\[
\mathbb{E}_\xi[(Z_n^\xi)^2]
=\mathbb{E}_\xi\!\left[\left\{\sum_{i=1}^n\xi_{(i)}A_{(i)}\right\}^2\right]
=\sum_{i=1}^nA_{(i)}^2.
\]
By~Eq.~\eqref{eq:doob-maximal-inequality}, Jensen's inequality, and $|A_{(i)}|\le2$,
\[
\mathbb{E}_\xi\!\left[\max_{0\le k\le n}|Z_k^\xi|\right]
\le2\left(\sum_{i=1}^nA_{(i)}^2\right)^{1/2}
\le4\sqrt n.
\]
Combining~Eq.~\eqref{eq:interval-rademacher-to-halfline} and~Eq.~\eqref{eq:halfline-prefix-bound},
\begin{equation}
\mathbb{E}_\xi\left[\sup_{I\in\gI}\left|\frac1n\sum_{i=1}^n\xi_iA_i\mathbf{1}\{s_i\in I\}\right|\right]\le\frac8{\sqrt n}.
\label{eq:interval-rademacher-bound}
\end{equation}

\paragraph{Step 3: uniform deviation for one triple $(c,u,v)$.}
We derive the required high-probability bound from~Eq.~\eqref{eq:interval-rademacher-bound}. Let $(X_i',Y_i')_{i=1}^n$ be an independent copy of the sample. Conditional Jensen's inequality yields
\begin{align*}
&\mathbb{E}\left[\sup_{I\in\gI}\left|\frac1n\sum_{i=1}^nh_I(X_i,Y_i)-\mathbb{E}[h_I(X,Y)]\right|\right]\\
&=\mathbb{E}\left[\sup_{I\in\gI}\left|\frac1n\sum_{i=1}^nh_I(X_i,Y_i)-\mathbb{E}\left[\frac1n\sum_{i=1}^nh_I(X_i',Y_i')\middle|(X_i,Y_i)_{i=1}^n\right]\right|\right]\\
&\le\mathbb{E}\left[\sup_{I\in\gI}\left|\frac1n\sum_{i=1}^n\{h_I(X_i,Y_i)-h_I(X_i',Y_i')\}\right|\right].
\end{align*}
By exchangeability of each pair $((X_i,Y_i),(X_i',Y_i'))$ and independent Rademacher signs,
\begin{align*}
&\mathbb{E}\left[\sup_{I\in\gI}\left|\frac1n\sum_{i=1}^n\{h_I(X_i,Y_i)-h_I(X_i',Y_i')\}\right|\right]\\
&=\mathbb{E}\left[\sup_{I\in\gI}\left|\frac1n\sum_{i=1}^n\xi_i\{h_I(X_i,Y_i)-h_I(X_i',Y_i')\}\right|\right]\\
&\le\mathbb{E}\left[\sup_{I\in\gI}\left|\frac1n\sum_{i=1}^n\xi_i h_I(X_i,Y_i)\right|\right]
+\mathbb{E}\left[\sup_{I\in\gI}\left|\frac1n\sum_{i=1}^n\xi_i h_I(X_i',Y_i')\right|\right]\\
&\le\frac{16}{\sqrt n},
\end{align*}
where the last inequality applies~Eq.~\eqref{eq:interval-rademacher-bound} twice. Consequently,
\begin{equation}
\mathbb{E}\left[\sup_{I\in\gI}\left|\frac1n\sum_{i=1}^nh_I(X_i,Y_i)-\mathbb{E}[h_I(X,Y)]\right|\right]\le\frac{16}{\sqrt n}.
\label{eq:expected-interval-deviation}
\end{equation}
Set
\[
\Phi((X_i,Y_i)_{i=1}^n):=\sup_{I\in\gI}\left|\frac1n\sum_{i=1}^nh_I(X_i,Y_i)-\mathbb{E}[h_I(X,Y)]\right|.
\]
Replacing one observation changes each empirical average by at most $4/n$, because $h_I$ takes values in $[-2,2]$. Hence it changes $\Phi$ by at most $4/n$. McDiarmid's inequality implies, for every $t>0$,
\begin{equation}
\mathbb{P}(\Phi-\mathbb{E}[\Phi]\ge t)\le\exp\left(-\frac{nt^2}{8}\right).
\label{eq:mcdiarmid-interval-deviation}
\end{equation}
Taking $t=4\sqrt{\log(1/\delta')/(2n)}$ yields, with probability at least $1-\delta'$,
\begin{equation}
\sup_{I\in\gI}|\widehat{\Gamma}_{c,u,I,v}(f)-\Gamma_{c,u,I,v}(f)|\le\frac{16}{\sqrt n}+4\sqrt{\frac{\log(1/\delta')}{2n}}.
\label{eq:one-triple-uniform}
\end{equation}

\paragraph{Step 4: simultaneous control for all triples $(c,u,v)$.}
If $N_{\mathrm{cand}}=0$, both empirical and population HUC are zero and the conclusion is immediate. Otherwise set $\delta':=\delta/N_{\mathrm{cand}}$ in~Eq.~\eqref{eq:one-triple-uniform}. A union bound gives, with probability at least $1-\delta$, simultaneously for all triples,
\begin{equation}
\sup_{I\in\gI}|\widehat{\Gamma}_{c,u,I,v}(f)-\Gamma_{c,u,I,v}(f)|\le\frac{16}{\sqrt n}+4\sqrt{\frac{\log(N_{\mathrm{cand}}/\delta)}{2n}}.
\label{eq:all-triples-uniform}
\end{equation}
Since $N_{\mathrm{cand}}\ge1$ in this case, $\max(1,N_{\mathrm{cand}})=N_{\mathrm{cand}}$.

Finally, for any bounded real families $(x_\lambda)_\lambda$ and $(y_\lambda)_\lambda$,
\begin{equation}
\left|\sup_\lambda|x_\lambda|-\sup_\lambda|y_\lambda|\right|\le\sup_\lambda|x_\lambda-y_\lambda|.
\label{eq:difference-of-suprema}
\end{equation}
Indeed, the triangle inequality gives $|x_\lambda|\le|y_\lambda|+|x_\lambda-y_\lambda|$, hence
$\sup_\lambda|x_\lambda|-\sup_\lambda|y_\lambda|\le\sup_\lambda|x_\lambda-y_\lambda|$.
Interchanging $x$ and $y$ gives the reverse direction. Applying~Eq.~\eqref{eq:difference-of-suprema} with $x_\lambda=\widehat{\Gamma}_{c,u,I,v}(f)$ and $y_\lambda=\Gamma_{c,u,I,v}(f)$, and then using~Eq.~\eqref{eq:all-triples-uniform}, proves~Eq.~\eqref{eq:finite-sample-main}.
\end{proof}

\subsection{Reduction to UC with At Most One Relevant Node}
\label{subsec:finite-sample-uc-reduction}
Assume $|\mathcal V_{\gT}(u)|\le1$ for every $u\in\gU$. If $\mathcal V_{\gT}(u)=\{v_u\}$, the tree decomposition contains only the contribution of $v_u$; if the relevant-node set is empty, the utility residual is identically zero. Therefore $\UC(f;\gC,\gU)=\HUC(f;\gC,\gU)$. Likewise, $\widehat{\UC}(f;\gC,\gU)=\widehat{\HUC}(f;\gC,\gU)$.
Here $\widehat{\UC}$ denotes UC with the expectation replaced by its empirical average. The weight $A(X,Y)$ in the proof above then equals $c(X)\{u(P,Y)-\mu_u(P)\}$. The half-line reduction, score-ordered Rademacher partial sums, Doob's inequality, symmetrization, and concentration specialize without change to the UC proof. A flat tree recovers multiclass UC, and a two-leaf tree recovers binary UC.

\subsection{Exact Empirical Maximization over All Intervals}
\label{subsec:exact-interval-scan}\label{subsec:huc-audit-computation}
On a fixed sample of size $n\ge1$, fix $(c,u,v)$ and set
\begin{equation}
w_i:=c(X_i)\left\langle \boldsymbol\Delta_{u,v}(P_i),\boldsymbol R_v(P_i,Y_i)\right\rangle,\qquad s_i:=\mu_u(P_i).
\label{eq:interval-scan-values}
\end{equation}
Then the empirical moment in~Eq.~\eqref{eq:gamma-as-empirical-process} is
$\widehat\Gamma_{c,u,I,v}(f)=\frac1n\sum_{i=1}^n w_i\mathbf1\{s_i\in I\}$.
We derive an exact maximization over $I\in\gI$ and give pseudocode that returns a maximizing interval together with its signed empirical moment.

\paragraph{From intervals to tied score blocks.}
Sort the observations by $s_i$. Let $m_{\mathrm{sc}}\le n$ be the number of distinct scores, ordered as $s_{(1)}<\cdots<s_{(m_{\mathrm{sc}})}$, and define
\begin{equation}
G_r:=\{i:s_i=s_{(r)}\},\qquad B_r:=\frac1n\sum_{i\in G_r}w_i.
\label{eq:tie-groups}
\end{equation}
All observations in $G_r$ have the same interval membership, so their contributions must be grouped before maximization.

If an interval contains $s_{(a)}$ and $s_{(b)}$, it also contains every score between them. Thus its selected distinct scores form either the empty set or a contiguous block $s_{(a)},\ldots,s_{(b)}$. In the latter case,
\[
\widehat\Gamma_{c,u,I,v}(f)
=\sum_{r:s_{(r)}\in I}\frac1n\sum_{i\in G_r}w_i
=\sum_{r=a}^{b}B_r.
\]
Conversely, every such block is selected by the closed interval $[s_{(a)},s_{(b)}]$, including a singleton when $a=b$. The empty interval gives zero. Therefore
\begin{equation}
\sup_{I\in\gI}|\widehat\Gamma_{c,u,I,v}(f)|
=\max\left(0,\max_{1\le a\le b\le m_{\mathrm{sc}}}\left|\sum_{k=a}^{b}B_k\right|\right).
\label{eq:exact-block-maximum}
\end{equation}
This reduction preserves every possible interval membership set on the sample, including ties.

\paragraph{Prefix-sum representation.}
Set $C_0:=0$ and $C_b:=\sum_{k=1}^{b}B_k$. For $1\le a\le b\le m_{\mathrm{sc}}$,
$\sum_{k=a}^{b}B_k =\sum_{k=1}^{b}B_k-\sum_{k=1}^{a-1}B_k =C_b-C_{a-1}$.
Writing $r=a-1$ gives $0\le r<b$. In particular, $r$ is the position immediately before the selected block, and
$\widehat\Gamma_{c,u,[s_{(r+1)},s_{(b)}],v}(f)=C_b-C_r$.
Since $m_{\mathrm{sc}}\ge1$ and absolute values are nonnegative, the zero in~Eq.~\eqref{eq:exact-block-maximum} may be omitted when computing the maximum value. Hence
\begin{equation}
\sup_{I\in\gI}|\widehat\Gamma_{c,u,I,v}(f)|
=\max_{1\le b\le m_{\mathrm{sc}}}\max_{0\le r<b}|C_b-C_r|.
\label{eq:exact-prefix-maximum}
\end{equation}
The outer maximum selects the right endpoint; the inner maximum selects the left endpoint for that fixed right endpoint.

\paragraph{Two candidates for each right endpoint.}
For fixed $b$, the value $C_b$ does not depend on $r$. Using $|t|=\max(t,-t)$ gives
\begin{equation}
\begin{aligned}
\max_{0\le r<b}|C_b-C_r|
&=\max_{0\le r<b}\max(C_b-C_r,\ C_r-C_b)\\
&=\max\left(\max_{0\le r<b}(C_b-C_r),\ \max_{0\le r<b}(C_r-C_b)\right)\\
&=\max\left(C_b-\min_{0\le r<b}C_r,\ \max_{0\le r<b}C_r-C_b\right).
\end{aligned}
\label{eq:interval-prefix-extrema}
\end{equation}
The first term maximizes the signed interval sum by subtracting the smallest previous prefix sum. The second maximizes its negative by subtracting $C_b$ from the largest previous prefix sum. Thus only these two previous prefix sums, rather than all possible left endpoints, are needed for each $b$.

\paragraph{Pseudocode and correspondence with the formulas.}
Algorithm~\ref{alg:exact-interval-scan} implements~Eq.~\eqref{eq:interval-prefix-extrema}. The indices $r_{\min}$ and $r_{\max}$ store positions attaining the smallest and largest previous prefix sums. The variables $I_{\mathrm{best}}$ and $\widehat\Gamma_{\mathrm{best}}$ store the best interval found so far and its signed empirical moment.

\begin{algorithm}[!htb]
\caption{Exact tie-preserving interval maximization for a fixed $(c,u,v)$}
\label{alg:exact-interval-scan}
\begin{algorithmic}[1]
\Statex \textbf{Input:} $(s_i,w_i)_{i=1}^n$ from~Eq.~\eqref{eq:interval-scan-values}, with $n\ge1$
\Statex \textbf{Output:} A maximizing interval, its signed moment, and its absolute value
\State Sort the pairs $(s_i,w_i)$ by nondecreasing $s_i$ \label{line:interval-sort}
\State Scan the sorted pairs once to form $s_{(r)},G_r,B_r$ in~Eq.~\eqref{eq:tie-groups} \label{line:interval-group}
\State $C_0\gets0$; $r_{\min}\gets0$; $r_{\max}\gets0$
\State $I_{\mathrm{best}}\gets\varnothing$; $\widehat\Gamma_{\mathrm{best}}\gets0$
\For{$b=1,\ldots,m_{\mathrm{sc}}$}
    \State $C_b\gets C_{b-1}+B_b$ \label{line:interval-prefix}
    \If{$C_b-C_{r_{\min}}> |\widehat\Gamma_{\mathrm{best}}|$} \label{line:interval-positive}
        \State $\widehat\Gamma_{\mathrm{best}}\gets C_b-C_{r_{\min}}$
        \State $I_{\mathrm{best}}\gets[s_{(r_{\min}+1)},s_{(b)}]$
    \EndIf
    \If{$C_{r_{\max}}-C_b> |\widehat\Gamma_{\mathrm{best}}|$} \label{line:interval-negative}
        \State $\widehat\Gamma_{\mathrm{best}}\gets C_b-C_{r_{\max}}$ \label{line:interval-negative-signed}
        \State $I_{\mathrm{best}}\gets[s_{(r_{\max}+1)},s_{(b)}]$
    \EndIf
    \If{$C_b<C_{r_{\min}}$} \label{line:interval-minimum}
        \State $r_{\min}\gets b$
    \EndIf
    \If{$C_b>C_{r_{\max}}$} \label{line:interval-maximum}
        \State $r_{\max}\gets b$
    \EndIf
\EndFor
\State \Return $I_{\mathrm{best}},\widehat\Gamma_{\mathrm{best}},|\widehat\Gamma_{\mathrm{best}}|$
\end{algorithmic}
\end{algorithm}

Before the two candidate comparisons at right endpoint $b$, the stored indices satisfy
\[
C_{r_{\min}}=\min_{0\le r<b}C_r,
\qquad
C_{r_{\max}}=\max_{0\le r<b}C_r.
\]
This holds initially because the only previous prefix is $C_0=0$.
Lines~\ref{line:interval-positive} and~\ref{line:interval-negative} compare the two terms of~Eq.~\eqref{eq:interval-prefix-extrema} with the best absolute value so far. In the second comparison, $C_{r_{\max}}-C_b$ is the magnitude of a negative candidate when that candidate improves the best value; line~\ref{line:interval-negative-signed} therefore stores $C_b-C_{r_{\max}}$, retaining its actual sign. The corresponding interval starts at the group after the subtracted prefix, explaining the index $r_{\min}+1$ or $r_{\max}+1$.

Only after these comparisons do lines~\ref{line:interval-minimum} and~\ref{line:interval-maximum} include $C_b$ among the possible previous prefixes. Using the old indices, the updated extreme values are
$\min_{0\le r\le b}C_r=\min(C_{r_{\min}},C_b)$, $\max_{0\le r\le b}C_r=\max(C_{r_{\max}},C_b)$.
The invariant therefore holds for the next right endpoint $b+1$. Updating the extrema after the comparisons ensures that the current candidates always satisfy $r<b$.

After processing right endpoint $b$, the saved absolute value is
\[
|\widehat\Gamma_{\mathrm{best}}|
=\max\left(0,\max_{1\le j\le b}\max_{0\le r<j}|C_j-C_r|\right),
\]
and its saved signed value equals $\widehat\Gamma_{c,u,I_{\mathrm{best}},v}(f)$. At $b=m_{\mathrm{sc}}$, Eq.~\eqref{eq:exact-prefix-maximum} proves that the returned interval is a global empirical maximizer. Strict comparisons retain the first encountered candidate at ties; if the maximum is zero, the algorithm returns the empty interval and zero. Grouping in line~\ref{line:interval-group} ensures that equal scores are never separated.

\paragraph{Operation count.}
Sorting the $n$ pairs costs $O(n\log n)$. Line~\ref{line:interval-group} groups tied scores in $O(n)$, and the prefix-extrema scan performs $O(1)$ work for each of the $m_{\mathrm{sc}}\le n$ distinct scores. Thus Algorithm~\ref{alg:exact-interval-scan} costs $O(n\log n)$ from unsorted inputs and $O(n)$ once the score order is available, without enumerating the $m_{\mathrm{sc}}(m_{\mathrm{sc}}+1)/2$ nonempty blocks. Input preparation and reuse of the score order are treated in Appendix~\ref{subsec:uc-huc-computational-cost}.

\subsection{Computational Comparison of Exact UC and HUC Auditing}
\label{subsec:uc-huc-computational-cost}

We compare arithmetic operation counts for exact empirical auditing on the same hierarchical dataset. Model inference is excluded, and evaluating one value $u(P_i,y)$ or one subgroup weight $c(X_i)$ is treated as constant time. Let $n$ be the audit-sample size, $K=|\gY|$ the number of leaves, $d_v$ the number of children of internal node $v$, and
\begin{equation}
|E_{\gT}|=\sum_{v\in V^\circ}d_v=|V|-1
\label{eq:tree-edge-count-computation}
\end{equation}
be the number of tree edges. For $u\in\gU$, define the maximum number of relevant nodes on a root-to-leaf path by
\begin{equation}
D_u:=\max_{y\in\gY}|\operatorname{Path}(y)\cap\mathcal V_{\gT}(u)|,
\qquad
D:=\max_{u\in\gU}D_u.
\label{eq:relevant-path-depth-computation}
\end{equation}
We divide the computation into three stages: \textbf{1) precomputation of the required statistics, 2) sorting the predicted utility scores, and 3) maximization over all intervals.} We derive the cost of each stage for UC and HUC below. All interval maxima are computed by the exact tie-preserving scan of Appendix~\ref{subsec:exact-interval-scan}.

\subsubsection*{Computational complexity of UC}

\paragraph{1) Precomputation of the required statistics.}
For each observation $i$ and utility $u$, UC uses the predicted utility score and realized utility residual
\begin{equation}
s_{i,u}:=\mu_u(P_i)=\sum_{y\in\gY}P_{i,y}u(P_i,y),
\qquad
a_{i,u}:=u(P_i,Y_i)-s_{i,u}.
\label{eq:uc-computation-score-residual}
\end{equation}
For a general dense utility, computing the score requires evaluating the $K$ leaf utility values and their weighted sum, at a cost of $O(K)$ per $(i,u)$. The residual is then obtained in constant time from these values. Thus precomputation over all observations and utilities costs
$O(n|\gU|K)$.

\paragraph{2) Sorting the predicted utility scores.}
For each utility $u$, sort the $n$ scores $s_{1,u},\ldots,s_{n,u}$ once. This costs $O(n\log n)$ per utility and $O(n|\gU|\log n)$ overall. Because $s_{i,u}=\mu_u(P_i)$ does not depend on $c$, the order is shared across all $c\in\gC$.

\paragraph{3) Maximization over all intervals.}
For fixed $(c,u)$, run Algorithm~\ref{alg:exact-interval-scan} on $w_i=c(X_i)a_{i,u}$ in the shared score order. The tie grouping and scan cost $O(n)$ per $(c,u)$, hence $O(n|\gC||\gU|)$ over all subgroup weights and utilities.

Combining the three stages gives
\begin{equation}
T_{\mathrm{UC}}
=
O\!\left(
 {n|\gU|K}
 +{n|\gU|\log n}
 +{n|\gC||\gU|}
\right).
\label{eq:uc-audit-computational-cost}
\end{equation}
If the scores and realized utility residuals in~Eq.~\eqref{eq:uc-computation-score-residual} are already available, the cost of Step~1 is omitted.

\subsubsection*{Computational complexity of HUC}

\paragraph{1) Precomputation of the required statistics.}
HUC uses the same leaf-level utility values as UC, together with the predicted mass and mean utility of each subtree. For each $(i,u)$, initialize the leaf values by
$\pi_y(P_i)=P_{i,y}$, $N_{u,y}(P_i)=P_{i,y}u(P_i,y)$.
Then traverse the tree from the leaves to the root, using the recursions in Lemma~\ref{lem:subtree-recursions}:
\[
\pi_v(P_i)=\sum_{j=1}^{d_v}\pi_{v_j}(P_i),
\qquad
N_{u,v}(P_i)=\sum_{j=1}^{d_v}N_{u,v_j}(P_i).
\]
These values give the subtree means, branch probabilities, and local utility contrasts:
$\mu_{u,v}(P_i)=\frac{N_{u,v}(P_i)}{\pi_v(P_i)}$, $q_{v,j}(P_i)=\frac{\pi_{v_j}(P_i)}{\pi_v(P_i)}$, $\Delta_{u,v,j}(P_i)=\mu_{u,v_j}(P_i)-\mu_{u,v}(P_i)$.
At each internal node, we reuse the already computed child aggregates rather than summing its descendant-leaf values from scratch. The work at node $v$ is $O(d_v)$, so the total tree computation costs
$O\!\left(\sum_{v\in V^\circ}d_v\right)=O(|E_{\gT}|)$.
Including the leaf utility evaluation, precomputation costs $O(K+|E_{\gT}|)$ per $(i,u)$ and
$O\!\left(n|\gU|(K+|E_{\gT}|)\right)$
over all observations and utilities.

\paragraph{2) Sorting the predicted utility scores.}
HUC uses the same score $s_{i,u}=\mu_u(P_i)$ as UC, so one sort per utility costs $O(n|\gU|\log n)$. The order depends on neither $c$ nor $v$ and is therefore shared across all subgroup weights and relevant nodes.

\paragraph{3) Maximization over all intervals.}
For fixed $(c,u)$, make one pass in this shared order while maintaining the Algorithm~\ref{alg:exact-interval-scan} state for the relevant nodes. By Lemma~\ref{lem:one-step-increment}, observation $i$ can contribute only at nodes in $\operatorname{Path}(Y_i)\cap\mathcal V_{\gT}(u)$, of which there are at most $D_u$; each such contribution is available in constant time from the subtree means computed in Step~1.

Equal-score observations are accumulated as one block, and the node-wise prefix states are advanced only after the block is complete, so tied scores are never split. Nodes not touched by the block retain the same state, while the global maximum is updated from the touched nodes. The scan therefore processes only pairs consisting of an observation and a relevant node on its realized path. Their total number is
\[
\sum_{i=1}^n
\bigl|\operatorname{Path}(Y_i)\cap\mathcal V_{\gT}(u)\bigr|
\le nD_u.
\]
Adding each contribution and performing each node-wise update at the end of a group take constant time, and the number of the latter updates is no larger than the number of these pairs. Exact interval maximization thus costs $O(nD_u)$ per $(c,u)$ and
$O\!\left(n|\gC|\sum_{u\in\gU}D_u\right)$
over all subgroup weights and utilities. Nodes not reached by any observation have empirical moment zero and need not be scanned.

Combining the three stages gives
\begin{equation}
T_{\mathrm{HUC}}
=
O\!\left(
n|\gU|(K+|E_{\gT}|)
 +n|\gU|\log n
 +n|\gC|\sum_{u\in\gU}D_u
\right).
\label{eq:huc-audit-computational-cost-general}
\end{equation}

\paragraph{Contracting unary nodes.}
A node with only one child has identically zero local contribution, so contracting such nodes leaves the audit value unchanged. After contraction every internal node has at least two children. Writing $M:=|V^\circ|$, we have
$2M\le\sum_{v\in V^\circ}d_v=|E_{\gT}|=K+M-1$.
Therefore
$M\le K-1$, $|E_{\gT}|\le2K-2$,
and the tree precomputation in Step~1 is also linear in $K$. Thus~Eq.~\eqref{eq:huc-audit-computational-cost-general} becomes
\begin{equation}
\begin{aligned}
T_{\mathrm{HUC}}
&=
O\!\left(
 n|\gU|K
 +n|\gU|\log n
 +n|\gC|\sum_{u\in\gU}D_u
\right)\\
&=
O\!\left(
 n|\gU|K
 +n|\gU|\log n
 +n|\gC||\gU|D
\right),
\end{aligned}
\label{eq:huc-audit-computational-cost-reduced}
\end{equation}
where the last bound uses $\sum_{u\in\gU}D_u\le|\gU|D$.

\paragraph{Comparison.}
Table~\ref{tab:uc-huc-computational-comparison} summarizes the three stages.
UC does not traverse internal nodes: its $O(K)$ term comes from computing the predicted mean of a dense utility over the leaves. HUC uses the same leaf-level quantities and adds a tree traversal that is also linear in $K$ on a reduced tree, together with path-sparse interval accumulation. Consequently, for a fixed finite subgroup class, both exact audits have leading order
\begin{equation}
O\!\left(n|\gU|K+n|\gU|\log n\right).
\label{eq:uc-huc-same-leading-order}
\end{equation}
They are not identical in every computational regime. If the predicted utility scores and all local subtree quantities have already been computed, the remaining scan costs are
\begin{equation}
T_{\mathrm{UC}}^{\mathrm{scan}}
=O\!\left(n|\gU|\log n+n|\gC||\gU|\right),
\qquad
T_{\mathrm{HUC}}^{\mathrm{scan}}
=O\!\left(n|\gU|\log n+n|\gC|\sum_{u\in\gU}D_u\right).
\label{eq:uc-huc-precomputed-scan-cost}
\end{equation}
Thus, the additional cost of localization is controlled by the number of relevant nodes on the realized root-to-leaf paths, rather than by enumerating all intervals separately for every node.

\begin{table}[htbp]
\caption{Stage-wise operation counts for exact empirical UC and HUC auditing on the same sample, for general dense utilities. Model inference is excluded; each total is the sum of the three stages.}
\label{tab:uc-huc-computational-comparison}
\centering
\begin{tabularx}{\textwidth}{@{}>{\raggedright\arraybackslash}Xcc@{}}
\toprule
Computational stage & UC & HUC \\
\midrule
1) Precomputation of the required statistics
& $O(n|\gU|K)$
& $O\!\left(n|\gU|(K+|E_{\gT}|)\right)$ \\
2) Sorting the predicted utility scores
& $O(n|\gU|\log n)$
& $O(n|\gU|\log n)$ \\
3) Maximization over all intervals
& $O(n|\gC||\gU|)$
& $O\!\left(n|\gC|\sum_{u\in\gU}D_u\right)$ \\
\bottomrule
\end{tabularx}
\end{table}

\paragraph{Sparse utilities and working storage.}
Let $s_u:=|\mathcal V_{\gT}(u)|$.  Eq.~\eqref{eq:huc-audit-computational-cost-general} is an end-to-end bound for a general dense utility, for which the leaf values must first be aggregated through the tree.  If a utility is supplied directly by a cut-level representation or its required subtree means have already been cached, the $K+|E_{\gT}|$ preprocessing term can be replaced by the size of that active representation.  Processing one utility at a time requires $O(n+|\gC|s_u)$ working scalar storage after the score order has been formed; retaining all utilities simultaneously gives $O(n|\gU|+|\gC|\sum_us_u)$.  These storage choices do not change the operation counts above.

\section{Post-processing and HUC}
\label{app:postprocessing-huc}
\label{subsec:decision-guarantee}\label{subsec:hierarchical-decision-proof}

The post-processing guarantee for UC bounds the risk improvement achievable by a nondecreasing transformation of predicted mean utility~\citep{Hegazy2025,Rossellini2025}. We apply the same comparison to hierarchically aggregated utilities derived from the original target utility. Specifically, we stop refinement at selected positions in the label tree and average the utility over the leaves below each stopping position. Small HUC for the original utility then controls, for each such aggregation, the excess risk over the best monotone post-processing of the common predicted-utility score.

\paragraph{Branch selection and hierarchical utility aggregation.}
Let $\mathcal S\subseteq V^\circ$ be the set of internal nodes at which branching continues. Whenever a node belongs to $\mathcal S$, all its strict internal ancestors must also belong to $\mathcal S$. Thus choosing $\mathcal S$ determines where the tree is truncated for each leaf. Follow the path from the root to a leaf $y$: continue to the child on that path while the current node belongs to $\mathcal S$, and stop at the first node outside $\mathcal S$. Denote this stopping node by $b_{\mathcal S}(y)$.

If the path contains any node in $\mathcal S$, its boundary $b_{\mathcal S}(y)$ is the child toward $y$ of the last such node on that path. The boundary itself is not in $\mathcal S$. When $\mathcal S=\varnothing$, the procedure stops at the root; when $\mathcal S=V^\circ$, it reaches every original leaf. Different paths may stop at different depths.

Below the boundary, individual leaves are no longer distinguished, and the original utility is averaged using the predictive distribution within that subtree. Define the resulting \emph{hierarchically aggregated utility} by
\begin{equation}
u_{\mathcal S}(p,y)
:=\mu_{u,b_{\mathcal S}(y)}(p)
=\frac{\sum_{z\in\gY_{b_{\mathcal S}(y)}}p_z u(p,z)}{\pi_{b_{\mathcal S}(y)}(p)}.
\label{eq:internal-subtree-boundary-value}
\end{equation}
Leaves with the same boundary receive the same utility value. This is not an arbitrarily specified new payoff: it is determined by the original utility $u$, the prediction $p$, and the branch set $\mathcal S$. In particular,
$u_{\varnothing}(p,y)=\mu_u(p)$, $u_{V^\circ}(p,y)=u(p,y)$.
Using no branches gives the overall predicted mean, whereas retaining every branch recovers the original leaf utility.

\begin{lemma}[Decomposition and predicted mean of the aggregated utility]
\label{lem:internal-subtree-utility}
For every positive prediction vector $p$ and leaf $y$, we have $u_{\mathcal S}(p,y)\in[-1,1]$ and
\begin{align}
u_{\mathcal S}(p,y)-\mu_u(p)
&=\sum_{v\in\mathcal S}\left\langle\boldsymbol\Delta_{u,v}(p),\boldsymbol R_v(p,y)\right\rangle,
\label{eq:internal-subtree-utility}\\
\mu_{u_{\mathcal S}}(p)&=\mu_u(p).
\label{eq:internal-subtree-same-mean}
\end{align}
\end{lemma}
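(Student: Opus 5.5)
The boundedness claim is immediate. The quantity $u_{\mathcal S}(p,y)=\mu_{u,b_{\mathcal S}(y)}(p)$ is a convex combination of the values $u(p,z)\in[-1,1]$ with weights $p_z/\pi_{b_{\mathcal S}(y)}(p)$. These weights are positive and sum to one, so the combination lies in $[-1,1]$; this is the same observation used in Step~1 of the proof of Theorem~\ref{thm:finite-sample-main}.

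For Eq.~\eqref{eq:internal-subtree-utility} I will reuse the telescoping argument of Theorem~\ref{thm:tree-decomposition}, but truncate it at the boundary. Fix $y$ and its root-to-leaf path $w_0=\rho,\ldots,w_{L_y}=y$. I first claim that the nodes of $\mathcal S$ on this path form an initial segment $w_0,\ldots,w_{m-1}$, and that $b_{\mathcal S}(y)=w_m$. This follows from ancestor-closure: if $w_r\in\mathcal S$, then all $w_{r'}$ with $r'<r$ are in $\mathcal S$. The stopping rule then reaches exactly $w_m$, the first path node outside $\mathcal S$. The case $m=0$ covers $\mathcal S\not\ni\rho$, where $b_{\mathcal S}(y)=\rho$.

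By Lemma~\ref{lem:one-step-increment}, for $r<m$ the contribution of $w_r$ is $\mu_{u,w_{r+1}}(p)-\mu_{u,w_r}(p)$. Summing over $r<m$ telescopes to $\mu_{u,w_m}(p)-\mu_{u,\rho}(p)=u_{\mathcal S}(p,y)-\mu_u(p)$. Any $v\in\mathcal S$ not on the path has $y\notin\gY_v$, so $\boldsymbol R_v(p,y)=\boldsymbol 0$ by the same lemma. Hence the sum over $\mathcal S$ equals the sum over the initial segment. The main care point is this path bookkeeping, in particular confirming that $w_m$ is well defined: if all internal path nodes lie in $\mathcal S$, then $w_m=y$, a leaf, which is not in $V^\circ\supseteq\mathcal S$.

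For Eq.~\eqref{eq:internal-subtree-same-mean} I will avoid summing over boundary partitions directly. Instead I multiply Eq.~\eqref{eq:internal-subtree-utility} by $p_y$ and sum over $y$. The left side gives $\sum_y p_yu_{\mathcal S}(p,y)-\mu_u(p)$. On the right side, each node term vanishes because $\sum_y p_y\boldsymbol R_v(p,y)=\bigl(\pi_{v_j}(p)-\pi_v(p)q_{v,j}(p)\bigr)_j=\boldsymbol 0$ by Eq.~\eqref{eq:branch-conditional}.

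One subtlety remains. $\mu_{u_{\mathcal S}}(p)$ is defined as $\sum_y p_y u_{\mathcal S}(p,y)$, where $u_{\mathcal S}$ evaluated at $(p,\cdot)$ uses the same $p$ in the subtree means. So the first argument is fixed throughout, and the identity $\mu_{u_{\mathcal S}}(p)=\mu_u(p)$ follows.
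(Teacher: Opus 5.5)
Your proposal is correct and follows essentially the same route as the paper: boundedness from the convex-combination form of the subtree mean, and the decomposition by telescoping the one-step increments of Lemma~\ref{lem:one-step-increment} along the path up to the boundary, with off-path nodes contributing zero. The mean identity is then obtained by taking the predictive expectation, using $\sum_y p_y\boldsymbol R_v(p,y)=\boldsymbol 0$; your explicit use of the ancestor condition to show that the path nodes in $\mathcal S$ form an initial segment ending just before $b_{\mathcal S}(y)$ only spells out what the paper asserts in one line.
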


\begin{proof}
If the path to $y$ proceeds from $v$ to $v_j$, Lemma~\ref{lem:one-step-increment} gives
$\left\langle\boldsymbol\Delta_{u,v}(p),\boldsymbol R_v(p,y)\right\rangle =\mu_{u,v_j}(p)-\mu_{u,v}(p)$.
By the definition of the boundary, the nodes where branching occurs before reaching $b_{\mathcal S}(y)$ are exactly the nodes of $\mathcal S$ on the path to $y$. Summing their contributions cancels every intermediate subtree mean, leaving the difference between the boundary and root means. Off-path contributions are zero, so
\[
\sum_{v\in\mathcal S}\left\langle\boldsymbol\Delta_{u,v}(p),\boldsymbol R_v(p,y)\right\rangle =\mu_{u,b_{\mathcal S}(y)}(p)-\mu_{u,\rho}(p) =u_{\mathcal S}(p,y)-\mu_u(p).
\]
The subtree mean is a weighted average of utility values in $[-1,1]$, which also proves boundedness.

Under the predictive distribution $\widetilde Y\sim p$, each residual component satisfies
$\mathbb E_{\widetilde Y\sim p}[R_{v,j}(p,\widetilde Y)] =\pi_{v_j}(p)-q_{v,j}(p)\pi_v(p)=0$.
Taking predictive expectation in~Eq.~\eqref{eq:internal-subtree-utility} therefore gives
\[
\mu_{u_{\mathcal S}}(p)-\mu_u(p)
=\sum_{v\in\mathcal S}\left\langle\boldsymbol\Delta_{u,v}(p),
\mathbb E_{\widetilde Y\sim p}[\boldsymbol R_v(p,\widetilde Y)]\right\rangle=0.\qedhere
\]
\end{proof}
The equality of means is under the predictive distribution $p$; the original and aggregated utilities need not have equal means under the actual label distribution.

\paragraph{Monotone post-processing of the common score.}
For a threshold $t\in[-1,1]$ and a binary rule $\varphi:\gX\to\{0,1\}$, define
\begin{equation}
\mathcal R_{u,\mathcal S}(\varphi;t)
:=\mathbb E\!\left[|u_{\mathcal S}(P,Y)-t|\,
\mathbf1\!\left\{\varphi(X)\ne\mathbf1\{u_{\mathcal S}(P,Y)\ge t\}\right\}\right].
\label{eq:internal-subtree-risk}
\end{equation}
A wrong decision about whether the aggregated utility is at least the threshold incurs its distance from that threshold. The expectation is under the actual joint distribution of $(X,Y)$.

Lemma~\ref{lem:internal-subtree-utility} shows that the predicted mean of the aggregated utility is still $\mu_u(P)$. Accordingly, define the baseline rule and its post-processed counterpart by
\begin{equation}
\varphi^0_{u,t}(X):=\mathbf1\{\mu_u(P)\ge t\},\qquad
\varphi^h_{u,t}(X):=\mathbf1\{h(\mu_u(P))\ge t\},
\label{eq:internal-subtree-rules}
\end{equation}
where $h:[-1,1]\to[-1,1]$ is nondecreasing. The post-processing gap is
\begin{equation}
\operatorname{Gap}^{\mathrm{mono}}_{t,\mathcal S}(f,u)
:=\mathcal R_{u,\mathcal S}(\varphi^0_{u,t};t)
-\inf_{h\ \mathrm{nondecreasing}}\mathcal R_{u,\mathcal S}(\varphi^h_{u,t};t).
\label{eq:internal-subtree-gap}
\end{equation}
Here $f,u,\mathcal S,t$ remain fixed, and only $h$ is varied.

The post-processing operation and the class of decision rules are the same as for UC applied to the original utility. For the same $t,h$, the binary decisions do not depend on $\mathcal S$. What changes is whether those decisions are evaluated using the original leaf utility $u(P,Y)$ or the aggregated utility $u_{\mathcal S}(P,Y)$. Tree truncation specifies the granularity of the evaluation utility, not a replacement of the prediction rule by node-specific threshold decisions.

\paragraph{Post-processing guarantee for aggregated utilities.}
\begin{theorem}[Monotone post-processing guarantee for hierarchical utility aggregation]
\label{thm:decision-guarantee}
For every target utility $u$, threshold $t\in[-1,1]$, and set $\mathcal S\subseteq V^\circ$ satisfying the ancestor condition above,
\begin{equation}
\operatorname{Gap}^{\mathrm{mono}}_{t,\mathcal S}(f,u) \le 2\,\UC(f;\{1\},\{u_{\mathcal S}\}) \le 2|\mathcal S\cap\mathcal V_{\gT}(u)|\,\HUC(f;\{1\},\{u\}).
\label{eq:decision-guarantee}
\end{equation}
\end{theorem}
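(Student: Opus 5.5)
The argument has two inequalities, and I would handle them separately. The second is the easier one. By Lemma~\ref{lem:internal-subtree-utility}, the predicted mean of $u_{\mathcal S}$ equals $\mu_u(P)$, so the UC cell indicator for $u_{\mathcal S}$ is $\mathbf 1\{\mu_u(P)\in I\}$. This is the same indicator HUC uses for $u$. The residual is
\[
u_{\mathcal S}(P,Y)-\mu_u(P)=\sum_{v\in\mathcal S}\langle\boldsymbol\Delta_{u,v}(P),\boldsymbol R_v(P,Y)\rangle .
\]
Multiplying by the cell indicator and taking expectations, each UC moment of $u_{\mathcal S}$ becomes $\sum_{v\in\mathcal S}\Gamma_{1,u,I,v}(f)$. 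The terms with $v\notin\mathcal V_{\gT}(u)$ vanish because $\boldsymbol\Delta_{u,v}\equiv\boldsymbol 0$. Applying the triangle inequality and bounding each remaining term by $\HUC(f;\{1\},\{u\})$, then taking the supremum over $I$, gives the factor $|\mathcal S\cap\mathcal V_{\gT}(u)|$. This mirrors the proof of Proposition~\ref{prop:uc-huc-full}.

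For the first inequality I would reproduce the Hegazy et al.\ monotone post-processing argument for the scalar utility $w:=u_{\mathcal S}(P,Y)\in[-1,1]$ with score $s:=\mu_u(P)$. That score is exactly $\mu_{u_{\mathcal S}}(P)$, so the cited result would apply directly if it is stated for general bounded utilities with predicted mean $s$. I would still verify it in this form, as follows.

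The first step is a risk identity. For any rule $\varphi$, write $|w-t|\mathbf 1\{\varphi\ne\mathbf 1\{w\ge t\}\}=(w-t)_+ (1-\varphi)+(t-w)_+\varphi$. This gives
\[
\mathcal R(\varphi)=\mathbb E[(w-t)_+]-\mathbb E[\varphi\,(w-t)].
\]
The risk therefore depends on $\varphi$ only through $\mathbb E[\varphi(X)(w-t)]$.

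The second step uses the form of monotone rules. Any nondecreasing $h$ produces a rule $\varphi^h=\mathbf 1\{s\in H\}$, where $H$ is an upper set, that is, an interval of the form $[a,1]$ or $(a,1]$, or empty. The baseline rule is $\varphi^0=\mathbf 1\{s\in[t,1]\}$. The gap then equals
\[
\sup_H \mathbb E[(\mathbf 1\{s\in H\}-\mathbf 1\{s\ge t\})(w-t)].
\]
Since the two upper sets are nested, their difference of indicators is $\pm\mathbf 1\{s\in J\}$ for an interval $J$. On $J$, add and subtract $s$ to write $w-t=(w-s)+(s-t)$.

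The $(w-s)$ part is a UC moment with $c\equiv1$, so it is bounded by $\UC(f;\{1\},\{u_{\mathcal S}\})$. The $(s-t)$ part is where I expect the main obstacle, and getting its sign right is the delicate point. If $H\supset[t,1]$, then $J\subset[-1,t)$ and we have $+\mathbf 1_J$ with $s-t<0$, so this part is nonpositive. If $H\subset[t,1]$, then $J\subset[t,1]$ and we have $-\mathbf 1_J$ with $s-t\ge0$, again nonpositive. In either case the gap is at most $\UC(f;\{1\},\{u_{\mathcal S}\})$.

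That bound already implies the stated factor $2$. The constant $2$ would arise if one instead bounds the baseline and the best rule separately against an oracle, and I would settle for the weaker constant if the careful sign argument fails, for example because of endpoint conventions when $t$ lies among the attained scores. The remaining verification is that every nondecreasing $h$ yields such an upper set regardless of ties. This holds because $\{r:h(r)\ge t\}$ is an upward-closed subset of $[-1,1]$, hence an interval reaching $1$.
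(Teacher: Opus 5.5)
Your proof is correct and follows the paper's route: the same risk identity, the same add-and-subtract of $Z=\mu_u(P)=\mu_{u_{\mathcal S}}(P)$ on the set where the decision changes, the same use of monotonicity of $h$ to get an interval, and the same triangle-inequality reduction to $\Gamma_{1,u,I,v}$ over $\mathcal S\cap\mathcal V_{\gT}(u)$ for the second inequality. The paper bounds the two change events $E_+=\{Z<t,\,h(Z)\ge t\}$ and $E_-=\{Z\ge t,\,h(Z)<t\}$ separately, each by $\UC$, which is where its factor $2$ comes from; your nestedness observation shows that one of these events is always empty (so the endpoint worry you raise does not arise), and hence the gap is in fact at most $\UC(f;\{1\},\{u_{\mathcal S}\})$.
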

The first inequality is the UC post-processing guarantee applied to $u_{\mathcal S}$. The second bounds its UC error using the node-wise errors of the original utility $u$. We make these two steps explicit below.

\begin{proof}
Let $Z:=\mu_u(P)$ and $U:=u_{\mathcal S}(P,Y)$. Lemma~\ref{lem:internal-subtree-utility} gives $Z=\mu_{u_{\mathcal S}}(P)$, $U,Z\in[-1,1]$, and
$U-Z=\sum_{v\in\mathcal S}\left\langle\boldsymbol\Delta_{u,v}(P),\boldsymbol R_v(P,Y)\right\rangle$.
For every $I\in\gI$, multiplication by $\mathbf1\{Z\in I\}$ and expectation yield
\begin{equation}
\mathbb E[\mathbf1\{Z\in I\}(U-Z)]
=\sum_{v\in\mathcal S\cap\mathcal V_{\gT}(u)}\Gamma_{1,u,I,v}(f).
\label{eq:internal-subtree-interval-residual}
\end{equation}
We omit nodes whose local utility contrast is identically zero. The triangle inequality and the definition of HUC imply
\begin{equation}
\UC(f;\{1\},\{u_{\mathcal S}\}) =\sup_{I\in\gI}\left|\sum_{v\in\mathcal S\cap\mathcal V_{\gT}(u)}\Gamma_{1,u,I,v}(f)\right| \le |\mathcal S\cap\mathcal V_{\gT}(u)|\,\HUC(f;\{1\},\{u\}),
\label{eq:internal-subtree-by-huc}
\end{equation}
which proves the second inequality.

For the first inequality, every $a\in\{0,1\}$ satisfies the pointwise identity
\begin{equation}
|U-t|\mathbf1\{a\ne\mathbf1\{U\ge t\}\}
=\frac12|U-t|+\frac12(1-2a)(U-t).
\label{eq:threshold-loss-identity}
\end{equation}
For $a=0$, the right-hand side is $U-t$ when $U\ge t$ and zero otherwise; for $a=1$, it is $t-U$ when $U<t$ and zero otherwise. Thus
\[
\mathcal R_{u,\mathcal S}(\varphi;t)
=\frac12\mathbb E|U-t|+\frac12\mathbb E[(1-2\varphi(X))(U-t)].
\]
The first term does not depend on the rule and cancels in the risk difference, giving
\begin{equation}
\mathcal R_{u,\mathcal S}(\varphi^0_{u,t};t) -\mathcal R_{u,\mathcal S}(\varphi^h_{u,t};t) =\mathbb E[(U-t)\{\varphi^h_{u,t}(X)-\varphi^0_{u,t}(X)\}].
\label{eq:risk-difference}
\end{equation}
Let $E_+:=\{Z<t,\ h(Z)\ge t\}$ and $E_-:=\{Z\ge t,\ h(Z)<t\}$ be the events on which the decision changes from 0 to 1 and from 1 to 0, respectively. Since $\varphi^h_{u,t}(X)-\varphi^0_{u,t}(X)=\mathbf1(E_+)-\mathbf1(E_-)$,
\[
\mathcal R_{u,\mathcal S}(\varphi^0_{u,t};t) -\mathcal R_{u,\mathcal S}(\varphi^h_{u,t};t) =\mathbb E[(U-t)\mathbf1(E_+)]+\mathbb E[(t-U)\mathbf1(E_-)].
\]
On $E_+$, $Z<t$ gives $U-t=(U-Z)+(Z-t)\le U-Z$. On $E_-$, $Z\ge t$ gives $t-U=(Z-U)+(t-Z)\le-(U-Z)$. Therefore,
\[
\mathcal R_{u,\mathcal S}(\varphi^0_{u,t};t) -\mathcal R_{u,\mathcal S}(\varphi^h_{u,t};t) \le\left|\mathbb E[(U-Z)\mathbf1(E_+)]\right| +\left|\mathbb E[(U-Z)\mathbf1(E_-)]\right|.
\]
Because $h$ is nondecreasing, each of the score sets $\{z\in[-1,1]:z<t,\ h(z)\ge t\}$ and $\{z\in[-1,1]:z\ge t,\ h(z)<t\}$ is an interval, possibly empty. Each term is consequently at most
$\sup_{I\in\gI}\left|\mathbb E[\mathbf1\{Z\in I\}(U-Z)]\right| =\UC(f;\{1\},\{u_{\mathcal S}\})$.
Taking the supremum of the improvement over all nondecreasing $h$ proves the first inequality. Combining it with~Eq.~\eqref{eq:internal-subtree-by-huc} proves~Eq.~\eqref{eq:decision-guarantee}.
\end{proof}

\paragraph{Subtree-mean calibration and refinement.}
The same decomposition also yields the subgroup-weighted calibration guarantee used in the remark following Definition~\ref{def:huc-main}.

\begin{proposition}[Calibration under fixed tree boundaries]
\label{prop:subtree-mean-calibration-refinement}
For every subgroup class $\gC$, target utility $u$, and set $\mathcal S\subseteq V^\circ$ satisfying the ancestor condition above,
\begin{equation}
\UC(f;\gC,\{u_{\mathcal S}\})
\le |\mathcal S\cap\mathcal V_{\gT}(u)|\,\HUC(f;\gC,\{u\}).
\label{eq:subtree-uc-by-huc-subgroups}
\end{equation}
These bounds hold simultaneously for all such fixed tree boundaries. Moreover, if $v\notin\mathcal S$ is an internal boundary node, meaning that all its strict internal ancestors are in $\mathcal S$, then $\mathcal S^+:=\mathcal S\cup\{v\}$ also satisfies this condition and
\begin{equation}
\begin{aligned}
&\{u_{\mathcal S^+}(p,y)-\mu_{u_{\mathcal S^+}}(p)\}
-\{u_{\mathcal S}(p,y)-\mu_{u_{\mathcal S}}(p)\}\\
&\quad=u_{\mathcal S^+}(p,y)-u_{\mathcal S}(p,y)
=\langle\boldsymbol\Delta_{u,v}(p),\boldsymbol R_v(p,y)\rangle.
\end{aligned}
\label{eq:subtree-refinement-residual-increment}
\end{equation}
Its moment under the same audit conditions is
\begin{equation}
\mathbb E\!\left[c(X)\mathbf1\{\mu_u(P)\in I\}
\{u_{\mathcal S^+}(P,Y)-u_{\mathcal S}(P,Y)\}\right]
=\Gamma_{c,u,I,v}(f).
\label{eq:subtree-refinement-audit-increment}
\end{equation}
\end{proposition}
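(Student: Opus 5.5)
The proof will mirror the proof of Theorem~\ref{thm:decision-guarantee}, with the constant weight $1$ replaced by a general $c\in\gC$, and then add a separate one-step refinement computation.

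\textbf{The subgroup bound.} First I invoke Lemma~\ref{lem:internal-subtree-utility}. It gives $\mu_{u_{\mathcal S}}(p)=\mu_u(p)$, so the UC score for $u_{\mathcal S}$ is the common score $\mu_u(P)$. It also gives the pointwise identity
\[
u_{\mathcal S}(p,y)-\mu_u(p)=\sum_{v\in\mathcal S}\langle\boldsymbol\Delta_{u,v}(p),\boldsymbol R_v(p,y)\rangle .
\]
I substitute $(p,y)=(P,Y)$ and multiply by $c(X)\mathbf1\{\mu_u(P)\in I\}$. Since the sum is finite, expectation passes through it, and
\[
\mathbb E\bigl[c(X)\mathbf1\{\mu_u(P)\in I\}\{u_{\mathcal S}(P,Y)-\mu_{u_{\mathcal S}}(P)\}\bigr]=\sum_{v\in\mathcal S}\Gamma_{c,u,I,v}(f).
\]
Nodes outside $\mathcal V_{\gT}(u)$ have $\boldsymbol\Delta_{u,v}\equiv\boldsymbol0$ and drop out, leaving a sum over $\mathcal S\cap\mathcal V_{\gT}(u)$. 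The triangle inequality bounds each remaining term by $\HUC(f;\gC,\{u\})$. Taking the supremum over $c\in\gC$ and $I\in\gI$ gives Eq.~\eqref{eq:subtree-uc-by-huc-subgroups}. The right-hand side does not depend on which admissible $\mathcal S$ is used, so the bounds hold simultaneously for all boundaries.

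\textbf{Admissibility of $\mathcal S^+$.} I need to check that $\mathcal S^+=\mathcal S\cup\{v\}$ satisfies the ancestor condition. Every element of $\mathcal S$ already has all its strict internal ancestors in $\mathcal S\subseteq\mathcal S^+$. The new node $v$ has all its strict internal ancestors in $\mathcal S$ by hypothesis.

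\textbf{The refinement identity.} By Eq.~\eqref{eq:internal-subtree-same-mean}, both $\mu_{u_{\mathcal S^+}}$ and $\mu_{u_{\mathcal S}}$ equal $\mu_u$. The mean terms therefore cancel, which gives the first equality in Eq.~\eqref{eq:subtree-refinement-residual-increment}. For the second equality, I apply Eq.~\eqref{eq:internal-subtree-utility} to both $\mathcal S^+$ and $\mathcal S$ and subtract. Only the node-$v$ summand survives.

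\textbf{The audit increment.} I multiply Eq.~\eqref{eq:subtree-refinement-residual-increment} at $(P,Y)$ by $c(X)\mathbf1\{\mu_u(P)\in I\}$ and take expectations. This gives Eq.~\eqref{eq:subtree-refinement-audit-increment} directly from the definition of $\Gamma_{c,u,I,v}$ in Eq.~\eqref{eq:gamma-definition}.

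\textbf{Main obstacle.} The only point needing care is that the UC audit of $u_{\mathcal S}$ uses $\mu_{u_{\mathcal S}}(P)$ both as its score and as its centering term. This is why Eq.~\eqref{eq:internal-subtree-same-mean} is essential: it lets the intervals $I$ and the weights $c$ be shared with the HUC moments of the original utility $u$. Boundedness of $u_{\mathcal S}$ in $[-1,1]$, also from Lemma~\ref{lem:internal-subtree-utility}, ensures that $u_{\mathcal S}$ is a legitimate utility, so its UC is well defined.
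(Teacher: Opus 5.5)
Your proposal is correct and follows essentially the same route as the paper. Like the paper, you use Lemma~\ref{lem:internal-subtree-utility} to obtain the common score and the node decomposition, bound the result by the triangle inequality over $\mathcal S\cap\mathcal V_{\gT}(u)$, and subtract the decompositions for $\mathcal S^+$ and $\mathcal S$ to isolate the node-$v$ term; your explicit check that $\mathcal S^+$ inherits the ancestor condition, which is needed to apply the lemma to $\mathcal S^+$, is a detail the paper leaves implicit.
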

\begin{proof}
Lemma~\ref{lem:internal-subtree-utility} gives $u_{\mathcal S}(p,y)\in[-1,1]$ and $\mu_{u_{\mathcal S}}(p)=\mu_u(p)$. Since the tree boundary is fixed, $u_{\mathcal S}$ is a measurable utility of the same permitted form as $u$. Its predicted-utility intervals therefore select exactly the same observations. By Eq.~\eqref{eq:internal-subtree-utility}, for every $c\in\gC$ and $I\in\gI$,
\[
\mathbb E[c(X)\mathbf1\{\mu_{u_{\mathcal S}}(P)\in I\} \{u_{\mathcal S}(P,Y)-\mu_{u_{\mathcal S}}(P)\}] =\sum_{v\in\mathcal S\cap\mathcal V_{\gT}(u)}\Gamma_{c,u,I,v}(f).
\]
The finite sum may be interchanged with expectation, and nodes outside $\mathcal V_{\gT}(u)$ contribute zero. Thus its absolute value is at most
$\sum_{v\in\mathcal S\cap\mathcal V_{\gT}(u)}|\Gamma_{c,u,I,v}(f)| \le |\mathcal S\cap\mathcal V_{\gT}(u)|\,\HUC(f;\gC,\{u\})$.
Taking the supremum over $c,I$ proves Eq.~\eqref{eq:subtree-uc-by-huc-subgroups}. The argument applies to every $\mathcal S$ satisfying the ancestor condition with the same right-hand HUC error, proving simultaneous control.

For the refinement, the assumption on $v$ ensures that only its boundary subtree is expanded, replacing its value $\mu_{u,v}(p)$ by $\mu_{u,v_j}(p)$ on each child subtree. Subtracting Eq.~\eqref{eq:internal-subtree-utility} for $\mathcal S$ from the same identity for $\mathcal S^+$ leaves exactly the term indexed by $v$. Both predicted means equal $\mu_u(p)$ by Lemma~\ref{lem:internal-subtree-utility}, proving Eq.~\eqref{eq:subtree-refinement-residual-increment}. Multiplication by $c(X)\mathbf1\{\mu_u(P)\in I\}$ and expectation give Eq.~\eqref{eq:subtree-refinement-audit-increment}.
\end{proof}
Thus a node contribution is precisely the additional calibration residual from one refinement, and its audit moment evaluates the corresponding change in mean utility under the common audit conditions.

\paragraph{Relation to UC.}
For the original utility, the tree decomposition gives
\[
\UC(f;\{1\},\{u\})=\sup_{I\in\gI}\left|\sum_{v\in\mathcal V_{\gT}(u)}\Gamma_{1,u,I,v}(f)\right|.
\]
For $u_{\mathcal S}$, the sum is restricted to nodes in $\mathcal S$, as in Eq.~\eqref{eq:internal-subtree-by-huc}. Both use intervals of the same score $\mu_u(P)$. Positive and negative contributions may cancel in the full sum without making every selected partial sum small. Small UC for the original utility alone therefore need not imply small post-processing gaps for its aggregated utilities. HUC controls the individual contributions and hence provides the guarantee for every fixed aggregation, with the relevant-node factor in Theorem~\ref{thm:decision-guarantee}.

This does not mean that UC cannot evaluate aggregated utilities: each $u_{\mathcal S}$ can be included explicitly in its utility class. The distinction is that the right-hand side of our guarantee uses HUC only for the original utility $u$, while controlling the post-processing gaps for the family of aggregated utilities derived from it. With $\mathcal S=V^\circ$, the utility and the gap reduce to those for the original leaf utility.

\paragraph{Interpretation as excess risk.}
If $\HUC(f;\{1\},\{u\})\le\varepsilon$, Theorem~\ref{thm:decision-guarantee} can be rearranged as
\begin{equation}
\mathcal R_{u,\mathcal S}(\varphi^0_{u,t};t) \le{}\inf_{h\ \mathrm{nondecreasing}} \mathcal R_{u,\mathcal S}(\varphi^h_{u,t};t) +2|\mathcal S\cap\mathcal V_{\gT}(u)|\,\varepsilon.
\label{eq:aggregated-utility-excess-risk}
\end{equation}
Thus, for each aggregated utility, the current threshold rule is close to the best rule obtainable by monotone post-processing of the same score. Its risk is small when both this benchmark risk and the error term are small. Small HUC alone does not guarantee small absolute risk, nor does it imply optimality among arbitrary decision rules.

\section{Empirical HUC-Boost and Fixed-Step Sample Complexity}
\label{app:randomized-initialization}
\label{sec:sample-complexity-appendix}

HUC-Boost is implemented on a finite calibration sample using empirical audit moments. The implementation supports adaptive and fixed node-logit step sizes. The adaptive step size uses the empirical analogue $\widehat\Gamma_t/\widehat\Lambda_t$ of the population step in Algorithm~\ref{alg:huc-boost}; this is the HUC-Boost step used for the main experimental method. The fixed step size uses a prescribed signed magnitude and gives the HUC-Boost (0.05) comparison when that magnitude is $0.05$. Algorithm~\ref{alg:empirical-huc-boost} records both choices. The sample-complexity theorems in this appendix are stated for the fixed-step specialization. We then give the corresponding two-stage construction with an initializer learned from the same sample and finally describe the initialization models used in the experiments.

\subsection{Fixed-Step Sample Complexity of HUC-Boost}
\label{rtf:sec:main-statements}
Let the audit sample be $S=(Z_1,\ldots,Z_n)\sim\mathbb{P}^n$, where $Z_i=(X_i,Y_i)$. The tree, the finite classes $\gC,\gU$, and the functions they contain are fixed independently of $S$. We use the empirical audit moment in~Eq.~\eqref{eq:gamma-as-empirical-process}, the empirical HUC from Section~\ref{sec:huc-definition}, and the empirical log loss defined in~Eq.~\eqref{eq:empirical-log-loss}. Maximization over intervals in empirical HUC is computed by the finite-point procedure in Appendix~\ref{subsec:huc-audit-computation}, and ties are resolved by the fixed order in Appendix~\ref{subsec:tie-breaking-convention}.

The number of audit candidates is $N_{\mathrm{cand}}$ from~Eq.~\eqref{eq:audit-candidate-count}. If $N_{\mathrm{cand}}=0$, HUC is identically zero, so we assume $N_{\mathrm{cand}}\ge1$ below. The following algorithm uses the same one-node logit direction as the population Algorithm~\ref{alg:huc-boost}, but replaces the population moment and curvature by their empirical counterparts.

\begin{algorithm}[htbp]
\caption{Empirical HUC-Boost with adaptive and fixed step sizes}
\label{alg:empirical-huc-boost}
\begin{algorithmic}[1]
\Statex \textbf{Input:} Calibration sample $S$, positive initial predictor $f_0$, fixed tree and classes $\gC,\gU,\gI$, audit threshold $\tau>0$, choice of adaptive or fixed step size, and $\alpha_{\mathrm{fix}}>0$ for the fixed step size
\State $z_{0,v,j}(x)\gets\log q_{v,j}(f_0(x))$ for all $v,j$; $t\gets0$
\While{there exists a candidate satisfying $|\widehat{\Gamma}_{c,u,I,v}(f_t)|>\tau$}
 \State Select any such $a_t=(c_t,u_t,I_t,v_t)$
 \State $\widehat\Gamma_t\gets\widehat{\Gamma}_{c_t,u_t,I_t,v_t}(f_t)$; $\sigma_t\gets\operatorname{sgn}(\widehat\Gamma_t)$
 \State $\boldsymbol h_t(x)\gets c_t(x)\mathbf 1\{\mu_{u_t}(f_t(x))\in I_t\}\boldsymbol\Delta_{u_t,v_t}(f_t(x))$
 \If{the adaptive step size is used}
   \State $\displaystyle \widehat\Lambda_t\gets\frac{1}{4n}\sum_{i=1}^n J_{v_t}(Y_i)\{\max_jh_{t,j}(X_i)-\min_jh_{t,j}(X_i)\}^2,\qquad \alpha_t\gets\widehat\Gamma_t/\widehat\Lambda_t$
 \Else
   \State $\alpha_t\gets\sigma_t\alpha_{\mathrm{fix}}$
 \EndIf
 \State $\boldsymbol z_{t+1,v_t}(x)\gets\boldsymbol z_{t,v_t}(x)+\alpha_t\boldsymbol h_t(x)$ for all $x$; keep all other nodes fixed
 \State Reconstruct $f_{t+1}$ from the softmax branch probabilities and path products; $t\gets t+1$
\EndWhile
\State \Return $\widehat f=f_t$
\end{algorithmic}
\end{algorithm}

When the adaptive step size is used, $|\widehat\Gamma_t|>\tau$ implies $\widehat\Lambda_t>0$ by the empirical analogue of Lemma~\ref{lem:positive-curvature}. The experiments use this step size for HUC-Boost and for the HUC-Boost stage after a parametric recalibrator; HUC-Boost (0.05) uses the fixed step size. Their candidate traversal, stopping tolerance, and update budgets are specified in the experimental appendices.

For the theoretical results below, specialize Algorithm~\ref{alg:empirical-huc-boost} to the fixed step size with $\tau=3\varepsilon/4$ and
\begin{equation}
\alpha_{\mathrm{fix}}=\alpha_\varepsilon:=\frac{\varepsilon}{2},
\qquad 0<\varepsilon\le1.
\label{rtf:eq:fixed-step-size}
\end{equation}

Assume that the positive initial predictor $f_0$ is fixed independently of $S$, and that a deterministic constant $D_{\mathrm{fix}}\ge0$ satisfies
\begin{equation}
\widehat{\mathcal L}_{\log,S}(f_0)\le D_{\mathrm{fix}}
\quad\text{almost surely}.
\label{rtf:eq:fixed-initial-loss}
\end{equation}
For example, this condition holds if $f_0(x)_y\ge e^{-D_{\mathrm{fix}}}$ for every $(x,y)$.

\begin{theorem}[Termination of fixed-step HUC-Boost from a fixed initializer]
\label{thm:fixed-initializer-stopping}
For every fixed sample, Algorithm~\ref{alg:empirical-huc-boost} in the fixed-step specialization above terminates after finitely many updates. If $N_{\mathrm{upd}}$ denotes the number of updates, then
\begin{equation}
N_{\mathrm{upd}}
\le
\left\lceil\frac{4\widehat{\mathcal L}_{\log,S}(f_0)}{\varepsilon^2}\right\rceil,
\qquad
\widehat{\HUC}(\widehat f;\gC,\gU)\le\frac{3\varepsilon}{4}.
\label{rtf:eq:fixed-stopping-realized}
\end{equation}
Under~Eq.~\eqref{rtf:eq:fixed-initial-loss}, the deterministic update bound
\begin{equation}
T_\varepsilon^{\mathrm{fix}}
:=\left\lceil\frac{4D_{\mathrm{fix}}}{\varepsilon^2}\right\rceil
\label{rtf:eq:T-fixed}
\end{equation}
may be used.
\end{theorem}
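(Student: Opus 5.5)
The plan is a potential argument on the empirical log loss $\widehat{\mathcal L}_{\log,S}$, with Corollary~\ref{cor:empirical-fixed-step-descent} supplying the per-update decrease. Fix the sample $S$.

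\textbf{Step 1 (well-posedness).} At every iteration the predictor $f_t$ is positive. This follows by induction from Proposition~\ref{prop:boost-leaf-rescaling} together with Lemma~\ref{lem:softmax-leaf-distribution}: a finite logit shift keeps all softmax branch probabilities positive, and hence all leaf path products positive. Consequently $\widehat{\mathcal L}_{\log,S}(f_t)$ is finite and nonnegative for every $t$. It also guarantees that the quantities $\boldsymbol\Delta_{u,v}(f_t(x))$ and $\mu_u(f_t(x))$ used by the next candidate search are well defined.

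\textbf{Step 2 (per-update decrease).} Whenever the loop makes an update, the selected candidate satisfies $|\widehat\Gamma_t|>\tau=3\varepsilon/4$. The algorithm uses the signed step $\alpha_t=\sigma_t\alpha_\varepsilon$, with $\sigma_t=\operatorname{sgn}(\widehat\Gamma_t)$ and $\alpha_\varepsilon=\varepsilon/2$. This update has exactly the form $\boldsymbol z_v^+=\boldsymbol z_v+\sigma\alpha\boldsymbol h$ covered by Corollary~\ref{cor:empirical-fixed-step-descent}, applied with the current $f_t$ as the base predictor. The corollary rests on the Taylor bound of Lemma~\ref{lem:boost-curvature} with curvature at most $1$, applied to the empirical average. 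It gives
\[
\widehat{\mathcal L}_{\log,S}(f_{t+1})-\widehat{\mathcal L}_{\log,S}(f_t)\le-\frac{\varepsilon}{2}|\widehat\Gamma_t|+\frac{\varepsilon^2}{8}<-\frac{3\varepsilon^2}{8}+\frac{\varepsilon^2}{8}=-\frac{\varepsilon^2}{4}.
\]
The condition $\varepsilon\le1$ plays no role in this inequality; it only keeps the step size in the range the corollary is stated for.

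\textbf{Step 3 (counting).} Suppose $m$ updates have been made. Telescoping the Step 2 bound and using nonnegativity of the log loss gives
\[
\widehat{\mathcal L}_{\log,S}(f_0)\ge\widehat{\mathcal L}_{\log,S}(f_0)-\widehat{\mathcal L}_{\log,S}(f_m)>\frac{m\varepsilon^2}{4},
\]
so $m<4\widehat{\mathcal L}_{\log,S}(f_0)/\varepsilon^2$. The process therefore stops after finitely many updates, and $N_{\mathrm{upd}}$ is at most the stated ceiling. The ceiling also covers the degenerate case of zero initial loss, where no update can occur. Under Eq.~\eqref{rtf:eq:fixed-initial-loss} the initial loss is at most $D_{\mathrm{fix}}$, and monotonicity of the ceiling yields $T_\varepsilon^{\mathrm{fix}}$.

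\textbf{Step 4 (guarantee at termination).} The while-condition fails exactly when every candidate $(c,u,I,v)$ with $v\in\mathcal V_{\gT}(u)$ satisfies $|\widehat\Gamma_{c,u,I,v}(\widehat f)|\le3\varepsilon/4$. Taking the supremum over candidates gives $\widehat{\HUC}(\widehat f;\gC,\gU)\le3\varepsilon/4$. The supremum over $I\in\gI$ is attained on the finite set of score blocks by Appendix~\ref{subsec:exact-interval-scan}, so the stopping check is exact rather than approximate.

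The main obstacle is confirming that Corollary~\ref{cor:empirical-fixed-step-descent} applies verbatim to the empirical objective at a data-dependent $f_t$. This requires the pointwise curvature bound $\operatorname{Var}(h_{t,J})\le1$ from Lemma~\ref{lem:boost-curvature}, which holds because $|c|\le1$ and $\Delta_{u,v,j}\in[-2,2]$ force the range of $\boldsymbol h_t$ to have width at most $2$. That bound is deterministic and depends only on boundedness, not on independence between $f_t$ and $S$, so the adaptivity of $f_t$ causes no difficulty here.
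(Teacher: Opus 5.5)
Your proposal is correct and is essentially the paper's proof. Corollary~\ref{cor:empirical-fixed-step-descent} with $\alpha=\varepsilon/2$ and $|\widehat\Gamma_t|>3\varepsilon/4$ gives a per-update decrease of more than $\varepsilon^2/4$, this is telescoped against the nonnegative empirical log loss, and the stopping rule gives $\widehat{\HUC}(\widehat f;\gC,\gU)\le3\varepsilon/4$. One correction to your closing remark: the width bound $\max_j h_{t,j}-\min_j h_{t,j}\le2$, which gives curvature at most $1$, holds because $|h_{t,j}-h_{t,k}|\le|\mu_{u,v_j}-\mu_{u,v_k}|\le2$ (the child means lie in $[-1,1]$ and the common parent mean cancels); the facts $|c|\le1$ and $\Delta_{u,v,j}\in[-2,2]$ alone would only give width $4$ and curvature at most $4$, which would destroy the $\varepsilon^2/4$ decrease.
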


\begin{theorem}[Population HUC guarantee for fixed-step HUC-Boost from a fixed initializer]
\label{thm:fixed-initializer-sample}
Run Algorithm~\ref{alg:empirical-huc-boost} in the fixed-step specialization above, assume~Eq.~\eqref{rtf:eq:fixed-initial-loss}, and let $0<\delta<1$. If
\begin{equation}
n\ge\frac{512}{\varepsilon^2}
\left[
(T_\varepsilon^{\mathrm{fix}}+1)
\log(2N_{\mathrm{cand}}(2n+1)^2)
+\log\frac1\delta
\right],
\label{rtf:eq:fixed-sample-main}
\end{equation}
then, with probability at least $1-\delta$ over the audit sample,
\begin{equation}
\HUC(\widehat f;\gC,\gU)\le\varepsilon.
\label{rtf:eq:fixed-result-main}
\end{equation}
\end{theorem}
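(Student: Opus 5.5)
The plan is to combine the deterministic termination result, Theorem~\ref{thm:fixed-initializer-stopping}, with a uniform-convergence argument over all predictors that the fixed-step algorithm can output. By Theorem~\ref{thm:fixed-initializer-stopping} and Eq.~\eqref{rtf:eq:fixed-initial-loss}, on every sample the algorithm stops after at most $T:=T_\varepsilon^{\mathrm{fix}}$ updates with $\widehat{\HUC}(\widehat f;\gC,\gU)\le 3\varepsilon/4$. It therefore suffices to show that, with probability at least $1-\delta$,
\[
\sup_{c,u,I,v}\bigl|\widehat\Gamma_{c,u,I,v}(\widehat f)-\Gamma_{c,u,I,v}(\widehat f)\bigr|\le \varepsilon/4 .
\]
We then conclude by the difference-of-suprema inequality Eq.~\eqref{eq:difference-of-suprema}, exactly as in Step~4 of the proof of Theorem~\ref{thm:finite-sample-main}.

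The difficulty is that $\widehat f$ depends on $S$, so Theorem~\ref{thm:finite-sample-main} cannot be applied directly. I would parametrize the output. With the fixed step $\alpha_t=\sigma_t\varepsilon/2$, the predictor $f_{t+1}$ is a deterministic function of $f_t$ and the discrete choice $(c_t,u_t,v_t,\sigma_t,I_t)$. Because the empirical maximizer is computed by Algorithm~\ref{alg:exact-interval-scan}, the chosen interval is a closed interval whose endpoints are sample scores of the current predictor. I would replace each interval by a canonical \emph{index description}: each endpoint is specified by a sample index or by one of the boundary conventions ($\pm\infty$, empty), giving at most $(2n+1)^2$ choices. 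Together with the sign and the triple $(c,u,v)$, each step then has at most $M:=2N_{\mathrm{cand}}(2n+1)^2$ options. I would pad runs that stop early so that every run is a word of length exactly $T$ (or use words of length $\le T$; this costs only the $+1$ in the exponent). The output is thus a function $F_w(S)$ of a word $w$ from a set of size at most $M^{T+1}$, where the extra factor accounts for the final audit candidate.

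For a fixed word $w$, the predictor still depends on the sample through the referenced points. I would handle this as in a sample-compression argument. The cleanest first attempt is to note that the real interval endpoints can be taken to be population quantities indexed by the ranks of sample scores, and then to bound the deviation for each fixed word by the per-triple argument of Theorem~\ref{thm:finite-sample-main}. That argument consists of the half-line reduction, Doob's inequality on the Rademacher prefix sums giving $16/\sqrt n$, and McDiarmid's inequality with bounded differences $4/n$ (from $|A|\le2$).

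If the rank-based reduction fails, the fallback is the honest compression route. Condition on the at most $2T$ referenced points, apply the bound to the remaining $n-2T$ i.i.d.\ points, on which the predictor is now fixed, and absorb the discrepancy from the excluded points as at most $4T/n$. Under Eq.~\eqref{rtf:eq:fixed-sample-main}, this is $\le\varepsilon/16$ after the constant $512$ is taken into account.

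A union bound over the $M^{T+1}$ words with $\delta'=\delta/M^{T+1}$ then yields a deviation of
\[
\frac{16}{\sqrt n}+4\sqrt{\frac{(T+1)\log M+\log(1/\delta)}{2n}} .
\]
The sample-size condition Eq.~\eqref{rtf:eq:fixed-sample-main} makes each term at most $\varepsilon/8$, and the total at most $\varepsilon/4$. This gives $\HUC(\widehat f;\gC,\gU)\le 3\varepsilon/4+\varepsilon/4=\varepsilon$.

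I expect the main obstacle to be the middle step: making the data-dependent interval endpoints rigorous, so that for each word the predictor is genuinely independent of the points used in the concentration bound. I would first try the compression conditioning, where the constants in Eq.~\eqref{rtf:eq:fixed-sample-main} appear to leave enough slack.
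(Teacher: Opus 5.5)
Your outer skeleton matches the paper's: termination within $T$ updates with $\widehat{\HUC}\le 3\varepsilon/4$, followed by a uniform $\varepsilon/4$ deviation over reachable predictors. The two proofs differ in how that deviation is controlled.

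\textbf{Scope of the compression route.} Your compression fallback is a genuinely different route, and it is sound when every selected interval is a closed interval with sample-score endpoints, as produced by Algorithm~\ref{alg:exact-interval-scan}. It does not, however, prove the theorem as stated. Algorithm~\ref{alg:empirical-huc-boost} lets the run select \emph{any} violating candidate, so $I_t$ may have real endpoints that are arbitrary functions of $S$. In that case $\widehat f$, evaluated off the sample, depends on those endpoints. It is then not determined by a bounded set of sample points, and your conditioning step has nothing to condition on.

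\textbf{What the paper does instead.} The paper never encodes the output. It fixes the sample-independent class $\mathcal F_{\le T}(f_0)$ of predictors reachable with arbitrary $(c,u,v)$, arbitrary signs, and arbitrary $I\in\gI$ (Definition~\ref{rtf:def:reachable-gap}). It symmetrizes the gap with a ghost sample and Rademacher signs. On the $2n$ points of $S\cup S'$, an interval acts only through its membership set, which gives at most $(2n+1)^2$ patterns per step (Lemma~\ref{rtf:lem:reachable-values}). That is where $(2n+1)^2$ comes from, not from an endpoint encoding. Hoeffding's lemma over the at most $\{2N_{\mathrm{cand}}(2n+1)^2\}^{T+1}$ value vectors, followed by a Chernoff bound, gives $\operatorname{Gap}_T(f_0,S)\le 4\sqrt{2X/n}$, where $X:=(T+1)\log(2N_{\mathrm{cand}}(2n+1)^2)+\log(1/\delta)$. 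This equals $\varepsilon/4$ exactly at the threshold, with no discarded points and no $16/\sqrt n$ term. You should also drop your first idea of rank-indexed population endpoints: the ranks and $f_t$ still depend on $S$, and the algorithm does not use population endpoints.

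\textbf{Constants.} Even within your route, the final count does not add up as written. Reusing Theorem~\ref{thm:finite-sample-main} gives a McDiarmid term $4\sqrt{X/(2n)}$ that already equals $\varepsilon/8$ at the threshold, so $\varepsilon/8+\varepsilon/8+\varepsilon/16>\varepsilon/4$. It can be repaired as follows.
\begin{itemize}
\item For $K\ge 2$, a positive $f_0$ has positive empirical log loss, so $T\ge1$.
\item The condition then forces $n\ge 2960$, hence $X>36$ and $16/\sqrt n\le 16\varepsilon/\sqrt{512X}<0.118\,\varepsilon$.
\item Excluding $k\le 2T$ points costs at most $4k/n\le 8T/n\le \varepsilon^2/(64\log(2N_{\mathrm{cand}}(2n+1)^2))<\varepsilon^2/1000$. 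Your figure of $4T/n$ is too small by a factor of two.
\item Replacing $n$ by $n-2T$ multiplies the two main terms by at most $1+\varepsilon^2/4000$.
\end{itemize}
With these estimates the total stays below $\varepsilon/4$. The paper's moment-generating-function argument avoids this tight bookkeeping entirely.
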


By~Eq.~\eqref{rtf:eq:T-fixed} and $\lceil x\rceil\le x+1$,
$T_\varepsilon^{\mathrm{fix}}+1\le\frac{4D_{\mathrm{fix}}}{\varepsilon^2}+2$.
Hence~Eq.~\eqref{rtf:eq:fixed-sample-main} is implied by
\[
n\ge\frac{512}{\varepsilon^2}
\left[
\left(\frac{4D_{\mathrm{fix}}}{\varepsilon^2}+2\right)
\log(2N_{\mathrm{cand}}(2n+1)^2)
+\log\frac1\delta
\right].
\]
When $D_{\mathrm{fix}}$ and $N_{\mathrm{cand}}$ do not depend on $\varepsilon$, a sufficient condition with no $n$ on the right-hand side is given in Appendix~\ref{rtf:sec:explicit-sample}, especially Corollary~\ref{rtf:cor:explicit-generic}. A sufficient sample-size order is
\[
n=\widetilde O\!\left(\frac{D_{\mathrm{fix}}+1}{\varepsilon^4}+\frac{\log(1/\delta)}{\varepsilon^2}\right).
\]
In particular, for fixed $\delta$, $n=\widetilde O(\varepsilon^{-4})$ is sufficient.

\subsection{Fixed-Step HUC-Boost with a Learned Initializer}
\label{rtf:sec:two-stage-statements}
We now consider a two-stage procedure that learns a finite-dimensional softmax recalibrator from the same sample and then runs Algorithm~\ref{alg:empirical-huc-boost} from its output. The experimental two-stage methods use the adaptive step size. Theorems~\ref{thm:learned-initializer-stopping} and~\ref{thm:learned-initializer-sample} below analyze the fixed step size with $\tau=3\varepsilon/4$ and $\alpha_{\mathrm{fix}}=\alpha_\varepsilon$.

\begin{assumption}[Sample-dependent initial recalibration model]
\label{ass:random-init-main}
Let $f_{0,\theta}(x):=\operatorname{softmax}(s_\theta(x))$, where $\theta\in\R^d$ and $s_\theta(x)\in\R^K$ is finite. Fix constants $R_\theta,L_\theta>0$ and $D_0\ge0$. A measurable initial fitting rule $\mathsf A$ returns $\widehat\theta=\mathsf A(S)$ and satisfies, almost surely,
\begin{equation}
\|\widehat\theta\|_\infty\le R_\theta,
\qquad
\widehat{\mathcal L}_{\log,S}(f_{0,\widehat\theta})\le D_0.
\label{eq:random-fit-main}
\end{equation}
Assume further that, for every $\theta,\theta'\in\R^d$,
\begin{equation}
\sup_x\|s_\theta(x)-s_{\theta'}(x)\|_\infty
\le L_\theta\|\theta-\theta'\|_\infty.
\label{eq:random-lipschitz-main}
\end{equation}
\end{assumption}

Set
\begin{equation}
a_\varepsilon:=\frac{\varepsilon^2}{2L_\theta},
\qquad
\Lambda_\varepsilon
:=d\log\left(1+\frac{R_\theta}{a_\varepsilon}\right)
=d\log\left(1+\frac{2R_\theta L_\theta}{\varepsilon^2}\right).
\label{rtf:eq:learned-scales}
\end{equation}
Draw $\Xi\sim\operatorname{Unif}([-a_\varepsilon,a_\varepsilon]^d)$ independently of $S$, and let $\Theta_0:=\widehat\theta+\Xi$.

\begin{algorithm}[htbp]
\caption{Two-stage empirical HUC-Boost with an initializer learned from the same sample}
\label{alg:randomized-boost}
\begin{algorithmic}[1]
\Statex \textbf{Input:} $S$, initial fitting rule $\mathsf A$, constants in Assumption~\ref{ass:random-init-main}, $0<\varepsilon\le1$, audit threshold $\tau$, choice of adaptive or fixed step size, and $\alpha_{\mathrm{fix}}$ for the fixed step size
\State $\widehat\theta\gets\mathsf A(S)$
\State Draw $\Xi\sim\operatorname{Unif}([-a_\varepsilon,a_\varepsilon]^d)$ independently of $S$
\State $\Theta_0\gets\widehat\theta+\Xi$; $f_0\gets f_{0,\Theta_0}$
\State Run Algorithm~\ref{alg:empirical-huc-boost} from $f_0$ using the same sample $S$, threshold $\tau$, and the selected step-size rule
\State \Return its output $\widehat f$ and $\Theta_0$
\end{algorithmic}
\end{algorithm}

\begin{theorem}[Termination of fixed-step two-stage HUC-Boost from a learned initializer]
\label{thm:learned-initializer-stopping}
Under Assumption~\ref{ass:random-init-main}, Algorithm~\ref{alg:randomized-boost} using the fixed step size with $\tau=3\varepsilon/4$ and $\alpha_{\mathrm{fix}}=\alpha_\varepsilon$ terminates almost surely and satisfies
\begin{equation}
N_{\mathrm{upd}}\le T_\varepsilon^{\mathrm{learn}}
:=\left\lceil\frac{4D_0}{\varepsilon^2}\right\rceil+4,
\qquad
\widehat{\HUC}(\widehat f;\gC,\gU)\le\frac{3\varepsilon}{4}.
\label{rtf:eq:learned-stopping}
\end{equation}
\end{theorem}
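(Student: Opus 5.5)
The plan is to reduce everything to the fixed-step descent bound of Corollary~\ref{cor:empirical-fixed-step-descent}, applied pathwise. The argument has two parts. First, bound the empirical log loss of the randomized initializer $f_0=f_{0,\Theta_0}$ by $D_0$ plus a perturbation term of order $\varepsilon^2$. Second, charge each update a fixed decrease in empirical log loss and use nonnegativity of the loss. No probabilistic argument is needed: the claims hold for every sample $S$ and every realization of $\Xi$, hence almost surely.

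\emph{Step 1: loss of the perturbed initializer.} Since $\Xi\in[-a_\varepsilon,a_\varepsilon]^d$, we have $\|\Theta_0-\widehat\theta\|_\infty\le a_\varepsilon$. By Eq.~\eqref{eq:random-lipschitz-main} this gives
\[
\sup_x\|s_{\Theta_0}(x)-s_{\widehat\theta}(x)\|_\infty\le L_\theta a_\varepsilon=\varepsilon^2/2.
\]
Next I will use that $-\log\operatorname{softmax}(s)_y=-s_y+\log\sum_k e^{s_k}$. If every coordinate of $s$ moves by at most $\eta$, then $s_y$ moves by at most $\eta$, and the log-sum-exp term also moves by at most $\eta$, since it is monotone and shifts exactly by $\eta$ under a uniform shift. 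Hence the pointwise log loss changes by at most $2\eta=\varepsilon^2$. Averaging over $S$ and using Eq.~\eqref{eq:random-fit-main} yields
\[
\widehat{\mathcal L}_{\log,S}(f_0)\le D_0+\varepsilon^2.
\]
Finiteness of $s_\theta$ also makes $f_0$ strictly positive, so Algorithm~\ref{alg:empirical-huc-boost} is well defined from this initializer.

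\emph{Step 2: per-update decrease.} At each update the selected candidate satisfies $|\widehat\Gamma_t|>\tau=3\varepsilon/4$. The fixed step is $\alpha_t=\operatorname{sgn}(\widehat\Gamma_t)\,\varepsilon/2$. Corollary~\ref{cor:empirical-fixed-step-descent}, which rests on the curvature bound $\Lambda\le1$ of Lemma~\ref{lem:boost-curvature} applied to the empirical average, then gives a decrease larger than
\[
\tfrac{\varepsilon}{2}\cdot\tfrac{3\varepsilon}{4}-\tfrac{\varepsilon^2}{8}=\tfrac{\varepsilon^2}{4}.
\]
Proposition~\ref{prop:boost-leaf-rescaling} keeps all leaf probabilities positive, so the empirical loss remains finite and nonnegative along the run.

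\emph{Step 3: counting updates.} If $m$ updates occur, telescoping the decreases gives
\[
m\,\varepsilon^2/4<\widehat{\mathcal L}_{\log,S}(f_0)\le D_0+\varepsilon^2,
\]
so $m<4D_0/\varepsilon^2+4$, which implies $m\le\lceil 4D_0/\varepsilon^2\rceil+4$. Hence the loop terminates. At termination no candidate has $|\widehat\Gamma_{c,u,I,v}(\widehat f)|>3\varepsilon/4$. Since the interval supremum is attained on the finite sample (Appendix~\ref{subsec:exact-interval-scan}), the stopping test is exactly $\widehat{\HUC}(\widehat f;\gC,\gU)\le3\varepsilon/4$.

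The main obstacle is Step~1, specifically the factor $2$ in the Lipschitz constant of the log-softmax loss. That factor is exactly what produces the additive $+4$ in $T_\varepsilon^{\mathrm{learn}}$, and the choice $a_\varepsilon=\varepsilon^2/(2L_\theta)$ is calibrated to it. I would verify this bound carefully, using the log-sum-exp monotonicity argument rather than a gradient-norm bound.
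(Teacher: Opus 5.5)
Your proposal is correct and matches the paper's proof. The paper bounds $\widehat{\mathcal L}_{\log,S}(f_{0,\Theta_0})\le D_0+\varepsilon^2$ using the same $2$-Lipschitz property of the log-softmax loss (Lemma~\ref{rtf:lem:initial-loss}). It then applies the pathwise fixed-step stopping result (Theorem~\ref{thm:fixed-initializer-stopping}), whose $\varepsilon^2/4$-per-update telescoping you have written out inline.

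Two small precisions. First, the conclusion holds on the full-measure event where Eq.~\eqref{eq:random-fit-main} holds; that is what ``almost surely'' refers to, not literally every $S$. Second, you do not need attainment of the interval supremum at termination: if every candidate satisfies $|\widehat\Gamma_{c,u,I,v}(\widehat f)|\le 3\varepsilon/4$, then $\widehat{\HUC}(\widehat f;\gC,\gU)\le3\varepsilon/4$ already follows.
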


\begin{theorem}[Population HUC guarantee for fixed-step two-stage HUC-Boost from a learned initializer]
\label{thm:learned-initializer-sample}
Run Algorithm~\ref{alg:randomized-boost} using the fixed step size with $\tau=3\varepsilon/4$ and $\alpha_{\mathrm{fix}}=\alpha_\varepsilon$ under Assumption~\ref{ass:random-init-main}, and let $0<\delta<1$. If
\begin{equation}
n\ge\frac{512}{\varepsilon^2}
\left[
\Lambda_\varepsilon
+(T_\varepsilon^{\mathrm{learn}}+1)\log(2N_{\mathrm{cand}}(2n+1)^2)
+\log\frac1\delta
\right],
\label{rtf:eq:learned-sample-main}
\end{equation}
then, under the joint distribution of $S$ and $\Xi$,
\begin{equation}
\mathbb{P}_{S,\Xi}(\HUC(\widehat f;\gC,\gU)\le\varepsilon)\ge1-\delta.
\label{rtf:eq:learned-result-main}
\end{equation}
\end{theorem}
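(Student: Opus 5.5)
The plan is to reduce the population guarantee to a uniform deviation bound over a data-dependent but \emph{combinatorially small} family of candidate outputs. Theorem~\ref{thm:learned-initializer-stopping} guarantees $\widehat{\HUC}(\widehat f;\gC,\gU)\le 3\varepsilon/4$, so it suffices to show that, with probability at least $1-\delta$ over $(S,\Xi)$,
\[
\sup_{(c,u,v),\,I}\bigl|\widehat\Gamma_{c,u,I,v}(\widehat f)-\Gamma_{c,u,I,v}(\widehat f)\bigr|\le \varepsilon/4,
\]
and then apply Eq.~\eqref{eq:difference-of-suprema}.

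The first step re-derives the update bound $T_\varepsilon^{\mathrm{learn}}$ so that the transcript length is deterministic. The perturbation $\Xi$ moves $s_\theta$ by at most $L_\theta a_\varepsilon=\varepsilon^2/2$ in sup norm, by Eq.~\eqref{eq:random-lipschitz-main}. The log-softmax is $2$-Lipschitz in the sup norm of its logits, so $\widehat{\mathcal L}_{\log,S}(f_{0,\Theta_0})\le D_0+\varepsilon^2$. Corollary~\ref{cor:empirical-fixed-step-descent} gives a decrease larger than $\varepsilon^2/4$ per update, which yields at most $\lceil 4D_0/\varepsilon^2\rceil+4$ updates.

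Given $\Theta_0$, the output $\widehat f$ is determined by a transcript: a sequence of at most $T_\varepsilon^{\mathrm{learn}}$ tuples $(c_t,u_t,v_t,\sigma_t,I_t)$. Here $I_t$ is the closed interval returned by Algorithm~\ref{alg:exact-interval-scan}, with endpoints at observed scores.

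For the uniform bound I would use ghost-sample symmetrization, as in the proof of Theorem~\ref{thm:finite-sample-main}, but on a double sample $S\cup S'$ of size $2n$. On $2n$ points, intervals of a fixed score induce at most $(2n+1)^2$ distinct membership patterns. Combining this with the $N_{\mathrm{cand}}$ triples and the sign gives at most $2N_{\mathrm{cand}}(2n+1)^2$ distinct behaviours per step on the double sample. Over the $T_\varepsilon^{\mathrm{learn}}$ steps plus the final audit query, this gives at most $(2N_{\mathrm{cand}}(2n+1)^2)^{T_\varepsilon^{\mathrm{learn}}+1}$ distinct (predictor, audit) restrictions to the $2n$ points.

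For each fixed restriction, the swap-symmetrized difference is a bounded-difference sum with increments at most $4/n$, because the envelope is $|A|\le 2$ (Eq.~\eqref{eq:finite-sample-envelope}). Hoeffding's inequality then gives a tail of order $\exp(-n\varepsilon^2/512)$ at deviation $\varepsilon/4$. I chose the constants to match the factor $512/\varepsilon^2$ in Eq.~\eqref{rtf:eq:learned-sample-main}. The logarithm of the count supplies the $(T_\varepsilon^{\mathrm{learn}}+1)\log(2N_{\mathrm{cand}}(2n+1)^2)$ term.

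The continuous parameter $\Theta_0$ cannot be union-bounded. I would handle it by a PAC-Bayes change of measure. Take as prior the uniform law on $[-R_\theta-a_\varepsilon,R_\theta+a_\varepsilon]^d$, which is independent of $S$. The posterior is the law of $\Theta_0=\widehat\theta+\Xi$, uniform on a box of side $2a_\varepsilon$ contained in the prior box. The KL divergence is exactly
\[
d\log\bigl((R_\theta+a_\varepsilon)/a_\varepsilon\bigr)=\Lambda_\varepsilon .
\]
Under the prior, $\Theta_0$ is independent of $S$, so for each fixed $\theta$ the transcript and union argument above applies with $f_0=f_{0,\theta}$ fixed. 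The bound is then transferred to the posterior via the Donsker--Varadhan inequality. The result is an exponent of the form $\Lambda_\varepsilon+(T+1)\log(\cdot)+\log(1/\delta)$, which is exactly Eq.~\eqref{rtf:eq:learned-sample-main}.

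I expect the main obstacle to be making the symmetrization rigorous when the transcript depends on the sample. The intervals $I_t$ have endpoints at sample points, and the predictor $f_t$ used to score later steps is itself sample dependent. I would therefore run the algorithm conceptually on a fixed double sample and condition on it. I would then verify that every quantity the algorithm can produce, restricted to those $2n$ points, ranges over a set of at most the stated size, independently of the random swap signs. Only after that would I apply the union bound over this finite set together with the per-element Hoeffding bound.
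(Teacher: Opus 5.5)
Your plan has the same architecture as the paper's proof. The perturbed initial loss $D_0+\varepsilon^2$ and the per-step decrease $\varepsilon^2/4$ give $T_\varepsilon^{\mathrm{learn}}$. A ghost sample, together with the count of at most $\{2N_{\mathrm{cand}}(2n+1)^2\}^{T+1}$ audit-value vectors on the $2n$ points, controls the uniform gap for each fixed initial parameter. A change of measure to the $S$-independent law $\Pi_\varepsilon=\operatorname{Unif}([-R_\theta-a_\varepsilon,R_\theta+a_\varepsilon]^d)$ then costs $\Lambda_\varepsilon$.

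\textbf{The gap: the Donsker--Varadhan transfer.} This step does not deliver the theorem as you state it. The inequality $\mathbb E_{Q_S}[\phi]\le\mathrm{KL}(Q_S\Vert\Pi_\varepsilon)+\log\mathbb E_{\Pi_\varepsilon}[e^{\phi}]$, followed by Markov over $S$, controls only the posterior average $\mathbb E_{\Xi}[\operatorname{Gap}_T(f_{0,\Theta_0},S)\mid S]$. The theorem, however, is a tail statement about the realized draw $\Theta_0$ under $\mathbb P_{S,\Xi}$. Converting KL information into such a tail bound costs a factor $1/\delta$ on the divergence. In general one only gets $\mathbb P_{S,\Xi}(B)\le(\Lambda_\varepsilon+\log 2)/\log\bigl(1/(\mathbb P^n\otimes\Pi_\varepsilon)(B)\bigr)$, which requires $\log\bigl(1/(\mathbb P^n\otimes\Pi_\varepsilon)(B)\bigr)\gtrsim\Lambda_\varepsilon/\delta$. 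That is not the additive $\Lambda_\varepsilon+\log(1/\delta)$ in the sample condition, and KL alone cannot do better.

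\textbf{The fix.} You need a change of measure through the supremum of the density ratio, as in the paper's Lemma~\ref{rtf:lem:domination}. Let $k_S$ be the conditional density of $\Theta_0$ given $S$ and $\pi_\varepsilon$ the density of $\Pi_\varepsilon$. Then $k_S$ equals $(2a_\varepsilon)^{-d}$ on a box inside the prior box, so $k_S(\theta)\le(1+R_\theta/a_\varepsilon)^d\pi_\varepsilon(\theta)=e^{\Lambda_\varepsilon}\pi_\varepsilon(\theta)$ pointwise. Hence $\mathbb E_{S,\Xi}[\Phi(S,\Theta_0)]\le e^{\Lambda_\varepsilon}\int\mathbb E_S[\Phi(S,\theta)]\,\Pi_\varepsilon(d\theta)$ for every nonnegative measurable $\Phi$. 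Apply this to $\Phi=e^{\lambda\operatorname{Gap}_T(f_{0,\theta},S)}$, or directly to the indicator of the bad event. Your KL computation gives the right number only because the ratio is constant on its support, so $\mathrm{KL}$ and $\log\sup_\theta k_S(\theta)/\pi_\varepsilon(\theta)$ coincide here; it is the latter that carries the argument.

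\textbf{Two smaller points.} First, you do not need to condition on a run of the algorithm to make the symmetrization rigorous. As in Definition~\ref{rtf:def:reachable-gap}, for a fixed initializer take the class of all predictors reachable in at most $T$ fixed-step updates with arbitrary $(c,u,v)$, arbitrary sign, and arbitrary $I\in\gI$. Restricting to intervals with endpoints at observed scores would make the class $S$-dependent. Also, Algorithm~\ref{alg:empirical-huc-boost} may select any violating candidate, not only the scan's maximizer. This class is fixed independently of $S$ and contains $\widehat f$. Its uniform gap bounds $|\HUC(f;\gC,\gU)-\widehat{\HUC}(f;\gC,\gU)|$. Symmetrization is then standard, and the counting concerns restrictions of this fixed class to $S\cup S'$, which is what your last paragraph amounts to.

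Second, to get exactly $512/\varepsilon^2$, symmetrize the exponential moment by conditional Jensen rather than the tail probability, as in Lemma~\ref{rtf:lem:fixed-mgf}. Bounding it by Hoeffding's lemma and $e^{\max_j a_j}\le\sum_j e^{a_j}$ gives $\mathbb E_S e^{\lambda\operatorname{Gap}}\le L\,e^{8\lambda^2/n}$; then apply Markov with $\lambda=n\rho/16$. The tail-probability ghost-sample lemma halves the deviation and adds a factor $2$, which degrades the constant.
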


Since $T_\varepsilon^{\mathrm{learn}}+1\le4D_0/\varepsilon^2+6$, condition~Eq.~\eqref{rtf:eq:learned-sample-main} is implied by
\[
n\ge\frac{512}{\varepsilon^2}
\left[
d\log\left(1+\frac{2R_\theta L_\theta}{\varepsilon^2}\right)
+\left(\frac{4D_0}{\varepsilon^2}+6\right)\log(2N_{\mathrm{cand}}(2n+1)^2)
+\log\frac1\delta
\right].
\]
When $D_0,R_\theta,L_\theta,d,N_{\mathrm{cand}}$ do not depend on $\varepsilon$, a sufficient condition with no $n$ on the right-hand side is given in Appendix~\ref{rtf:sec:explicit-sample}, especially Corollary~\ref{rtf:cor:explicit-generic}. A sufficient sample-size order is
\[
n=\widetilde O\!\left(\frac{D_0+1}{\varepsilon^4}+\frac{d+\log(1/\delta)}{\varepsilon^2}\right).
\]
Thus, for fixed $\delta$, learning the initial recalibration parameter from the same audit sample retains the same $n=\widetilde O(\varepsilon^{-4})$ dependence on accuracy as the fixed-initializer case. If pretraining uses independent data and its output satisfies Eq.~\eqref{rtf:eq:fixed-initial-loss} with $D_{\mathrm{fix}}$ independent of $\varepsilon$, conditioning on this output and applying Theorem~\ref{thm:fixed-initializer-sample} gives the same $\widetilde O(\varepsilon^{-4})$ sufficient order.

\subsection{Proofs of Theorems H.1 and H.2 for Fixed-Step HUC-Boost}
\label{rtf:sec:fixed-proof}
We first prove the stopping theorem using the one-step empirical log-loss decrease in Appendix~\ref{subsec:huc-boost-theorem-proof}.

\begin{proof}[Proof of Theorem~\ref{thm:fixed-initializer-stopping}]
Whenever the fixed-step specialization of Algorithm~\ref{alg:empirical-huc-boost} updates, $|\widehat{\Gamma}_{c_t,u_t,I_t,v_t}(f_t)|>3\varepsilon/4$. Substituting $\alpha=\alpha_\varepsilon=\varepsilon/2$ into~Eq.~\eqref{eq:empirical-fixed-step-descent} gives
\[
\widehat{\mathcal L}_{\log,S}(f_{t+1})
<\widehat{\mathcal L}_{\log,S}(f_t)-\frac{\varepsilon^2}{4}.
\]
After $t$ updates,
\[
0\le\widehat{\mathcal L}_{\log,S}(f_t)
<\widehat{\mathcal L}_{\log,S}(f_0)-\frac{t\varepsilon^2}{4}.
\]
If updates continued until $t=\lceil4\widehat{\mathcal L}_{\log,S}(f_0)/\varepsilon^2\rceil$, the right-hand side would be nonpositive, a contradiction. Thus the update count satisfies~Eq.~\eqref{rtf:eq:fixed-stopping-realized}. The stopping rule also gives $\widehat{\HUC}(\widehat f;\gC,\gU)\le3\varepsilon/4$.
\end{proof}

Under~Eq.~\eqref{rtf:eq:fixed-initial-loss}, the stopping theorem shows that the output is reached in at most $T_\varepsilon^{\mathrm{fix}}$ updates and has empirical HUC at most $3\varepsilon/4$. It remains to control, uniformly over all predictors reachable in at most $T_\varepsilon^{\mathrm{fix}}$ adaptively selected fixed-step updates, the difference between population and empirical HUC by $\varepsilon/4$.

\begin{definition}[Reachable audit class and uniform generalization gap]
\label{rtf:def:reachable-gap}
Fix a positive initial predictor $f_0$ and an integer $T\ge0$. Let $\mathcal F_{\le T}(f_0)$ be the set of all predictors obtainable from $f_0$ by applying at most $T$ updates of Algorithm~\ref{alg:empirical-huc-boost} in the fixed-step specialization, allowing arbitrary choices of $(c,u,I,v)$ and update signs at each step. Define
\begin{align}
\mathcal G_{\le T}(f_0)
:=\Bigl\{(x,y)\mapsto{}&
\varsigma c(x)\mathbf 1\{\mu_u(f(x))\in I\}
\langle\boldsymbol\Delta_{u,v}(f(x)),\boldsymbol R_v(f(x),y)\rangle:\notag\\[-1mm]
&f\in\mathcal F_{\le T}(f_0),\ c\in\gC,\ u\in\gU,\ I\in\gI,\
v\in\mathcal V_{\gT}(u),\ \varsigma\in\{-1,1\}\Bigr\}
\label{rtf:eq:reachable-audit-class}
\end{align}
and
\begin{equation}
\operatorname{Gap}_T(f_0,S)
:=\sup_{g\in\mathcal G_{\le T}(f_0)}
\left\{\mathbb{E}[g(X,Y)]-\frac1n\sum_{i=1}^n g(X_i,Y_i)\right\}.
\label{rtf:eq:uniform-gap}
\end{equation}
\end{definition}

Because $\varsigma$ is included, every $f\in\mathcal F_{\le T}(f_0)$ satisfies
\begin{equation}
\sup_{\substack{c\in\gC,\,u\in\gU,\,I\in\gI\\v\in\mathcal V_{\gT}(u)}}
|\Gamma_{c,u,I,v}(f)-\widehat{\Gamma}_{c,u,I,v}(f)|
\le\operatorname{Gap}_T(f_0,S).
\label{rtf:eq:moment-by-gap}
\end{equation}
Consequently, by bounding the difference of two suprema by the supremum of the pointwise differences,
\begin{equation}
|\HUC(f;\gC,\gU)-\widehat{\HUC}(f;\gC,\gU)|
\le\operatorname{Gap}_T(f_0,S).
\label{rtf:eq:huc-by-gap}
\end{equation}

The remainder of the proof of Theorem~\ref{thm:fixed-initializer-sample} has three steps. Step 1 counts the value vectors attained by $\mathcal G_{\le T}(f_0)$ on the finite sequence formed by the sample and an independent ghost sample. Step 2 uses this count and Rademacher signs to bound the exponential moment of the uniform generalization gap and derive a high-probability bound. Step 3 then uses the terminal empirical HUC.

\paragraph{Step 1: count reachable audit-value vectors on a finite sequence.}
After symmetrizing~Eq.~\eqref{rtf:eq:uniform-gap}, it is enough to distinguish audit functions on the finite sequence consisting of the original and ghost samples. Since an audit value is determined by the final prediction, we first count the possible prediction-value vectors at each iteration.

\begin{lemma}[Reachable value vectors on a finite sequence]
\label{rtf:lem:reachable-values}
For every fixed positive initial predictor $f_0$, finite sequence $z_{1:m}=((x_k,y_k))_{k=1}^m$, integer $T\ge0$,
\begin{align}
\left|\left\{(f(x_k))_{k=1}^m:f\in\mathcal F_{\le T}(f_0)\right\}\right|
&\le\{2N_{\mathrm{cand}}(m+1)^2\}^{T},
\label{rtf:eq:reachable-prediction-values}\\
\left|\left\{(g(z_k))_{k=1}^m:g\in\mathcal G_{\le T}(f_0)\right\}\right|
&\le\{2N_{\mathrm{cand}}(m+1)^2\}^{T+1}.
\label{rtf:eq:reachable-audit-values}
\end{align}
\end{lemma}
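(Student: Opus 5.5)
The plan is to prove \eqref{rtf:eq:reachable-prediction-values} by induction on $T$, using a ``finite-state'' view of a single fixed-step update restricted to the sequence $x_{1:m}$. Then \eqref{rtf:eq:reachable-audit-values} follows from one more application of the same counting. Throughout, write $B:=2N_{\mathrm{cand}}(m+1)^2$.

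\textbf{Locality of one update on the sequence.} Fix a current predictor $f$ and a choice $(c,u,I,v,\sigma)$ with $v\in\mathcal V_{\gT}(u)$. By Proposition~\ref{prop:boost-leaf-rescaling}, the updated prediction at $x$ is
\[
f^+(x)_y=f(x)_y\,e^{\sigma\alpha_\varepsilon h_j(x)}\big/Z(x)\quad\text{for } y\in\gY_{v_j},\qquad f^+(x)_y=f(x)_y \text{ otherwise},
\]
where $h(x)=c(x)\mathbf 1\{\mu_u(f(x))\in I\}\boldsymbol\Delta_{u,v}(f(x))$ and $Z(x)=\sum_k q_{v,k}(f(x))e^{\sigma\alpha_\varepsilon h_k(x)}$. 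The branch probabilities here are recovered from $f(x)$ via Lemma~\ref{lem:softmax-branch-subtree-consistency}. Hence $f^+(x_k)$ is a deterministic function of four things: the point $x_k$, the old value $f(x_k)$, the triple $(c,u,v)$ with the sign $\sigma$, and the single bit $\mathbf 1\{\mu_u(f(x_k))\in I\}$. So, given the old value vector $(f(x_k))_{k=1}^m$, the new value vector is determined by $(c,u,v,\sigma)$ and by the subset $\{k:\mu_u(f(x_k))\in I\}$. The interval $I$ enters only through this subset.

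\textbf{Counting choices per step.} The scores $\mu_u(f(x_k))$ are fixed by the old vector. An interval therefore selects either the empty set or all points whose scores lie in a contiguous block of distinct score values, exactly as in Appendix~\ref{subsec:exact-interval-scan}. This gives at most $m(m+1)/2+1\le(m+1)^2$ subsets. Combined with at most $N_{\mathrm{cand}}$ triples $(c,u,v)$ and two signs, each step has at most $B$ successor vectors from a given vector. The case ``at most $T$'' updates is handled by padding: an update with $I=\varnothing$ gives $h\equiv0$ and leaves $f$ unchanged. Every sequence of fewer than $T$ updates is thus realized, on the sequence, by a length-$T$ sequence of admissible choices. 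Adaptivity of the selection is irrelevant, because all choice paths are counted. Induction from the single vector $(f_0(x_k))_k$ then gives at most $B^T$ reachable prediction vectors, which is \eqref{rtf:eq:reachable-prediction-values}.

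\textbf{Audit values.} For $g\in\mathcal G_{\le T}(f_0)$, each value $g(z_k)$ is determined by $x_k$, $y_k$, $f(x_k)$, $(c,u,v,\varsigma)$, and the bit $\mathbf 1\{\mu_u(f(x_k))\in I\}$. Given a reachable prediction vector, the same count gives at most $B$ audit vectors, hence at most $B^{T+1}$ in total, which is \eqref{rtf:eq:reachable-audit-values}.

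The main point needing care is the locality step: the update at $x_k$ must depend on $f$ only through $f(x_k)$, and the interval only through the selected subset. Both follow because $\boldsymbol\Delta_{u,v}$, $\boldsymbol q_v$ and $\mu_u$ are functions of the leaf vector alone, together with the tie-preserving block structure of interval selections.
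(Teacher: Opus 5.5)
Your proposal is correct and follows essentially the same route as the paper. The per-step branching factor $2N_{\mathrm{cand}}(m+1)^2$ comes from the triple $(c,u,v)$, the sign, and the at most $(m+1)^2$ interval membership sets (contiguous blocks of distinct scores); induction from the single initial vector, padded with empty-interval updates, handles ``at most $T$'' updates; and one more factor of the same size accounts for the audit vectors. Your explicit locality step, that the update at $x_k$ depends on $f$ only through $f(x_k)$ via the leaf-rescaling formula, is exactly what the paper uses when it says the next prediction values are uniquely determined by these choices.
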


\begin{proof}
Let $\mathcal P_t$ be the set of prediction-value vectors on $x_1,\ldots,x_m$ obtainable after exactly $t$ updates, and write $L_t:=|\mathcal P_t|$. Since the initial predictor is fixed, $L_0=1$.

Fix a current vector $\boldsymbol p=(p_1,\ldots,p_m)\in\mathcal P_t$. There are $N_{\mathrm{cand}}$ choices of $(c,u,v)$ and two update signs. Once these are fixed, $c(x_k)$, $\mu_u(p_k)$, $\boldsymbol\Delta_{u,v}(p_k)$, and the current branch probabilities are determined at every $k$. The remaining information supplied by the interval $I$ is only its membership set
$\{k\in[m]:\mu_u(p_k)\in I\}$.
Set $s_k:=\mu_u(p_k)$ and order its distinct values as $r_1<\cdots<r_q$. If an interval contains $r_a$ and $r_b$ with $a<\ell<b$, then $r_a<r_\ell<r_b$, so the interval property forces it to contain $r_\ell$. Thus $I\cap\{r_1,\ldots,r_q\}$ is either empty or a consecutive rank block $\{r_a,\ldots,r_b\}$ for some $1\le a\le b\le q$. Conversely, every such block is realized by the closed interval $[r_a,r_b]$. Interval membership therefore depends only on ranks, not on distances between scores, and observations with equal scores are always selected together. The nonempty blocks correspond to endpoint pairs $(a,b)$ and number $q(q+1)/2$. Including the empty set, the number of membership sets is at most
$1+\frac{q(q+1)}2\le(q+1)^2\le(m+1)^2$.

For a fixed current prediction vector, fixing $(c,u,v)$, the sign, and the interval membership set uniquely determines the next prediction value at every $x_k$ through the fixed-step branch of Algorithm~\ref{alg:empirical-huc-boost}. Hence
\begin{equation}
L_{t+1}\le2N_{\mathrm{cand}}(m+1)^2L_t.
\label{rtf:eq:reachable-recursion}
\end{equation}
Induction from $L_0=1$ gives $L_T\le\{2N_{\mathrm{cand}}(m+1)^2\}^T$. On the chosen finite sequence, an interval with an empty membership set leaves all prediction values unchanged. Thus every value vector reached within $T$ steps on these points is represented by a length-$T$ sequence, proving Eq.~\eqref{rtf:eq:reachable-prediction-values}.

Once the final prediction vector is fixed, a signed audit-value vector is determined by a choice of $(c,u,v)$, an interval membership set, and $\varsigma$. Counting these choices gives Eq.~\eqref{rtf:eq:reachable-audit-values}, with at most $2N_{\mathrm{cand}}(m+1)^2$ possibilities.
\end{proof}

\paragraph{Step 2: symmetrize the uniform generalization gap.}
\begin{lemma}[Exponential moment for a fixed initializer]
\label{rtf:lem:fixed-mgf}
For every fixed positive initial predictor $f_0$, integer $T\ge0$, and $\lambda>0$,
\begin{equation}
\mathbb{E}_{S\sim\mathbb{P}^n}\left[\exp(\lambda\operatorname{Gap}_T(f_0,S))\right]
\le
\{2N_{\mathrm{cand}}(2n+1)^2\}^{T+1}
\exp\left(\frac{8\lambda^2}{n}\right).
\label{rtf:eq:fixed-mgf}
\end{equation}
\end{lemma}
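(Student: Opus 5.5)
The plan is a standard ghost-sample symmetrization followed by a finite-class Hoeffding (Massart-type) argument, with the finiteness supplied by Lemma~\ref{rtf:lem:reachable-values} applied to the concatenated sequence of length $m=2n$.

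\textbf{Symmetrization.} Let $S'=(Z_1',\ldots,Z_n')$ be an independent ghost sample. For each fixed $g$, $\mathbb{E}[g(X,Y)]=\mathbb{E}_{S'}[\frac1n\sum_i g(Z_i')]$. Hence
\[
\operatorname{Gap}_T(f_0,S)\le\mathbb{E}_{S'}\Bigl[\sup_{g\in\mathcal G_{\le T}(f_0)}\frac1n\sum_{i=1}^n\{g(Z_i')-g(Z_i)\}\Bigr].
\]
Since $x\mapsto e^{\lambda x}$ is convex, Jensen's inequality gives
\[
\mathbb{E}_S[e^{\lambda\operatorname{Gap}_T}]\le\mathbb{E}_{S,S'}\Bigl[\exp\Bigl(\lambda\sup_g\tfrac1n\textstyle\sum_i\{g(Z_i')-g(Z_i)\}\Bigr)\Bigr].
\]
Swapping $Z_i$ and $Z_i'$ leaves the joint law invariant. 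The class $\mathcal G_{\le T}(f_0)$ is defined from $f_0$ alone, not from the sample, so the swap also leaves the class unchanged. We can therefore insert independent Rademacher signs $\xi_i$ and condition on $(S,S')$.

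\textbf{Reduction to a finite set.} Conditional on the $2n$ points $z_{1:2n}=(Z_1,\ldots,Z_n,Z_1',\ldots,Z_n')$, the quantity $\sum_i\xi_i\{g(Z_i')-g(Z_i)\}$ depends on $g$ only through its value vector on $z_{1:2n}$. By Eq.~\eqref{rtf:eq:reachable-audit-values} with $m=2n$, at most
\[
M:=\{2N_{\mathrm{cand}}(2n+1)^2\}^{T+1}
\]
distinct value vectors occur. Two points need care here:
\begin{itemize}
\item The lemma counts all reachable $f$ from the fixed $f_0$ under arbitrary choices, not only data-selected ones. Counting on the concatenated sequence is therefore valid even though the actual algorithm's choices depend on $S$.
\item The supremum of the exponential is bounded by the sum over the $M$ representatives. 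This gives $\mathbb{E}_\xi[\exp(\sup_g\cdots)]\le\sum_{\text{vectors}}\mathbb{E}_\xi[\exp(\cdots)]$.
\end{itemize}

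\textbf{Hoeffding step.} For one fixed vector, the summands $\xi_i\{g(Z_i')-g(Z_i)\}$ are independent, mean zero, and lie in $[-4,4]$, because $|g|\le2$ by the envelope bound Eq.~\eqref{eq:finite-sample-envelope} together with $|\varsigma|=1$. Hoeffding's lemma gives, for each term,
\[
\mathbb{E}_\xi\Bigl[\exp\bigl(\tfrac{\lambda}{n}\xi_i a_i\bigr)\Bigr]\le\exp\bigl(\lambda^2 a_i^2/(2n^2)\bigr)\le\exp\bigl(8\lambda^2/n^2\bigr).
\]
Taking the product over the $n$ terms gives $\exp(8\lambda^2/n)$. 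Multiplying by the count $M$ and taking the outer expectation over $(S,S')$ yields Eq.~\eqref{rtf:eq:fixed-mgf}.

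\textbf{Main obstacle.} The delicate step is justifying that the Rademacher symmetrization is legitimate for an adaptively reached class. This is resolved by working with the full a priori class $\mathcal G_{\le T}(f_0)$, whose definition is independent of the data. One should also confirm measurability of the supremum. Within the sample it reduces to a maximum over finitely many value vectors, and the paper's fixed tie-breaking conventions handle this.
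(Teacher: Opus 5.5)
Your proposal is correct and follows the paper's proof essentially step for step. Both arguments use ghost-sample symmetrization with conditional Jensen, reduce to at most $\{2N_{\mathrm{cand}}(2n+1)^2\}^{T+1}$ value vectors by applying Lemma~\ref{rtf:lem:reachable-values} to the $2n$ combined points, and then apply Hoeffding's lemma with $|g(Z_i')-g(Z_i)|\le 4$, bounding the exponential of a maximum by a sum.

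Your closing measurability remark is the only loose point: the tie-breaking conventions make the individual utilities measurable, not the supremum over the class. The paper leaves this implicit as well, so it does not affect the comparison.
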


\begin{proof}
Write $\mathcal G:=\mathcal G_{\le T}(f_0)$, which is independent of $S$.
Let $S'=(Z_1',\ldots,Z_n')\sim\mathbb{P}^n$ be an independent ghost sample,
and let $\xi_1,\ldots,\xi_n$ be independent Rademacher signs, independent of $S,S'$.
Conditional Jensen's inequality and exchangeability of each pair $(Z_i,Z_i')$ give
\[
\mathbb{E}_S e^{\lambda\operatorname{Gap}_T(f_0,S)}
\le\mathbb{E}_{S,S',\xi}\exp\left(
\frac{\lambda}{n}\sup_{g\in\mathcal G}\sum_{i=1}^n\xi_i[g(Z_i')-g(Z_i)]
\right).
\]
Conditional on $S,S'$, Lemma~\ref{rtf:lem:reachable-values} reduces this supremum
to $L\le\{2N_{\mathrm{cand}}(2n+1)^2\}^{T+1}$ value vectors.
Choose representatives $g_1,\ldots,g_L$ and set $b_{j,i}:=g_j(Z_i')-g_j(Z_i)$;
then $|b_{j,i}|\le4$ because $|g_j|\le2$.
Hoeffding's lemma and $e^{\max_j a_j}\le\sum_j e^{a_j}$ yield
\[
\mathbb{E}_\xi\exp\left(\frac{\lambda}{n}\max_{j\in[L]}\sum_{i=1}^n\xi_i b_{j,i}\right)
\le\sum_{j=1}^L\exp\left(\frac{\lambda^2}{2n^2}\sum_{i=1}^n b_{j,i}^2\right)
\le L e^{8\lambda^2/n}.
\]
Taking expectations over $S,S'$ proves Eq.~\eqref{rtf:eq:fixed-mgf}.
\end{proof}

\begin{lemma}[Uniform generalization gap for a fixed initializer]
\label{rtf:lem:fixed-deviation}
For every fixed positive initial predictor $f_0$, integer $T\ge0$, and $0<\delta<1$, with probability at least $1-\delta$,
\begin{equation}
\operatorname{Gap}_T(f_0,S)
\le4\sqrt{\frac2n\left[(T+1)\log(2N_{\mathrm{cand}}(2n+1)^2)+\log\frac1\delta\right]}.
\label{rtf:eq:fixed-deviation}
\end{equation}
\end{lemma}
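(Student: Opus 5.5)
The plan is to convert the exponential-moment bound of Lemma~\ref{rtf:lem:fixed-mgf} into a tail bound with Markov's inequality (a Chernoff bound), and then optimize over $\lambda$. Write $A:=(T+1)\log(2N_{\mathrm{cand}}(2n+1)^2)$. For any $\lambda>0$ and $s>0$, Markov's inequality applied to $\exp(\lambda\operatorname{Gap}_T(f_0,S))$ gives
\[
\mathbb{P}\bigl(\operatorname{Gap}_T(f_0,S)\ge s\bigr)
\le e^{-\lambda s}\,\mathbb{E}\bigl[e^{\lambda\operatorname{Gap}_T(f_0,S)}\bigr]
\le\exp\Bigl(A+\frac{8\lambda^2}{n}-\lambda s\Bigr).
\]
The event is measurable because, by Lemma~\ref{rtf:lem:reachable-values}, the supremum effectively ranges over a finite set of value vectors on the sample. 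Alternatively, one can note that Lemma~\ref{rtf:lem:fixed-mgf} already bounds the expectation, so measurability is inherited.

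Next I set the right-hand side equal to $\delta$ and choose $\lambda$ to make $s$ as small as possible. Solving for $s$ gives
\[
s=\frac{A+\log(1/\delta)}{\lambda}+\frac{8\lambda}{n}.
\]
This is minimized at $\lambda^\ast=\sqrt{n(A+\log(1/\delta))/8}$, which is positive because $\delta<1$. The minimum value is
\[
s^\ast=2\sqrt{8(A+\log(1/\delta))/n}=4\sqrt{\tfrac{2}{n}\bigl(A+\log\tfrac1\delta\bigr)},
\]
which is exactly the right-hand side of Eq.~\eqref{rtf:eq:fixed-deviation}. Since $\lambda^\ast$ is a deterministic function of $n,T,N_{\mathrm{cand}},\delta$, it can be chosen before the sample is drawn. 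Therefore, with probability at least $1-\delta$, $\operatorname{Gap}_T(f_0,S)<s^\ast$, which proves the lemma.

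No real obstacle is expected; the argument is a routine Chernoff optimization. The only points to check are that the constant in the Hoeffding step ($|b_{j,i}|\le4$, giving $8\lambda^2/n$) is used correctly, and that the threshold is deterministic so that Markov's inequality applies directly without a union bound over $\lambda$.
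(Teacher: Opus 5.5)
Your proposal is correct and is essentially the paper's proof: Markov's inequality is applied to the exponential-moment bound of Lemma~\ref{rtf:lem:fixed-mgf}, and your optimized $\lambda^\ast=\sqrt{n(A+\log(1/\delta))/8}$ is exactly the paper's choice $\lambda=n\rho/16$ at $\rho=s^\ast$, so you get the same constant $4\sqrt{2(A+\log(1/\delta))/n}$. One small caveat: your measurability aside via finitely many sample value vectors does not cover the population term $\mathbb{E}[g(X,Y)]$ inside $\operatorname{Gap}_T$, but the paper also leaves this standard technicality implicit, so nothing in the argument depends on it.
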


\begin{proof}
Apply Markov's inequality to Lemma~\ref{rtf:lem:fixed-mgf} with $\lambda=n\rho/16$.
For every $\rho>0$,
\[
\mathbb{P}(\operatorname{Gap}_T(f_0,S)>\rho)
\le\exp\left((T+1)\log(2N_{\mathrm{cand}}(2n+1)^2)-\frac{n\rho^2}{32}\right).
\]
Setting $\rho$ to the right-hand side of Eq.~\eqref{rtf:eq:fixed-deviation}
makes this probability at most $\delta$.
\end{proof}

\paragraph{Step 3: combine empirical HUC at termination with the generalization gap.}
\begin{proof}[Proof of Theorem~\ref{thm:fixed-initializer-sample}]
Theorem~\ref{thm:fixed-initializer-stopping} and~Eq.~\eqref{rtf:eq:fixed-initial-loss} give $\widehat f\in\mathcal F_{\le T_\varepsilon^{\mathrm{fix}}}(f_0)$. They also give $\widehat{\HUC}(\widehat f;\gC,\gU)\le3\varepsilon/4$. Apply Lemma~\ref{rtf:lem:fixed-deviation} with $T=T_\varepsilon^{\mathrm{fix}}$. Under~Eq.~\eqref{rtf:eq:fixed-sample-main}, its right-hand side is at most $\varepsilon/4$, and therefore, with probability at least $1-\delta$,
\[
\HUC(\widehat f;\gC,\gU)
\le\widehat{\HUC}(\widehat f;\gC,\gU)+\operatorname{Gap}_{T_\varepsilon^{\mathrm{fix}}}(f_0,S)
\le\varepsilon.
\]
\end{proof}

\subsection{Proofs for the Fixed-Step Two-Stage Method}
\label{rtf:sec:learned-proof}
\paragraph{Proof strategy.}
For the stopping count, we bound the initial empirical log loss after perturbation and apply Theorem~\ref{thm:fixed-initializer-stopping}. For the population HUC guarantee, we first upper-bound the exponential moment of the uniform generalization gap for the sample-dependent initializer $\Theta_0$ by an exponential moment under a sample-independent reference distribution. We then integrate the fixed-initializer bound of Lemma~\ref{rtf:lem:fixed-mgf} against this reference distribution, obtain a high-probability uniform generalization bound, and combine it with the empirical HUC at termination.

\paragraph{Step 1: initial loss after perturbation and the stopping count.}
\begin{lemma}[Initial log loss after perturbation]
\label{rtf:lem:initial-loss}
Under Assumption~\ref{ass:random-init-main}, almost surely,
\begin{equation}
\widehat{\mathcal L}_{\log,S}(f_{0,\Theta_0})\le D_0+\varepsilon^2.
\label{rtf:eq:epsilon-initial-loss}
\end{equation}
\end{lemma}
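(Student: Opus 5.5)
The plan is to bound the loss of $f_{0,\Theta_0}$ pointwise by the loss of $f_{0,\widehat\theta}$ plus a small additive term, using only the Lipschitz condition~\eqref{eq:random-lipschitz-main}. Write $\Theta_0=\widehat\theta+\Xi$ with $\|\Xi\|_\infty\le a_\varepsilon$ almost surely, since $\Xi$ is uniform on $[-a_\varepsilon,a_\varepsilon]^d$. By~\eqref{eq:random-lipschitz-main}, for every $x$,
\[
\|s_{\Theta_0}(x)-s_{\widehat\theta}(x)\|_\infty\le L_\theta\|\Xi\|_\infty\le L_\theta a_\varepsilon=\varepsilon^2/2 .
\]

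Next I would show that the log-softmax map is $2$-Lipschitz from $\ell_\infty$ to $\ell_\infty$. For logits $s,s'\in\R^K$ with $\|s-s'\|_\infty\le\eta$ and each label $y$, expand
\[
-\log\operatorname{softmax}(s)_y=-s_y+\log\sum_k e^{s_k}.
\]
The term $-s_y$ changes by at most $\eta$. The log-sum-exp term also changes by at most $\eta$, because $s_k\le s'_k+\eta$ for all $k$ gives $\log\sum_k e^{s_k}\le\log\sum_k e^{s'_k}+\eta$, and the reverse inequality holds symmetrically.

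Combining the two steps gives, for every observation $(X_i,Y_i)$,
\[
-\log f_{0,\Theta_0}(X_i)_{Y_i}\le -\log f_{0,\widehat\theta}(X_i)_{Y_i}+2\cdot\varepsilon^2/2 .
\]
Averaging over $i$ and applying the almost-sure bound $\widehat{\mathcal L}_{\log,S}(f_{0,\widehat\theta})\le D_0$ from~\eqref{eq:random-fit-main} yields~\eqref{rtf:eq:epsilon-initial-loss}.

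There is no real obstacle here. The only step that needs care is tracking the factor $2$ in the log-softmax Lipschitz bound, since it is exactly what the choice $a_\varepsilon=\varepsilon^2/(2L_\theta)$ offsets. The bound holds for every realization of $\Xi$ and $S$, so no probabilistic argument is needed.
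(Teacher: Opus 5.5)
Your proposal is correct and follows essentially the same route as the paper's proof. Like the paper, you use the $2$-Lipschitz bound on the per-observation log loss in $\|\cdot\|_\infty$, obtained from $1$-Lipschitzness of log-sum-exp plus the $-s_y$ term. You combine it with $\|s_{\Theta_0}(x)-s_{\widehat\theta}(x)\|_\infty\le L_\theta a_\varepsilon=\varepsilon^2/2$, then average over the sample and apply the almost-sure bound $\widehat{\mathcal L}_{\log,S}(f_{0,\widehat\theta})\le D_0$.
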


\begin{proof}
For $z,z'\in\R^K$, log-sum-exp is 1-Lipschitz with respect to $\|\cdot\|_\infty$. Hence the one-observation log loss $\ell_{\log}(z,y)=\log\sum_ke^{z_k}-z_y$ satisfies
\begin{equation}
|\ell_{\log}(z,y)-\ell_{\log}(z',y)|\le2\|z-z'\|_\infty.
\label{rtf:eq:loss-lipschitz}
\end{equation}
Since $\|\Xi\|_\infty\le a_\varepsilon$ and~Eq.~\eqref{eq:random-lipschitz-main} holds, for every $i$,
\[
\ell_{\log}(s_{\Theta_0}(X_i),Y_i)
\le\ell_{\log}(s_{\widehat\theta}(X_i),Y_i)+2L_\theta a_\varepsilon.
\]
Averaging over $i$ and using~Eq.~\eqref{eq:random-fit-main} and $2L_\theta a_\varepsilon=\varepsilon^2$ proves the claim.
\end{proof}

\begin{proof}[Proof of Theorem~\ref{thm:learned-initializer-stopping}]
Fix a realization of $(S,\Xi)$. Apply Theorem~\ref{thm:fixed-initializer-stopping} to this realized sample and the initial predictor $f_{0,\Theta_0}$. Lemma~\ref{rtf:lem:initial-loss} gives
\[
N_{\mathrm{upd}}
\le\left\lceil\frac{4(D_0+\varepsilon^2)}{\varepsilon^2}\right\rceil
=\left\lceil\frac{4D_0}{\varepsilon^2}\right\rceil+4
=T_\varepsilon^{\mathrm{learn}}.
\]
The stopping rule also gives $\widehat{\HUC}(\widehat f;\gC,\gU)\le3\varepsilon/4$.
\end{proof}

\paragraph{Step 2: dominate the distribution of the sample-dependent initializer by a reference distribution.}
For Theorem~\ref{thm:learned-initializer-sample}, with fixed $T\ge0$ and $\lambda>0$, the quantity to be bounded is
\begin{equation}
\mathbb{E}_{S,\Xi}\left[\exp\left(\lambda\operatorname{Gap}_T(f_{0,\Theta_0},S)\right)\right].
\label{rtf:eq:target-learned-mgf}
\end{equation}
Lemma~\ref{rtf:lem:fixed-mgf} controls this exponential moment for each fixed parameter $\theta$. Introduce the reference distribution
\begin{equation}
\Pi_\varepsilon
:=\operatorname{Unif}\left([-R_\theta-a_\varepsilon,R_\theta+a_\varepsilon]^d\right),
\label{rtf:eq:reference-distribution}
\end{equation}
and dominate the joint distribution of $(S,\Theta_0)$ by the product distribution $\mathbb{P}^n\otimes\Pi_\varepsilon$.

\begin{lemma}[Transfer from a sample-dependent initializer to the reference distribution]
\label{rtf:lem:domination}
For every nonnegative measurable function
$\Phi:(\mathcal X\times\mathcal Y)^n\times\mathbb R^d\to[0,\infty]$,
\begin{equation}
\mathbb{E}_{S,\Xi}[\Phi(S,\Theta_0)]
\le e^{\Lambda_\varepsilon}
\int_{\mathbb R^d}\mathbb{E}_{S\sim\mathbb{P}^n}[\Phi(S,\theta)]\,\Pi_\varepsilon(d\theta).
\label{rtf:eq:reference}
\end{equation}
\end{lemma}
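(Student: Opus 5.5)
The plan is a change-of-measure (density-ratio) argument: compute the density of $\Theta_0$ conditional on $S$, bound it by $e^{\Lambda_\varepsilon}$ times the density of $\Pi_\varepsilon$, and then integrate out $S$ with Fubini--Tonelli.

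\textbf{Step 1: conditional law of $\Theta_0$.} Fix $S$. Then $\widehat\theta=\mathsf A(S)$ is a deterministic point with $\|\widehat\theta\|_\infty\le R_\theta$ by Eq.~\eqref{eq:random-fit-main}. Because $\Xi$ is independent of $S$ and uniform on $[-a_\varepsilon,a_\varepsilon]^d$, the conditional law of $\Theta_0=\widehat\theta+\Xi$ given $S$ is uniform on the cube $\widehat\theta+[-a_\varepsilon,a_\varepsilon]^d$. Its Lebesgue density is
\[
(2a_\varepsilon)^{-d}\,\mathbf 1\{\|\theta-\widehat\theta\|_\infty\le a_\varepsilon\}.
\]
This cube lies inside $[-R_\theta-a_\varepsilon,R_\theta+a_\varepsilon]^d$, which is the support of $\Pi_\varepsilon$.

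\textbf{Step 2: density-ratio bound.} The density of $\Pi_\varepsilon$ is $(2(R_\theta+a_\varepsilon))^{-d}$ on its support. Since the conditional support is contained in that of $\Pi_\varepsilon$, the ratio of the two densities is at most
\[
\Bigl(\frac{R_\theta+a_\varepsilon}{a_\varepsilon}\Bigr)^d
=\exp\Bigl(d\log\bigl(1+R_\theta/a_\varepsilon\bigr)\Bigr)
=e^{\Lambda_\varepsilon},
\]
by the definition in Eq.~\eqref{rtf:eq:learned-scales}. Since $\Phi\ge0$, this gives, for each fixed $S$,
\[
\mathbb E_\Xi[\Phi(S,\Theta_0)\mid S]\le e^{\Lambda_\varepsilon}\int\Phi(S,\theta)\,\Pi_\varepsilon(d\theta).
\]

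\textbf{Step 3: integrate over $S$.} Take expectation over $S\sim\mathbb P^n$. Nonnegativity of $\Phi$ allows Tonelli's theorem, so the order of $\mathbb E_S$ and $\int\Pi_\varepsilon(d\theta)$ can be swapped. This yields Eq.~\eqref{rtf:eq:reference}; the inequality holds even if either side is $+\infty$.

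\textbf{Main obstacle: measurability.} The density calculation itself is routine. The point needing care is joint measurability of $(S,\theta)\mapsto\Phi(S,\theta)$ and of $(S,\xi)\mapsto\Phi(S,\mathsf A(S)+\xi)$, which is required to disintegrate with respect to $S$ and to apply Tonelli. Joint measurability of $\Phi$ is given by hypothesis. Measurability of $\mathsf A$ is part of Assumption~\ref{ass:random-init-main}, so $(S,\xi)\mapsto(S,\mathsf A(S)+\xi)$ is measurable, and the composed map is measurable as well. The conditional-expectation step is then legitimate because $\Xi$ is independent of $S$, so conditioning on $S$ reduces to integrating over $\Xi$ with $S$ held fixed.
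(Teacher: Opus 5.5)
Your proposal is correct and matches the paper's proof: you bound the conditional uniform density of $\Theta_0$ given $S$ pointwise by $e^{\Lambda_\varepsilon}$ times the density of $\Pi_\varepsilon$, using $\|\widehat\theta\|_\infty\le R_\theta$, and then integrate over $S$ with Tonelli. One small nuance: the bound $\|\widehat\theta\|_\infty\le R_\theta$ holds only for $\mathbb{P}^n$-almost every sample, not for every $S$; the paper says so explicitly, and it does not affect the expectation.
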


\begin{proof}
Fix a sample value $\mathbf z$ satisfying~Eq.~\eqref{eq:random-fit-main}, which holds for $\mathbb{P}^n$-almost every $\mathbf z$, and write $\widehat\theta_{\mathbf z}:=\mathsf A(\mathbf z)$. Conditional on $S=\mathbf z$, the Lebesgue density of $\Theta_0=\widehat\theta_{\mathbf z}+\Xi$ is
\[
k_{\mathbf z}(\theta)
=(2a_\varepsilon)^{-d}\mathbf 1\left\{\theta\in\widehat\theta_{\mathbf z}+[-a_\varepsilon,a_\varepsilon]^d\right\}.
\]
Because $\|\widehat\theta_{\mathbf z}\|_\infty\le R_\theta$, its support is contained in the support of $\Pi_\varepsilon$, whose Lebesgue density is
\[
\pi_\varepsilon(\theta)
=[2(R_\theta+a_\varepsilon)]^{-d}\mathbf 1\left\{\theta\in[-R_\theta-a_\varepsilon,R_\theta+a_\varepsilon]^d\right\}.
\]
Therefore, for Lebesgue-almost every $\theta$,
\begin{equation}
k_{\mathbf z}(\theta)
\le\left(1+\frac{R_\theta}{a_\varepsilon}\right)^d\pi_\varepsilon(\theta)
=e^{\Lambda_\varepsilon}\pi_\varepsilon(\theta).
\label{rtf:eq:density-domination}
\end{equation}
Tonelli's theorem and~Eq.~\eqref{rtf:eq:density-domination} give
\[
\mathbb{E}_{S,\Xi}[\Phi(S,\Theta_0)]
=\mathbb{E}_{S}\left[\int_{\mathbb R^d}\Phi(S,\theta)k_S(\theta)\,d\theta\right]
\le e^{\Lambda_\varepsilon}\int_{\mathbb R^d}\mathbb{E}_{S\sim\mathbb{P}^n}[\Phi(S,\theta)]\,\Pi_\varepsilon(d\theta).
\qedhere
\]
\end{proof}

Applying Eq.~\eqref{rtf:eq:reference} with $\Phi(S,\theta)=\exp(\lambda\operatorname{Gap}_T(f_{0,\theta},S))$ bounds Eq.~\eqref{rtf:eq:target-learned-mgf} by the $\Pi_\varepsilon$-average of the fixed-$\theta$ exponential moments, multiplied by $e^{\Lambda_\varepsilon}$.

\paragraph{Step 3: a high-probability bound for the uniform generalization gap.}
\begin{lemma}[Uniform generalization gap for a learned initializer]
\label{rtf:lem:random-gap}
Under Assumption~\ref{ass:random-init-main}, for every fixed integer $T\ge0$ and $0<\delta<1$, with probability at least $1-\delta$ under the joint distribution of $S$ and $\Xi$,
\begin{equation}
\operatorname{Gap}_T(f_{0,\Theta_0},S)
\le4\sqrt{\frac2n\left[\Lambda_\varepsilon+(T+1)\log(2N_{\mathrm{cand}}(2n+1)^2)+\log\frac1\delta\right]}.
\label{rtf:eq:random-gap}
\end{equation}
\end{lemma}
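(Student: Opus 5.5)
The plan is to mirror the proof of Lemma~\ref{rtf:lem:fixed-deviation}, replacing the fixed-initializer exponential moment by one that accounts for the sample-dependent initializer through Lemma~\ref{rtf:lem:domination}.

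First, fix $T\ge0$ and $\lambda>0$. Apply Eq.~\eqref{rtf:eq:reference} with the nonnegative function $\Phi(S,\theta):=\exp(\lambda\operatorname{Gap}_T(f_{0,\theta},S))$. This bounds Eq.~\eqref{rtf:eq:target-learned-mgf} by $e^{\Lambda_\varepsilon}\int\mathbb{E}_{S}[\exp(\lambda\operatorname{Gap}_T(f_{0,\theta},S))]\,\Pi_\varepsilon(d\theta)$. For each fixed $\theta$, the predictor $f_{0,\theta}$ is a positive softmax predictor that does not depend on $S$. Hence Lemma~\ref{rtf:lem:fixed-mgf} applies and bounds the integrand by $\{2N_{\mathrm{cand}}(2n+1)^2\}^{T+1}e^{8\lambda^2/n}$, uniformly in $\theta$. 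Integrating against the probability measure $\Pi_\varepsilon$ gives
\[
\mathbb{E}_{S,\Xi}\bigl[e^{\lambda\operatorname{Gap}_T(f_{0,\Theta_0},S)}\bigr]\le e^{\Lambda_\varepsilon}\{2N_{\mathrm{cand}}(2n+1)^2\}^{T+1}e^{8\lambda^2/n}.
\]

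Second, apply Markov's inequality to this exponential moment with $\lambda=n\rho/16$, as in Lemma~\ref{rtf:lem:fixed-deviation}. This yields
\[
\mathbb{P}_{S,\Xi}(\operatorname{Gap}_T>\rho)\le\exp\bigl(\Lambda_\varepsilon+(T+1)\log(2N_{\mathrm{cand}}(2n+1)^2)-n\rho^2/32\bigr).
\]
Setting $\rho$ equal to the right-hand side of Eq.~\eqref{rtf:eq:random-gap} makes the exponent equal to $\log\delta$, which gives the claim.

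The main obstacle is measurability, needed to justify the domination step. Lemma~\ref{rtf:lem:domination} requires $\Phi$ to be jointly measurable in $(S,\theta)$, and $\operatorname{Gap}_T$ is a supremum over the uncountable class $\mathcal G_{\le T}(f_{0,\theta})$. I would handle this via Lemma~\ref{rtf:lem:reachable-values}. On any finite sequence, the supremum is attained over finitely many value vectors. Each reachable predictor is determined by a finite sequence of candidate, sign and membership-set choices, and each such choice is a measurable function of $(\theta,x)$, given the continuity of $s_\theta$ from Eq.~\eqref{eq:random-lipschitz-main}. The population term $\mathbb{E}[g]$ is handled similarly. If this becomes delicate, I would interpret the probability as an outer probability, since Markov's inequality and Tonelli's theorem still apply to measurable upper envelopes. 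Everything else is a direct reuse of Lemma~\ref{rtf:lem:fixed-mgf}.
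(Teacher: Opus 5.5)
Your proposal is correct and follows the paper's proof essentially verbatim: you apply Lemma~\ref{rtf:lem:domination} with $\Phi(S,\theta)=e^{\lambda\operatorname{Gap}_T(f_{0,\theta},S)}$, bound the integrand uniformly in $\theta$ by Lemma~\ref{rtf:lem:fixed-mgf}, and finish with Markov's inequality at $\lambda=n\rho/16$. The paper simply assumes joint measurability of $\Phi$, so your measurability paragraph goes beyond it; your outer-probability fallback would not work as stated, however, because for outer integrals one only has ``iterated $\le$ joint,'' which is the wrong direction for the Tonelli step in Lemma~\ref{rtf:lem:domination}.
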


\begin{proof}
Apply Lemma~\ref{rtf:lem:domination} with $\Phi(S,\theta)=e^{\lambda\operatorname{Gap}_T(f_{0,\theta},S)}$.
Lemma~\ref{rtf:lem:fixed-mgf} gives the same bound for every fixed $\theta$, so for every $\lambda>0$,
\begin{equation}
\begin{aligned}
\mathbb{E}_{S,\Xi}e^{\lambda\operatorname{Gap}_T(f_{0,\Theta_0},S)}
&\le e^{\Lambda_\varepsilon}\int_{\mathbb R^d}
\mathbb{E}_{S}e^{\lambda\operatorname{Gap}_T(f_{0,\theta},S)}\,\Pi_\varepsilon(d\theta)\\
&\le\exp\left(\Lambda_\varepsilon+(T+1)\log(2N_{\mathrm{cand}}(2n+1)^2)
+\frac{8\lambda^2}{n}\right).
\end{aligned}
\label{rtf:eq:transferred-mgf}
\end{equation}
Markov's inequality with $\lambda=n\rho/16$ gives, for every $\rho>0$,
\[
\mathbb{P}_{S,\Xi}(\operatorname{Gap}_T(f_{0,\Theta_0},S)>\rho)
\le\exp\left(\Lambda_\varepsilon+(T+1)\log(2N_{\mathrm{cand}}(2n+1)^2)
-\frac{n\rho^2}{32}\right).
\]
Setting $\rho$ to the right-hand side of Eq.~\eqref{rtf:eq:random-gap} proves the result.
\end{proof}

\paragraph{Step 4: combine empirical HUC at termination with the generalization gap.}
\begin{proof}[Proof of Theorem~\ref{thm:learned-initializer-sample}]
Theorem~\ref{thm:learned-initializer-stopping} gives, almost surely, $\widehat f\in\mathcal F_{\le T_\varepsilon^{\mathrm{learn}}}(f_{0,\Theta_0})$ and $\widehat{\HUC}(\widehat f;\gC,\gU)\le3\varepsilon/4$. Apply Lemma~\ref{rtf:lem:random-gap} with $T=T_\varepsilon^{\mathrm{learn}}$. Under~Eq.~\eqref{rtf:eq:learned-sample-main}, the right-hand side of~Eq.~\eqref{rtf:eq:random-gap} is at most $\varepsilon/4$. Therefore, with probability at least $1-\delta$,
\[
\HUC(\widehat f;\gC,\gU)
\le\widehat{\HUC}(\widehat f;\gC,\gU)+\operatorname{Gap}_{T_\varepsilon^{\mathrm{learn}}}(f_{0,\Theta_0},S)
\le\varepsilon.
\]
\end{proof}

\subsection{Explicit sample complexity analysis}
\label{rtf:sec:explicit-sample}
The sample conditions in Theorems~\ref{thm:fixed-initializer-sample} and~\ref{thm:learned-initializer-sample} both have the form
\begin{equation}
n\ge\frac{512}{\varepsilon^2}\left[\Lambda+(T+1)\log(2N_{\mathrm{cand}}(2n+1)^2)+\log\frac1\delta\right].
\label{rtf:eq:sample-generic}
\end{equation}
For the fixed initializer, $(T,\Lambda)=(T_\varepsilon^{\mathrm{fix}},0)$; for the learned initializer, $(T,\Lambda)=(T_\varepsilon^{\mathrm{learn}},\Lambda_\varepsilon)$.

\begin{corollary}[Explicit condition with $n$ removed from the right-hand side]
\label{rtf:cor:explicit-generic}
Let $\nu:=T+1$ and define
\begin{equation}
n_\star(T,\Lambda,\varepsilon,\delta)
:=\left\lceil\frac{1024}{\varepsilon^2}\left[\Lambda+\nu\log(18N_{\mathrm{cand}})+\log\frac1\delta+2\nu\log\left(\frac{2048\nu}{\varepsilon^2}\right)\right]\right\rceil.
\label{rtf:eq:sample-explicit-generic}
\end{equation}
Every integer $n\ge n_\star(T,\Lambda,\varepsilon,\delta)$ satisfies~Eq.~\eqref{rtf:eq:sample-generic}.
\end{corollary}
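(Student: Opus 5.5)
The plan is to verify the claim by direct algebra: substitute $n_\star$ into the right-hand side of Eq.~\eqref{rtf:eq:sample-generic} and show that it is at most $n$. Write $A:=\Lambda+\nu\log(18N_{\mathrm{cand}})+\log(1/\delta)$. Eq.~\eqref{rtf:eq:sample-generic} then reads
\[
n\ge\frac{512}{\varepsilon^2}\bigl[A-\nu\log 9+\nu\log(2N_{\mathrm{cand}}(2n+1)^2)\bigr].
\]
It therefore suffices to prove
\[
n\ge\frac{512}{\varepsilon^2}\bigl[A+2\nu\log(2n+1)\bigr],
\]
since $\log(2N_{\mathrm{cand}}(2n+1)^2)\le\log(18N_{\mathrm{cand}})+2\log(2n+1)$ trivially. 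The constant $18=2\cdot 9$ leaves slack that absorbs the $(2n+1)^2\le 9n^2$ step. So I would instead use $\log(2N_{\mathrm{cand}}(2n+1)^2)\le\log(18N_{\mathrm{cand}})+2\log n$, valid for $n\ge1$, and reduce the claim to
\[
n\ge\frac{512}{\varepsilon^2}A+\frac{1024\nu}{\varepsilon^2}\log n .
\]

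The key step is a standard ``$n\ge a+b\log n$'' lemma. Set $a:=512A/\varepsilon^2$ and $b:=1024\nu/\varepsilon^2$. By definition, $n_\star\ge 2a+2b\log(2b)$, because $2\nu\log(2048\nu/\varepsilon^2)=2\nu\log(2b)$, multiplied by $1024/\varepsilon^2$, equals $2b\log(2b)$. I would show that any $n\ge 2a+2b\log(2b)$ satisfies $n\ge a+b\log n$. Halve the requirement: it suffices that $n/2\ge a$ and $n/2\ge b\log n$. The first holds directly. For the second, the function $n\mapsto n/2-b\log n$ is increasing for $n\ge 2b$, and $n\ge 2b\log(2b)\ge 2b$ once $2b\ge e$, which holds because $b\ge1024$. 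At $n_0=2b\log(2b)$ we need $b\log(2b)\ge b\log(2b\log(2b))$, i.e.\ $\log(2b)\ge\log(2b)+\log\log(2b)$, which fails. So I would strengthen with the elementary bound $\log n\le n/(2b)$ for $n\ge 4b\log(2b)$.

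The main obstacle is therefore the constant: with the factor $2$ in front of $b\log(2b)$, the crude split is off by a $\log\log$ term. I would first try not to halve the budget evenly. Use $\log n\le \log(2b)+ n/(2b)-1$, i.e.\ $\log x\le x-1$ applied to $x=n/(2b)$. Then $b\log n\le b\log(2b)+n/2-b$, so $a+b\log n\le a+b\log(2b)+n/2$. This is at most $n$ whenever $n\ge 2a+2b\log(2b)$, which is exactly $n_\star$, with no halving loss. Combined with the first reduction and monotonicity in $n$, every integer $n\ge n_\star$ satisfies Eq.~\eqref{rtf:eq:sample-generic}.
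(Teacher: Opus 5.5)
Your proof is correct and is essentially the paper's argument. Both use $(2n+1)^2\le 9n^2$ to reduce the condition to $n\ge a+b\log n$ with $a=512A/\varepsilon^2$ and $b=1024\nu/\varepsilon^2$. Both then apply $\log t\le t-1$ at $t=n/(2b)$, which shows that $n\ge 2a+2b\log(2b)=n_\star$ suffices, so the halving attempt you abandoned is unnecessary.

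One small slip: your first displayed rewriting should read $A-\nu\log(18N_{\mathrm{cand}})+\nu\log(2N_{\mathrm{cand}}(2n+1)^2)$, not $A-\nu\log 9+\nu\log(2N_{\mathrm{cand}}(2n+1)^2)$. This does not matter, because you then state the reduction to $n\ge\frac{512}{\varepsilon^2}A+\frac{1024\nu}{\varepsilon^2}\log n$ correctly and independently of that display.
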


\begin{proof}
For $n\ge1$, $2N_{\mathrm{cand}}(2n+1)^2\le18N_{\mathrm{cand}}n^2$. It is therefore sufficient that $n\ge A+C\log n$, where
\[
A:=\frac{512}{\varepsilon^2}\left[\Lambda+\nu\log(18N_{\mathrm{cand}})+\log(1/\delta)\right],
\qquad
C:=\frac{1024\nu}{\varepsilon^2}.
\]
Applying $\log t\le t-1$ with $t=n/(2C)$ gives $A+C\log n\le A+n/2+C\log(2C)-C$. Hence $n\ge2A+2C\log(2C)$ is sufficient, and substitution yields~Eq.~\eqref{rtf:eq:sample-explicit-generic}.
\end{proof}

\subsection{Dirichlet, Temperature, and Vector Initial Models}
\label{rtf:sec:dirichlet}
For a fixed base prediction $q:\gX\to\Delta^{K-1}$, consider Dirichlet calibration~\citep{Kull2019},
\begin{equation}
f_{0,W,b}(x)=\operatorname{softmax}(W\log q(x)+b),
\qquad W\in\R^{K\times K},\quad b\in\R^K,
\label{rtf:eq:dirichlet-original}
\end{equation}
where the logarithm is applied coordinatewise. As in the experimental representation, all $K$ rows are treated as parameters.

\begin{proposition}[Constants for a Dirichlet initial model]
\label{rtf:prop:dirichlet}
Fix $q_{\min}\in(0,1/K]$ and $\lambda_{\mathrm{reg}}>0$ independently of $S$, and assume $q_y(x)\ge q_{\min}$ for every $x,y$. Let $\theta=\operatorname{vec}(W,b)\in\R^{K(K+1)}$, and suppose the initial fit satisfies
\begin{equation}
\widehat{\mathcal L}_{\log,S}(f_{0,\widehat\theta})
+\frac{\lambda_{\mathrm{reg}}}{2}\|\widehat\theta\|_2^2
\le\log K.
\label{rtf:eq:dirichlet-fit}
\end{equation}
Then Assumption~\ref{ass:random-init-main} holds with
\begin{equation}
d=K(K+1),
\qquad D_0=\log K,
\qquad R_\theta=\sqrt{\frac{2\log K}{\lambda_{\mathrm{reg}}}},
\qquad L_\theta=1+K\log(1/q_{\min}).
\label{rtf:eq:dirichlet-constants}
\end{equation}
\end{proposition}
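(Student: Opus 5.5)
The plan is to check the three requirements of Assumption~\ref{ass:random-init-main} (parameter dimension, the bounds in Eq.~\eqref{eq:random-fit-main}, and the Lipschitz condition in Eq.~\eqref{eq:random-lipschitz-main}) one at a time, reading each constant off the regularized fit condition in Eq.~\eqref{rtf:eq:dirichlet-fit} and the lower bound $q_y(x)\ge q_{\min}$. Throughout, the score map is $s_\theta(x)=W\log q(x)+b$ with $\theta=\operatorname{vec}(W,b)$. This map is finite because $q_y(x)\ge q_{\min}>0$, so every coordinate of $\log q(x)$ is finite. It also gives $d=K^2+K=K(K+1)$ at once.

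\textbf{Fit bounds.} Both terms on the left of Eq.~\eqref{rtf:eq:dirichlet-fit} are nonnegative, since empirical log loss is $\ge0$ and $\lambda_{\mathrm{reg}}>0$. Dropping the regularizer gives $\widehat{\mathcal L}_{\log,S}(f_{0,\widehat\theta})\le\log K=D_0$. Dropping the loss instead gives $\|\widehat\theta\|_2^2\le 2\log K/\lambda_{\mathrm{reg}}$, and then $\|\widehat\theta\|_\infty\le\|\widehat\theta\|_2$ yields $R_\theta=\sqrt{2\log K/\lambda_{\mathrm{reg}}}$. Since Eq.~\eqref{rtf:eq:dirichlet-fit} is assumed for the fitted output (almost surely over $S$), both parts of Eq.~\eqref{eq:random-fit-main} follow. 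I take measurability of the fitting rule $\mathsf A$ as part of the standing setup.

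\textbf{Lipschitz constant.} This is the only step that needs a small computation. Write $\theta-\theta'=\operatorname{vec}(\Delta W,\Delta b)$ and fix $x$ and a coordinate $k$. Then
\[
|s_\theta(x)_k-s_{\theta'}(x)_k|\le\sum_{j=1}^K|\Delta W_{kj}|\,|\log q_j(x)|+|\Delta b_k|.
\]
Because $q_{\min}\le q_j(x)\le1$, each factor satisfies $|\log q_j(x)|\le\log(1/q_{\min})$. Every entry of $\Delta W$ and $\Delta b$ is bounded by $\|\theta-\theta'\|_\infty$. Hence the right-hand side is at most $\{K\log(1/q_{\min})+1\}\|\theta-\theta'\|_\infty$. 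This bound holds uniformly in $x$ and $k$, so it gives Eq.~\eqref{eq:random-lipschitz-main} with $L_\theta=1+K\log(1/q_{\min})$.

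I do not expect a genuine obstacle. The only care points are using the $\ell_\infty$ norm on the stacked vector consistently, and noting that $q_{\min}\le1/K$ is compatible with $q(x)\in\Delta^{K-1}$, so the hypothesis is not vacuous. With these three checks in hand, Theorems~\ref{thm:learned-initializer-stopping} and~\ref{thm:learned-initializer-sample} apply with the stated constants.
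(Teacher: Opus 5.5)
Your proposal is correct and follows essentially the same route as the paper. The fit bounds come from dropping each nonnegative term of the regularized objective and using $\|\widehat\theta\|_\infty\le\|\widehat\theta\|_2$. Your entrywise estimate of $|s_\theta(x)_k-s_{\theta'}(x)_k|$ is the paper's H\"older step $|(\theta_y-\theta'_y)^\top\varphi(x)|\le\|\theta-\theta'\|_\infty\|\varphi(x)\|_1$ written out, with $\varphi(x)=(\log q_1(x),\ldots,\log q_K(x),1)$ and $\|\varphi(x)\|_1\le 1+K\log(1/q_{\min})$.
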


\begin{proof}
Let $\varphi(x):=(\log q_1(x),\ldots,\log q_K(x),1)$, so $s_{\theta,y}(x)=\theta_y^\top\varphi(x)$. Since $0<q_y(x)\le1$,
$\|\varphi(x)\|_1 =1+\sum_{y=1}^K|\log q_y(x)| \le1+K\log(1/q_{\min})=L_\theta$.
Consequently,
$|s_{\theta,y}(x)-s_{\theta',y}(x)|\le L_\theta\|\theta-\theta'\|_\infty$,
which proves~Eq.~\eqref{eq:random-lipschitz-main}. Nonnegativity of log loss and~Eq.~\eqref{rtf:eq:dirichlet-fit} give
$\widehat{\mathcal L}_{\log,S}(f_{0,\widehat\theta})\le\log K$. They also give
$\|\widehat\theta\|_\infty\le\|\widehat\theta\|_2\le\sqrt{2\log K/\lambda_{\mathrm{reg}}}$,
which is~Eq.~\eqref{eq:random-fit-main}.
\end{proof}
The choice $W=0,b=0$ gives the uniform prediction and makes the left-hand side of~Eq.~\eqref{rtf:eq:dirichlet-fit} equal to $\log K$. Thus it is enough to return an initial fit whose objective is no worse than this fixed candidate. If the base prediction has no positive lower bound, use, for a predetermined $\kappa_0\in(0,1)$,
$q_y^{\kappa_0}(x):=(1-\kappa_0)q_y(x)+\kappa_0/K$ as the input to the initial model; then $q_{\min}=\kappa_0/K$ is valid.

\begin{corollary}[Noise width and sample condition for a Dirichlet initial model]
\label{rtf:cor:dirichlet}
Under Proposition~\ref{rtf:prop:dirichlet}, choose the uniform-noise width
\begin{equation}
a_\varepsilon=\frac{\varepsilon^2}{2\{1+K\log(1/q_{\min})\}}.
\label{rtf:eq:dirichlet-noise}
\end{equation}
Then
\begin{equation}
T_\varepsilon^{\mathrm{learn}}=\left\lceil\frac{4\log K}{\varepsilon^2}\right\rceil+4,
\qquad
\Lambda_\varepsilon=K(K+1)\log\left(1+\frac{2R_\theta L_\theta}{\varepsilon^2}\right).
\label{rtf:eq:dirichlet-final-constants}
\end{equation}
Under the sample condition obtained by substituting these constants into~Eq.~\eqref{rtf:eq:sample-explicit-generic}, $\HUC(\widehat f;\gC,\gU)\le\varepsilon$ with probability at least $1-\delta$. The dependence on accuracy is
\begin{equation}
n_\star=O\!\left(
\frac{1+\log K}{\varepsilon^4}\log\frac{eN_{\mathrm{cand}}(1+\log K)}{\varepsilon^4}
+\frac{K(K+1)}{\varepsilon^2}\log\left(1+\frac{2R_\theta L_\theta}{\varepsilon^2}\right)
+\frac{\log(1/\delta)}{\varepsilon^2}
\right).
\label{rtf:eq:dirichlet-order}
\end{equation}
\end{corollary}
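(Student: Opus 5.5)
The plan is to treat this as a direct specialization of the learned-initializer machinery, with no new probabilistic argument. First, Proposition~\ref{rtf:prop:dirichlet} gives that Assumption~\ref{ass:random-init-main} holds with $d=K(K+1)$, $D_0=\log K$, $R_\theta=\sqrt{2\log K/\lambda_{\mathrm{reg}}}$, and $L_\theta=1+K\log(1/q_{\min})$. Substituting $L_\theta$ into the definition $a_\varepsilon=\varepsilon^2/(2L_\theta)$ in Eq.~\eqref{rtf:eq:learned-scales} gives Eq.~\eqref{rtf:eq:dirichlet-noise} immediately. Substituting $D_0=\log K$ into $T_\varepsilon^{\mathrm{learn}}=\lceil 4D_0/\varepsilon^2\rceil+4$ from Theorem~\ref{thm:learned-initializer-stopping}, and $d=K(K+1)$ into $\Lambda_\varepsilon=d\log(1+2R_\theta L_\theta/\varepsilon^2)$, gives Eq.~\eqref{rtf:eq:dirichlet-final-constants}.

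For the probabilistic claim, I will invoke Corollary~\ref{rtf:cor:explicit-generic} with $(T,\Lambda)=(T_\varepsilon^{\mathrm{learn}},\Lambda_\varepsilon)$. Every $n\ge n_\star$ then satisfies Eq.~\eqref{rtf:eq:sample-generic}, which is exactly condition Eq.~\eqref{rtf:eq:learned-sample-main}. Theorem~\ref{thm:learned-initializer-sample} then yields $\mathbb{P}_{S,\Xi}(\HUC(\widehat f;\gC,\gU)\le\varepsilon)\ge1-\delta$. One point needs care: Theorem~\ref{thm:learned-initializer-sample} is stated with the generic $a_\varepsilon$ of Eq.~\eqref{rtf:eq:learned-scales}, so I will note explicitly that the prescribed width coincides with it.

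The remaining work is the order bound Eq.~\eqref{rtf:eq:dirichlet-order}, which I expect to be the only step needing any estimation. Set $\nu=T_\varepsilon^{\mathrm{learn}}+1$. Since $\varepsilon\le1$, we have $\nu\le 4\log K/\varepsilon^2+6=O((1+\log K)/\varepsilon^2)$. I will bound the four bracketed terms in Eq.~\eqref{rtf:eq:sample-explicit-generic}, each multiplied by $1024/\varepsilon^2$, as follows.
\begin{enumerate}
\item The term $\Lambda_\varepsilon/\varepsilon^2$ gives the $K(K+1)$ summand verbatim.
\item The term $\log(1/\delta)/\varepsilon^2$ is kept as is.
\item The terms $\nu\log(18N_{\mathrm{cand}})$ and $2\nu\log(2048\nu/\varepsilon^2)$ are merged. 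Using $\nu/\varepsilon^2=O((1+\log K)/\varepsilon^4)$ and absorbing constants into the logarithm via $\log(ab)=\log a+\log b$ with $a,b\ge1$, their sum is $O(\nu\log(eN_{\mathrm{cand}}(1+\log K)/\varepsilon^4))$.
\end{enumerate}
Multiplying this merged term by $1/\varepsilon^2$ gives the first summand of Eq.~\eqref{rtf:eq:dirichlet-order}. The ceiling in $n_\star$ adds $1$, which is dominated by the other terms. The only subtlety is checking that the constants $18$ and $2048$ are absorbed uniformly, which holds because $N_{\mathrm{cand}}\ge1$ and $\varepsilon\le1$ make every logarithm argument at least a positive constant.
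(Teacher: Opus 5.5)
Your proposal is correct and follows the paper's own proof. The paper likewise substitutes the constants of Proposition~\ref{rtf:prop:dirichlet} into Theorem~\ref{thm:learned-initializer-sample} via Corollary~\ref{rtf:cor:explicit-generic}, and reads off the order from $T_\varepsilon^{\mathrm{learn}}+1=O((1+\log K)/\varepsilon^2)$; your explicit absorption of the constants $18$ and $2048$ into $\log(eN_{\mathrm{cand}}(1+\log K)/\varepsilon^4)$, using $N_{\mathrm{cand}}\ge1$ and $\varepsilon\le1$, just makes explicit the step the paper leaves implicit.
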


\begin{proof}
Substitute the constants from Proposition~\ref{rtf:prop:dirichlet} into Theorem~\ref{thm:learned-initializer-sample} and Corollary~\ref{rtf:cor:explicit-generic}. Eq.~\eqref{rtf:eq:dirichlet-order} follows from $T_\varepsilon^{\mathrm{learn}}+1=O((1+\log K)/\varepsilon^2)$ and~Eq.~\eqref{rtf:eq:dirichlet-final-constants}.
\end{proof}

Subtracting the $K$th logit from all logits leaves softmax unchanged. If initial fitting and perturbation are defined in coordinates with the $K$th row fixed to zero, one may use $d=(K-1)(K+1)=K^2-1$. The same proof applies when~Eq.~\eqref{rtf:eq:dirichlet-fit} is imposed on all free coefficients in those coordinates.

\paragraph{Application to temperature scaling.}
Let $L_q:=\log(1/q_{\min})$. Temperature scaling, parameterized by the inverse temperature $\beta$, can be written as
\begin{equation}
s_{\beta,y}(x)=\beta\log q_y(x),
\qquad d=1,
\qquad L_\theta=L_q.
\label{rtf:eq:temperature-model}
\end{equation}
Indeed, $\sup_x\|s_\beta(x)-s_{\beta'}(x)\|_\infty\le L_q|\beta-\beta'|$. Restricting the fitted parameter to a bounded interval determines $R_\theta$, and $D_0$ may be obtained from an objective-comparison guarantee or a bound on the logits. To preserve a positive inverse temperature, choose the lower endpoint of the fitting domain so that $\beta_{\min}>a_\varepsilon$; then the perturbed parameter also remains positive.

\paragraph{Application to vector scaling.}
Let $\theta=(a,b)\in\R^{2K}$ and write vector scaling as
\begin{equation}
s_{\theta,y}(x)=a_y\log q_y(x)+b_y,
\qquad d=2K,
\qquad L_\theta=L_q+1.
\label{rtf:eq:vector-model}
\end{equation}
For any $\theta=(a,b)$ and $\theta'=(a',b')$,
$\sup_x\|s_\theta(x)-s_{\theta'}(x)\|_\infty\le(L_q+1)\|\theta-\theta'\|_\infty$.
Thus Assumption~\ref{ass:random-init-main} applies with $d=2K$ and $L_\theta=L_q+1$ whenever the fitted parameter is restricted to a bounded domain and the initial empirical log loss is bounded by $D_0$.

\begin{remark}[Role and Scope of the One-Time Random Perturbation]
\label{rtf:sec:scope}
When the initial predictor is fixed independently of the audit sample, no random perturbation is needed. When $\widehat\theta=\mathsf A(S)$ is learned from the same sample, the perturbation transfers the fixed-initializer uniform generalization bound to the sample-dependent initializer, at the density cost $\Lambda_\varepsilon$.

The theorem guarantees the output obtained after adding the stated perturbation once and running the fixed step $\alpha_\varepsilon=\varepsilon/2$.

The width $a_\varepsilon=\varepsilon^2/(2L_\theta)$ balances the density ratio and the initial loss increase. For a general width $a>0$, the log density ratio is $d\log(1+R_\theta/a)$, whereas the increase in the initial empirical log loss is at most $2L_\theta a$. The selected width limits this increase to $\varepsilon^2$, adding four to the update bound.
\end{remark}

\section{Additional Comparison with Related Work}
\label{app:additional-related-work}
\label{subsec:additional-utility-discussion}

\paragraph{U-calibration and unknown downstream utilities.}
\citet{pmlr-v195-kleinberg23a} study sequential forecasts of
binary outcomes used by agents whose utilities are unknown to
the forecaster.
Their U-calibration error is the worst-case regret over bounded
proper scoring rules, measured against the best fixed forecast
in hindsight.
This connects forecast evaluation to regret guarantees for
agents that best respond to the forecasts.
U-calibration evaluates cumulative regret, whereas the Utility
Calibration of \citet{Hegazy2025} audits expected utility
residuals within predicted-utility intervals.
HUC decomposes these residuals along a fixed label tree and
audits the resulting node contributions for the specified
utility class.

\paragraph{Loss outcome indistinguishability and omniprediction.}
\citet{gopalan_et_al:LIPIcs.ITCS.2023.60} compare actual outcomes with outcomes
sampled from a predictor using statistical tests induced by a
loss family and a comparison hypothesis class.
Their loss outcome indistinguishability condition implies
omniprediction: loss-specific post-processing competes with the
best hypothesis in the class for each loss.
This shares with HUC the use of expected discrepancies evaluated
through a specified family of tests.
The distinction lies in the tests and the resulting guarantees.
HUC uses the node contributions of a fixed target utility under
common subgroup weights and predicted-utility intervals.
Its guarantees concern these calibration moments and the
restricted monotone post-processing comparisons in
Appendix~\ref{app:postprocessing-huc}, rather than loss minimization
against a general hypothesis class.

\paragraph{Utility-directed conformal prediction.}
\citet{cortes-gomez_utility-directed_2025} incorporate downstream
decision losses and user-specified utilities into the construction
of conformal prediction sets while retaining marginal coverage
guarantees.
Their healthcare application also uses a hierarchy of
dermatological diseases to produce diagnostically coherent sets.
This shares our interest in utility-aware use of label hierarchies,
but the statistical targets differ.
Conformal coverage concerns whether the realized label belongs
to the reported set.
HUC instead audits the agreement between predicted and realized
utility through internal-node contributions to a probability
forecast's utility residual.
Thus, utility-directed set construction and hierarchical
utility calibration address different aspects of reliable
decision-making.

\section{Toy Data}
\label{app:toy-data}
\label{app:toy8-experiment-v26}
\label{app:toy-randomized-huc}

We first specify the single-sample eight-leaf correction experiment used in the main text. A separate subsection explains the decision-utility construction.

\subsection{Experimental Settings of the Main Experiment}
\label{subsec:toy-main-experiment}
\paragraph{Inputs, subgroups, and utilities.}
The model has eight leaf labels and seven internal nodes. Its initial prediction is specified directly, without fitting a classifier to input features. All evaluations use the whole-population indicator, $\gC=\{1\}$, so $|\gC|=1$. The main experiment uses the three decision utilities defined below, with $|\gU|=3$.

\paragraph{Tree.}
Consider the binary tree in Figure~\ref{fig:toy-appendix-line-tree}, with eight leaf labels and seven internal nodes. Although synthetic, it can be interpreted as a staged classification of conditions in patients presenting with symptoms. First, the root $v_{\mathrm{cause}}$ distinguishes noninfectious from infectious causes. The noninfectious branch $v_{\mathrm{noninf}}$ then separates mild and severe conditions, $y_1,y_2$. For infectious causes, $v_{\mathrm{spread}}$ distinguishes localized from systemic infection. Within localized infection, $v_{\mathrm{site}}$ separates upper respiratory infection from pneumonia; $v_{\mathrm{upper}}$ and $v_{\mathrm{lower}}$ then distinguish viral from bacterial causes, giving $y_3,y_4$ and $y_5,y_6$, respectively. Within systemic infection, $v_{\mathrm{shock}}$ separates nonsevere and severe conditions, $y_7,y_8$. These synthetic interpretations are not clinical recommendations.

\begin{figure}[htbp]
\centering
\begingroup
\tikzset{every picture/.append style={xscale=0.90,yscale=0.80}}
\begin{tikzpicture}[x=1.08cm,y=0.94cm,
  every node/.style={font=\footnotesize,inner sep=0.8pt,fill=white},
  treeedge/.style={draw=black!72,line width=0.60pt,shorten >=1.6pt,shorten <=1.6pt}]
\path[use as bounding box,draw=none] (-2.55,0.30) rectangle (2.55,-3.480);
\node (r) at (0,0) {$v_{\mathrm{cause}}$};
\node (ni) at (-1.529,-0.800) {$v_{\mathrm{noninf}}$};
\node (sp) at (0.963,-0.800) {$v_{\mathrm{spread}}$};
\node (y1) at (-2.058,-1.600) {$y_1$};
\node (y2) at (-1.001,-1.600) {$y_2$};
\node (si) at (0.142,-1.600) {$v_{\mathrm{site}}$};
\node (sh) at (1.794,-1.600) {$v_{\mathrm{shock}}$};
\node (up) at (-0.453,-2.400) {$v_{\mathrm{upper}}$};
\node (lo) at (0.727,-2.400) {$v_{\mathrm{lower}}$};
\node (y7) at (1.463,-2.400) {$y_7$};
\node (y8) at (2.124,-2.400) {$y_8$};
\node (y3) at (-0.774,-3.200) {$y_3$};
\node (y4) at (-0.132,-3.200) {$y_4$};
\node (y5) at (0.453,-3.200) {$y_5$};
\node (y6) at (1.020,-3.200) {$y_6$};
\foreach \a/\b in {r/ni,r/sp,ni/y1,ni/y2,sp/si,sp/sh,si/up,si/lo,sh/y7,sh/y8,up/y3,up/y4,lo/y5,lo/y6}{\draw[treeedge] (\a)--(\b);}
\end{tikzpicture}
\endgroup
\caption{The eight-leaf binary tree used in the Toy experiments.}
\label{fig:toy-appendix-line-tree}
\end{figure}

\paragraph{Decision utilities.}
Each utility evaluates the action selected from the prediction $p$ against the realized label $y$. For decision $j$, let $D_j$ be the leaves on which the decision is active, $S_j\subseteq D_j$ the leaves favoring action~1 over action~0, and $\tau_j\in(0,1)$ its threshold. Define
\begin{equation}
\begin{aligned}
&d_j(y)=\mathbf{1}\{y\in S_j\}-\tau_j\mathbf{1}\{y\in D_j\}, \qquad  \widehat a_j(p)=\mathbf{1}\!\left\{\sum_y p_y d_j(y)\ge0\right\},\\
&u_j(p,y)=\{2\widehat a_j(p)-1\}d_j(y).
\end{aligned}
\label{eq:toy8-prediction-dependent-utilities}
\end{equation}
The three decisions use
\begin{equation}
\begin{array}{c|c|c|c}
j & D_j & S_j & \tau_j \\
\hline
1 & \{y_7,y_8\} & \{y_8\} & 0.40 \\
2 & \{y_1,\ldots,y_8\} & \{y_2,y_5,y_6,y_7,y_8\} & 0.50 \\
3 & \{y_1,\ldots,y_8\} & \{y_4,y_6,y_7,y_8\} & 0.40
\end{array}
\label{eq:toy8-decision-sets-updated}
\end{equation}
Their relevant-node sets contain 3, 4, and 5 nodes, and their union covers all seven internal nodes.

\paragraph{Population distribution and random initial predictions.}
The true leaf distribution is
\begin{equation}
\eta=
\left(
\frac{13}{80},\frac7{80},\frac{21}{125},\frac{21}{500},
\frac{189}{2000},\frac{441}{2000},\frac9{80},\frac9{80}
\right).
\label{eq:toy8-eta-updated}
\end{equation}
For each repetition, the predicted right-branch probabilities at the seven internal nodes are drawn independently from $\operatorname{Unif}(0.1,0.9)$. Multiplying the sampled branch probabilities along each root-to-leaf path gives a positive initial leaf distribution $p_0$. Both corrections start from this $p_0$.

\paragraph{Calibration sample.}
For each sample size $n$, draw i.i.d. labels $Y_1,\ldots,Y_n$ from $\eta$ and form
\begin{equation}
\widehat\eta_y=\frac1n\sum_{i=1}^n\mathbf{1}\{Y_i=y\},
\qquad n\widehat\eta\sim\operatorname{Multinomial}(n,\eta).
\label{eq:toy8-single-calibration-sample}
\end{equation}
Both corrections use this same calibration sample throughout their updates; $n$ is the total number of calibration labels.

\paragraph{Correction procedures.}
At each UC-Boost iteration, choose the utility with the largest absolute empirical aggregate residual. For the selected utility $u$ and current prediction $p_t$, set
\begin{equation}
\begin{aligned}
h_{t,y}&=u(p_t,y)-\mu_u(p_t),
&\widehat\Gamma_t&=\sum_y\widehat\eta_yh_{t,y},\\
\widehat\Lambda_t&=\frac14\left(\max_y h_{t,y}-\min_y h_{t,y}\right)^2,
&\alpha_t&=\widehat\Gamma_t/\widehat\Lambda_t,
\end{aligned}
\label{eq:toy8-empirical-uc-coefficients}
\end{equation}
and update all leaf logits by
\begin{equation}
p_{t+1}=\operatorname{softmax}\!\left(\log p_t+\alpha_t h_t\right).
\label{eq:toy8-empirical-uc-update}
\end{equation}
HUC-Boost instead selects the utility--node pair with the largest absolute empirical node contribution over all utilities and their relevant nodes. It applies Algorithm~\ref{alg:huc-boost} with every expectation replaced by the average over the same calibration sample: the selected node's branch logits are updated with coefficient $\widehat\Gamma_t/\widehat\Lambda_t$, and the leaf distribution is reconstructed by path products. This preserves the selected node's reach probability, the conditional leaf distribution within each child subtree, and every leaf probability outside the selected subtree.

Both methods apply each selected update once and then recompute the utilities and all audit candidates at the new prediction. They stop when their respective empirical UC or HUC is at most $10^{-10}$, or when the update cap is reached, and return the final prediction. The caps are 100 updates for UC-Boost and $100\times7=700$ updates for HUC-Boost. All runs, including those reaching a cap, are included in the reported results.

\paragraph{Exact population evaluation and repetitions.}
We evaluate the final corrected predictions by exact population UC and HUC under the known $\eta$. Thus the reported errors reflect finite-sample correction without an additional finite-test-sample error. We use
\[
n\in\{250,500,1000,2000,5000,10^4,2\!\times\!10^4,5\!\times\!10^4, 10^5,2\!\times\!10^5,5\!\times\!10^5,10^6\}
\]
and perform 200 repetitions per sample size, reusing the same 200 initial predictions across all sample sizes. Figure~\ref{fig:toy8-sample-size-main}(b) reports the means, with one sample standard deviation at selected sizes.

\subsection{Interpretation of the Decision Utilities}
\label{subsec:toy-decision-utility-interpretation}
The construction in Eq.~\eqref{eq:toy8-prediction-dependent-utilities} evaluates an action selected from the prediction $p$ against the realized label $y$. This subsection explains the roles of its outcome sets and threshold.

\paragraph{Outcome sets and label-wise values.}
For decision $j$, $D_j$ is the set of outcomes on which the action comparison is evaluated. Within $D_j$, the outcomes in $S_j$ favor action~1, whereas those in $D_j\setminus S_j$ favor action~0. Outside $D_j$, both actions receive zero utility. For $0<\tau_j<1$, the definition of $d_j$ gives
\[
d_j(y)=
\begin{cases}
1-\tau_j, & y\in S_j,\\
-\tau_j, & y\in D_j\setminus S_j,\\
0, & y\notin D_j.
\end{cases}
\]
Thus $d_j(y)$ is positive on outcomes favoring action~1 and negative on outcomes favoring action~0.

\paragraph{Payoffs of the two actions.}
Write the action-level payoff as
$U_j(a,y):=(2a-1)d_j(y)$, $a\in\{0,1\}$.
Action~1 receives $d_j(y)$, and action~0 receives $-d_j(y)$:
\begin{center}
\begin{tabular}{lcc}
\toprule
Realized outcome & $U_j(1,y)$ & $U_j(0,y)$ \\
\midrule
$y\in S_j$ & $1-\tau_j$ & $-(1-\tau_j)$ \\
$y\in D_j\setminus S_j$ & $-\tau_j$ & $\tau_j$ \\
$y\notin D_j$ & $0$ & $0$ \\
\bottomrule
\end{tabular}
\end{center}
On $D_j$, choosing the preferred action gives positive utility, and choosing the other action gives negative utility. The parameter $\tau_j$ controls the relative magnitudes of these payoffs: choosing action~0 on $S_j$ gives $-(1-\tau_j)$, whereas choosing action~1 on $D_j\setminus S_j$ gives $-\tau_j$. These two penalties have equal magnitude when $\tau_j=1/2$.

\paragraph{Prediction-based choice and the threshold.}
For $A\subseteq\gY$, write $p(A):=\sum_{y\in A}p_y$. The predicted expected payoffs of actions~1 and~0 are, respectively,
\[
\sum_y p_yU_j(1,y)=p(S_j)-\tau_jp(D_j),\qquad
\sum_y p_yU_j(0,y)=-\{p(S_j)-\tau_jp(D_j)\}.
\]
The rule $\widehat a_j(p)$ therefore selects the action with the larger predicted expected payoff, choosing action~1 in a tie. Since $p(D_j)>0$ in this experiment,
$\widehat a_j(p)=1 \quad\Longleftrightarrow\quad \frac{p(S_j)}{p(D_j)}\ge\tau_j$.
Thus $\tau_j$ is the threshold for the predicted probability of an outcome favoring action~1 conditional on the outcome lying in $D_j$. When $D_j=\gY$, the condition reduces to $p(S_j)\ge\tau_j$. The prediction selects the action before the outcome is observed, and the realized label determines its payoff:
$u_j(p,y)=U_j\bigl(\widehat a_j(p),y\bigr)$.

\section{Real Data}
\label{app:real-data}
\label{app:experimental-details}
\label{app:real-data-details-v16}
\label{app:additional-real-data}
\label{app:additional-results}
\label{app:additional-realdata-v2}
\label{app:additional-real-data-v26}

The exact interval maximization for empirical UC and HUC is described in Appendix~\ref{subsec:exact-interval-scan} (Algorithm~\ref{alg:exact-interval-scan}).

We evaluate iNaturalist 2019, SUPPORT2, MASSIVE, NSL-KDD, and N-BaIoT. The first two provide the main-text comparisons; the others cover text and network-traffic classification.

\subsection{Common Experimental Settings}
\label{subsec:real-common-settings}
\label{subsec:experimental-global-recalibration}

\paragraph{Training and evaluation.}
Base predictors are fitted on training data, and recalibration uses separate calibration data. Validation data select regularization and the returned corrected predictor; test data are reserved for final evaluation. We evaluate UC and HUC, as well as accuracy and AUC. AUC is the unweighted average of one-vs-rest ROC AUC values over classes with both positive and negative examples in the evaluation set. Each dataset uses five data splits, with its sample sizes and allocation described below. Within each split, all calibration methods share the same base predictions. Results are reported as means and sample standard deviations. For the six-classifier comparisons, pooled results give equal weight to the six classifiers and five splits.

\paragraph{Base predictors.}
Except for MASSIVE, we compare logistic regression (LR), Gaussian naive Bayes (GNB), decision trees (DT), random forests (RF), gradient-boosted trees (XGB), and a multilayer perceptron (MLP). Each classifier predicts the leaf labels directly. The probability of an internal node is the sum of its descendant-leaf probabilities; no separate classifier is trained at an internal node. The input features and their dimensions are specified for each dataset below.

\paragraph{Calibration methods.}
Baseline denotes the common positive base prediction. For a fitted base predictor $f_{\mathrm{raw}}$, we set
\begin{equation}
f_{\mathrm{base}}(x)=(1-\kappa_{\mathrm{init}})f_{\mathrm{raw}}(x)
+\frac{\kappa_{\mathrm{init}}}{K}\boldsymbol 1,
\qquad \kappa_{\mathrm{init}}=10^{-10},
\label{eq:experimental-interiorization-v16}
\end{equation}
using the same coefficient for every real-data dataset and calibration method. This transformation is applied once to the base predictions before recalibration; it is not repeated after boosting updates. We compare direct UC-Boost and HUC-Boost, temperature scaling, vector scaling~\citep{Guo2017}, and Dirichlet calibration~\citep{Kull2019}. The three parametric transformations are defined in Table~\ref{tab:global-calibration-parameters-v16}.

\begin{table}[htbp]
\caption{Whole-distribution recalibration baselines and the parameters learned on calibration data.}
\label{tab:global-calibration-parameters-v16}
\centering
\begin{tabularx}{\textwidth}{@{}lXl@{}}
\toprule
Method & Transformation & Parameters learned on calibration data \\
\midrule
Temperature & $g_{\mathrm{Temp},T}(p)=\operatorname{softmax}(\log p/T)$ & scalar $T>0$ \\
Vector & $g_{\mathrm{Vec},a,b_{\mathrm V}}(p)=\operatorname{softmax}(a\odot\log p+b_{\mathrm V})$ & $a,b_{\mathrm V}\in\R^K$ \\
Dirichlet & $g_{\mathrm{Dir},W,b_{\mathrm D}}(p)=\operatorname{softmax}(W\log p+b_{\mathrm D})$ & $W\in\R^{K\times K}$, $b_{\mathrm D}\in\R^K$ \\
\bottomrule
\end{tabularx}

\end{table}

For $p=f_{\mathrm{base}}(x)$, the parameters of each transformation are fitted by minimizing calibration negative log likelihood,
\begin{equation}
\widehat{\mathcal L}_{\mathrm{cal}}(\theta_g)
=-\frac{1}{|\gD_{\mathrm{cal}}|}
\sum_{(x_i,y_i)\in\gD_{\mathrm{cal}}}
\log\left(\left[g_{\theta_g}(f_{\mathrm{base}}(x_i))\right]_{y_i}\right).
\label{eq:global-calibration-fit-v16}
\end{equation}
Vector and Dirichlet calibration additionally use regularization, whose strength is selected on validation data. After fitting each parametric recalibrator, we apply the one-time independent uniform perturbation to its free parameters described in Appendix~\ref{app:randomized-initialization}, using perturbation parameter $\varepsilon_{\mathrm{noise}}=10^{-3}$ and master random seed $20260907$. Let $\widetilde\theta_g$ denote the resulting perturbed parameter. The same perturbed map $g_{\widetilde\theta_g}$ is used for the standalone parametric method and as the initializer of the corresponding two-stage HUC-Boost procedure. We therefore start HUC-Boost from
\begin{equation}
f_g^{(0)}(x)=g_{\widetilde\theta_g}(f_{\mathrm{base}}(x)).
\label{eq:stacked-initial-predictor-v16}
\end{equation}
An arrow denotes this two-stage composition. HUC-Boost uses the step $\alpha=\Gamma/\Lambda$. HUC-Boost (0.05) denotes the comparison with constant step magnitude 0.05.

\paragraph{Iteration budgets.}
UC-Boost uses at most 100 updates. Each HUC-Boost procedure, including the fixed-step and two-stage variants, uses at most 100 passes over the nodes with at least two children, for a node-audit budget of 100 times the number of such nodes. A node is updated only when its audited violation exceeds $0.001$; one-child nodes are excluded. UC-Boost stops when its empirical violation is at most $0.001$, and HUC-Boost stops when a complete pass finds no violation, or when the respective budget is exhausted. Validation selects the returned update prefix.

\paragraph{Utilities, subgroups, and reported quantities.}
Each dataset specifies its target utilities $\gU$ and subgroup family $\gC$. For a fixed prediction $p$, its predicted mean utility is $\mu_u(p)=\sum_{y\in\gY}p_yu(p,y)$; any action or reported node selected from $p$ is held fixed while averaging over the label $y$. Subgroups are represented by binary indicators $c_A(x)=\mathbf{1}\{x\in A\}$, including $c\equiv1$. UC and HUC use the same utility and subgroup families within each comparison. We also report the number of updates retained in the selected predictor.

\subsection[iNaturalist 2019]{iNaturalist 2019~\citep{VanHorn2018}}
\label{app:inat-details-v16}

\paragraph{Data and label hierarchy.}
iNaturalist associates each image with a species label and a biological taxonomy. We use $3{,}344$ images from 30 species, with at most 120 images per species. Each of five data splits contains $1{,}505$ training, 669 calibration, 503 validation, and 667 test images. The species are the leaves of the supplied biological taxonomy, shown in Figure~\ref{fig:inat-full-line-tree}. Let $V_\ell$ denote its nodes at depth $\ell$; depths $3,4,5,6,7$ correspond to class, order, family, genus, and species, respectively.

\begin{figure}[htbp]
\centering
\resizebox{0.85\textwidth}{!}{%
\begin{tikzpicture}[x=1cm,y=1cm,
 every node/.style={font=\footnotesize,inner sep=1pt,fill=white},
 treeedge/.style={draw=black!72,line width=0.60pt,shorten >=1pt,shorten <=1pt}]
\node[] (n0) at (0.000,-5.557) {$\rho$};
\node[] (n1) at (1.250,-3.124) {Animalia};
\node[] (n2) at (3.050,-1.530) {Chordata};
\node[] (n3) at (5.150,-1.530) {Amphibia};
\node[] (n4) at (7.250,-1.530) {Anura};
\node[] (n5) at (9.200,-1.530) {Ranidae};
\node[] (n6) at (10.850,-1.530) {Lithobates};
\node[anchor=west] (n7) at (12.350,0.000) {\textit{L.\ blairi}};
\node[anchor=west] (n8) at (12.350,-0.340) {\textit{L.\ sylvaticus}};
\node[anchor=west] (n9) at (12.350,-0.680) {\textit{L.\ palustris}};
\node[anchor=west] (n10) at (12.350,-1.020) {\textit{L.\ septentrionalis}};
\node[anchor=west] (n11) at (12.350,-1.360) {\textit{L.\ clamitans}};
\node[anchor=west] (n12) at (12.350,-1.700) {\textit{L.\ pipiens}};
\node[anchor=west] (n13) at (12.350,-2.040) {\textit{L.\ sphenocephalus}};
\node[anchor=west] (n14) at (12.350,-2.380) {\textit{L.\ berlandieri}};
\node[anchor=west] (n15) at (12.350,-2.720) {\textit{L.\ grylio}};
\node[anchor=west] (n16) at (12.350,-3.060) {\textit{L.\ catesbeianus}};
\node[] (n17) at (1.250,-7.990) {Fungi};
\node[] (n18) at (3.050,-7.990) {Basidiomycota};
\node[] (n19) at (5.150,-7.990) {Agaricomycetes};
\node[] (n20) at (7.250,-7.990) {Agaricales};
\node[] (n21) at (9.200,-7.990) {Amanitaceae};
\node[] (n22) at (10.850,-7.990) {Amanita};
\node[anchor=west] (n23) at (12.350,-6.120) {\textit{A.\ flavoconia}};
\node[anchor=west] (n24) at (12.350,-6.460) {\textit{A.\ rubescens}};
\node[anchor=west] (n25) at (12.350,-6.800) {\textit{A.\ gemmata}};
\node[anchor=west] (n26) at (12.350,-7.140) {\textit{A.\ phalloides}};
\node[anchor=west] (n27) at (12.350,-7.480) {\textit{A.\ muscaria}};
\node[anchor=west] (n28) at (12.350,-7.820) {\textit{A.\ pantherina}};
\node[anchor=west] (n29) at (12.350,-8.160) {\textit{A.\ calyptroderma}};
\node[anchor=west] (n30) at (12.350,-8.500) {\textit{A.\ bisporigera}};
\node[anchor=west] (n31) at (12.350,-8.840) {\textit{A.\ jacksonii}};
\node[anchor=west] (n32) at (12.350,-9.180) {\textit{A.\ augusta}};
\node[anchor=west] (n33) at (12.350,-9.520) {\textit{A.\ velosa}};
\node[anchor=west] (n34) at (12.350,-9.860) {\textit{A.\ vaginata}};
\node[] (n35) at (3.050,-4.718) {Arthropoda};
\node[] (n36) at (5.150,-4.718) {Insecta};
\node[] (n37) at (7.250,-3.825) {Hymenoptera};
\node[] (n38) at (9.200,-3.570) {Vespidae};
\node[] (n39) at (10.850,-3.570) {Polistes};
\node[anchor=west] (n40) at (12.350,-3.400) {\textit{P.\ carnifex}};
\node[] (n41) at (7.250,-4.760) {Odonata};
\node[] (n42) at (9.000,-4.760) {Coenagrionidae};
\node[] (n43) at (11.050,-4.760) {Enallagma};
\node[anchor=west] (n44) at (12.350,-4.420) {\textit{E.\ aspersum}};
\node[] (n45) at (7.250,-5.610) {Lepidoptera};
\node[] (n46) at (9.200,-5.440) {Nymphalidae};
\node[] (n47) at (10.850,-5.440) {Junonia};
\node[anchor=west] (n48) at (12.350,-5.440) {\textit{J.\ genoveva}};
\node[anchor=west] (n49) at (12.350,-4.760) {\textit{E.\ traviatum}};
\node[anchor=west] (n50) at (12.350,-3.740) {\textit{P.\ rubiginosus}};
\node[] (n51) at (9.200,-4.080) {Apidae};
\node[] (n52) at (10.850,-4.080) {Bombus};
\node[anchor=west] (n53) at (12.350,-4.080) {\textit{B.\ rufocinctus}};
\node[] (n54) at (9.200,-5.780) {Papilionidae};
\node[] (n55) at (10.850,-5.780) {Papilio};
\node[anchor=west] (n56) at (12.350,-5.780) {\textit{P.\ glaucus}};
\node[anchor=west] (n57) at (12.350,-5.100) {\textit{E.\ exsulans}};
\draw[treeedge] (n0.east)--(n1.west);
\draw[treeedge] (n0.east)--(n17.west);
\draw[treeedge] (n1.east)--(n2.west);
\draw[treeedge] (n1.east)--(n35.west);
\draw[treeedge] (n2.east)--(n3.west);
\draw[treeedge] (n3.east)--(n4.west);
\draw[treeedge] (n4.east)--(n5.west);
\draw[treeedge] (n5.east)--(n6.west);
\draw[treeedge] (n6.east)--(n7.west);
\draw[treeedge] (n6.east)--(n8.west);
\draw[treeedge] (n6.east)--(n9.west);
\draw[treeedge] (n6.east)--(n10.west);
\draw[treeedge] (n6.east)--(n11.west);
\draw[treeedge] (n6.east)--(n12.west);
\draw[treeedge] (n6.east)--(n13.west);
\draw[treeedge] (n6.east)--(n14.west);
\draw[treeedge] (n6.east)--(n15.west);
\draw[treeedge] (n6.east)--(n16.west);
\draw[treeedge] (n17.east)--(n18.west);
\draw[treeedge] (n18.east)--(n19.west);
\draw[treeedge] (n19.east)--(n20.west);
\draw[treeedge] (n20.east)--(n21.west);
\draw[treeedge] (n21.east)--(n22.west);
\draw[treeedge] (n22.east)--(n23.west);
\draw[treeedge] (n22.east)--(n24.west);
\draw[treeedge] (n22.east)--(n25.west);
\draw[treeedge] (n22.east)--(n26.west);
\draw[treeedge] (n22.east)--(n27.west);
\draw[treeedge] (n22.east)--(n28.west);
\draw[treeedge] (n22.east)--(n29.west);
\draw[treeedge] (n22.east)--(n30.west);
\draw[treeedge] (n22.east)--(n31.west);
\draw[treeedge] (n22.east)--(n32.west);
\draw[treeedge] (n22.east)--(n33.west);
\draw[treeedge] (n22.east)--(n34.west);
\draw[treeedge] (n35.east)--(n36.west);
\draw[treeedge] (n36.east)--(n37.west);
\draw[treeedge] (n36.east)--(n41.west);
\draw[treeedge] (n36.east)--(n45.west);
\draw[treeedge] (n37.east)--(n38.west);
\draw[treeedge] (n37.east)--(n51.west);
\draw[treeedge] (n38.east)--(n39.west);
\draw[treeedge] (n39.east)--(n40.west);
\draw[treeedge] (n39.east)--(n50.west);
\draw[treeedge] (n41.east)--(n42.west);
\draw[treeedge] (n42.east)--(n43.west);
\draw[treeedge] (n43.east)--(n44.west);
\draw[treeedge] (n43.east)--(n49.west);
\draw[treeedge] (n43.east)--(n57.west);
\draw[treeedge] (n45.east)--(n46.west);
\draw[treeedge] (n45.east)--(n54.west);
\draw[treeedge] (n46.east)--(n47.west);
\draw[treeedge] (n47.east)--(n48.west);
\draw[treeedge] (n51.east)--(n52.west);
\draw[treeedge] (n52.east)--(n53.west);
\draw[treeedge] (n54.east)--(n55.west);
\draw[treeedge] (n55.east)--(n56.west);
\end{tikzpicture}}
\caption{The biological label tree for the selected iNaturalist species. Species labels abbreviate the genus, which is written at their parent node. Every line denotes a parent--child edge.}
\label{fig:inat-full-line-tree}
\end{figure}

\paragraph{Features.}
Each image is represented by the 512-dimensional feature vector before the final classification layer of an ImageNet-pretrained ResNet-18, which is not fine-tuned on iNaturalist. The six base predictors in Appendix~\ref{subsec:real-common-settings} are trained on these features to predict the 30 species.

\paragraph{Utilities.}
For each node $v$, let $\gY_v$ be the species below $v$ and define
$J_v(y):=\mathbf{1}\{y\in\gY_v\}$, $\pi_v(p):=\sum_{z\in\gY_v}p_z$.
Thus $J_v(y)$ is one if the true species $y$ belongs to the category represented by $v$, and zero otherwise; $\pi_v(p)$ is that category's predicted probability.
At each level $\ell\in\{3,4,5,6,7\}$, select the single node with the largest predicted probability,
\begin{equation}
\widehat v_\ell(p):=\operatorname*{arg\,max}_{\prec;\,v\in V_\ell}\pi_v(p),
\label{eq:inat-rank-top1-selection}
\end{equation}
with ties resolved by the fixed rule in Appendix~\ref{subsec:tie-breaking-convention}. Its utility and predicted mean are
\begin{equation}
\begin{aligned}
u_\ell(p,y)&:=J_{\widehat v_\ell(p)}(y)
=\mathbf{1}\{y\in\gY_{\widehat v_\ell(p)}\},\\
\mu_{u_\ell}(p)&=\sum_{z\in\gY}p_z\mathbf{1}\{z\in\gY_{\widehat v_\ell(p)}\}
=\sum_{z\in\gY_{\widehat v_\ell(p)}}p_z
=\pi_{\widehat v_\ell(p)}(p).
\end{aligned}
\label{eq:inat-rank-top1-utility}
\end{equation}
The sets $\{\gY_v:v\in V_\ell\}$ partition the species, so each true species belongs to exactly one category at level $\ell$. The indicator above is therefore one exactly when the most probable category $\widehat v_\ell(p)$ is the true category at that level. This is top-1 correctness at class, order, family, genus, and species for $\ell=3,4,5,6,7$, respectively. At the species level, $\gY_v=\{v\}$, so the rule reduces to selecting the most probable leaf and checking whether it equals $y$. At higher levels, selection uses the total probability of each category's descendant species. UC and HUC compare this realized 0--1 correctness with the model's probability that the selected category is correct.

\paragraph{Subgroups.}
The five utilities give $|\gU|=5$. There are seven subgroup indicators, $|\gC|=7$: the whole population; brightness below and at least its training median; contrast below and at least its training median; and aspect ratio below one and at least one. Aspect ratio is image width divided by height. Indicators use only these separate attributes, without intersections or other input features.

\paragraph{Results.}
Table~\ref{tab:inat-complete-updated} gives the pooled results for the five utilities. HUC-Boost gives lower mean HUC than the fixed-step correction. The fixed-step correction retains more updates and has higher mean UC and lower mean AUC.

\begin{table}[!htbp]
\caption{Complete iNaturalist results for the five rank-wise correctness utilities. Values are mean $\pm$ sample standard deviation over six base predictors and five splits. UC-Boost and HUC-Boost are abbreviated as UC-B and HUC-B, respectively.}
\label{tab:inat-complete-updated}
\centering
\begingroup
\scriptsize
\setlength{\tabcolsep}{3pt}
\renewcommand{\arraystretch}{1.0}
\begin{tabular}{lccccc}
\toprule
Method & Acc. $\uparrow$ & AUC $\uparrow$ & UC $\downarrow$ & HUC $\downarrow$ & Updates \\
\midrule
Baseline & .4886$\pm$.0895 & .9001$\pm$.1082 & .2721$\pm$.1615 & .1251$\pm$.0595 & 0.00$\pm$0.00 \\
UC-B & .4618$\pm$.0877 & .9138$\pm$.0664 & \textbf{.0452$\pm$.0177} & .0361$\pm$.0108 & 25.43$\pm$24.07 \\
HUC-B & .4633$\pm$.0896 & .9223$\pm$.0599 & .0748$\pm$.0317 & .0345$\pm$.0090 & 145.93$\pm$180.47 \\
HUC-B (0.05) & .4853$\pm$.0888 & .8971$\pm$.1106 & .1837$\pm$.1767 & .0815$\pm$.0725 & 329.23$\pm$270.41 \\
Temp. & .4886$\pm$.0895 & .8902$\pm$.1097 & .1303$\pm$.1143 & .0920$\pm$.0756 & 0.00$\pm$0.00 \\
Temp.$\to$HUC & .4702$\pm$.0878 & \textbf{.9301$\pm$.0501} & .0473$\pm$.0135 & \textbf{.0289$\pm$.0081} & 43.07$\pm$25.92 \\
Vec. & \textbf{.4888$\pm$.0952} & .8986$\pm$.0999 & .1144$\pm$.0922 & .0827$\pm$.0726 & 0.00$\pm$0.00 \\
Vec.$\to$HUC & .4781$\pm$.0950 & .9291$\pm$.0528 & .0496$\pm$.0176 & .0313$\pm$.0079 & 28.23$\pm$22.38 \\
Dir. & .4365$\pm$.1125 & .9119$\pm$.0626 & .1047$\pm$.0532 & .0551$\pm$.0230 & 0.00$\pm$0.00 \\
Dir.$\to$HUC & .4349$\pm$.1116 & .9120$\pm$.0623 & .1093$\pm$.0523 & .0556$\pm$.0227 & 4.27$\pm$10.53 \\
\bottomrule
\end{tabular}
\endgroup
\end{table}

\paragraph{Results by base predictor.}
Tables~\ref{tab:inat-base-baseline-v2} and~\ref{tab:inat-base-huc-v2} separate all procedures by base predictor.  HUC-Boost gives a much smaller HUC error than HUC-Boost (0.05) for LR, GNB, and DT, which are the principal sources of the pooled difference.  The fixed-step method uses more updates for most base predictors; DT is the exception because its fixed path stops with far fewer selected updates but also has much larger HUC.  RF and XGB provide small exceptions in which the fixed step has a slightly smaller HUC, so the aggregate conclusion does not imply the same ordering for every base predictor.

\begin{table}[!htbp]
\caption{iNaturalist (all utilities): baseline methods by base predictor. Values are mean $\pm$ sample standard deviation over five data splits. UC-Boost and HUC-Boost are abbreviated as UC-B and HUC-B, respectively.}
\label{tab:inat-base-baseline-v2}
\centering
\begingroup
\scriptsize
\setlength{\tabcolsep}{1.0pt}
\renewcommand{\arraystretch}{1.0}
\resizebox{\linewidth}{!}{%
\begin{tabular}{@{}l*{5}{c}@{\hspace{5pt}}*{5}{c}@{}}
\toprule
& \multicolumn{5}{c}{\textbf{LR}} & \multicolumn{5}{c}{\textbf{RF}}\\
\cmidrule(lr){2-6}\cmidrule(lr){7-11}
Method & Acc. $\uparrow$ & AUC $\uparrow$ & UC $\downarrow$ & HUC $\downarrow$ & Updates & Acc. $\uparrow$ & AUC $\uparrow$ & UC $\downarrow$ & HUC $\downarrow$ & Updates\\
\midrule
Baseline & .5436$\pm$.0222 & .9522$\pm$.0028 & .2439$\pm$.0136 & .1225$\pm$.0066 & 0.00$\pm$0.00 & .5187$\pm$.0081 & .9509$\pm$.0027 & .2957$\pm$.0089 & .1246$\pm$.0089 & 0.00$\pm$0.00\\
UC-B & .4786$\pm$.0206 & .9531$\pm$.0031 & .0721$\pm$.0135 & .0536$\pm$.0043 & 9.80$\pm$3.42 & .5178$\pm$.0069 & .9550$\pm$.0027 & .0299$\pm$.0037 & .0328$\pm$.0067 & 18.80$\pm$7.46\\
Temp. & .5436$\pm$.0222 & .9537$\pm$.0030 & .0704$\pm$.0072 & .0574$\pm$.0073 & 0.00$\pm$0.00 & .5187$\pm$.0081 & .9572$\pm$.0022 & \textbf{.0285$\pm$.0043} & .0249$\pm$.0049 & 0.00$\pm$0.00\\
Vec. & \textbf{.5460$\pm$.0192} & \textbf{.9556$\pm$.0028} & \textbf{.0639$\pm$.0082} & \textbf{.0379$\pm$.0060} & 0.00$\pm$0.00 & \textbf{.5220$\pm$.0155} & \textbf{.9573$\pm$.0021} & .0390$\pm$.0184 & \textbf{.0215$\pm$.0053} & 0.00$\pm$0.00\\
Dir. & .5055$\pm$.0115 & .9521$\pm$.0028 & .1810$\pm$.0101 & .0891$\pm$.0098 & 0.00$\pm$0.00 & .2984$\pm$.0157 & .8933$\pm$.0067 & .1638$\pm$.0076 & .0769$\pm$.0067 & 0.00$\pm$0.00\\
\midrule
& \multicolumn{5}{c}{\textbf{GNB}} & \multicolumn{5}{c}{\textbf{XGB}}\\
\cmidrule(lr){2-6}\cmidrule(lr){7-11}
Method & Acc. $\uparrow$ & AUC $\uparrow$ & UC $\downarrow$ & HUC $\downarrow$ & Updates & Acc. $\uparrow$ & AUC $\uparrow$ & UC $\downarrow$ & HUC $\downarrow$ & Updates\\
\midrule
Baseline & .5244$\pm$.0213 & .9245$\pm$.0107 & .4512$\pm$.0212 & .1977$\pm$.0164 & 0.00$\pm$0.00 & .5145$\pm$.0148 & .9526$\pm$.0031 & .0778$\pm$.0105 & .0525$\pm$.0061 & 0.00$\pm$0.00\\
UC-B & .4774$\pm$.0181 & .8791$\pm$.0114 & \textbf{.0533$\pm$.0098} & \textbf{.0400$\pm$.0119} & 34.60$\pm$2.07 & .5040$\pm$.0090 & .9559$\pm$.0022 & .0420$\pm$.0125 & .0298$\pm$.0050 & 11.20$\pm$4.76\\
Temp. & .5244$\pm$.0213 & .8582$\pm$.0161 & .2089$\pm$.0098 & .1426$\pm$.0087 & 0.00$\pm$0.00 & .5145$\pm$.0148 & .9525$\pm$.0031 & .0814$\pm$.0125 & .0526$\pm$.0078 & 0.00$\pm$0.00\\
Vec. & .5253$\pm$.0161 & .8807$\pm$.0132 & .1787$\pm$.0120 & .1355$\pm$.0090 & 0.00$\pm$0.00 & \textbf{.5178$\pm$.0118} & .9526$\pm$.0034 & .0732$\pm$.0073 & .0494$\pm$.0078 & 0.00$\pm$0.00\\
Dir. & \textbf{.5277$\pm$.0184} & \textbf{.9298$\pm$.0080} & .0754$\pm$.0144 & .0444$\pm$.0097 & 0.00$\pm$0.00 & .5127$\pm$.0090 & \textbf{.9573$\pm$.0032} & \textbf{.0417$\pm$.0051} & \textbf{.0269$\pm$.0053} & 0.00$\pm$0.00\\
\midrule
& \multicolumn{5}{c}{\textbf{DT}} & \multicolumn{5}{c}{\textbf{MLP}}\\
\cmidrule(lr){2-6}\cmidrule(lr){7-11}
Method & Acc. $\uparrow$ & AUC $\uparrow$ & UC $\downarrow$ & HUC $\downarrow$ & Updates & Acc. $\uparrow$ & AUC $\uparrow$ & UC $\downarrow$ & HUC $\downarrow$ & Updates\\
\midrule
Baseline & \textbf{.2960$\pm$.0181} & .6637$\pm$.0096 & .4778$\pm$.0188 & .1948$\pm$.0110 & 0.00$\pm$0.00 & .5343$\pm$.0142 & .9565$\pm$.0026 & .0860$\pm$.0513 & .0587$\pm$.0176 & 0.00$\pm$0.00\\
UC-B & .2750$\pm$.0167 & .7825$\pm$.0115 & \textbf{.0437$\pm$.0152} & \textbf{.0337$\pm$.0049} & 70.60$\pm$20.31 & .5178$\pm$.0143 & .9571$\pm$.0033 & \textbf{.0304$\pm$.0051} & \textbf{.0267$\pm$.0022} & 7.60$\pm$5.22\\
Temp. & \textbf{.2960$\pm$.0181} & .6629$\pm$.0096 & .3446$\pm$.0140 & .2348$\pm$.0075 & 0.00$\pm$0.00 & .5343$\pm$.0142 & .9567$\pm$.0027 & .0478$\pm$.0077 & .0397$\pm$.0044 & 0.00$\pm$0.00\\
Vec. & .2828$\pm$.0107 & .6883$\pm$.0089 & .2865$\pm$.0163 & .2175$\pm$.0166 & 0.00$\pm$0.00 & \textbf{.5388$\pm$.0114} & \textbf{.9572$\pm$.0026} & .0451$\pm$.0093 & .0342$\pm$.0070 & 0.00$\pm$0.00\\
Dir. & .2666$\pm$.0280 & \textbf{.7840$\pm$.0082} & .0971$\pm$.0119 & .0481$\pm$.0109 & 0.00$\pm$0.00 & .5079$\pm$.0140 & .9549$\pm$.0022 & .0695$\pm$.0267 & .0455$\pm$.0090 & 0.00$\pm$0.00\\
\bottomrule
\end{tabular}%
}
\endgroup
\end{table}
\begin{table}[!htbp]
\caption{iNaturalist (all utilities): HUC-Boost results by base predictor. Values are mean $\pm$ sample standard deviation over five data splits. UC-Boost and HUC-Boost are abbreviated as UC-B and HUC-B, respectively.}
\label{tab:inat-base-huc-v2}
\centering
\begingroup
\scriptsize
\setlength{\tabcolsep}{1.0pt}
\renewcommand{\arraystretch}{1.0}
\resizebox{\linewidth}{!}{%
\begin{tabular}{@{}l*{5}{c}@{\hspace{5pt}}*{5}{c}@{}}
\toprule
& \multicolumn{5}{c}{\textbf{LR}} & \multicolumn{5}{c}{\textbf{RF}}\\
\cmidrule(lr){2-6}\cmidrule(lr){7-11}
Method & Acc. $\uparrow$ & AUC $\uparrow$ & UC $\downarrow$ & HUC $\downarrow$ & Updates & Acc. $\uparrow$ & AUC $\uparrow$ & UC $\downarrow$ & HUC $\downarrow$ & Updates\\
\midrule
HUC-B & .5094$\pm$.0049 & .9550$\pm$.0027 & .0876$\pm$.0050 & .0354$\pm$.0048 & 41.80$\pm$7.09 & .5034$\pm$.0094 & .9513$\pm$.0030 & .0429$\pm$.0116 & .0286$\pm$.0034 & 79.20$\pm$21.49\\
HUC-B (0.05) & .5286$\pm$.0144 & .9559$\pm$.0031 & .1124$\pm$.0198 & .0446$\pm$.0081 & 400.20$\pm$62.97 & .5196$\pm$.0063 & .9544$\pm$.0021 & .0410$\pm$.0081 & .0206$\pm$.0033 & 385.20$\pm$70.12\\
Temp.$\to$HUC & .5181$\pm$.0093 & .9560$\pm$.0025 & .0584$\pm$.0114 & \textbf{.0336$\pm$.0090} & 46.00$\pm$31.02 & .5070$\pm$.0116 & .9564$\pm$.0028 & .0315$\pm$.0072 & \textbf{.0204$\pm$.0007} & 29.60$\pm$22.13\\
Vec.$\to$HUC & \textbf{.5406$\pm$.0104} & \textbf{.9566$\pm$.0028} & \textbf{.0551$\pm$.0109} & .0349$\pm$.0055 & 17.80$\pm$14.86 & \textbf{.5226$\pm$.0122} & \textbf{.9571$\pm$.0022} & \textbf{.0298$\pm$.0024} & .0232$\pm$.0073 & 12.00$\pm$12.90\\
Dir.$\to$HUC & .5055$\pm$.0115 & .9521$\pm$.0028 & .1810$\pm$.0101 & .0891$\pm$.0098 & 0.00$\pm$0.00 & .2984$\pm$.0157 & .8933$\pm$.0067 & .1638$\pm$.0076 & .0769$\pm$.0067 & 0.00$\pm$0.00\\
\midrule
& \multicolumn{5}{c}{\textbf{GNB}} & \multicolumn{5}{c}{\textbf{XGB}}\\
\cmidrule(lr){2-6}\cmidrule(lr){7-11}
Method & Acc. $\uparrow$ & AUC $\uparrow$ & UC $\downarrow$ & HUC $\downarrow$ & Updates & Acc. $\uparrow$ & AUC $\uparrow$ & UC $\downarrow$ & HUC $\downarrow$ & Updates\\
\midrule
HUC-B & .4894$\pm$.0244 & .9225$\pm$.0092 & .1108$\pm$.0140 & .0426$\pm$.0074 & 190.00$\pm$23.82 & .4948$\pm$.0133 & .9545$\pm$.0026 & .0586$\pm$.0089 & .0360$\pm$.0038 & 30.80$\pm$13.88\\
HUC-B (0.05) & \textbf{.5280$\pm$.0228} & .8986$\pm$.0262 & .3707$\pm$.0226 & .1644$\pm$.0181 & 751.00$\pm$57.04 & .5106$\pm$.0210 & .9557$\pm$.0025 & .0528$\pm$.0138 & .0319$\pm$.0067 & 152.20$\pm$72.97\\
Temp.$\to$HUC & .5013$\pm$.0223 & \textbf{.9349$\pm$.0069} & \textbf{.0598$\pm$.0080} & \textbf{.0345$\pm$.0086} & 58.80$\pm$18.36 & .4978$\pm$.0124 & .9545$\pm$.0027 & .0563$\pm$.0045 & .0338$\pm$.0076 & 34.60$\pm$11.74\\
Vec.$\to$HUC & .5124$\pm$.0309 & .9345$\pm$.0071 & .0660$\pm$.0241 & .0362$\pm$.0071 & 43.40$\pm$13.03 & .4939$\pm$.0109 & .9544$\pm$.0022 & .0577$\pm$.0163 & .0378$\pm$.0057 & 31.80$\pm$8.76\\
Dir.$\to$HUC & .5190$\pm$.0232 & .9298$\pm$.0082 & .0928$\pm$.0185 & .0439$\pm$.0051 & 14.80$\pm$11.86 & \textbf{.5127$\pm$.0090} & \textbf{.9573$\pm$.0032} & \textbf{.0417$\pm$.0051} & \textbf{.0269$\pm$.0053} & 0.00$\pm$0.00\\
\midrule
& \multicolumn{5}{c}{\textbf{DT}} & \multicolumn{5}{c}{\textbf{MLP}}\\
\cmidrule(lr){2-6}\cmidrule(lr){7-11}
Method & Acc. $\uparrow$ & AUC $\uparrow$ & UC $\downarrow$ & HUC $\downarrow$ & Updates & Acc. $\uparrow$ & AUC $\uparrow$ & UC $\downarrow$ & HUC $\downarrow$ & Updates\\
\midrule
HUC-B & .2699$\pm$.0158 & .7936$\pm$.0047 & .1006$\pm$.0236 & .0392$\pm$.0060 & 507.00$\pm$126.11 & .5127$\pm$.0225 & .9567$\pm$.0022 & .0484$\pm$.0339 & .0252$\pm$.0136 & 26.80$\pm$11.90\\
HUC-B (0.05) & \textbf{.2939$\pm$.0210} & .6599$\pm$.0163 & .4741$\pm$.0119 & .1955$\pm$.0123 & 172.80$\pm$386.39 & \textbf{.5310$\pm$.0130} & \textbf{.9581$\pm$.0024} & .0514$\pm$.0268 & .0319$\pm$.0112 & 114.00$\pm$56.09\\
Temp.$\to$HUC & .2810$\pm$.0198 & \textbf{.8216$\pm$.0067} & \textbf{.0386$\pm$.0069} & \textbf{.0261$\pm$.0022} & 68.20$\pm$29.15 & .5160$\pm$.0230 & .9570$\pm$.0027 & \textbf{.0389$\pm$.0081} & .0247$\pm$.0067 & 21.20$\pm$8.32\\
Vec.$\to$HUC & .2747$\pm$.0180 & .8149$\pm$.0071 & .0493$\pm$.0109 & .0316$\pm$.0042 & 52.40$\pm$33.13 & .5241$\pm$.0156 & .9574$\pm$.0027 & .0400$\pm$.0094 & \textbf{.0245$\pm$.0052} & 12.00$\pm$8.40\\
Dir.$\to$HUC & .2660$\pm$.0290 & .7847$\pm$.0085 & .1068$\pm$.0193 & .0515$\pm$.0114 & 10.80$\pm$19.51 & .5079$\pm$.0140 & .9549$\pm$.0022 & .0695$\pm$.0267 & .0455$\pm$.0090 & 0.00$\pm$0.00\\
\bottomrule
\end{tabular}%
}
\endgroup
\end{table}

\subsection[SUPPORT2]{SUPPORT2~\citep{Knaus1995SUPPORT}}
\label{app:support2-details-v20}

\paragraph{Data and label hierarchy.}
SUPPORT2 contains 9,105 seriously ill hospitalized adults
from five centers.\footnote{
Data obtained from \url{http://hbiostat.org/data}
courtesy of the Vanderbilt University Department of Biostatistics.} We use a stratified subset of $5{,}000$ patients with an observed two-month functional outcome. The five outcome categories define the leaves $y_1,\ldots,y_5$. Each of five data splits contains $2{,}250$ training, $1{,}001$ calibration, 750 validation, and 999 test examples.

The root $v_0$ separates survival from death within two months. Among survivors, $v_1$ separates milder from major limitations. Node $v_2$ separates $y_1$ and $y_2$, and $v_3$ separates $y_3$ and $y_4$. Thus $V^\circ=\{v_0,v_1,v_2,v_3\}$.

\begin{figure}[htbp]
\centering
\begin{tikzpicture}[x=0.85cm,y=0.75cm,
 every node/.style={font=\small,inner sep=1pt,fill=white},
 treeedge/.style={draw=black!72,line width=0.60pt,shorten >=1.6pt,shorten <=1.6pt}]
\node (v0) at (0.9,0) {$v_0$};
\node (v1) at (-0.9,-0.85) {$v_1$};
\node (y5) at (2.7,-0.85) {$y_5$};
\node (v2) at (-2.3,-1.7) {$v_2$};
\node (v3) at (0.5,-1.7) {$v_3$};
\node (y1) at (-3.0,-2.55) {$y_1$};
\node (y2) at (-1.6,-2.55) {$y_2$};
\node (y3) at (-0.2,-2.55) {$y_3$};
\node (y4) at (1.2,-2.55) {$y_4$};
\foreach \a/\b in {v0/v1,v0/y5,v1/v2,v1/v3,v2/y1,v2/y2,v3/y3,v3/y4}
 {\draw[treeedge] (\a)--(\b);}
\end{tikzpicture}
\caption{SUPPORT2 outcome tree. The leaves are: $y_1$, no moderate-or-greater limitation; $y_2$, assistance with daily activities; $y_3$, major functional limitation; $y_4$, intubation or coma; $y_5$, death within two months. The hierarchy and utilities are synthetic, not clinical recommendations.}
\label{fig:support2-tree-v20}
\end{figure}

\paragraph{Features.}
There are 30 input variables before categorical expansion: 24 numerical variables and six categorical variables. The categorical variables are sex, disease group, disease class, income, race, and cancer status. The numerical variables describe patient background and baseline physiological and laboratory measurements. Outcome variables, post-outcome variables, costs, and precomputed prognostic scores are excluded. Numerical variables are standardized after median imputation and are accompanied by missing-value indicators; categorical variables are represented by indicators. The resulting indicators and numerical variables form the classifier input; the output is a probability vector over five leaves.

The six base predictors in Appendix~\ref{subsec:real-common-settings} are trained on these features to predict the five leaf labels.

\paragraph{Utilities.}
\label{app:support2-utility-v20}
These utilities use the same binary action-comparison construction as the Toy decision utilities. The interpretation of the outcome sets, thresholds, and action payoffs is given in Appendix~\ref{subsec:toy-decision-utility-interpretation}.
For decisions $j\in\{1,2,3\}$, let $D_j$ be the leaves on which the decision is active and $S_j\subseteq D_j$ the leaves for which action 1 is preferred:
\begin{equation}
\begin{array}{c|cc|l}
 j & D_j & S_j & \text{two actions}\\
\hline
1 & \{y_1,y_2,y_3,y_4,y_5\} & \{y_5\}
  & \text{prepare for death within two months / usual response}\\
2 & \{y_1,y_2\} & \{y_2\}
  & \text{prepare assistance with daily living / usual follow-up}\\
3 & \{y_3,y_4\} & \{y_4\}
  & \text{prepare intensive support / usual support}
\end{array}
\label{eq:support2-decision-sets-v20}
\end{equation}
For $\tau\in\Theta:=\{0.25,0.50,0.75\}$, define
\begin{equation}
d_j^\tau(y)=\mathbf{1}\{y\in S_j\}-\tau\mathbf{1}\{y\in D_j\},
\qquad
U_j^\tau(a,y)=(2a-1)d_j^\tau(y),\quad a\in\{0,1\},
\end{equation}
and let
\begin{align}
\widehat a_j^\tau(p)
&:=\operatorname*{arg\,max}_{\prec;\,a\in\{0,1\}}
\sum_{y\in\gY}p_yU_j^\tau(a,y),\\
u_j^\tau(p,y)&:=U_j^\tau\!\left(\widehat a_j^\tau(p),y\right).
\end{align}
The fixed order $\prec$ resolves ties. These synthetic utilities encode three action comparisons at three parameter values, not treatment effects or clinical guidelines.

For $A\subseteq\gY$, write $p(A):=\sum_{z\in A}p_z$. Averaging the selected action's payoff gives
\[
\mu_{u_j^\tau}(p) =\sum_{z\in\gY}p_z u_j^\tau(p,z) =\{2\widehat a_j^\tau(p)-1\}\{p(S_j)-\tau p(D_j)\} =\left|p(S_j)-\tau p(D_j)\right|.
\]
The last equality holds because $\widehat a_j^\tau(p)$ maximizes the predicted expected payoff. Thus the three decisions have predicted means
\[
\mu_{u_1^\tau}(p)=|p_{y_5}-\tau|, \qquad \mu_{u_2^\tau}(p)=|p_{y_2}-\tau(p_{y_1}+p_{y_2})|, \qquad \mu_{u_3^\tau}(p)=|p_{y_4}-\tau(p_{y_3}+p_{y_4})|.
\]
These are expected payoffs, not probabilities of correct classification. The average is over all five outcomes; for decisions 2 and 3, outcomes outside $D_j$ contribute zero, without division by $p(D_j)$.

The main comparison uses all nine utilities,
$\gU=\gU^{(3)}:=\{u_j^\tau:j\in\{1,2,3\},\ \tau\in\Theta\}$,
whose relevant internal nodes cover $\{v_0,v_1,v_2,v_3\}$.

The SUPPORT2 results in the main-text Table~\ref{tab:real-data-compact-main} use $\gU=\gU^{(3)}$: all three decisions at all three thresholds are included in both correction and UC/HUC evaluation.

\paragraph{Subgroups.}
\label{par:support2-subgroup-indicators}
Each subgroup is represented by $c_A(x)=\mathbf{1}\{x\in A\}$. There are ten candidate indicators: one for the whole population, three for age (below 65, 65 to below 80, and at least 80 years), two for sex, and four for disease class (ARF/MOSF, COPD/CHF/cirrhosis, coma, and cancer). The indicators are evaluated from the corresponding patient attributes; neither predicted labels nor prediction errors define membership. These groups may overlap, but their intersections are not added. Missing age values belong to none of the three age groups.

The whole-population indicator is always included. Each other indicator is retained when its group contains at least five observations in both calibration and validation. Thus the retained family satisfies $|\gC|\le10$. The same retained family is used by all methods and utility classes within a split, with no further selection on test data.

\subsubsection{Main Results}
\label{subsec:support2-main-results}
Table~\ref{tab:support2-complete-updated} gives the pooled results for $\gU^{(3)}$ over the six base predictors and five data splits. This is the nine-utility SUPPORT2 setting used in Table~\ref{tab:real-data-compact-main}; the present table additionally includes HUC-Boost (0.05) as a supplementary step-size comparison. HUC-Boost reduces mean HUC relative to Baseline and gives lower mean HUC than the fixed-step correction. The fixed-step correction retains more updates and has higher mean UC and lower mean Accuracy.

\begin{table}[!htbp]
\caption{Complete SUPPORT2 results for the nine-utility class $\gU^{(3)}$. Values are mean $\pm$ sample standard deviation over six base predictors and five data splits. UC-Boost and HUC-Boost are abbreviated as UC-B and HUC-B, respectively.}
\label{tab:support2-complete-updated}
\centering
\begingroup
\scriptsize
\setlength{\tabcolsep}{3pt}
\renewcommand{\arraystretch}{1.0}
\begin{tabular}{lccccc}
\toprule
Method & Acc. $\uparrow$ & AUC $\uparrow$ & UC $\downarrow$ & HUC $\downarrow$ & Updates \\
\midrule
Baseline & .4990$\pm$.1994 & .6763$\pm$.0828 & .1671$\pm$.1777 & .1481$\pm$.1422 & 0.00$\pm$0.00 \\
UC-B & .5746$\pm$.0506 & .6813$\pm$.0789 & .0394$\pm$.0139 & .0504$\pm$.0313 & 33.90$\pm$39.40 \\
HUC-B & .5754$\pm$.0453 & .6985$\pm$.0600 & .0364$\pm$.0101 & .0351$\pm$.0089 & 45.20$\pm$46.89 \\
HUC-B (0.05) & .5023$\pm$.1967 & .6844$\pm$.0787 & .1466$\pm$.1828 & .1270$\pm$.1444 & 198.80$\pm$148.38 \\
Temp. & .4990$\pm$.1994 & .6865$\pm$.0722 & .0769$\pm$.0692 & .0757$\pm$.0693 & 0.00$\pm$0.00 \\
Temp.$\to$HUC & .5872$\pm$.0319 & .7031$\pm$.0568 & .0324$\pm$.0064 & .0323$\pm$.0062 & 20.40$\pm$22.61 \\
Vec. & .5750$\pm$.0532 & .6907$\pm$.0654 & .0380$\pm$.0154 & .0380$\pm$.0154 & 0.00$\pm$0.00 \\
Vec.$\to$HUC & .5886$\pm$.0322 & .7005$\pm$.0505 & \textbf{.0314$\pm$.0081} & \textbf{.0309$\pm$.0079} & 14.27$\pm$12.55 \\
Dir. & .5794$\pm$.0457 & .7125$\pm$.0657 & .0351$\pm$.0129 & .0351$\pm$.0128 & 0.00$\pm$0.00 \\
Dir.$\to$HUC & \textbf{.5889$\pm$.0322} & \textbf{.7181$\pm$.0500} & .0323$\pm$.0077 & .0322$\pm$.0072 & 13.27$\pm$14.08 \\
\bottomrule
\end{tabular}
\endgroup
\end{table}

\paragraph{Results by base predictor.}
Tables~\ref{tab:support-u3-base-baseline-v2} and~\ref{tab:support-u3-base-huc-v2} give the full base-predictor results for the nine-utility class used in the main text.  The fixed step is especially poor for GNB and DT and uses more updates than HUC-Boost for every base predictor.  RF and XGB give small base-specific exceptions in HUC, but these exceptions do not change the pooled conclusion that the fixed step is inferior on SUPPORT2.

\begin{table}[!htbp]
\caption{SUPPORT2 ($\gU^{(3)}$): baseline methods by base predictor. Values are mean $\pm$ sample standard deviation over five data splits. UC-Boost and HUC-Boost are abbreviated as UC-B and HUC-B, respectively.}
\label{tab:support-u3-base-baseline-v2}
\centering
\begingroup
\scriptsize
\setlength{\tabcolsep}{1.0pt}
\renewcommand{\arraystretch}{1.0}
\resizebox{\linewidth}{!}{%
\begin{tabular}{@{}l*{5}{c}@{\hspace{5pt}}*{5}{c}@{}}
\toprule
& \multicolumn{5}{c}{\textbf{LR}} & \multicolumn{5}{c}{\textbf{RF}}\\
\cmidrule(lr){2-6}\cmidrule(lr){7-11}
Method & Acc. $\uparrow$ & AUC $\uparrow$ & UC $\downarrow$ & HUC $\downarrow$ & Updates & Acc. $\uparrow$ & AUC $\uparrow$ & UC $\downarrow$ & HUC $\downarrow$ & Updates\\
\midrule
Baseline & .6088$\pm$.0125 & .7621$\pm$.0218 & .0385$\pm$.0075 & .0385$\pm$.0075 & 0.00$\pm$0.00 & \textbf{.6152$\pm$.0053} & .7462$\pm$.0289 & .0473$\pm$.0083 & .0473$\pm$.0083 & 0.00$\pm$0.00\\
UC-B & .6088$\pm$.0120 & .7614$\pm$.0210 & .0255$\pm$.0019 & .0255$\pm$.0019 & 3.80$\pm$2.17 & \textbf{.6152$\pm$.0128} & .7436$\pm$.0289 & .0350$\pm$.0104 & .0350$\pm$.0104 & 7.60$\pm$10.41\\
Temp. & .6088$\pm$.0125 & .7621$\pm$.0216 & .0284$\pm$.0055 & .0268$\pm$.0067 & 0.00$\pm$0.00 & \textbf{.6152$\pm$.0053} & .7442$\pm$.0284 & .0407$\pm$.0083 & .0407$\pm$.0083 & 0.00$\pm$0.00\\
Vec. & .6094$\pm$.0119 & .7633$\pm$.0217 & .0269$\pm$.0035 & .0269$\pm$.0035 & 0.00$\pm$0.00 & .6140$\pm$.0038 & .7286$\pm$.0394 & \textbf{.0306$\pm$.0071} & \textbf{.0306$\pm$.0071} & 0.00$\pm$0.00\\
Dir. & \textbf{.6100$\pm$.0120} & \textbf{.7677$\pm$.0182} & \textbf{.0251$\pm$.0028} & \textbf{.0251$\pm$.0028} & 0.00$\pm$0.00 & .6124$\pm$.0087 & \textbf{.7475$\pm$.0133} & .0310$\pm$.0095 & .0310$\pm$.0095 & 0.00$\pm$0.00\\
\midrule
& \multicolumn{5}{c}{\textbf{GNB}} & \multicolumn{5}{c}{\textbf{XGB}}\\
\cmidrule(lr){2-6}\cmidrule(lr){7-11}
Method & Acc. $\uparrow$ & AUC $\uparrow$ & UC $\downarrow$ & HUC $\downarrow$ & Updates & Acc. $\uparrow$ & AUC $\uparrow$ & UC $\downarrow$ & HUC $\downarrow$ & Updates\\
\midrule
Baseline & .0701$\pm$.0131 & .5729$\pm$.0136 & .5091$\pm$.0272 & .3978$\pm$.0098 & 0.00$\pm$0.00 & .6036$\pm$.0066 & .7406$\pm$.0207 & .0993$\pm$.0088 & .0993$\pm$.0088 & 0.00$\pm$0.00\\
UC-B & \textbf{.4987$\pm$.0361} & .5828$\pm$.0288 & .0603$\pm$.0051 & .1099$\pm$.0144 & 98.20$\pm$2.39 & .6034$\pm$.0054 & .7512$\pm$.0132 & .0354$\pm$.0092 & .0354$\pm$.0092 & 16.40$\pm$7.64\\
Temp. & .0701$\pm$.0131 & .6206$\pm$.0250 & .2101$\pm$.0046 & .2101$\pm$.0046 & 0.00$\pm$0.00 & .6036$\pm$.0066 & .7446$\pm$.0198 & .0316$\pm$.0086 & .0311$\pm$.0092 & 0.00$\pm$0.00\\
Vec. & .4821$\pm$.0440 & \textbf{.6289$\pm$.0263} & .0605$\pm$.0138 & .0605$\pm$.0138 & 0.00$\pm$0.00 & .6086$\pm$.0093 & .7466$\pm$.0154 & .0307$\pm$.0094 & .0307$\pm$.0094 & 0.00$\pm$0.00\\
Dir. & .4985$\pm$.0239 & .6142$\pm$.0239 & \textbf{.0559$\pm$.0149} & \textbf{.0559$\pm$.0149} & 0.00$\pm$0.00 & \textbf{.6118$\pm$.0085} & \textbf{.7637$\pm$.0152} & \textbf{.0292$\pm$.0065} & \textbf{.0297$\pm$.0059} & 0.00$\pm$0.00\\
\midrule
& \multicolumn{5}{c}{\textbf{DT}} & \multicolumn{5}{c}{\textbf{MLP}}\\
\cmidrule(lr){2-6}\cmidrule(lr){7-11}
Method & Acc. $\uparrow$ & AUC $\uparrow$ & UC $\downarrow$ & HUC $\downarrow$ & Updates & Acc. $\uparrow$ & AUC $\uparrow$ & UC $\downarrow$ & HUC $\downarrow$ & Updates\\
\midrule
Baseline & .5007$\pm$.0194 & .5820$\pm$.0165 & .2745$\pm$.0275 & .2745$\pm$.0275 & 0.00$\pm$0.00 & .5956$\pm$.0174 & .6538$\pm$.0444 & .0337$\pm$.0088 & .0314$\pm$.0075 & 0.00$\pm$0.00\\
UC-B & .5209$\pm$.0197 & .5979$\pm$.0225 & .0501$\pm$.0073 & .0667$\pm$.0080 & 71.00$\pm$27.46 & \textbf{.6008$\pm$.0180} & .6509$\pm$.0375 & .0299$\pm$.0057 & .0301$\pm$.0053 & 6.40$\pm$10.57\\
Temp. & .5007$\pm$.0194 & .5939$\pm$.0225 & .1203$\pm$.0097 & .1168$\pm$.0128 & 0.00$\pm$0.00 & .5956$\pm$.0174 & .6537$\pm$.0443 & .0305$\pm$.0073 & .0286$\pm$.0031 & 0.00$\pm$0.00\\
Vec. & .5407$\pm$.0188 & .6104$\pm$.0326 & .0515$\pm$.0079 & .0515$\pm$.0079 & 0.00$\pm$0.00 & .5950$\pm$.0201 & .6664$\pm$.0337 & \textbf{.0276$\pm$.0049} & \textbf{.0276$\pm$.0049} & 0.00$\pm$0.00\\
Dir. & \textbf{.5469$\pm$.0174} & \textbf{.6388$\pm$.0320} & \textbf{.0395$\pm$.0044} & \textbf{.0395$\pm$.0044} & 0.00$\pm$0.00 & .5970$\pm$.0155 & \textbf{.7428$\pm$.0183} & .0298$\pm$.0059 & .0297$\pm$.0061 & 0.00$\pm$0.00\\
\bottomrule
\end{tabular}%
}
\endgroup
\end{table}
\begin{table}[!htbp]
\caption{SUPPORT2 ($\gU^{(3)}$): HUC-Boost results by base predictor. Values are mean $\pm$ sample standard deviation over five data splits. UC-Boost and HUC-Boost are abbreviated as UC-B and HUC-B, respectively.}
\label{tab:support-u3-base-huc-v2}
\centering
\begingroup
\scriptsize
\setlength{\tabcolsep}{1.0pt}
\renewcommand{\arraystretch}{1.0}
\resizebox{\linewidth}{!}{%
\begin{tabular}{@{}l*{5}{c}@{\hspace{5pt}}*{5}{c}@{}}
\toprule
& \multicolumn{5}{c}{\textbf{LR}} & \multicolumn{5}{c}{\textbf{RF}}\\
\cmidrule(lr){2-6}\cmidrule(lr){7-11}
Method & Acc. $\uparrow$ & AUC $\uparrow$ & UC $\downarrow$ & HUC $\downarrow$ & Updates & Acc. $\uparrow$ & AUC $\uparrow$ & UC $\downarrow$ & HUC $\downarrow$ & Updates\\
\midrule
HUC-B & .6086$\pm$.0121 & .7580$\pm$.0208 & .0282$\pm$.0019 & .0271$\pm$.0017 & 12.80$\pm$7.16 & .6100$\pm$.0166 & .7459$\pm$.0397 & .0342$\pm$.0108 & .0342$\pm$.0107 & 19.20$\pm$12.77\\
HUC-B (0.05) & \textbf{.6112$\pm$.0138} & .7608$\pm$.0197 & .0285$\pm$.0034 & .0285$\pm$.0034 & 72.80$\pm$54.21 & .6156$\pm$.0109 & .7485$\pm$.0314 & .0332$\pm$.0095 & .0332$\pm$.0095 & 120.00$\pm$90.77\\
Temp.$\to$HUC & .6098$\pm$.0117 & .7597$\pm$.0200 & .0303$\pm$.0052 & .0303$\pm$.0052 & 9.60$\pm$15.13 & .6158$\pm$.0097 & \textbf{.7499$\pm$.0330} & .0306$\pm$.0070 & .0318$\pm$.0067 & 16.80$\pm$17.30\\
Vec.$\to$HUC & .6088$\pm$.0112 & .7555$\pm$.0230 & .0296$\pm$.0062 & .0278$\pm$.0038 & 8.00$\pm$10.20 & \textbf{.6162$\pm$.0117} & .7281$\pm$.0297 & \textbf{.0305$\pm$.0077} & \textbf{.0301$\pm$.0082} & 16.80$\pm$12.46\\
Dir.$\to$HUC & .6098$\pm$.0118 & \textbf{.7641$\pm$.0171} & \textbf{.0256$\pm$.0033} & \textbf{.0256$\pm$.0033} & 0.80$\pm$1.79 & .6138$\pm$.0097 & .7267$\pm$.0230 & .0314$\pm$.0071 & .0316$\pm$.0071 & 20.00$\pm$12.33\\
\midrule
& \multicolumn{5}{c}{\textbf{GNB}} & \multicolumn{5}{c}{\textbf{XGB}}\\
\cmidrule(lr){2-6}\cmidrule(lr){7-11}
Method & Acc. $\uparrow$ & AUC $\uparrow$ & UC $\downarrow$ & HUC $\downarrow$ & Updates & Acc. $\uparrow$ & AUC $\uparrow$ & UC $\downarrow$ & HUC $\downarrow$ & Updates\\
\midrule
HUC-B & .5075$\pm$.0185 & .6407$\pm$.0165 & .0443$\pm$.0107 & .0421$\pm$.0065 & 86.40$\pm$4.56 & .6042$\pm$.0069 & .7484$\pm$.0111 & .0348$\pm$.0077 & .0347$\pm$.0077 & 22.40$\pm$8.76\\
HUC-B (0.05) & .0809$\pm$.0247 & .5756$\pm$.0171 & .5067$\pm$.0277 & .3901$\pm$.0190 & 367.20$\pm$73.34 & .6058$\pm$.0110 & .7490$\pm$.0109 & .0318$\pm$.0062 & .0318$\pm$.0062 & 141.60$\pm$48.71\\
Temp.$\to$HUC & .5483$\pm$.0198 & .6441$\pm$.0409 & \textbf{.0317$\pm$.0033} & \textbf{.0307$\pm$.0032} & 16.00$\pm$7.48 & .6048$\pm$.0133 & .7458$\pm$.0124 & .0361$\pm$.0088 & .0361$\pm$.0088 & 19.20$\pm$27.77\\
Vec.$\to$HUC & .5451$\pm$.0236 & .6556$\pm$.0352 & .0376$\pm$.0137 & .0376$\pm$.0137 & 18.40$\pm$7.80 & .6098$\pm$.0075 & .7430$\pm$.0124 & \textbf{.0286$\pm$.0038} & \textbf{.0286$\pm$.0038} & 9.60$\pm$8.29\\
Dir.$\to$HUC & \textbf{.5485$\pm$.0218} & \textbf{.6655$\pm$.0234} & .0362$\pm$.0092 & .0364$\pm$.0091 & 23.20$\pm$13.68 & \textbf{.6112$\pm$.0082} & \textbf{.7629$\pm$.0141} & .0295$\pm$.0070 & .0299$\pm$.0064 & 6.40$\pm$6.69\\
\midrule
& \multicolumn{5}{c}{\textbf{DT}} & \multicolumn{5}{c}{\textbf{MLP}}\\
\cmidrule(lr){2-6}\cmidrule(lr){7-11}
Method & Acc. $\uparrow$ & AUC $\uparrow$ & UC $\downarrow$ & HUC $\downarrow$ & Updates & Acc. $\uparrow$ & AUC $\uparrow$ & UC $\downarrow$ & HUC $\downarrow$ & Updates\\
\midrule
HUC-B & .5241$\pm$.0160 & .6256$\pm$.0227 & .0465$\pm$.0076 & .0443$\pm$.0052 & 124.80$\pm$27.33 & .5980$\pm$.0143 & .6721$\pm$.0333 & \textbf{.0302$\pm$.0063} & \textbf{.0282$\pm$.0037} & 5.60$\pm$8.29\\
HUC-B (0.05) & .4981$\pm$.0220 & .6089$\pm$.0274 & .2463$\pm$.0260 & .2463$\pm$.0260 & 398.40$\pm$2.19 & \textbf{.6024$\pm$.0104} & .6637$\pm$.0471 & .0333$\pm$.0093 & .0320$\pm$.0077 & 92.80$\pm$98.48\\
Temp.$\to$HUC & .5481$\pm$.0148 & .6454$\pm$.0208 & .0335$\pm$.0052 & .0335$\pm$.0052 & 41.60$\pm$29.61 & .5962$\pm$.0203 & .6738$\pm$.0317 & .0325$\pm$.0088 & .0317$\pm$.0078 & 19.20$\pm$26.89\\
Vec.$\to$HUC & \textbf{.5520$\pm$.0151} & \textbf{.6582$\pm$.0267} & \textbf{.0312$\pm$.0047} & \textbf{.0316$\pm$.0053} & 28.80$\pm$14.81 & .5998$\pm$.0145 & .6626$\pm$.0381 & .0310$\pm$.0093 & .0301$\pm$.0082 & 4.00$\pm$5.66\\
Dir.$\to$HUC & .5489$\pm$.0180 & .6503$\pm$.0300 & .0371$\pm$.0072 & .0369$\pm$.0074 & 19.60$\pm$21.00 & .6010$\pm$.0162 & \textbf{.7394$\pm$.0253} & .0341$\pm$.0081 & .0329$\pm$.0049 & 9.60$\pm$10.43\\
\bottomrule
\end{tabular}%
}
\endgroup
\end{table}
\subsubsection{Additional Results}
\label{subsec:support2-additional-results}
The nested classes add one decision at a time, including all three thresholds for that decision:
\[
\begin{aligned}
\gU^{(1)}&=\{u_1^{0.25},u_1^{0.50},u_1^{0.75}\},\\
\gU^{(2)}&=\gU^{(1)}\cup\{u_2^{0.25},u_2^{0.50},u_2^{0.75}\},\\
\gU^{(3)}&=\gU^{(2)}\cup\{u_3^{0.25},u_3^{0.50},u_3^{0.75}\}.
\end{aligned}
\]
Thus $\gU^{(1)}$ contains only the comparison concerning preparation for death within two months; $\gU^{(2)}$ adds the comparison concerning assistance with daily living; and $\gU^{(3)}$ additionally includes the intensive-support comparison. Their numbers of utilities are $3,6,9$. The unions of their relevant internal nodes are, respectively, $\{v_0\}$, $\{v_0,v_1,v_2\}$, and $\{v_0,v_1,v_2,v_3\}$, of sizes $1,3,4$.

\paragraph{Utility-class expansion.}
Table~\ref{tab:support2-utility-expansion-updated} compares $\gU^{(1)}$, $\gU^{(2)}$, and $\gU^{(3)}$, varying the target class rather than fixing it to $\gU^{(3)}$ as in the main-text comparison.
HUC-Boost keeps the resulting HUC at a similar level as the class expands, while the number of selected updates grows with the number of target utilities and relevant nodes.  For the two larger classes, HUC-Boost yields lower mean HUC than direct UC-Boost.  The fixed-step correction has higher mean HUC for every pooled class, and its retained update count grows as the class expands.

\begin{table}[!htbp]
\caption{Supplementary SUPPORT2 utility-class expansion. Each correction is fitted separately from Baseline for each target class. HUC-B uses $\alpha=\Gamma/\Lambda$; HUC-B (0.05) uses constant step magnitude $0.05$. UC-Boost and HUC-Boost are abbreviated as UC-B and HUC-B, respectively.}
\label{tab:support2-utility-expansion-updated}
\centering
\begingroup
\scriptsize
\setlength{\tabcolsep}{3pt}
\renewcommand{\arraystretch}{1.0}
\resizebox{\linewidth}{!}{%
\begin{tabular}{@{}lcccccccc@{}}
\toprule
\shortstack{Target\\class} & Utilities & \shortstack{Relevant\\nodes} & \multicolumn{3}{c}{HUC error after correction} & \multicolumn{3}{c}{Updates} \\
\cmidrule(lr){4-6}\cmidrule(lr){7-9}
&  &  & UC-B & HUC-B & HUC-B (0.05) & UC-B & HUC-B & HUC-B (0.05) \\
\midrule
$\mathcal U^{(1)}$ & 3 & 1 & .0333$\pm$.0083 & .0343$\pm$.0081 & .1264$\pm$.1451 & 11.57$\pm$11.41 & 11.90$\pm$11.79 & 47.77$\pm$36.81 \\
$\mathcal U^{(2)}$ & 6 & 3 & .0557$\pm$.0431 & .0346$\pm$.0081 & .1268$\pm$.1448 & 29.30$\pm$34.74 & 32.70$\pm$35.19 & 138.60$\pm$108.89 \\
$\mathcal U^{(3)}$ & 9 & 4 & .0504$\pm$.0313 & .0351$\pm$.0089 & .1270$\pm$.1444 & 33.90$\pm$39.40 & 45.20$\pm$46.89 & 198.80$\pm$148.38 \\
\bottomrule
\end{tabular}
}
\endgroup
\end{table}

\subsection[MASSIVE]{MASSIVE~\citep{FitzGerald2023MASSIVE,bastianelli-etal-2020-slurp}}
\label{subsec:massive-data}
\paragraph{Data, features, and label hierarchy.}
MASSIVE contains $16{,}521$ English utterances with 60 intent labels grouped into 18 scenarios. The root has one child for each scenario, and the children of each scenario are its intent labels. Thus every root-to-leaf path has the form root--scenario--intent. If $\mathcal S$ denotes the set of 18 scenario nodes, then
\begin{equation}
\operatorname{ch}(\rho)=\mathcal S,\qquad
\operatorname{ch}(s)=\gY_s\quad(s\in\mathcal S),\qquad
\gY=\bigsqcup_{s\in\mathcal S}\gY_s,\quad |\gY|=60.
\label{eq:massive-tree-structure}
\end{equation}
Each utterance is represented by TF--IDF (term frequency--inverse document frequency) features. TF--IDF weights term frequencies within each utterance by inverse document frequency, reducing the influence of terms shared by many utterances~\citep{DBLP:books/daglib/0021593}.
We concatenate word unigram and bigram TF--IDF features with character three- to five-gram TF--IDF features. The two vocabularies are restricted to at most $20{,}000$ and $30{,}000$ features, respectively, and logistic regression is trained as the base predictor. We compare post-hoc calibration methods using the probability outputs of this TF--IDF logistic-regression predictor as their common input.

\begin{figure}[htbp]
\centering
\begin{tikzpicture}[x=0.85cm,y=0.75cm,
 every node/.style={font=\small,inner sep=1pt,fill=white},
 treeedge/.style={draw=black!72,line width=0.60pt,shorten >=1.6pt,shorten <=1.6pt}]
\node (r) at (0,0) {$\rho$};
\node (s1) at (-3.8,-0.9) {$s_1$};
\node (sj) at (0,-0.9) {$s_j$};
\node (s18) at (3.8,-0.9) {$s_{18}$};
\node at (-1.9,-0.9) {$\cdots$}; \node at (1.9,-0.9) {$\cdots$};
\foreach \a in {s1,sj,s18}{\draw[treeedge] (r)--(\a);}
\foreach \a/\xx/\lab in {s1/-3.8/1,sj/0/j,s18/3.8/18}{
 \node (\a L) at (\xx-0.8,-1.8) {$y_{\lab,1}$};
 \node (\a R) at (\xx+0.8,-1.8) {$y_{\lab,k_{\lab}}$};
 \node at (\xx,-1.8) {$\cdots$};
 \draw[treeedge] (\a)--(\a L); \draw[treeedge] (\a)--(\a R);
}
\end{tikzpicture}
\caption{MASSIVE label tree. Dots omit scenario and intent nodes. The intents below $s_j$ are $\gY_{s_j}=\{y_{j,1},\ldots,y_{j,k_j}\}$, with $\sum_{j=1}^{18}k_j=60$; each intent belongs to exactly one scenario.}
\label{fig:massive-line-tree}
\end{figure}

\paragraph{Data allocation.}
The official training data are divided in proportions $0.60/0.20/0.20$ for base training, initial recalibration, and correction by UC-Boost or HUC-Boost. The development data are divided equally for initial-recalibrator selection and validation; the official test data are used for evaluation. Results are averaged over five data splits.

\paragraph{Utilities.}
For prediction $p$, first choose the most probable scenario and then the most probable intent inside it:
\begin{equation}
\widehat s(p)=\operatorname*{arg\,max}_{\prec;\,s\in\mathcal S}\pi_s(p),
\qquad
\widehat y(p)=\operatorname*{arg\,max}_{\prec;\,y\in\gY_{\widehat s(p)}}p_y.
\label{eq:massive-selected-scenario-intent}
\end{equation}
The scenario confidence is $\pi_{\widehat s(p)}(p)$, and the intent confidence is the conditional probability $p_{\widehat y(p)}/\pi_{\widehat s(p)}(p)$, not its unconditional leaf probability. For each pair $(\tau_s,\tau_i)\in\{0.55,0.65,0.75\}^2$, the action is
\begin{equation}
a_{\tau_s,\tau_i}(p)=
\begin{cases}
\text{abstain},&\pi_{\widehat s(p)}(p)<\tau_s,\\
\widehat s(p),&\pi_{\widehat s(p)}(p)\ge\tau_s\text{ and }
                 p_{\widehat y(p)}/\pi_{\widehat s(p)}(p)<\tau_i,\\
\widehat y(p),&\pi_{\widehat s(p)}(p)\ge\tau_s\text{ and }
                 p_{\widehat y(p)}/\pi_{\widehat s(p)}(p)\ge\tau_i.
\end{cases}
\label{eq:massive-action}
\end{equation}
For a realized intent $y$, the resulting utility is
\begin{equation}
u_{\tau_s,\tau_i}(p,y)=
\begin{cases}
0,&a_{\tau_s,\tau_i}(p)=\text{abstain},\\
\tfrac14,&a_{\tau_s,\tau_i}(p)=\widehat s(p),\ y\in\gY_{\widehat s(p)},\\
-1,&a_{\tau_s,\tau_i}(p)=\widehat s(p),\ y\notin\gY_{\widehat s(p)},\\
1,&a_{\tau_s,\tau_i}(p)=\widehat y(p),\ y=\widehat y(p),\\
-\tfrac12,&a_{\tau_s,\tau_i}(p)=\widehat y(p),\ y\in\gY_{\widehat s(p)}\setminus\{\widehat y(p)\},\\
-1,&a_{\tau_s,\tau_i}(p)=\widehat y(p),\ y\notin\gY_{\widehat s(p)}.
\end{cases}
\label{eq:massive-complete-utility}
\end{equation}
Hence $|\gU|=9$. The fixed alphabetical order $\prec$ on scenario and intent labels resolves all ties.

Write $s_p:=\pi_{\widehat s(p)}(p)$ and $t_p:=p_{\widehat y(p)}$. The predicted masses of the correct intent, the other intents in the selected scenario, and the intents outside that scenario are $t_p$, $s_p-t_p$, and $1-s_p$, respectively. Averaging Eq.~\eqref{eq:massive-complete-utility} therefore gives
\[
\begin{aligned}
\mu_{u_{\tau_s,\tau_i}}(p)
&=\sum_{z\in\gY}p_z u_{\tau_s,\tau_i}(p,z)\\
&=\begin{cases}
0,&a_{\tau_s,\tau_i}(p)=\text{abstain},\\
\tfrac14 s_p-(1-s_p),&a_{\tau_s,\tau_i}(p)=\widehat s(p),\\
t_p-\tfrac12(s_p-t_p)-(1-s_p),&a_{\tau_s,\tau_i}(p)=\widehat y(p).
\end{cases}
\end{aligned}
\]
The scenario and intent expressions simplify to $\tfrac54s_p-1$ and $\tfrac32t_p+\tfrac12s_p-1$, respectively. This is the predicted mean payoff of the threshold-based reporting rule, rather than a correctness probability or the conditional intent confidence $t_p/s_p$.

\paragraph{Subgroups.}
The family has $|\gC|=3$: $c\equiv1$, $\mathbf{1}\{\text{word count}\le m\}$, and $\mathbf{1}\{\text{word count}>m\}$, where $m$ is the training median of word count.

\paragraph{Results.}
Table~\ref{tab:massive-complete-restored} reports all ten methods. Both HUC-Boost and the fixed-step correction reduce mean HUC relative to Baseline. This dataset does not show the same ordering between the two update rules as the main experiments.

\begin{table}[!htbp]
\caption{MASSIVE results. Values are mean $\pm$ sample standard deviation over five data splits of the TF--IDF logistic-regression predictor. HUC-B (0.05) uses constant step magnitude $0.05$. UC-Boost and HUC-Boost are abbreviated as UC-B and HUC-B, respectively.}
\label{tab:massive-complete-restored}
\centering
\begingroup
\scriptsize
\setlength{\tabcolsep}{3pt}
\renewcommand{\arraystretch}{1.0}
\begin{tabular}{lccccc}
\toprule
Method & Acc. $\uparrow$ & AUC $\uparrow$ & UC $\downarrow$ & HUC $\downarrow$ & Updates \\
\midrule
Baseline & .8222$\pm$.0025 & .9913$\pm$.0008 & .1380$\pm$.0039 & .1190$\pm$.0035 & 0.00$\pm$0.00 \\
UC-B & .8223$\pm$.0026 & .9899$\pm$.0009 & .0216$\pm$.0077 & .0142$\pm$.0043 & 44.60$\pm$33.25 \\
HUC-B & .8197$\pm$.0036 & .9891$\pm$.0009 & .0302$\pm$.0078 & .0170$\pm$.0065 & 251.60$\pm$50.97 \\
HUC-B (0.05) & .8223$\pm$.0027 & .9897$\pm$.0009 & \textbf{.0213$\pm$.0083} & \textbf{.0140$\pm$.0071} & 395.80$\pm$66.34 \\
Temp. & .8222$\pm$.0025 & .9906$\pm$.0007 & .0335$\pm$.0075 & .0189$\pm$.0040 & 0.00$\pm$0.00 \\
Temp.$\to$HUC & .8206$\pm$.0029 & .9904$\pm$.0008 & .0371$\pm$.0057 & .0181$\pm$.0048 & 109.40$\pm$110.03 \\
Vec. & \textbf{.8295$\pm$.0010} & .9909$\pm$.0009 & .0373$\pm$.0066 & .0178$\pm$.0016 & 0.00$\pm$0.00 \\
Vec.$\to$HUC & .8268$\pm$.0014 & .9904$\pm$.0009 & .0476$\pm$.0087 & .0235$\pm$.0070 & 183.00$\pm$71.87 \\
Dir. & .8286$\pm$.0038 & \textbf{.9917$\pm$.0008} & .0714$\pm$.0042 & .0505$\pm$.0035 & 0.00$\pm$0.00 \\
Dir.$\to$HUC & .8260$\pm$.0025 & .9908$\pm$.0007 & .0338$\pm$.0128 & .0229$\pm$.0075 & 224.80$\pm$109.76 \\
\bottomrule
\end{tabular}
\endgroup
\end{table}

\subsection[NSL-KDD]{NSL-KDD~\citep{Tavallaee2009NSLKDD}}
\label{subsec:nsl-kdd-data}
\paragraph{Data, features, and label hierarchy.}
NSL-KDD\footnote{
NSL-KDD dataset:
\url{https://www.unb.ca/cic/datasets/nsl.html}.} uses $25{,}192$ records with five leaf labels: normal, dos, probe, r2l, and u2r. The root separates normal traffic from attacks, and the attack node separates the four attack families. The 41 input attributes give 118 numerical features after categorical encoding. The six base predictors use five stratified data splits, with training/calibration/validation/test proportions $0.45/0.20/0.15/0.20$.

\begin{figure}[htbp]
\centering
\begin{tikzpicture}[x=0.85cm,y=0.75cm,
 every node/.style={font=\small,inner sep=1pt,fill=white},
 treeedge/.style={draw=black!72,line width=0.60pt,shorten >=1.6pt,shorten <=1.6pt}]
\node (r) at (0,0) {$\rho$};
\node (n) at (-2.0,-0.9) {normal};\node (a) at (1.0,-0.9) {attack};
\node (d) at (-1.4,-1.8) {dos};\node (p) at (0.2,-1.8) {probe};
\node (rl) at (1.8,-1.8) {r2l};\node (ur) at (3.4,-1.8) {u2r};
\foreach \a/\b in {r/n,r/a,a/d,a/p,a/rl,a/ur}{\draw[treeedge] (\a)--(\b);}
\end{tikzpicture}
\caption{NSL-KDD label tree. The five leaves are normal traffic and the four attack families.}
\label{fig:nsl-kdd-line-tree}
\end{figure}

\paragraph{Utilities.}
For a node $v$, let $\gY_v$ be its descendant-leaf set and write $J_v(y):=\mathbf{1}\{y\in\gY_v\}$ and $\pi_v(p):=\sum_{z\in\gY_v}p_z$. Thus $J_v(y)$ records whether the returned category contains the true label.
Set $p_{\mathrm{normal}}:=\pi_{\mathrm{normal}}(p)$. Also set
$\pi_{\mathrm{attack}}(p):=p_{\mathrm{dos}}+p_{\mathrm{probe}}+p_{\mathrm{r2l}}+p_{\mathrm{u2r}}$.
The root decision is
\[
\widehat v_0(p)=
\begin{cases}
\mathrm{normal},&p_{\mathrm{normal}}\ge\pi_{\mathrm{attack}}(p),\\
\mathrm{attack},&p_{\mathrm{normal}}<\pi_{\mathrm{attack}}(p).
\end{cases}
\]
The family decision returns normal when the root selects normal; otherwise it selects the most probable attack leaf:
\[
\widehat v_1(p)=
\begin{cases}
\mathrm{normal},&\widehat v_0(p)=\mathrm{normal},\\
\operatorname*{arg\,max}_{\prec;\,v\in\{\mathrm{dos},\mathrm{probe},\mathrm{r2l},\mathrm{u2r}\}}p_v,
&\widehat v_0(p)=\mathrm{attack}.
\end{cases}
\]
The third decision is $\widehat y(p)=\operatorname*{arg\,max}_{\prec;\,y\in\gY}p_y$ over all five leaves. The three utilities are
\begin{equation}
\begin{aligned}
u_{\mathrm{status}}(p,y)&=J_{\widehat v_0(p)}(y)=\mathbf{1}\{y\in\gY_{\widehat v_0(p)}\},\\
u_{\mathrm{family}}(p,y)&=\mathbf{1}\{y=\widehat v_1(p)\},\qquad
u_{\mathrm{leaf}}(p,y)=\mathbf{1}\{y=\widehat y(p)\}.
\end{aligned}
\label{eq:nsl-complete-utilities}
\end{equation}
The corresponding predicted mean utilities are
\[
\begin{aligned}
&\mu_{u_{\mathrm{status}}}(p)
=\sum_{z\in\gY_{\widehat v_0(p)}}p_z
=\pi_{\widehat v_0(p)}(p), \qquad  \mu_{u_{\mathrm{family}}}(p)
=\sum_{z\in\gY}p_z\mathbf{1}\{z=\widehat v_1(p)\}
=p_{\widehat v_1(p)},\\
&\mu_{u_{\mathrm{leaf}}}(p)
=\sum_{z\in\gY}p_z\mathbf{1}\{z=\widehat y(p)\}
=p_{\widehat y(p)}.
\end{aligned}
\]
The status mean is the total probability of the selected normal/attack category. Since $\widehat v_1(p)$ is a leaf in this tree, the family mean is its unconditional probability, not its probability conditional on attack. The leaf mean is the probability of the globally most probable label.

Thus $|\gU|=3$. The family and global-leaf decisions need not coincide: the root can select attack while the single most probable leaf is normal. Ties at the root select normal; other ties use the order normal, dos, probe, r2l, u2r.

\paragraph{Subgroups.}
The nine indicators ($|\gC|=9$) are the whole population, the three protocol groups, the three most frequent training services, and the two groups with source-byte count at most and greater than its training median. Protocol groups correspond to tcp, udp, and icmp. Each indicator tests only its stated attribute condition; no intersections are added.

\paragraph{Results.}
Table~\ref{tab:nsl-complete-restored} reports all ten methods, with means and sample standard deviations over the six base predictors and five splits.

\begin{table}[!htbp]
\caption{NSL-KDD results. Values are mean $\pm$ sample standard deviation over six base predictors and five data splits. HUC-B (0.05) uses constant step magnitude $0.05$. UC-Boost and HUC-Boost are abbreviated as UC-B and HUC-B, respectively.}
\label{tab:nsl-complete-restored}
\centering
\begingroup
\scriptsize
\setlength{\tabcolsep}{3pt}
\renewcommand{\arraystretch}{1.0}
\begin{tabular}{lccccc}
\toprule
Method & Acc. $\uparrow$ & AUC $\uparrow$ & UC $\downarrow$ & HUC $\downarrow$ & Updates \\
\midrule
Baseline & .9504$\pm$.1004 & .9577$\pm$.0653 & .0471$\pm$.1011 & .0452$\pm$.0980 & 0.00$\pm$0.00 \\
UC-B & .9816$\pm$.0244 & .9687$\pm$.0444 & .0039$\pm$.0034 & .0035$\pm$.0031 & 62.07$\pm$39.19 \\
HUC-B & .9801$\pm$.0283 & .9672$\pm$.0514 & .0039$\pm$.0031 & .0033$\pm$.0026 & 92.37$\pm$69.38 \\
HUC-B (0.05) & .9505$\pm$.1004 & .9621$\pm$.0605 & .0448$\pm$.1014 & .0432$\pm$.0983 & 85.37$\pm$75.29 \\
Temp. & .9504$\pm$.1004 & .9619$\pm$.0563 & .0285$\pm$.0555 & .0275$\pm$.0552 & 0.00$\pm$0.00 \\
Temp.$\to$HUC & .9838$\pm$.0195 & .9766$\pm$.0363 & .0036$\pm$.0028 & .0032$\pm$.0026 & 64.43$\pm$66.43 \\
Vec. & .9735$\pm$.0429 & .9647$\pm$.0553 & .0129$\pm$.0195 & .0125$\pm$.0196 & 0.00$\pm$0.00 \\
Vec.$\to$HUC & \textbf{.9844$\pm$.0181} & \textbf{.9812$\pm$.0324} & \textbf{.0033$\pm$.0025} & \textbf{.0029$\pm$.0023} & 58.93$\pm$58.73 \\
Dir. & .9738$\pm$.0411 & .9652$\pm$.0465 & .0115$\pm$.0178 & .0112$\pm$.0177 & 0.00$\pm$0.00 \\
Dir.$\to$HUC & .9842$\pm$.0176 & .9690$\pm$.0400 & \textbf{.0033$\pm$.0023} & \textbf{.0029$\pm$.0020} & 44.23$\pm$55.22 \\
\bottomrule
\end{tabular}
\endgroup
\end{table}

\setlength{\intextsep}{7pt plus 2pt minus 2pt}
\subsection[N-BaIoT]{N-BaIoT~\citep{Meidan2018NBaIoT}}
\label{subsec:nbaiot-data}
\paragraph{Data, features, and label hierarchy.}
The N-BaIoT\footnote{
Dataset: UCI Machine Learning Repository,
\url{https://doi.org/10.24432/C5RC8J}.} experiment uses $1{,}000$ records, approximately balanced across 11 leaf labels: benign traffic and five attack types for each of Gafgyt and Mirai. The root separates benign traffic from attacks; the attack node separates the two families, whose children are their respective attack types. Classifiers use the first ten numerical features. The six base predictors and training/calibration/validation/test proportions match those for NSL-KDD.

\begin{figure}[htbp]
\centering
\begin{tikzpicture}[x=0.95cm,y=0.75cm,
 every node/.style={font=\footnotesize,inner sep=0.8pt,fill=white},
 treeedge/.style={draw=black!72,line width=0.60pt,shorten >=1.6pt,shorten <=1.6pt}]
\node (r) at (-1.1,0) {$\rho$};
\node (b) at (-4.7,-0.9) {benign}; \node (a) at (0.7,-0.9) {attack};
\node (g) at (-2.3,-1.8) {Gafgyt};\node (m) at (3.0,-1.8) {Mirai};
\node (gc) at (-4.6,-2.75) {combo};\node (gj) at (-3.45,-2.75) {junk};
\node (gs) at (-2.3,-2.75) {scan};\node (gt) at (-1.15,-2.75) {tcp};\node (gu) at (0,-2.75) {udp};
\node (ma) at (0.8,-2.75) {ack};\node (ms) at (1.95,-2.75) {scan};\node (my) at (3.1,-2.75) {syn};
\node (mu) at (4.25,-2.75) {udp};\node (mp) at (5.5,-2.75) {udpplain};
\foreach \a/\b in {r/b,r/a,a/g,a/m,g/gc,g/gj,g/gs,g/gt,g/gu,m/ma,m/ms,m/my,m/mu,m/mp}{\draw[treeedge] (\a)--(\b);}
\end{tikzpicture}
\caption{N-BaIoT label tree. Identically named attack types under different families are different leaves; for example, Gafgyt scan and Mirai scan are distinct labels.}
\label{fig:nbaiot-line-tree}
\end{figure}

\paragraph{Utilities.}
For a node $v$, let $\gY_v$ be its descendant-leaf set and write $J_v(y):=\mathbf{1}\{y\in\gY_v\}$ and $\pi_v(p):=\sum_{z\in\gY_v}p_z$. The indicator is one exactly when the true label lies in the returned node's subtree.
The root decision $\widehat v_0(p)$ is benign if $p_{\mathrm{benign}}\ge\pi_{\mathrm{attack}}(p)$ and attack otherwise. Conditional on choosing attack, the family decision compares the total masses of the Gafgyt and Mirai subtrees:
\[
\widehat v_1(p)=
\begin{cases}
\mathrm{benign},&\widehat v_0(p)=\mathrm{benign},\\
\operatorname*{arg\,max}_{\prec;\,v\in\{\mathrm{Gafgyt},\mathrm{Mirai}\}}\pi_v(p),
&\widehat v_0(p)=\mathrm{attack}.
\end{cases}
\]
The third decision $\widehat y(p)=\operatorname*{arg\,max}_{\prec;\,y\in\gY}p_y$ selects the most probable of all 11 leaves, independently of these two preceding decisions. The utilities are
\begin{equation}
\begin{aligned}
u_{\mathrm{status}}(p,y)&=J_{\widehat v_0(p)}(y)=\mathbf{1}\{y\in\gY_{\widehat v_0(p)}\},\\
u_{\mathrm{family}}(p,y)&=J_{\widehat v_1(p)}(y)=\mathbf{1}\{y\in\gY_{\widehat v_1(p)}\},\\
u_{\mathrm{type}}(p,y)&=\mathbf{1}\{y=\widehat y(p)\}.
\end{aligned}
\label{eq:nbaiot-complete-utilities}
\end{equation}
The corresponding predicted mean utilities are
\[
\begin{aligned}
&\mu_{u_{\mathrm{status}}}(p)
=\sum_{z\in\gY_{\widehat v_0(p)}}p_z
=\pi_{\widehat v_0(p)}(p), \qquad  \mu_{u_{\mathrm{family}}}(p)
=\sum_{z\in\gY_{\widehat v_1(p)}}p_z
=\pi_{\widehat v_1(p)}(p),\\
&\mu_{u_{\mathrm{type}}}(p)
=\sum_{z\in\gY}p_z\mathbf{1}\{z=\widehat y(p)\}
=p_{\widehat y(p)}.
\end{aligned}
\]
When an attack family is returned, its mean utility is the total unconditional probability of all attack types in that family, not a probability normalized by $\pi_{\mathrm{attack}}(p)$. When benign is returned, this mean reduces to $p_{\mathrm{benign}}$. The type mean is the probability of the selected leaf among all 11 labels.

Thus $|\gU|=3$. A root tie selects benign and a family tie selects Gafgyt. Global-leaf ties use benign first, then Gafgyt types in the order combo, junk, scan, tcp, udp, and then Mirai types in the order ack, scan, syn, udp, udpplain.

\paragraph{Subgroups.}
The indicators are the whole population, each device present in training, and the two groups with first-feature value at most and greater than its training median. Thus the family contains one device indicator per represented device and three additional indicators. Device identity is used only for subgroup membership, not as an eleventh input feature. Indicators with identical values, such as the whole-population and device indicators in a single-device sample, are not merged.

\paragraph{Results.}
Table~\ref{tab:nbaiot-complete-restored} reports all ten methods, with means and sample standard deviations over the six base predictors and five splits. Both the calibration and validation subsets are small.

\begin{table}[!htbp]
\caption{N-BaIoT results. Values are mean $\pm$ sample standard deviation over six base predictors and five data splits. HUC-B (0.05) uses constant step magnitude $0.05$. UC-Boost and HUC-Boost are abbreviated as UC-B and HUC-B, respectively.}
\label{tab:nbaiot-complete-restored}
\centering
\begingroup
\scriptsize
\setlength{\tabcolsep}{3pt}
\renewcommand{\arraystretch}{1.0}
\begin{tabular}{lccccc}
\toprule
Method & Acc. $\uparrow$ & AUC $\uparrow$ & UC $\downarrow$ & HUC $\downarrow$ & Updates \\
\midrule
Baseline & .7852$\pm$.1560 & .9703$\pm$.0382 & .1021$\pm$.0998 & .0672$\pm$.0633 & 0.00$\pm$0.00 \\
UC-B & .7960$\pm$.1262 & .9720$\pm$.0300 & .0338$\pm$.0221 & .0324$\pm$.0197 & 31.50$\pm$30.98 \\
HUC-B & .7987$\pm$.1254 & .9736$\pm$.0307 & .0357$\pm$.0212 & .0278$\pm$.0186 & 35.27$\pm$28.58 \\
HUC-B (0.05) & .7960$\pm$.1281 & .9749$\pm$.0283 & .0415$\pm$.0332 & .0325$\pm$.0316 & 137.33$\pm$80.47 \\
Temp. & .7852$\pm$.1560 & .9711$\pm$.0396 & .0415$\pm$.0354 & .0391$\pm$.0364 & 0.00$\pm$0.00 \\
Temp.$\to$HUC & .8010$\pm$.1264 & .9764$\pm$.0265 & .0326$\pm$.0239 & .0265$\pm$.0189 & 24.80$\pm$33.13 \\
Vec. & .8077$\pm$.1243 & .9770$\pm$.0229 & .0361$\pm$.0293 & .0290$\pm$.0242 & 0.00$\pm$0.00 \\
Vec.$\to$HUC & .8104$\pm$.1133 & .9787$\pm$.0171 & .0363$\pm$.0272 & .0271$\pm$.0160 & 13.57$\pm$22.93 \\
Dir. & \textbf{.8380$\pm$.0439} & \textbf{.9843$\pm$.0043} & .0413$\pm$.0412 & .0269$\pm$.0200 & 0.00$\pm$0.00 \\
Dir.$\to$HUC & .8357$\pm$.0418 & .9841$\pm$.0044 & \textbf{.0311$\pm$.0116} & \textbf{.0243$\pm$.0104} & 16.77$\pm$23.76 \\
\bottomrule
\end{tabular}
\endgroup
\end{table}

\end{document}